\documentclass[twoside,11pt]{article}

\usepackage[preprint]{jmlr2e}
\usepackage{amsmath}
\usepackage{amssymb}
\usepackage{bm}
\usepackage{mathtools}
\usepackage{booktabs}
\usepackage{multirow}
\usepackage[table]{xcolor}
\usepackage{graphicx}
\usepackage{hyperref}
\usepackage{tikz}
\usetikzlibrary{arrows.meta,positioning}
\hypersetup{colorlinks=true, citecolor=blue, linkcolor=blue, urlcolor=blue}

\newcommand{\R}{\mathbb{R}}
\newcommand{\E}{\mathbb{E}}
\newcommand{\dist}{\operatorname{dist}}
\newcommand{\Conv}{\operatorname{Conv}}
\newcommand{\Proj}{\operatorname{Proj}}
\newcommand{\supp}{\operatorname{supp}}
\newcommand{\diam}{\operatorname{diam}}
\newcommand{\Prob}{\mathcal{P}}
\newcommand{\dd}{\mathrm{d}}
\newcommand{\bx}{\boldsymbol{x}}
\newcommand{\bz}{\boldsymbol{z}}
\newcommand{\by}{\boldsymbol{y}}
\newcommand{\bq}{\boldsymbol{q}}
\newcommand{\ba}{\boldsymbol{a}}
\newcommand{\bxi}{\boldsymbol{\xi}}
\newcommand{\bzero}{\boldsymbol{0}}
\newcommand{\rX}{\mathsf{X}}
\newcommand{\rY}{\mathsf{Y}}
\newcommand{\vv}{\boldsymbol{v}}
\newcommand{\bu}{\boldsymbol{u}}
\newcommand{\vu}{\boldsymbol{v}_{\emptyset}}
\newcommand{\vc}{\boldsymbol{v}_{c}}
\newcommand{\bgc}{\boldsymbol{g}_{c}}
\newcommand{\vtheta}{\boldsymbol{v}_{\theta}}
\newcommand{\vcfg}{\boldsymbol{v}_{\mathrm{cfg}}}
\newcommand{\vcfgtheta}{\boldsymbol{v}^{\mathrm{cfg}}_\theta}
\newcommand{\Id}{\mathbf{I}}
\newcommand{\Cov}{\mathbf{Cov}}
\newcommand{\bigO}{\mathcal{O}}
\newcommand{\littleo}{\mathop{\mathrm{o}}}

\newtheorem{assumption}{Assumption}[section]

\jmlrheading{0}{2026}{1-53}{4/26}{4/26}{0000}{Jianfeng Cai, Zhengyi Su, Chao Wang}
\ShortHeadings{Particle Dynamics of Flow Matching and CFG}{Cai, Su, and Wang}
\firstpageno{1}

\begin{document}

\title{
Particle Dynamics of Flow Matching and Classifier-Free Guidance from a Stagewise Geometry Perspective}

\author{\name Jian-Feng Cai \email jfcai@ust.hk \\
       \addr Department of Mathematics\\
       The Hong Kong University of Science and Technology\\
       Hong Kong SAR, China\\
       \name Zhengyi Su \email 12311113@mail.sustech.edu.cn \\
       \addr Department of Statistics and Data Science\\
       Southern University of Science and Technology\\
       Shenzhen, Guangdong, China\\
       \name Chao Wang\thanks{Correspondence to Chao Wang} \email wangc6@sustech.edu.cn \\
       \addr Department of Statistics and Data Science\\
    Southern University of Science and Technology\\
     Shenzhen, Guangdong, China}

\editor{Under review}

\maketitle

\begin{abstract}

Flow matching, together with classifier-free guidance (CFG), is widely used in generative modeling, yet much of the theoretical understanding remains distribution-wise. Since practical sampling follows individual trajectories, distribution-level guarantees alone do not fully capture how trajectories interact with the data geometry or how guidance reshapes it. To overcome this limitation, we establish a unified stagewise geometric theory of attraction and absorption for both continuous dynamics and explicit Euler discretization.  Specifically, 
with $t\in[0,1]$ running from noise to data, we show that unconditional flow trajectories are successively attracted toward a neighborhood of the global mean, the data convex hull, and a neighborhood of a possibly nonconvex local cluster. Across these stages, the corresponding distance satisfies a common contraction estimate, yielding an $\mathcal{O}(1-t)$ decay of the distance in the final stage. For CFG, the same structure persists with an extrapolated mean, an inflated conditional convex hull, and, near the target cluster, the restored local geometry of conditional flow matching. We further show that a general time schedule $a(t)$ replaces the $\mathcal{O}(1-t)$ decay by $\mathcal{O}(1-a(t))$. Together, these results
provide a unified particle-level geometric account of flow matching and CFG across continuous and discrete sampling.

\end{abstract}

\begin{keywords}
flow matching, classifier-free guidance, particle dynamics, metric geometry, discrete sampling
\end{keywords}

\section{Introduction}
Flow matching has become a central continuous-time framework for modern generative modeling
\citep{lipman2023flow,liu2023flow,stochastic_interpolants}. Its practical impact is now visible in large-scale image and video generation systems
\citep{peebles2023scalable,DBLP:conf/icml/EsserKBEMSLLSBP24,
DBLP:journals/corr/abs-2506-15742,jin2024pyramidal}.
Meanwhile, classifier-free guidance (CFG) has become a standard mechanism for improving conditional generation quality and
alignment \citep{ho2021classifierfree,peebles2023scalable,
DBLP:conf/icml/EsserKBEMSLLSBP24,DBLP:journals/corr/abs-2506-15742}.
Alongside these developments, the theoretical understanding of flow matching and CFG has also advanced rapidly and can be divided into two parts: distributional-level and particle-level.

Most existing theories of flow matching characterize distributional convergence under metrics such as KL divergence, total variation, and Wasserstein distance \citep{guan2026totalvariation,DBLP:journals/tmlr/BentonDD24,
DBLP:conf/iclr/FukumizuSIOK25,gentiloni2024theoretical,zhou2025erroranalysis,
kumar2026manifoldflow,su2025discreteflow}. Related convergence theories have also been developed for stochastic and deterministic diffusion samplers \citep{debortoli2022manifold,lee2022polynomial,
lee2023general,oko2023minimax,conforti2025kl,silveri2025beyond,
DBLP:journals/corr/abs-2408-02320,JMLR:v26:25-0272}. Meanwhile, recent work has begun to study particle-level trajectory geometry, attraction, robustness, and terminal behavior
\citep{cao2026robustness,wan2025elucidating, chen2025geometric}.
This distinction is essential because practical generation follows a single numerical trajectory from noise to samples, and many recent methods modify the trajectory directly
\citep{bai2025zigzag,wang2025towards,saini2025rectifiedcfgpp,
cai2026improvingclassifierfreeguidanceflow}. 

Recent studies of classifier-free guidance further indicate that the effect of guidance varies across the sampling trajectory. Practical methods restrict guidance to selected noise levels or vary its strength over time \citep{DBLP:conf/nips/KynkaanniemiAKL24,DBLP:conf/ecai/MalarzKZTS25,sadat2024cads,saharia2022photorealistic,
chung2025cfgpp,papalampidi2025dynamic}. Theoretical studies explain CFG from complementary perspectives, including overshoot and shrinkage, statistical properties of classifier-free conditioning, mean-shift and contrastive-principal-component effects, and averaged distributional behavior \citep{pavasovic2026overshoot,fu2024unveil,
li2025understandingmechanismsclassifierfreeguidance,jiao2025unified,DBLP:conf/aaai/ZhaoS26,
chidambaram2024guidance,chandramoorthy2025inexact,cai2026improvingclassifierfreeguidanceflow}. Yet it remains unclear how CFG reshapes the geometry of individual flow-matching trajectories across different stages, and how the guided dynamics relate to ordinary conditional flow matching near the target data.

Building on the emerging particle-level perspective of flow matching, this paper establishes a unified metric-geometric theory for the stagewise dynamics of unconditional flow matching and CFG. A common attraction--absorption mechanism governs these dynamics, and persists exactly under explicit Euler discretization. This framework reveals how data geometry progressively constrains individual trajectories and how CFG reshapes the relevant geometry throughout sampling.

The contributions of this paper are summarized as follows:

\begin{itemize}
    \item \emph{Stagewise geometry of original unconditional flow matching.}
    We characterize its progression through three time-scaled data geometries: a ball centered at the global mean,
     the data convex hull, and, under suitable local conditions, a neighborhood of a possibly nonconvex local cluster. 
     We establish quantitative attraction toward these geometries and absorption in the latter two regimes.

    \item \emph{Stagewise geometry of classifier-free guidance.}
    We characterize how guidance reshapes the relevant geometry across stages: from an extrapolated mean in the early stage,
     to an inflated conditional convex hull in the intermediate stage, and finally to the restored local geometry of conditional 
     flow matching.

    \item \emph{Euler discrete counterparts.}
    For both unconditional flow matching and CFG, we establish exact grid-level counterparts of the continuous attraction estimates
     together with the corresponding absorption mechanisms, carrying the same stagewise geometric picture directly to explicit Euler sampling.

    \item \emph{General time schedules.}
    We show that a general time schedule reparameterizes the ideal continuous trajectory without changing its spatial path
     and changes the final-stage decay rate from  $\bigO(1-t)$ to $\bigO(1-a(t))$.
\end{itemize}

Several recent works are particularly close to our particle-level perspective. For unconditional flow matching, \citet{wan2025elucidating} provide the closest comparison. 
They analyzed flow-matching trajectories through a scaled ODE, showing stagewise attraction from the data mean to the convex hull of a local cluster. In addition, they proved the convergence to data atoms for empirical distributions, which links to memorization. 
Under the linear schedule and for $t>0$, their scaled convex-hull contraction is equivalent to our continuous statement (Theorem~\ref{thm:convex-hull}). On the other hand, some closely related results are obtained by \citet{liu2025pdeperspectivegenerativediffusion} for the deterministic heat-score ODE. After some transformation, their convex-hull estimate is likewise equivalent to ours, and they also obtained atom-level convergence in the empirical setting. 
Our results differ from these works in three substantive ways. First, we analyze the original, unscaled flow-matching trajectory, which directly matches practical sampling and avoids the rescaling-induced singularity at \(t=0\) in \citet{wan2025elucidating}. 
Second, whereas \citet{wan2025elucidating} convexify the target cluster, our final-stage theory preserves its possibly nonconvex geometry and establishes attraction and absorption directly at the cluster level. Third, we develop exact counterparts for explicit Euler sampling, extending the attraction--absorption picture from continuous-time dynamics \citep{wan2025elucidating,liu2025pdeperspectivegenerativediffusion} to the finite-step trajectories produced in practice. 
Together, these advances bring the geometric theory closer to the actual sampling process.


For classifier-free guidance, \citet{cao2026robustness} study terminal support robustness for guided DDIM and DDPM dynamics, while \citet{liu2026geometricasymptoticsscoremixing} analyze the small-noise asymptotics of mixed heat-flow scores and shows that the limiting dynamics are governed by a geometric potential determined by squared distances to the component supports. 
Our results differ from these works in two main ways.
First, existing works have primarily focused on the final stage, with \citet{cao2026robustness} studying terminal support robustness and \citet{liu2026geometricasymptoticsscoremixing} analyzing small-noise asymptotics. In contrast, our analysis provides a unified geometric characterization of CFG across the entire sampling trajectory.
Second, whereas \citet{cao2026robustness} assume convex target supports, our CFG final-stage theory continues to accommodate possibly nonconvex local clusters.
Overall, these results provide a more faithful geometric description of practical CFG dynamics.

Table~\ref{tab:stagewise-summary} summarizes the main theoretical results.
All ODEs start from Gaussian noise at $t=0$ and run toward $t=1$; thus $1-t$ denotes the remaining time.
For discrete results, $t_i$ denotes the Euler grid time, with $t_i=i\Delta t$ on complete grids, and $\Delta t$ is the stepsize.
In the last column, $0\le s\le t<1$ are two times in the relevant continuous interval, while $j\le i$ are two Euler-grid indices with $t_j<1$, so $0\le t_j\le t_i\le1$. Hence, the displayed ratios record the exact contraction estimates; for the general schedule, $a(s)\le a(t)$ gives the analogous conclusion.
 An appended $\Rightarrow\bigO(\cdot)$ records the corresponding terminal rate. Thus the early/intermediate entries give contraction bounds, while the final-stage entries also report terminal rates.
Here $\mathcal{D}$ and $\mathcal{D}_c$ denote the unconditional and conditional data supports, respectively; convex-hull and local-cluster results also include absorption after first entrance.

\begin{table}[h]
\centering
\footnotesize
\caption{Stagewise summary of the main theoretical results.}
\label{tab:stagewise-summary}
\setlength{\tabcolsep}{3.5pt}
\renewcommand{\arraystretch}{1.15}
\resizebox{\textwidth}{!}{%
\begin{tabular}{@{} l l l l l @{}}
\toprule
\textbf{Stage} & \textbf{Ref} & \textbf{Dynamics} & \textbf{Geometry and result} & \textbf{Contraction / terminal rate} \\ \midrule

\multirow{4}{1.55cm}{\centering Early}
& \cellcolor[gray]{0.96}Theorem~\ref{thm:early-mean}
& \cellcolor[gray]{0.96}Uncond. flow (cont.)
& \cellcolor[gray]{0.96}mean-ball attraction
& \cellcolor[gray]{0.96}$(1-t)/(1-s)$ \\
& \cellcolor[gray]{0.96}Theorem~\ref{cor:early-mean-euler}
& \cellcolor[gray]{0.96}Uncond. flow (disc.)
& \cellcolor[gray]{0.96}mean-ball attraction
& \cellcolor[gray]{0.96}$(1-t_i)/(1-t_j)$ \\
& \cellcolor[gray]{0.88}Theorem~\ref{thm:cfg-early}
& \cellcolor[gray]{0.88}CFG (cont.)
& \cellcolor[gray]{0.88}extrapolated mean-ball attraction
& \cellcolor[gray]{0.88}$(1-t)/(1-s)$ \\
& \cellcolor[gray]{0.88}Theorem~\ref{thm:cfg-early-euler}
& \cellcolor[gray]{0.88}CFG (disc.)
& \cellcolor[gray]{0.88}extrapolated mean-ball attraction
& \cellcolor[gray]{0.88}$(1-t_i)/(1-t_j)$ \\ \cmidrule(l){1-5}

\multirow{6}{1.55cm}{\centering Early and\\ inter.}
& \cellcolor[gray]{0.96}Theorem~\ref{thm:convex-hull}
& \cellcolor[gray]{0.96}Uncond. flow (cont.)
& \cellcolor[gray]{0.96}$\Conv(\mathcal{D})$ attraction
& \cellcolor[gray]{0.96}$(1-t)/(1-s)$ \\
& \cellcolor[gray]{0.96}Theorem~\ref{cor:convex-hull-euler}
& \cellcolor[gray]{0.96}Uncond. flow (disc.)
& \cellcolor[gray]{0.96}$\Conv(\mathcal{D})$ attraction
& \cellcolor[gray]{0.96}$(1-t_i)/(1-t_j)$ \\
& \cellcolor[gray]{0.88}Theorem~\ref{thm:cfg-convex}
& \cellcolor[gray]{0.88}CFG (cont.)
& \cellcolor[gray]{0.88}tail-dependent inflated $\Conv(\mathcal{D}_c)$
& \cellcolor[gray]{0.88}$(1-t)/(1-s)$ \\
& \cellcolor[gray]{0.88}Corollary~\ref{cor:cfg-convex-small}
& \cellcolor[gray]{0.88}CFG (cont.)
& \cellcolor[gray]{0.88}exponentially refined inflated $\Conv(\mathcal{D}_c)$ 
& \cellcolor[gray]{0.88}$(1-t)/(1-s)$ \\
& \cellcolor[gray]{0.88}Theorem~\ref{thm:cfg-convex-euler}
& \cellcolor[gray]{0.88}CFG (disc.)
& \cellcolor[gray]{0.88}tail-dependent inflated $\Conv(\mathcal{D}_c)$
& \cellcolor[gray]{0.88}$(1-t_i)/(1-t_j)$ \\
& \cellcolor[gray]{0.88}Corollary~\ref{cor:cfg-convex-euler-small}
& \cellcolor[gray]{0.88}CFG (disc.)
& \cellcolor[gray]{0.88}exponentially refined inflated $\Conv(\mathcal{D}_c)$
& \cellcolor[gray]{0.88}$(1-t_i)/(1-t_j)$ \\ \cmidrule(l){1-5}

\multirow{6}{1.55cm}{\centering Final}
& \cellcolor[gray]{0.96}Theorem~\ref{thm:final-local}
& \cellcolor[gray]{0.96}Uncond. flow (cont.)
& \cellcolor[gray]{0.96}local-cluster neighborhood attraction
& \cellcolor[gray]{0.96}$(1-t)/(1-s)\Rightarrow\bigO(1-t)$ \\
& \cellcolor[gray]{0.96}Theorem~\ref{cor:final-local-euler}
& \cellcolor[gray]{0.96}Uncond. flow (disc.)
& \cellcolor[gray]{0.96}local-cluster neighborhood attraction
& \cellcolor[gray]{0.96}$(1-t_i)/(1-t_j)\Rightarrow\bigO(1-t_i)$ \\
& \cellcolor[gray]{0.88}Theorem~\ref{thm:cfg-final}
& \cellcolor[gray]{0.88}CFG (cont.)
& \cellcolor[gray]{0.88}local-cluster neighborhood attraction
& \cellcolor[gray]{0.88}$(1-t)/(1-s)\Rightarrow\bigO(1-t)$ \\
& \cellcolor[gray]{0.88}Theorem~\ref{thm:cfg-final-euler}
& \cellcolor[gray]{0.88}CFG (disc.)
& \cellcolor[gray]{0.88}local-cluster neighborhood attraction
& \cellcolor[gray]{0.88}$(1-t_i)/(1-t_j)\Rightarrow\bigO(1-t_i)$ \\ \cmidrule(l){2-5}
& \cellcolor[gray]{0.88}Theorem~\ref{thm:prediction-gap}
& \cellcolor[gray]{0.88}CFG (cont.)
& \cellcolor[gray]{0.88}prediction-gap decay near cluster
& \cellcolor[gray]{0.88}$\bigO\!\left(\exp{(-C/(1-t)^2)}/(1-t)\right)$ \\
& \cellcolor[gray]{0.94}Proposition~\ref{prop:general-schedule}
& \cellcolor[gray]{0.94}Flow / CFG (cont.)
& \cellcolor[gray]{0.94}general time schedule local-cluster attraction
& \cellcolor[gray]{0.94}$(1-a(t))/(1-a(s))\Rightarrow\bigO(1-a(t))$ \\ \bottomrule

\end{tabular}
}
\end{table}

\textbf{Paper Organization.} Section~\ref{sec:background} introduces flow matching, CFG, and the notation used throughout. Section~\ref{sec:cfm} develops the posterior-mean and smoothed-distance representations of the ideal vector fields. Sections~\ref{sec:uncond} and~\ref{sec:cfg} establish the stagewise theory for unconditional flow matching and CFG, respectively. Section~\ref{sec:gen-schedule} studies flow matching with a general time schedule, and Section~\ref{sec:experiments} presents the experiments. Section~\ref{sec:conclusion} concludes the paper.
Detailed proofs are deferred to the appendix. 
\section{Problem Formulation and Notation}\label{sec:background}

\subsection{Flow Matching}

Flow matching (FM) \citep{lipman2023flow, liu2023flow} uses a neural network to learn
 a time-dependent vector field
$\vtheta : [0,1]\times \R^d \times (\mathcal{Y}\cup\{\emptyset\}) \to \R^d,$
where $\theta$ denotes the trainable network parameters and $\mathcal{Y}$ denotes the space of conditioning signals. Let
$
\mathcal{N}(\bzero,\Id) 
$
be the standard Gaussian distribution, where $\mathbf{I}$ is the identity matrix. 
We denote the unconditional clean-data distribution by $p_{\emptyset}$, using
the same symbol for the corresponding probability measure and its density when
no ambiguity arises. 
For unconditional flow matching (UFM), the learned vector field $\vtheta(\cdot,\cdot,\emptyset)$ 
generates samples by the \emph{parametrized unconditional flow ODE}
\begin{equation}
\frac{\dd \bx_t}{\dd t} = \vtheta(t,\bx_t,\emptyset),
\qquad
\bx_0 \sim  \mathcal{N}(\bzero,\Id) ,
\qquad
t\in[0,1].
\label{eq:ufm-ode}
\end{equation}
In training, we sample
$
\rX_{t,\emptyset} = (1-t)\bxi + t\rY_\emptyset,$
where 
$\bxi \sim  \mathcal{N}(\bzero,\Id) ,
\rY_\emptyset \sim p_\emptyset, $
 and $t\sim \mathrm{Unif}[0,1]$, with $\mathrm{Unif}[0,1]$ being the uniform distribution over $[0,1]$. We aim to minimize
\begin{equation}
\mathcal{L}_{\emptyset}(\theta)
=
\E_{t,\bxi,\rY_\emptyset}
\left[
\left\|
\vtheta(t,\rX_{t,\emptyset},\emptyset)-(\rY_\emptyset-\bxi)
\right\|^2
\right],
\label{eq:ufm-loss}
\end{equation}
where $\mathbb{E}$ is the standard expectation.
Similarly, for a fixed condition $c\in\mathcal{Y}$, let $p_c$ denote the conditional clean-data distribution. 
Then, conditional flow matching (CFM) learns the vector field $\vtheta(\cdot,\cdot,c)$ through the \emph{parametrized conditional flow ODE}
\begin{equation}
\frac{\dd \bx_t}{\dd t} = \vtheta(t,\bx_t,c),
\qquad
\bx_0 \sim  \mathcal{N}(\bzero,\Id) ,
\qquad
t\in[0,1],
\label{eq:cfm-ode}
\end{equation}
and trains it using the interpolation
$
\rX_{t,c} = (1-t)\bxi + t\rY_c,$ 
where 
$\bxi \sim  \mathcal{N}(\bzero,\Id), $
and $
\rY_c \sim p_c,$
with loss
\begin{equation}
\mathcal{L}_{c}(\theta)
=
\E_{t,\bxi,\rY_c}
\left[
\left\|
\vtheta(t,\rX_{t,c},c)-(\rY_c-\bxi)
\right\|^2
\right].
\label{eq:cfm-loss}
\end{equation}
In the rest of the paper, we develop the theory of the unconditional case first; the conditional results will follow as corollaries obtained by restricting the data distribution from $p_\emptyset$ to $p_c$.

\subsection{Classifier-Free Guidance}

CFG \citep{ho2021classifierfree, DBLP:conf/icml/EsserKBEMSLLSBP24, DBLP:journals/corr/abs-2506-15742}
uses a single network $\vtheta(t,\bx,c)$ to learn both conditional and unconditional vector fields. In the standard CFG training scheme, with probability $q\in(0,1)$, we draw a conditional clean sample $\rY_c\sim p_c$ and train the conditional field; with probability $1-q$, we draw an unconditional clean sample $\rY_\emptyset\sim p_\emptyset$ and train the unconditional field.
In both cases, we sample $\bxi \sim \mathcal{N}(\bzero,\Id)$ and form the corresponding interpolant as in $\rX_{t,\emptyset}$ or $\rX_{t,c}. $
The joint training objective is
\begin{align}
\mathcal{L}_{\mathrm{cfg}}(\theta)
:={}&
q\,\E_{t,\rY_c\sim p_c,\bxi}
\left[
\left\|
\vtheta(t,\rX_{t,c},c) - (\rY_c-\bxi)
\right\|^2
\right]
\nonumber\\
&\quad
+ (1-q)\,\E_{t,\rY_\emptyset\sim p_\emptyset,\bxi}
\left[
\left\|
\vtheta(t,\rX_{t,\emptyset},\emptyset) - (\rY_\emptyset-\bxi)
\right\|^2
\right],
\label{eq:cfg-loss}
\end{align}
where $t\sim \mathrm{Unif}[0,1]$.
The fields $\vtheta(\cdot,\cdot,\emptyset)$ and $\vtheta(\cdot,\cdot,c)$ correspond to the unconditional and conditional vector fields, respectively. 
At sampling time, CFG extrapolates between the learned conditional and unconditional vector fields. Specifically, for a \emph{guidance scale} $w>1$, we define
\begin{align}\label{eq:cfg field}
\vcfgtheta(t,\bx,c)
&:= \vtheta(t,\bx,\emptyset) + w\bigl(\vtheta(t,\bx,c)-\vtheta(t,\bx,\emptyset)\bigr),
\qquad
w>1.
\end{align}
The corresponding CFG sampling dynamics are given by the \emph{parametrized CFG ODE}
\begin{equation}
\frac{\dd \bx_t}{\dd t} = \vcfgtheta(t,\bx_t,c),
\qquad
\bx_0 \sim \mathcal{N}(\bzero,\Id),
\qquad
t\in[0,1].
\label{eq:cfg-ode}
\end{equation}

\subsection{Basic notations}

Here, we collect the geometric and probabilistic notations used throughout the paper. We work in Euclidean space $\R^d$ equipped with its Borel $\sigma$-algebra $\mathcal{B}(\R^d)$.
For an interval $I \subset [0,1]$ and $k\in\{1,2\}$, $C^k(I;\R^d)$ denotes the space of maps from $I$ to $\R^d$ with continuous derivatives up to order $k$, and $AC(I;\R^d)$ denotes the space of absolutely continuous maps from $I$ to $\R^d$. For brevity, we write $C^k(I)$ and $AC(I)$ for these vector-valued spaces. In particular, $C^k[0,t] := C^k([0,t];\R^d)$ and $AC[0,t] := AC([0,t];\R^d)$.

For a nonempty closed set $A \subset \R^d$, write $\dist(\bx,A) := \inf_{\by \in A}\|\bx-\by\|$.
When $A$ is bounded, define $\diam(A) := \sup_{\by,\bz \in A}\|\by-\bz\|$.
Whenever the minimizer is unique, $\Proj_A(\bx)$ denotes the projection of $\bx$ onto $A$.
Write $\Conv(A)$ for the convex hull of $A$ and
$B_r(A) := \{\bx \in \R^d : \dist(\bx,A)\le r\}$ for the closed
$r$-neighborhood. For sets $A,B\subset\R^d$ and a scalar
$\lambda\in\R$, define their Minkowski sum and scalar multiple by
\[
A+B:=\{\bx+\by:\bx\in A,\ \by\in B\},
\qquad
\lambda A:=\{\lambda\bx:\bx\in A\}.
\]
We use the standard notion of $\rho$-prox-regularity \citep{Federer1959CurvatureMeasures,clarke1995proximal,poliquin1996prox}. For any nonempty closed set $A\subset \mathbb R^d$, we say that $A$ is
$\rho$-prox-regular, if
the metric projection $\operatorname{Proj}_A(\bx)$
is single-valued for every $\bx$ satisfying
$0<\operatorname{dist}(\bx,A)<\rho$.

For a vector $\bx \in \R^d$, we write $\|\bx\|$ for its Euclidean norm. For a matrix $\mathbf{M} \in \mathbb{R}^{d \times d}$, we denote its operator norm as $\|\mathbf{M}\|_{\text{op}}$, which is the largest singular value of $\mathbf{M}$. For a probability measure $\mu$ on $\R^d$, we write $\supp(\mu)$ for its support.
The support of the unconditional clean-data distribution is denoted by $\mathcal{D}:= \supp(p_\emptyset)$.
Throughout this paper, we assume that the clean-data support $\mathcal{D}$ is
bounded. Since supports are closed, this makes $\mathcal D$ compact.
For every condition $c$ considered below, let
$\mathcal D_c\subsetneq\mathcal D$ be a nonempty closed measurable set
satisfying
\(
0<p_\emptyset(\mathcal D_c)<1.
\)
Define the restricted conditional distribution by
\[
p_c(A)
:=
\frac{p_\emptyset(A\cap\mathcal D_c)}
{p_\emptyset(\mathcal D_c)},
\qquad
A\in\mathcal B(\R^d),
\]
and impose the standing convention
\(
\mathcal D_c=\supp(p_c).
\)
Thus both $\mathcal D_c$ and $\mathcal D\setminus\mathcal D_c$ have positive
$p_\emptyset$-mass. Since $\mathcal D$ is compact and $\mathcal D_c$ is
closed, $\mathcal D_c$ is compact. Every conditional local cluster
$\Omega_c\subset\mathcal D_c$ considered later is bounded as well.
  We denote the covariance matrix of a probability measure $\mu$ by $\Cov_\mu(\by) := \int_{\R^d} (\by-\mathbb{E}_\mu(\by))(\by-\mathbb{E}_\mu(\by))^\top \mu(\dd \by)$.
We also write $\bar{\by} := \int \by\,p_\emptyset(\dd \by)$ and $\bar{\by}_c := \int \by\,p_c(\dd \by)$ for the clean unconditional and conditional means.

\section{The Ideal Vector Fields and Corresponding ODEs}\label{sec:cfm}

\subsection{The Ideal Flow ODEs}
Our previous work \cite{cai2026improvingclassifierfreeguidanceflow} interprets the FM training process as an optimization problem over an infinite-dimensional function space.
 Specifically, let $\mathcal{V}:=C((0,1)\times \R^d,\R^d)$, and then the unconditional and conditional training objectives are defined as the functionals on $\mathcal{V}$ given by
\begin{align}
\mathcal{L}_{\emptyset}(\bu)
:={}&
\E_{t,\bxi,\rY_\emptyset}
\left[
\left\|
\bu(t,\rX_{t,\emptyset})-(\rY_\emptyset-\bxi)
\right\|^2
\right],
\qquad \bu\in\mathcal{V},
\label{eq:func-ufm-loss}\\
\mathcal{L}_{c}(\bu)
:={}&
\E_{t,\bxi,\rY_c}
\left[
\left\|
\bu(t,\rX_{t,c})-(\rY_c-\bxi)
\right\|^2
\right],
\qquad \bu\in\mathcal{V}.
\label{eq:func-cfm-loss}
\end{align}
Accordingly, the CFG training objective is the functional on $\mathcal{V}^2$ defined by
\begin{equation}
\mathcal{L}_{\mathrm{cfg}}(\bu_1,\bu_2)
:=
q\,\mathcal{L}_{c}(\bu_1)+(1-q)\,\mathcal{L}_{\emptyset}(\bu_2),
\qquad
(\bu_1,\bu_2)\in\mathcal{V}^2.
\label{eq:func-cfg-loss}
\end{equation}
Therefore, the parametrized objectives in (\ref{eq:ufm-loss}), (\ref{eq:cfm-loss}), and (\ref{eq:cfg-loss}) are obtained by restricting this problem to the finite-dimensional parametrization.

For each fixed $t \in [0,1)$ and $\bx \in \R^d$,  we define the \emph{unconditional and conditional posterior measures}\citep{cai2026improvingclassifierfreeguidanceflow,wan2025elucidating} by
\begin{equation*}
\begin{aligned}
\eta_t^{\bx}(A)
&:=
\frac{
\int_A \exp\left(-\tfrac{\|\bx-t\by\|^2}{2(1-t)^2}\right)p_\emptyset(\dd \by)
}
{
\int_{\R^d} \exp\left(-\tfrac{\|\bx-t\by\|^2}{2(1-t)^2}\right){p_\emptyset(\dd \by)}
},
&
\eta_{t,c}^{\bx}(A)
&:=
\frac{
\int_A \exp\left(-\tfrac{\|\bx-t\by\|^2}{2(1-t)^2}\right)p_c(\dd \by)
}
{
\int_{\R^d} \exp\left(-\tfrac{\|\bx-t\by\|^2}{2(1-t)^2}\right)p_c(\dd \by)
},
\end{aligned}
\end{equation*}
for every $A \in \mathcal{B}(\R^d)$.
We define the \emph{posterior means} by $\hat{\by}_t(\bx) := \int \by\,\eta_t^{\bx}(\dd \by)$ and $\hat{\by}_{t,c}(\bx) := \int \by\,\eta_{t,c}^{\bx}(\dd \by)$.
The following result from \citet{cai2026improvingclassifierfreeguidanceflow} gives the corresponding  minimizers in posterior-mean form.

\begin{proposition}[\citep{cai2026improvingclassifierfreeguidanceflow} Ideal vector fields as minimizers]\label{thm:posterior-mean}
For $t \in [0,1)$, define
\begin{equation}\label{eq:uncond ideal field}
\vu(t,\bx)
=
\frac{\int_{\R^d}\by\,\eta_t^{\bx}(\dd\by)-\bx}{1-t}
=
\frac{\hat{\by}_t(\bx)-\bx}{1-t},
\end{equation}
\begin{equation}\label{eq:cond ideal field}
\vc(t,\bx)
=
\frac{\int_{\R^d}\by\,\eta_{t,c}^{\bx}(\dd\by)-\bx}{1-t}
=
\frac{\hat{\by}_{t,c}(\bx)-\bx}{1-t}.
\end{equation}
Assume that $\mathcal{L}_{\emptyset}(\vu),\ \mathcal{L}_{c}(\vc)<\infty.$
Then $\vu$ and $\vc$ are the global minimizers of (\ref{eq:func-ufm-loss}) and (\ref{eq:func-cfm-loss}) over $\mathcal{V}$, respectively.
Moreover, the pair $(\vu,\vc)$ is the global minimizer of (\ref{eq:func-cfg-loss}) over $\mathcal{V}^2$.
\end{proposition}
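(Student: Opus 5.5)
The plan is to use the standard $L^2$ projection (conditional expectation) argument, applied separately for each fixed time and then integrated over $t$. Fix $t\in[0,1)$. The regression target is $\rY_\emptyset-\bxi$, and the interpolant is $\rX_{t,\emptyset}=(1-t)\bxi+t\rY_\emptyset$. Solving for the noise gives $\bxi=(\rX_{t,\emptyset}-t\rY_\emptyset)/(1-t)$, and hence
\[
\rY_\emptyset-\bxi=\frac{\rY_\emptyset-\rX_{t,\emptyset}}{1-t}.
\]
Conditioned on $\rY_\emptyset=\by$, the law of $\rX_{t,\emptyset}$ is $\mathcal N(t\by,(1-t)^2\Id)$. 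Bayes' rule then shows that the conditional law of $\rY_\emptyset$ given $\rX_{t,\emptyset}=\bx$ is exactly $\eta_t^{\bx}$. The Gaussian normalizing constants cancel between numerator and denominator, and the denominator is strictly positive, so $\eta_t^{\bx}$ is a well-defined probability measure for every $\bx$. Consequently
\[
\E[\rY_\emptyset-\bxi\mid \rX_{t,\emptyset}=\bx]=\frac{\hat\by_t(\bx)-\bx}{1-t}=\vu(t,\bx).
\]
Since $\mathcal D$ is compact, $\hat\by_t(\bx)\in\Conv(\mathcal D)$ is bounded.

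Next comes the Pythagorean decomposition. For any $\bu\in\mathcal V$ with $\mathcal L_\emptyset(\bu)<\infty$, write the integrand as
\[
\bu-(\rY_\emptyset-\bxi)=(\bu-\vu)+\bigl(\vu-(\rY_\emptyset-\bxi)\bigr).
\]
The cross term vanishes after conditioning on $(t,\rX_{t,\emptyset})$, by the tower property. This yields
\[
\mathcal L_\emptyset(\bu)=\mathcal L_\emptyset(\vu)+\E_{t,\rX_{t,\emptyset}}\|\bu(t,\rX_{t,\emptyset})-\vu(t,\rX_{t,\emptyset})\|^2\ \ge\ \mathcal L_\emptyset(\vu).
\]
The hypothesis $\mathcal L_\emptyset(\vu)<\infty$ together with $\mathcal L_\emptyset(\bu)<\infty$ guarantees square integrability, which justifies the expansion. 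If $\mathcal L_\emptyset(\bu)=\infty$, the inequality is trivial.

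Membership of $\vu$ in $\mathcal V$ must also be checked. On $(0,1)\times\R^d$, continuity of $\hat\by_t(\bx)$ follows from dominated convergence. The integrand $\exp(-\|\bx-t\by\|^2/(2(1-t)^2))$ is continuous in $(t,\bx)$ and bounded by $1$, and $\by$ ranges over the compact set $\mathcal D$. The denominator is continuous and positive, so the ratio is continuous. Hence $\vu\in C((0,1)\times\R^d,\R^d)$, and it attains the lower bound, so it is a global minimizer.

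The conditional case is the same argument with $p_\emptyset$ replaced by $p_c$; $\mathcal D_c$ is compact, so nothing changes, and this gives the claim for $\vc$. For CFG, $\mathcal L_{\mathrm{cfg}}(\bu_1,\bu_2)=q\mathcal L_c(\bu_1)+(1-q)\mathcal L_\emptyset(\bu_2)$ is separable with positive weights. Minimizing each term independently over $\mathcal V$ shows that $(\vc,\vu)$ minimizes it over $\mathcal V^2$; this is the pair written $(\vu,\vc)$ in the statement, up to ordering of the arguments. The main technical point is the measure-theoretic identification of the conditional law via Bayes' rule and the cancellation of the cross term under only the finiteness assumptions; everything else is routine.
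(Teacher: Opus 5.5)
Your proposal is correct and is essentially the paper's argument: the appendix proof of Proposition~\ref{prop:general-schedule}(i), which with $a(t)=t$ reproduces the cited proof, rewrites the loss as $\int_0^1\int_{\R^d}K_{t,\bx}(\bu(t,\bx))\,\dd\bx\,\dd t$ and minimizes the quadratic $K_{t,\bx}$ pointwise via Bayes' rule, which is just the density-level form of your conditional-expectation/Pythagorean decomposition (its completing-the-square identity integrates to yours). Your remark that the CFG minimizer is $(\vc,\vu)$ in the argument order of \eqref{eq:func-cfg-loss} is right, and the finiteness hypothesis is in fact automatic, since $\mathcal L_\emptyset(\vu)\le\mathcal L_\emptyset(\bzero)<\infty$.
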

This proposition identifies the targets of UFM, CFM, and CFG training. 
In the remainder of the paper, we focus on the ideal ODEs generated by these minimizers. 
In all three cases, sampling starts at time $0$ from Gaussian noise: we draw $\bx_0\sim\mathcal N(\bzero,\Id)$, so $\|\bx_0\|<\infty$ holds almost surely. The ODEs below are analyzed on $[0,T]$ with $0<T<1$. Specifically, we study
\begin{enumerate}
	    \item \emph{Unconditional flow ODE:}
	    \begin{equation}\label{eq:uncond flow}
\frac{\dd \bx_t}{\dd t} = \vu(t,\bx_t).
	    \end{equation}
	    \item \emph{Conditional flow ODE:}
	    \begin{equation}\label{eq:cond flow}
\frac{\dd \bx_t}{\dd t} = \vc(t,\bx_t).
	    \end{equation}
	    \item \emph{CFG ODE} with guidance scale $w>1$:
	    \begin{equation}\label{eq:cfg flow}
\frac{\dd \bx_t}{\dd t} = \vcfg(t,\bx_t) = (1-w)\vu(t,\bx_t)+w\vc(t,\bx_t).
	    \end{equation}
	\end{enumerate}
The parametrized ODEs (\ref{eq:ufm-ode}), (\ref{eq:cfm-ode}), and
(\ref{eq:cfg-ode}) are obtained by replacing these ideal
vector fields by neural-network approximations. We therefore study the ideal dynamics directly. 

For the corresponding explicit Euler schemes on a grid $t_i=i\Delta t$, we use
\begin{equation}\label{eq:ideal-euler}
\bz_{i+1}
=
\bz_i+\Delta t\,
\vv_\star(t_i,\bz_i),
\qquad
\vv_\star\in\{\vu,\vc,\vcfg\}.
\end{equation}
Next, we establish regularity of the ideal vector fields and existence and
uniqueness of the corresponding ODE solutions on every compact interval
$[0,T]$ with $0<T<1$.

\begin{theorem}[Existence and uniqueness of the ODEs]\label{thm:wellposed-flow}
Fix any $T \in (0,1)$. Then the following hold.
\begin{enumerate}
\item[(i)] The ideal vector fields $\vu$ and $\vc$ in
(\ref{eq:uncond ideal field}) and (\ref{eq:cond ideal field}) are
continuously differentiable in $(t,\bx)$ on $[0,T]\times\R^d$, and their
spatial Jacobians satisfy
\begin{equation*}
\|\mathsf{D}_{\bx}\vu(t,\bx)\|_{\rm op},
\ \|\mathsf{D}_{\bx}\vc(t,\bx)\|_{\rm op}
\le
\tfrac{1}{1-T}
+ \tfrac{T}{(1-T)^3}\diam(\mathcal{D})^2,
\qquad
(t,\bx)\in[0,T]\times\R^d.
\end{equation*}

\item[(ii)] For every initial condition  $\bx_0\in\R^d$, the unconditional and conditional
flow ODEs (\ref{eq:uncond flow}) and (\ref{eq:cond flow}) each admit a
unique classical solution $\bx_\cdot\in C^1[0,T]\subset AC[0,T]$.

\item[(iii)] For every fixed guidance scale $w>1$ and every initial condition
$\bx_0\in\R^d$, the CFG ODE (\ref{eq:cfg flow}) admits a unique classical
solution $\bx_\cdot\in C^1[0,T]\subset AC[0,T]$.
\end{enumerate}
\end{theorem}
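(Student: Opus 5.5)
The plan is to differentiate under the integral sign in the posterior-mean representation and read off the Jacobian as a covariance. Write the Gaussian weight as
\[
\phi_t(\bx,\by):=\exp\!\Bigl(-\tfrac{\|\bx-t\by\|^2}{2(1-t)^2}\Bigr).
\]
Since $\mathcal D$ is compact and $t\le T<1$, the weight $\phi_t$ and all its derivatives in $(t,\bx)$ are continuous. On $[0,T]\times K\times\mathcal D$, for any compact $K$, they are bounded by polynomials in $\|\bx\|,\|\by\|$ times $\phi_t$. The normalizer $Z_t(\bx)=\int\phi_t\,\dd p_\emptyset$ is continuous and strictly positive, being bounded below by $\exp(-(\|\bx\|+\diam\text{-type bound})^2/(2(1-T)^2))$. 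By dominated convergence, $Z_t$ and $\int\by\,\phi_t\,\dd p_\emptyset$ are therefore $C^1$ in $(t,\bx)$. It follows that $\hat\by_t$ is $C^1$, and hence so is $\vu=(\hat\by_t-\bx)/(1-t)$. The same argument applies verbatim to $\vc$ with $p_c$, whose support $\mathcal D_c\subset\mathcal D$ is compact.

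For the Jacobian, I will use $\nabla_\bx\log\phi_t=-(\bx-t\by)/(1-t)^2$. Differentiating the ratio gives the standard covariance identity
\[
\mathsf D_\bx\hat\by_t(\bx)=\frac{t}{(1-t)^2}\,\Cov_{\eta_t^{\bx}}(\by),
\]
since the $\bx$-term cancels between numerator and normalizer. Hence
\[
\mathsf D_\bx\vu=-\frac{1}{1-t}\Id+\frac{t}{(1-t)^3}\Cov_{\eta_t^\bx}(\by).
\]
The posterior $\eta_t^\bx$ is supported in $\mathcal D$. A covariance of a measure on a set of diameter $\diam(\mathcal D)$ has operator norm at most $\diam(\mathcal D)^2$; one can even get $\diam^2/4$, but the crude bound via $\E\|\by-\by'\|^2/2\le\diam^2$ suffices. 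Using $1/(1-t)\le1/(1-T)$ and $t/(1-t)^3\le T/(1-T)^3$ then yields the stated bound in (i). The identical computation handles $\vc$, because $\eta_{t,c}^\bx$ is supported on $\mathcal D_c\subset\mathcal D$.

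For (ii) and (iii), I first observe that the fields are continuous in $t$ and globally Lipschitz in $\bx$, uniformly on $[0,T]$. The Lipschitz property follows from the uniform Jacobian bound via the mean value inequality. Global Picard--Lindelöf (or the Cauchy--Lipschitz theorem with linear growth, which excludes blow-up) then gives a unique global $C^1$ solution on $[0,T]$ for every $\bx_0$. For the CFG field $(1-w)\vu+w\vc$, the field is a linear combination of $C^1$ fields. It is therefore $C^1$ with Jacobian norm at most $(|1-w|+w)$ times the bound in (i), and the same theorem applies. Finally, $C^1\subset AC$ is trivial.

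The main obstacle is justifying differentiation under the integral, specifically in $t$ near $t=0$ and uniformly in $\bx$ on compacts. I would handle it by exhibiting the explicit dominating function $C(1+\|\bx\|+\|\by\|)^2$ on $[0,T]\times K\times\mathcal D$, which is integrable since $p_\emptyset$ is compactly supported. The remaining steps are routine.
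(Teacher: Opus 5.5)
Your proposal is correct and follows essentially the same route as the paper: differentiation under the integral sign for the normalizer and numerator, the covariance identity $\mathsf{D}_{\bx}\hat{\by}_t=\tfrac{t}{(1-t)^2}\Cov_{\eta_t^{\bx}}(\by)$ bounded crudely by $\diam(\mathcal D)^2$, and global Picard--Lindel\"of from the resulting uniform Lipschitz bound, with the CFG field treated as a linear combination. Your explicit dominating function and lower bound on the normalizer make precise what the paper states in one line, and your handling of $\vc$ via $\mathcal D_c\subset\mathcal D$ matches the paper's.
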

For the proof, see Appendix~\ref{proof:thm:wellposed-flow}.
Also note that the regularity of the ODEs at the terminal time $t=1$ requires additional
assumptions and a separate analysis; for further discussion, see
\citet{wan2025elucidating}.

 Next, we review the smoothed-distance perspective behind the flow ODEs,
which gives a geometric intuition for the trajectory analysis below.

\subsection{The Smoothed-Distance Objective behind the Flow ODE}\label{sec:homotopy}

We recall the smoothed-distance viewpoint of flow matching from
\citet{cai2026improvingclassifierfreeguidanceflow}, which interpreted the
ideal vector fields in (\ref{eq:uncond ideal field}, \ref{eq:cond ideal field}) through a
time-varying objective built from a log-sum-exp smoothing of squared
distance. 
Specifically in the UFM setting, for $t\in[0,1)$, define the smoothed squared distance to
 $t\mathcal D$ by
\begin{equation}\label{soft scaled distance}
\operatorname{dist}_{t}^2(\bx,t\mathcal D)
:=
-2(1-t)^2
\log\left(
\int_{\mathcal{D}}
\exp\left(-\tfrac{\|\bx-t\by\|^2}{2(1-t)^2}\right)
p_\emptyset(\dd \by)
\right)
\end{equation}
and define the corresponding smoothed objective 
$f_t(\bx):= \frac12\operatorname{dist}_{t}^2(\bx,t\mathcal D) - \frac{1-t}{2}\|\bx\|^2.$

The next proposition summarizes the main roles of the above construction in UFM.
First, $f_t$ determines the ideal flow direction through a gradient
identity. Second, the smoothed-distance term captures the moving geometry
$t\mathcal D$ and converges to the terminal hard squared distance as
$t\uparrow1$. Consequently, the ideal flow ODE can be viewed, in the broad
continuation sense, as a homotopy-optimization dynamics \citep{watson1989modern,mobahi2015link} associated with the
terminal problem
\begin{equation*}
    \min_{\bx \in \mathbb{R}^d}~\tfrac{1}{2}\dist^2(\bx,\mathcal D).
\end{equation*}

\begin{proposition}[Smoothed-distance geometry]\label{prop:soft-distance-representation}
For every $t\in[0,1)$ and $\bx\in\R^d$, 
$\operatorname{dist}_{t}^2(\bx,t\mathcal D)$ and $f_t$ defined above have the following properties.
\begin{enumerate}
\item[(i)] For every \(t\in(0,1)\), the unconditional ideal vector field satisfies
\begin{equation*}
\vu(t,\bx)
=
\frac{\hat{\by}_t(\bx)-\bx}{1-t}
=
-\frac{\nabla_{\bx} f_t(\bx)}{t(1-t)}, 
\qquad \bx\in\R^d.
\end{equation*}
\item[(ii)] The smoothed squared distance is controlled by the hard squared distance to the moving data set:
\begin{equation*}
\dist^2(\bx,t\mathcal{D})
\le
\operatorname{dist}_{t}^2(\bx,t\mathcal D)
\le
\max_{\by\in \mathcal{D}}\|\bx-t\by\|^2.
\end{equation*}
\item[(iii)] There exists a point $\bz_t(\bx)\in t\Conv(\mathcal{D})$ such that $\operatorname{dist}_{t}^2(\bx,t\mathcal D) = \|\bx-\bz_t(\bx)\|^2.$

\item[(iv)] For every fixed $\bx\in\R^d$,
\begin{equation*}
\operatorname{dist}_{t}^2(\bx,t\mathcal D)
\to
\dist^2(\bx,\mathcal D)
\quad\text{and}\quad
f_t(\bx)
\to
\tfrac{1}{2}\dist^2(\bx,\mathcal D),
\qquad t\uparrow1.
\end{equation*}
\end{enumerate}
\end{proposition}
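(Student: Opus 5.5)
The plan is to work with the partition function $Z_t(\bx):=\int_{\mathcal D}\exp\bigl(-\|\bx-t\by\|^2/(2(1-t)^2)\bigr)\,p_\emptyset(\dd\by)$. By definition $\operatorname{dist}_t^2(\bx,t\mathcal D)=-2(1-t)^2\log Z_t(\bx)$, and every part of the proposition reduces to an elementary fact about $\log Z_t$.

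\emph{Part (i).} Since $\mathcal D$ is compact, the integrand and its $\bx$-gradient are bounded uniformly for $\bx$ in compact sets, so we may differentiate under the integral sign. This gives
\[
\nabla_{\bx}\log Z_t(\bx)=-\frac{\bx-t\hat{\by}_t(\bx)}{(1-t)^2},
\qquad\text{hence}\qquad
\tfrac12\nabla_{\bx}\operatorname{dist}_t^2=\bx-t\hat{\by}_t(\bx).
\]
Subtracting $\nabla_{\bx}\tfrac{1-t}{2}\|\bx\|^2=(1-t)\bx$ yields $\nabla f_t(\bx)=t\bx-t\hat{\by}_t(\bx)=-t(\hat{\by}_t(\bx)-\bx)$. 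Dividing by $-t(1-t)$ and comparing with (\ref{eq:uncond ideal field}) gives the identity, which is where $t>0$ is needed.

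\emph{Part (ii).} For every $\by\in\mathcal D$ we have $\dist(\bx,t\mathcal D)^2\le\|\bx-t\by\|^2\le\max_{\by'\in\mathcal D}\|\bx-t\by'\|^2$, and the maximum is attained by compactness. Therefore $Z_t(\bx)$ lies between $\exp(-M/(2(1-t)^2))$ and $\exp(-\dist^2(\bx,t\mathcal D)/(2(1-t)^2))$, where $M$ denotes the maximum. Taking $-2(1-t)^2\log$ reverses the inequalities and gives both bounds.

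\emph{Part (iii).} The function $\by\mapsto\|\bx-t\by\|^2$ is continuous on the compact set $\mathcal D$. By (ii), the value $\operatorname{dist}_t^2$ lies in the interval $[\min_{\mathcal D}\|\bx-t\by\|^2,\max_{\mathcal D}\|\bx-t\by\|^2]$. If $\mathcal D$ is connected, the intermediate value theorem gives a point of $\mathcal D$ that attains this value. In general, I would instead take the segment between a minimizer $\by_{\min}$ and a maximizer $\by_{\max}$. The map $s\mapsto\|\bx-t((1-s)\by_{\min}+s\by_{\max})\|^2$ is continuous on $[0,1]$ and takes both endpoint values, so some $s$ attains $\operatorname{dist}_t^2$. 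The corresponding point $\bz_t(\bx)=t((1-s)\by_{\min}+s\by_{\max})$ lies in $t\Conv(\mathcal D)$.

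\emph{Part (iv).} This is a Laplace/Varadhan-type limit and is the main step. Set $\varepsilon=(1-t)^2\to0$. The upper bound comes from $\|\bx-t\by\|\ge\|\bx-\by\|-(1-t)\|\by\|\ge\dist(\bx,\mathcal D)-(1-t)R$, where $R:=\max_{\mathcal D}\|\by\|$. This gives $\operatorname{dist}_t^2\le(\dist(\bx,\mathcal D)+(1-t)R)^2\to\dist^2(\bx,\mathcal D)$. For the lower bound, I would use that $\mathcal D=\supp(p_\emptyset)$: let $\by^\ast\in\mathcal D$ be a nearest point to $\bx$, fix $\delta>0$, and note that the ball $B_\delta(\{\by^\ast\})$ has positive mass $m_\delta$. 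On this ball, $\|\bx-t\by\|\le\dist(\bx,\mathcal D)+\delta+(1-t)R$, so
\[
\operatorname{dist}_t^2\le(\dist(\bx,\mathcal D)+\delta+(1-t)R)^2-2(1-t)^2\log m_\delta .
\]
As $t\uparrow1$ the last term vanishes. Letting $\delta\downarrow0$ then shows that the limsup is at most $\dist^2(\bx,\mathcal D)$. Together with the liminf bound, this gives convergence of $\operatorname{dist}_t^2$. Finally, $\tfrac{1-t}{2}\|\bx\|^2\to0$, so $f_t(\bx)\to\tfrac12\dist^2(\bx,\mathcal D)$. The only delicate point is the use of positive mass of small balls around support points, which is exactly why $\mathcal D$ is defined as the support.
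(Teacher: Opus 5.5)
\textbf{Gap in (iv).} The first half of (iv) is stated in the wrong direction. From $\|\bx-t\by\|\ge\dist(\bx,\mathcal D)-(1-t)R$ for all $\by\in\mathcal D$ you get $Z_t(\bx)\le\exp\bigl(-\max\{\dist(\bx,\mathcal D)-(1-t)R,0\}^2/(2(1-t)^2)\bigr)$. This gives the \emph{lower} bound $\operatorname{dist}_t^2(\bx,t\mathcal D)\ge\max\{\dist(\bx,\mathcal D)-(1-t)R,0\}^2$, which is the liminf half; the positive part is needed before squaring.

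The inequality you actually wrote, $\operatorname{dist}_t^2(\bx,t\mathcal D)\le(\dist(\bx,\mathcal D)+(1-t)R)^2$, does not follow and is false in general. Take $d=1$, let $p_\emptyset$ put mass $\eta$ at $0$ and $1-\eta$ at $10$, and set $\bx=0$. Then $\operatorname{dist}_t^2(\bx,t\mathcal D)/(1-t)^2\to2\log(1/\eta)$ as $t\uparrow1$. This exceeds $100=(\dist(\bx,\mathcal D)+(1-t)R)^2/(1-t)^2$ once $\eta<e^{-50}$.

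Correspondingly, your ``lower bound'' paragraph (the ball around $\by^\ast$) is really the limsup, i.e.\ upper, bound. Once the direction and labels are fixed, (iv) is complete. Alternatively, the liminf half follows from the lower bound in (ii) together with Lemma~\ref{lem:distance-under-scaling}.

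\textbf{Comparison with the paper.} Otherwise your route differs from the paper's in three places, all legitimately.

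For (i), the paper simply imports the identity from earlier work. You derive it directly from $\nabla_{\bx}\log Z_t$, which makes the statement self-contained.

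For (iii), you apply the intermediate value theorem on the segment between a nearest and a farthest point of $t\mathcal D$ itself, whose values are exactly the two bounds in (ii). The paper first proves that the maximum over $t\Conv(\mathcal D)$ equals the maximum over $t\mathcal D$, then works with extremizers in $t\Conv(\mathcal D)$. Your version skips that step.

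For (iv), you fix one nearest point $\by^\ast\in\mathcal D$ and absorb the scaling error $(1-t)R$ directly, so you only need positive mass of a single ball. The paper instead takes the $t$-dependent nearest point in $t\mathcal D$. That forces it to use the uniform mass bound of Lemma~\ref{lem:compact-support-uniform-mass}, compare $\operatorname{dist}_t^2$ with $\dist^2(\bx,t\mathcal D)$, and then pass to $\mathcal D$ via Lemma~\ref{lem:distance-under-scaling}.

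Your version of (iv) is more elementary. The paper's version yields $0\le\operatorname{dist}_t^2(\bx,t\mathcal D)-\dist^2(\bx,t\mathcal D)\le2\dist(\bx,t\mathcal D)\delta+\delta^2+2(1-t)^2\log(1/m_\delta)$ with $m_\delta$ independent of $\bx$. The convergence is therefore locally uniform in $\bx$, which your argument as written does not give, since the mass near $\by^\ast(\bx)$ depends on $\bx$. Pointwise convergence is all the proposition claims, so this costs you nothing here.
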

Part (i) is directly from
\citet{cai2026improvingclassifierfreeguidanceflow}, while parts (ii)--(iv)
are proved in Appendix~\ref{proof:prop:soft-distance-representation}.
 The same
identity holds for the conditional flow ODE (\ref{eq:cond flow}) after
replacing $p_\emptyset$, $\eta_t^{\bx}$, and $f_t$ by their conditional
counterparts.

Under this interpretation, the ODE (\ref{eq:uncond flow}) is a time-dependent gradient
dynamics whose objective $f_t(\bx)$ at time $t$ consists of $\tfrac{1}{2}\operatorname{dist}_{t}^2(\bx,t\mathcal D)$ 
 and a spatial correction $- \tfrac{1-t}{2}\|\bx\|^2$. Specifically,
$\operatorname{dist}_{t}^2(\bx,t\mathcal D)$ carries the data-dependent moving geometry, while $- \tfrac{1-t}{2}\|\bx\|^2$ corrects its spatial scaling. This is the geometric picture illustrated in
Figure~\ref{fig:illustration-of-geometry}. 
This viewpoint also suggests the natural metric to study along the
ODE (\ref{eq:uncond flow}) is $\dist(\bx,t\mathcal S)$, where \(\mathcal S\) is a relevant set in data space. The stagewise
theory below develops this idea by choosing \(\mathcal S\) according to the time regime. In the early stage, \(\mathcal S\) is a global mean ball; in the
early and intermediate stages, \(\mathcal S\) becomes the data convex hull; and in the
final stage, \(\mathcal S\) is refined to a local cluster neighborhood. Moreover, the resulting
moving-distance estimates give the attraction and absorption mechanisms that constitute the stagewise geometry of ODE (\ref{eq:uncond flow}).


\begin{figure}[h]
\centering
\includegraphics[width=0.70\textwidth]{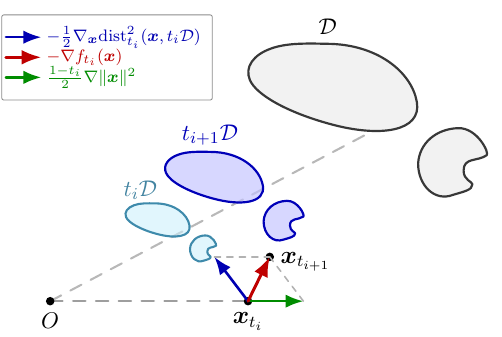}
\caption{
Schematic illustration of the geometry behind
UFM. The smoothed squared-distance term
\(\operatorname{dist}_{t}^2(\bx,t\mathcal D)\)
captures the moving data geometry associated with \(t\mathcal D\), while the
full objective
\(f_t(\bx)=\frac12\operatorname{dist}_{t}^2(\bx,t\mathcal D)
-\frac{1-t}{2}\|\bx\|^2\) determines the ideal flow direction.
}
\label{fig:illustration-of-geometry}
\end{figure}


\section{Stagewise Geometry of Unconditional Flow Matching}\label{sec:uncond}

We now analyze how the geometry controlling the unconditional flow ODE
(\ref{eq:uncond flow}) evolves along a single trajectory. The resulting picture has both a temporal and a hierarchical structure.
 Temporally, the trajectory first moves toward the global mean, then is still controlled by the scaled convex hull of the data, and finally localizes near a local cluster as $t\uparrow1$. 
 Hierarchically, the convex hull provides a coarse global localization backbone, while the mean ball and the local-cluster neighborhood give sharper descriptions in the early and final regimes, respectively.

Accordingly, we present the unconditional theory in temporal order. Section~\ref{sec:uncond flow early stage} identifies the early-stage mean geometry. Section~\ref{sec:inter stage for convex hull} proves the convex-hull attraction and absorption mechanism,
 which acts as the global coarse geometry connecting the early and intermediate stages. Section~\ref{sec:final stage of cluster} then refines the terminal geometry by replacing the global convex hull with a local neighborhood of a target cluster.

\begin{figure*}[h]
\centering
\begin{minipage}[t]{0.32\textwidth}
\centering
\includegraphics[page=1,width=\linewidth,trim=2pt 2pt 2pt 2pt,clip]{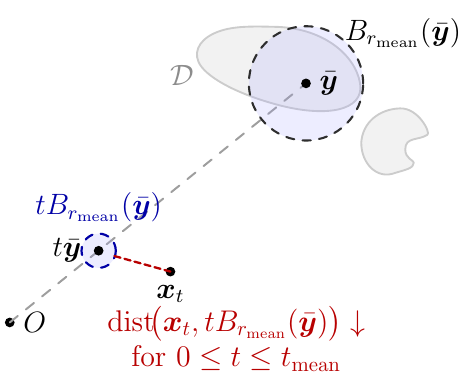}\\[-0.3em]
{\small (a) Early mean ball.\par}
\end{minipage}
\hfill
\begin{minipage}[t]{0.32\textwidth}
\centering
\includegraphics[page=2,width=\linewidth,trim=2pt 2pt 2pt 2pt,clip]{figure/sec3/early_and_final.pdf}\\[-0.3em]
{\small (b) Data convex hull.\par}
\end{minipage}
\hfill
\begin{minipage}[t]{0.32\textwidth}
\centering
\includegraphics[page=3,width=\linewidth,trim=2pt 2pt 2pt 2pt,clip]{figure/sec3/early_and_final.pdf}\\[-0.3em]
{\small (c) Final local cluster.\par}
\end{minipage}
\caption{
Stagewise moving geometries for the unconditional trajectory. In the early
stage, $\bx_t$ moves toward the moving mean ball
\(tB_{r_{\mathrm{mean}}}(\bar{\by})\). The scaled data convex hull
\(t\Conv(\mathcal D)\) provides a global localization geometry. In the final
stage near a target cluster \(\Omega\), $\bx_t$ moves toward the cluster
neighborhood \(tB_{r_\Omega}(\Omega)\).
}
\label{fig:early-final-geometry}
\end{figure*}

\subsection{Early-stage Attraction by the Global Mean}\label{sec:uncond flow early stage}

The early stage is governed by the data mean. Define
$D_\emptyset = \diam(\mathcal{D})$
and 
$M_{\mathcal{D}} := \max_{\by \in \mathcal{D}}\|\by\|$.
Fix $\bx_0 \in \R^d\setminus\{\bzero\}$ and, for
$t\in [0,1)$, define
\[
r(t)
:=
M_{\mathcal D}
\left[
\exp\left(
\tfrac{
tD_{\emptyset}
}{
(1-t)^2
}
\bigl(
(1-t)\|\bx_0\|
+
2tM_{\mathcal D}
\bigr)
\right)-1
\right],
\]
then we choose $t_{\mathrm{mean}}$ such that
\[
(1-t_{\mathrm{mean}})\|\bx_0\|
>
t_{\mathrm{mean}}
\bigl(D_\emptyset+r(t_{\mathrm{mean}})\bigr).
\]
Such a choice exists because
$r(t)=\bigO(t)$ as
$t\downarrow0$, while the left-hand side of the preceding
inequality converges to $\|\bx_0\|>0$. Define $r_{\mathrm{mean}} := r(t_{\mathrm{mean}})$, then, the following theorem describes the
resulting early-stage behaviour for \((\bx_t)_{t\in [0,t_{\mathrm{mean}}]}\); see also
Figure~\ref{fig:early-final-geometry}(a).

\begin{theorem}[Early-stage attraction by the global mean]\label{thm:early-mean}
Let $\bx_t$ solve the unconditional flow ODE~(\ref{eq:uncond flow}) with
initial condition $\bx_0$, and let $t_{\mathrm{mean}}$ and
$r_{\mathrm{mean}}$ be fixed as above. Then the following hold.
\begin{enumerate}
\item[(i)]
\emph{(Non-entry)}
For every $t\in[0,t_{\mathrm{mean}}]$,
\(
\bx_t
\notin
tB_{r_{\mathrm{mean}}}(\bar{\by}).
\)

\item[(ii)]
\emph{(Attraction)}
For every $0\le s\le t\le t_{\mathrm{mean}}$,
\[
\dist\!\left(
\bx_t,
tB_{r_{\mathrm{mean}}}(\bar{\by})
\right)
\le
\tfrac{1-t}{1-s}
\dist\!\left(
\bx_s,
sB_{r_{\mathrm{mean}}}(\bar{\by})
\right).
\]
Hence the function
\(
t\longmapsto
\dist\!\left(
\bx_t,
tB_{r_{\mathrm{mean}}}(\bar{\by})
\right)
\)
is strictly decreasing on $[0,t_{\mathrm{mean}}]$.

\item[(iii)]
Taking $s=0$ in part~(ii) gives
\(
\dist\!\left(
\bx_t,
tB_{r_{\mathrm{mean}}}(\bar{\by})
\right)
\le
(1-t)\|\bx_0\|,
\
t\in[0,t_{\mathrm{mean}}].
\)
\end{enumerate}
\end{theorem}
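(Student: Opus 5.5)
Set $d(t):=\dist(\bx_t,tB_{r_{\mathrm{mean}}}(\bar{\by}))=\bigl(\|\bx_t-t\bar{\by}\|-tr_{\mathrm{mean}}\bigr)_+$, using that the closed ball $tB_{r}(\bar{\by})$ is centered at $t\bar{\by}$ with radius $tr$. The plan is to prove the differential inequality $\frac{\dd}{\dd t}\bigl(\|\bx_t-t\bar{\by}\|-tr_{\mathrm{mean}}\bigr)\le -\frac{\|\bx_t-t\bar{\by}\|-tr_{\mathrm{mean}}}{1-t}$ on $[0,t_{\mathrm{mean}}]$. Gr\"onwall then gives $(ii)$, and $(iii)$ follows by taking $s=0$, since $d(0)=\|\bx_0\|$. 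For $(i)$, we show the positive part is never active: $d(t)>0$ throughout.

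\textbf{Derivative computation.} Write $\bu_t=\bx_t-t\bar{\by}$. Using the ODE~(\ref{eq:uncond flow}) with $\vu=(\hat{\by}_t-\bx_t)/(1-t)$, we have
\[
\dot{\bu}_t=\frac{\hat{\by}_t-\bx_t}{1-t}-\bar{\by}=\frac{-\bu_t+(\hat{\by}_t-\bar{\by})}{1-t}.
\]
Hence, whenever $\bu_t\neq 0$,
\[
\frac{\dd}{\dd t}\|\bu_t\|\le\frac{-\|\bu_t\|+\|\hat{\by}_t-\bar{\by}\|}{1-t}.
\]
Consequently,
\[
\frac{\dd}{\dd t}\bigl(\|\bu_t\|-tr\bigr)\le -\frac{\|\bu_t\|-tr}{1-t}+\frac{\|\hat{\by}_t-\bar{\by}\|-r}{1-t}.
\]
The needed estimate therefore reduces to the key bound $\|\hat{\by}_t(\bx_t)-\bar{\by}\|\le r_{\mathrm{mean}}$ for $t\le t_{\mathrm{mean}}$.

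\textbf{Key posterior-mean bound.} The posterior $\eta_t^{\bx}$ has density with respect to $p_\emptyset$ proportional to $\exp\bigl(t\langle\bx,\by\rangle/(1-t)^2-t^2\|\by\|^2/(2(1-t)^2)\bigr)$. Its log-density oscillates over $\mathcal D$ by at most
\[
\frac{t}{(1-t)^2}\sup_{\by,\by'\in\mathcal D}\bigl|\langle \bx-\tfrac t2(\by+\by'),\by-\by'\rangle\bigr|\le \frac{tD_\emptyset}{(1-t)^2}\bigl(\|\bx\|+tM_{\mathcal D}\bigr).
\]
If the density ratio $\rho$ satisfies $e^{-L}\le\rho\le e^{L}$ up to normalization, then $\|\hat{\by}-\bar{\by}\|=\|\int(\by-\bar{\by})(\rho-1)\,\dd p_\emptyset\|\le M_{\mathcal D}(e^{L}-1)$, after bounding $|\rho-1|\le e^L-1$ (with a crude constant adjustment if needed). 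This matches the form of $r(t)$ provided $\|\bx_t\|\le(1-t)\|\bx_0\|+tM_{\mathcal D}$. That bound on $\|\bx_t\|$ comes a priori from the representation $\dot{\bx}=(\hat{\by}-\bx)/(1-t)$: the quantity $\|\bx_t\|/(1-t)$ has derivative at most $M_{\mathcal D}/(1-t)^2$, so $\|\bx_t\|\le(1-t)\|\bx_0\|+tM_{\mathcal D}$. The factor $2tM_{\mathcal D}$ in $r(t)$ absorbs the extra $tM_{\mathcal D}$. Since $r$ is increasing in $t$, $r(t)\le r_{\mathrm{mean}}$ on $[0,t_{\mathrm{mean}}]$.

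\textbf{Non-entry and strict decrease.} From the computation above, $\|\bu_t\|\ge(1-t)\|\bx_0\|-t\|\bar{\by}\|-\ldots$; more cleanly, integrating the reverse inequality $\frac{\dd}{\dd t}\|\bu_t\|\ge -(\|\bu_t\|+\|\hat{\by}_t-\bar{\by}\|)/(1-t)$, or directly using $\|\bx_t-t\bar{\by}\|\ge\|\bx_t-t\hat{\ \ }\|$-type bounds, gives $\|\bu_t\|\ge(1-t)\|\bx_0\|-t(D_\emptyset+r)$. The defining inequality for $t_{\mathrm{mean}}$ is used exactly here, and this lower bound must hold for all $t\le t_{\mathrm{mean}}$, not just at the endpoint. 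I expect this lower bound, together with the monotonicity of its right-hand side, to be the main obstacle. If the direct route fails, I will instead lower-bound $\|\bu_t\|/(1-t)$ via its derivative, which is at least $-(D_\emptyset+r)/(1-t)^2$ because $\|\hat{\by}-\bar{\by}\|$-type terms are bounded by $D_\emptyset+r$. Once $\|\bu_t\|>tr_{\mathrm{mean}}$ holds, $d(t)>0$, which proves $(i)$. The factor $(1-t)/(1-s)<1$ then yields strict decrease in $(ii)$.
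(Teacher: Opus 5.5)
Your route is essentially the paper's. You get $\|\hat{\by}_t(\bx_t)-\bar{\by}\|\le r_{\mathrm{mean}}$ from the same log-weight oscillation bound combined with the a priori estimate $\|\bx_t\|\le(1-t)\|\bx_0\|+tM_{\mathcal D}$. No constant adjustment is needed there, since $\int(\by-\bar{\by})(\rho-1)\,p_\emptyset(\dd\by)=\int\by(\rho-1)\,p_\emptyset(\dd\by)$. Differentiating $\|\bx_t-t\bar{\by}\|-tr_{\mathrm{mean}}$ directly is just the ball case of the paper's general scaled-convex-target lemma. It has the small bonus of covering $s=0$ without the paper's limiting argument, because $\bu_0=\bx_0\neq\bzero$.

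The one step that does not close as written is non-entry. From $\|\bu_t\|\ge(1-t)\|\bx_0\|-t(D_\emptyset+r_{\mathrm{mean}})$, the defining inequality for $t_{\mathrm{mean}}$ only yields $\|\bu_t\|>0$. You need $\|\bu_t\|>tr_{\mathrm{mean}}$, which would require $(1-t)\|\bx_0\|>t(D_\emptyset+2r_{\mathrm{mean}})$; that is not assumed.

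The fix is to bound $\|\hat{\by}_s(\bx_s)-\bar{\by}\|\le D_\emptyset$ alone, since both points lie in $\Conv(\mathcal D)$. This gives $\|\bu_t\|\ge(1-t)\|\bx_0\|-tD_\emptyset$, hence $d(t)\ge(1-t)\|\bx_0\|-t(D_\emptyset+r_{\mathrm{mean}})$. The right-hand side is affine and decreasing in $t$, so its positivity at $t_{\mathrm{mean}}$ covers all of $[0,t_{\mathrm{mean}}]$; the step you flagged as the main obstacle is not one. Equivalently, the derivative lower bound $-(D_\emptyset+r_{\mathrm{mean}})/(1-t)^2$ is the correct one for $(\|\bu_t\|-tr_{\mathrm{mean}})/(1-t)$, not for $\|\bu_t\|/(1-t)$.

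The paper obtains this lower bound from the exact representation $\bx_t-t\bar{\by}=(1-t)\bx_0+(1-t)\int_0^t(\hat{\by}_s(\bx_s)-\bar{\by})(1-s)^{-2}\,\dd s$ and the reverse triangle inequality. That route also avoids differentiating $\|\bu_t\|$ before you know $\bu_t\neq\bzero$. If you keep the derivative route, prove (i) first with a first-hitting-time or a.e.-derivative argument. Your computation for (ii) needs $\bu_t\neq\bzero$, and (i) is what supplies it.
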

For the proof, see Appendix~\ref{proof:thm:early-mean}.
The radius $r_{\mathrm{mean}}$ is tied to the length of the early
interval: a smaller $t_{\mathrm{mean}}$ gives a smaller mean ball,
whereas a larger $t_{\mathrm{mean}}$ gives a longer interval but a
less localized early-stage description.

The same early-stage result holds for the Euler
iterates on a sufficiently fine initial grid.
\begin{theorem}[Discrete version of Theorem~\ref{thm:early-mean}]
\label{cor:early-mean-euler}
Under the early-stage setting above, consider the explicit Euler
scheme~(\ref{eq:ideal-euler}) with
$\vv_\star=\vu$ on the complete sampling grid
\(
t_i=i\Delta t,
\
i=0,\ldots,K,
\
\Delta t=1/K,
\)
initialized at $\bz_0=\bx_0$.
If $\Delta t\le t_{\mathrm{mean}}/2$, then the following hold.
\begin{enumerate}
\item[(i)]
\emph{(Non-entry)} we have
\(
\bz_i
\notin
t_iB_{r_{\mathrm{mean}}}(\bar{\by})
\) for every \(i\) with $t_i\le t_{\mathrm{mean}}$.

\item[(ii)]
\emph{(Attraction)}
For every $0\le j\le i$ with $t_i\le t_{\mathrm{mean}}$,
\[
\dist\!\left(
\bz_i,
t_iB_{r_{\mathrm{mean}}}(\bar{\by})
\right)
\le
\tfrac{1-t_i}{1-t_j}
\dist\!\left(
\bz_j,
t_jB_{r_{\mathrm{mean}}}(\bar{\by})
\right).
\]
Hence the sequence
\(
i
\longmapsto
\dist\!\left(
\bz_i,
t_iB_{r_{\mathrm{mean}}}(\bar{\by})
\right)
\)
is strictly decreasing.

\item[(iii)]
Taking $j=0$ in part~(ii) gives
\(
\dist\!\left(
\bz_i,
t_iB_{r_{\mathrm{mean}}}(\bar{\by})
\right)
\le
(1-t_i)\|\bx_0\|
\)
with $t_i\le t_{\mathrm{mean}}$.
\end{enumerate}
\end{theorem}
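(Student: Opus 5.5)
The key identity is exact for one Euler step. With $\vu(t,\bx)=(\hat{\by}_t(\bx)-\bx)/(1-t)$ and $\Delta t=t_{i+1}-t_i$, we have
\[
\bz_{i+1}=\bz_i+\tfrac{\Delta t}{1-t_i}\bigl(\hat{\by}_{t_i}(\bz_i)-\bz_i\bigr)
=\tfrac{1-t_{i+1}}{1-t_i}\,\bz_i+\tfrac{\Delta t}{1-t_i}\,\hat{\by}_{t_i}(\bz_i).
\]
So $\bz_{i+1}$ is a convex combination of $\bz_i$ and the posterior mean $\hat{\by}_{t_i}(\bz_i)$, with weights $\lambda=\tfrac{1-t_{i+1}}{1-t_i}$ and $1-\lambda$. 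Likewise $t_{i+1}\bar{\by}=\lambda t_i\bar{\by}+(1-\lambda)\bar{\by}$. Hence the moving ball satisfies $t_{i+1}B_{r}(\bar{\by})\supseteq\lambda\, t_iB_r(\bar{\by})+(1-\lambda)B_{r}(\bar{\by})$, since $t_{i+1}r\le\lambda t_i r+(1-\lambda)r$ with equality in fact.

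Distance to a convex set is convex and subadditive under Minkowski combinations, i.e.\ $\dist(\lambda a+(1-\lambda)b,\lambda A+(1-\lambda)B)\le\lambda\dist(a,A)+(1-\lambda)\dist(b,B)$. Therefore
\[
\dist(\bz_{i+1},t_{i+1}B_r(\bar{\by}))\le\lambda\,\dist(\bz_i,t_iB_r(\bar{\by}))+(1-\lambda)\dist\bigl(\hat{\by}_{t_i}(\bz_i),B_r(\bar{\by})\bigr).
\]
This reduces everything to one claim: $\|\hat{\by}_{t_i}(\bz_i)-\bar{\by}\|\le r_{\mathrm{mean}}$ whenever $t_i\le t_{\mathrm{mean}}$ and the iterate has the a priori bound used for the continuous result. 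Given the claim, the last term vanishes and we get the one-step factor $\tfrac{1-t_{i+1}}{1-t_i}$. Telescoping gives (ii), and $j=0$ with $\dist(\bz_0,\{0\})\le\|\bx_0\|$ gives (iii).

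The main obstacle is the claim, which is where the continuous proof of Theorem~\ref{thm:early-mean} is reused. There, the posterior weights $\exp(-\|\bx-t\by\|^2/(2(1-t)^2))$ have log-ratio across $\by,\by'\in\mathcal D$ at most $\tfrac{tD_\emptyset}{(1-t)^2}\bigl(\|\bx\|+tM_{\mathcal D}\bigr)$. The Radon–Nikodym derivative of $\eta_t^{\bx}$ against $p_\emptyset$ therefore lies in $[e^{-L},e^{L}]$, and $\|\hat{\by}_t-\bar{\by}\|\le M_{\mathcal D}(e^{L}-1)$. This requires $\|\bx\|\le(1-t)\|\bx_0\|+tM_{\mathcal D}$, which matches the exponent in $r(t)$, and $r$ is increasing in $t$.

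So I will prove by induction on $i$ the a priori bound $\|\bz_i\|\le(1-t_i)\|\bx_0\|+t_iM_{\mathcal D}$. It follows from the same convex-combination identity, because $\|\hat{\by}\|\le M_{\mathcal D}$. Every iterate with $t_i\le t_{\mathrm{mean}}$ then satisfies the claim, and the induction and the contraction run jointly.

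For (i), I will argue as in the continuous case. Since $\dist(\bz_i,t_iB_r)\ge\|\bz_i-t_i\bar{\by}\|-t_ir$, it suffices to show $\|\bz_i-t_i\bar{\by}\|>t_ir_{\mathrm{mean}}$. By the identity, $\bz_i=(1-t_i)\bx_0+t_i\,\bar{\bq}_i$, where $\bar{\bq}_i$ is a convex average of posterior means and so lies within $r_{\mathrm{mean}}$ of $\bar{\by}$. Hence $\|\bz_i-t_i\bar{\by}\|\ge(1-t_i)\|\bx_0\|-t_ir_{\mathrm{mean}}$. The defining inequality of $t_{\mathrm{mean}}$, with $D_\emptyset\ge0$ and monotonicity in $t$, then gives $\|\bz_i-t_i\bar{\by}\|>t_ir_{\mathrm{mean}}$ for $t_i\le t_{\mathrm{mean}}$.

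Strict decrease follows because the distance is positive by (i) and $\lambda<1$. The condition $\Delta t\le t_{\mathrm{mean}}/2$ serves only to make the grid range nontrivial.
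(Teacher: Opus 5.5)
Your argument for (ii) and (iii) matches the paper's. The shared steps are the affine form of the Euler step, the a priori bound $\|\bz_i\|\le(1-t_i)\|\bx_0\|+t_iM_{\mathcal D}$ by induction, and the localization $\|\hat{\by}_{t_i}(\bz_i)-\bar{\by}\|\le r_{\mathrm{mean}}$ from the log-ratio bound and monotonicity of $r$. Your one-step contraction is also the same: the Minkowski-convexity inequality is the comparison-point argument of Lemma~\ref{lem:euler-scaled-convex-contraction}.

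The non-entry step (i), however, does not close as written. From $\|\bar{\bq}_i-\bar{\by}\|\le r_{\mathrm{mean}}$ you get $\|\bz_i-t_i\bar{\by}\|\ge(1-t_i)\|\bx_0\|-t_ir_{\mathrm{mean}}$. To reach $\|\bz_i-t_i\bar{\by}\|>t_ir_{\mathrm{mean}}$ you would then need $(1-t_i)\|\bx_0\|>2t_ir_{\mathrm{mean}}$. The defining inequality of $t_{\mathrm{mean}}$ only gives $(1-t_i)\|\bx_0\|>t_i(D_\emptyset+r_{\mathrm{mean}})$. Discarding $D_\emptyset\ge0$ merely yields $\|\bz_i-t_i\bar{\by}\|>0$.

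The two requirements differ whenever $r_{\mathrm{mean}}>D_\emptyset$, and admissible choices allow this. Take $\mathcal D=\{(s,0):99\le s\le100\}\subset\R^2$, so $D_\emptyset=1$ and $M_{\mathcal D}=100$, with $\|\bx_0\|=1$ and $t_{\mathrm{mean}}=0.03$. Then $r_{\mathrm{mean}}\approx24.9$, so $t_{\mathrm{mean}}$ is admissible ($0.97>0.03\cdot25.9$), while $2t_{\mathrm{mean}}r_{\mathrm{mean}}\approx1.49>0.97$. With $\Delta t=0.01$, your chain of inequalities therefore does not certify non-entry at the grid point $t_3=t_{\mathrm{mean}}$.

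The repair is the one the paper uses, and also the one in the continuous proof you say you are imitating. $\bar{\bq}_i$ is a convex combination of posterior means, each lying in $\Conv(\mathcal D)$ by Lemma~\ref{lem:mean-in-closed-conv}, and $\bar{\by}\in\Conv(\mathcal D)$, so $\|\bar{\bq}_i-\bar{\by}\|\le D_\emptyset$. Then $\|\bz_i-t_i\bar{\by}\|\ge(1-t_i)\|\bx_0\|-t_iD_\emptyset>t_ir_{\mathrm{mean}}$ for $t_i\le t_{\mathrm{mean}}$, which is exactly what the defining inequality supports. This step does not need the posterior localization at all. Since the strict decrease in (ii) rests on the positivity supplied by (i), the same fix restores it.
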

For the proof, see Appendix~\ref{proof:cor:early-mean-euler}.

\subsection{Early and Intermediate Attraction and Absorption by the Convex Hull}\label{sec:inter stage for convex hull}

The early mean ball gives a description of the initial stage, but it does not describe the global data geometry that controls the whole \((\bx_t)_{t\in [0,1)}\). The natural coarse geometry is the scaled convex hull $t\Conv(\mathcal D)$. 
Indeed, the posterior mean $\hat{\by}_t(\bx)$ is a convex combination of data points and hence belongs to $\Conv(\mathcal D)$ (see Lemma~\ref{lem:mean-in-closed-conv}).
 Consequently, \(\vu(t,\bx)\) always points from the current position toward a point in $\Conv(\mathcal D)$, and \(\bx_t\) keeps moving toward $t\Conv(\mathcal D)$  before the possible entrance.
Further, once \(\bx_t\) enters $t\Conv(\mathcal D)$, it remains inside for all later times;  see illustration in Figure~\ref{fig:early-final-geometry}(b).
Thus \(t\Conv(\mathcal D)\) acts as the coarse global backbone of the whole trajectory, and the following result makes this backbone precise: 
\begin{theorem}[Convex-hull attraction and absorption]\label{thm:convex-hull}
Let $\bx_t$ solve the unconditional flow ODE (\ref{eq:uncond flow}) on $[0,1)$ with initial condition $\bx_0$.
Then the following hold.
\begin{enumerate}
\item[(i)] \emph{(Attraction)}
For every \(0\le s\le t<1\),
\[
\dist\!\left(
\bx_t,
t\Conv(\mathcal D)
\right)
\le
\tfrac{1-t}{1-s}
\dist\!\left(
\bx_s,
s\Conv(\mathcal D)
\right).
\]
Hence the function
\(
t\mapsto\dist(\bx_t,t\Conv(\mathcal D))
\)
is non-increasing on \([0,1)\).
\item[(ii)] Taking $s=0$ in part~(i) gives
\(
\dist\!\left(
\bx_t,
t\Conv(\mathcal D)
\right)
\le
(1-t)\|\bx_0\|,
\
t\in[0,1).
\)
\item[(iii)] \emph{(Absorption)}
If there exists \(t_{\mathrm{conv}}\in[0,1)\) such that
\(
\bx_{t_{\mathrm{conv}}}
\in
t_{\mathrm{conv}}\Conv(\mathcal D),
\)
then
\(
\bx_t\in t\Conv(\mathcal D)
\)
for every \(t\in[t_{\mathrm{conv}},1)\).
\end{enumerate}
\end{theorem}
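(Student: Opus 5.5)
The plan is to compare the trajectory with an explicit moving comparison point and use convexity of $K:=\Conv(\mathcal D)$; no derivative of the distance function is needed. Write the ODE as $\dot\bx_t=(\hat\by_t(\bx_t)-\bx_t)/(1-t)$, i.e.
\[
\frac{\dd}{\dd t}\bigl(\bx_t\bigr)=\frac{\hat\by_t(\bx_t)-\bx_t}{1-t}.
\]
Equivalently, the rescaled quantity $\bx_t/(1-t)$ satisfies
\[
\frac{\dd}{\dd t}\Bigl(\frac{\bx_t}{1-t}\Bigr)=\frac{\hat\by_t(\bx_t)}{(1-t)^2}.
\]
Integrating from $s$ to $t$ gives the variation-of-constants identity
\[
\bx_t=\tfrac{1-t}{1-s}\bx_s+(1-t)\int_s^t\frac{\hat\by_\tau(\bx_\tau)}{(1-\tau)^2}\,\dd\tau .
\]
By Lemma~\ref{lem:mean-in-closed-conv}, $\hat\by_\tau(\bx_\tau)\in K$; $K$ is compact and convex because $\mathcal D$ is compact. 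The weights satisfy $(1-t)\int_s^t(1-\tau)^{-2}\dd\tau=\frac{t-s}{1-s}$. Hence the integral term equals $\frac{t-s}{1-s}\,\bq$ with $\bq\in K$, since a normalized integral of a continuous $K$-valued curve lies in the closed convex set $K$.

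Now take $\bz:=\Proj_{sK}(\bx_s)$, written $\bz=s\ba$ with $\ba\in K$ (if $s=0$, take $\bz=\bzero$ and any $\ba$). The key algebraic check is that the candidate point
\[
\bz':=\tfrac{1-t}{1-s}\,s\ba+\tfrac{t-s}{1-s}\,\bq
\]
lies in $tK$. Its coefficients satisfy $\tfrac{(1-t)s}{1-s}+\tfrac{t-s}{1-s}=\tfrac{s-ts+t-s}{1-s}=t$. Therefore $\bz'=t\bigl(\lambda\ba+(1-\lambda)\bq\bigr)$ with $\lambda=\tfrac{(1-t)s}{t(1-s)}\in[0,1]$, and convexity gives $\bz'\in tK$. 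Then
\[
\dist(\bx_t,tK)\le\|\bx_t-\bz'\|=\tfrac{1-t}{1-s}\|\bx_s-s\ba\|=\tfrac{1-t}{1-s}\dist(\bx_s,sK),
\]
which is (i). Monotonicity follows from $\tfrac{1-t}{1-s}\le1$. Part (ii) is the case $s=0$, since $0\cdot K=\{\bzero\}$ and $\dist(\bx_0,\{\bzero\})=\|\bx_0\|$. Part (iii) is immediate from (i): if the distance vanishes at $t_{\mathrm{conv}}$, it vanishes for all later $t$, and $tK$ is closed.

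The only delicate points are technical. First, Theorem~\ref{thm:wellposed-flow} must supply a $C^1$ solution on each $[0,T]$, hence on $[0,1)$, so the integral identity is legitimate. Second, the vector-valued integral of a $K$-valued continuous function, normalized by its total weight, must lie in $K$. This follows by a Riemann-sum or separating-hyperplane argument using that $K$ is closed and convex; this is the step I would write out most carefully. The case $t=0$ (hence $s=0$) is trivial.
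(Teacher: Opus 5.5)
Your argument is correct and takes a genuinely different route from the paper's.

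\textbf{The paper's route.} The paper obtains the theorem from Lemma~\ref{lem:continuous-scaled-convex-contraction}. It differentiates $\dist^2(\bx_t,t\Conv(\mathcal D))$ via the squared-distance gradient formula (Lemma~\ref{lem:squared-distance-differentiability}). It then uses the variational inequality at the projection to get $\frac{\dd}{\dd t}\dist^2\le-\frac{2}{1-t}\dist^2$ on $(0,1)$ and integrates. Finally, it recovers $s=0$ by a separate continuity argument and treats $t_{\mathrm{conv}}=0$ separately in (iii).

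\textbf{Your route.} You instead integrate the ODE exactly, using the same identity $\frac{\dd}{\dd t}(\bx_t/(1-t))=\hat{\by}_t(\bx_t)/(1-t)^2$ that the paper uses in Lemma~\ref{lem:early-norm-bound}. This shows that the exact flow map from $s$ to $t$ is a single ``Euler step'' $\bx_t=\frac{1-t}{1-s}\bx_s+\frac{t-s}{1-s}\bq$, with $\bq$ a time-averaged posterior mean. After that, your comparison point is precisely the one in the paper's discrete Lemma~\ref{lem:euler-scaled-convex-contraction}.

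\textbf{What your route buys.} The proof is shorter and needs no differentiability of the distance function. It handles $s=0$ and $t_{\mathrm{conv}}=0$ uniformly. It also makes the continuous theorem and its Euler counterpart (Theorem~\ref{cor:convex-hull-euler}) instances of the same one-step lemma.

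The step you flag as delicate is already covered by the paper. The point $\bq$ is the mean of the pushforward of the normalized weight $(1-\tau)^{-2}\dd\tau$ on $[s,t]$ under the continuous map $\tau\mapsto\hat{\by}_\tau(\bx_\tau)$. This is a probability measure supported in the compact set $\Conv(\mathcal D)$, so Lemma~\ref{lem:mean-in-closed-conv} gives $\bq\in\Conv(\mathcal D)$.

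\textbf{What the paper's route buys.} The local differential argument is more flexible. At each instant it needs only some convex $C_t\subset S$ containing both the normalized projection and the current posterior mean, and $C_t$ may move with $t$. The final-stage Theorem~\ref{thm:final-local} exploits exactly this, with the nonconvex target $B_{r_\Omega}(\Omega)$ and $C_t=B_{r_\Omega}(\by_t^\Omega(\bx_t))$. Your integrated argument instead requires a single convex set containing both $\ba$ and the average of the posterior means over all of $[s,t]$. So it does not carry over to that setting without further work.
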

For the proof, see Appendix~\ref{proof:thm:convex-hull}. The theorem says that the trajectory moves toward  $t\Conv(\mathcal D)$ at the rate $1-t$, and that after first entrance it remains inside  $t\Conv(\mathcal D)$. Thus $t\Conv(\mathcal D)$ gives a coarse but end-to-end localization region for the unconditional flow ODE.

For $t>0$, this unscaled formulation is equivalent to the scaled-coordinate statement in \citet{wan2025elucidating}, which controls $\dist(\bx_t/t,\Conv(\mathcal D))$. Indeed, $\dist(\bx_t/t,\Conv(\mathcal D))=\frac{1}{t}\dist(\bx_t,t\Conv(\mathcal D))$.
We use the unscaled form because it follows the original flow trajectory directly and remains meaningful at the initial time $t=0$, 
where the scaled coordinate $\bx_t/t$ is singular.
In addition, the same formulation extends exactly to the discrete scheme: the Euler update preserves the same attraction and absorption mechanism on the grid, as the following discrete counterpart shows:

\begin{theorem}[Discrete version of Theorem~\ref{thm:convex-hull}]\label{cor:convex-hull-euler}
Consider the explicit Euler scheme (\ref{eq:ideal-euler}) with $\vv_\star=\vu$ on a uniform grid $t_i=i\Delta t$, $i=0,\ldots,N$, where \(N\ge1\), \(\Delta t>0\), and \(t_N\le1\), initialized at \(\boldsymbol z_0=\boldsymbol x_0\).
Then the following hold.
\begin{enumerate}
\item[(i)] \emph{(Attraction)} For every \(0\le j\le i\le N\) with \(t_j<1\),
\begin{equation*}
\dist(\bz_i,t_i\Conv(\mathcal D))
\le
\tfrac{1-t_i}{1-t_j}
\dist(\bz_j,t_j\Conv(\mathcal D)).
\end{equation*}
Hence the sequence $i\mapsto\dist(\bz_i,t_i\Conv(\mathcal D))$ is nonincreasing on $i=0,\ldots,N$.
\item[(ii)] Taking $j=0$ in part~(i) gives
\(\dist(\bz_i,t_i\Conv(\mathcal{D}))
\le
(1-t_i)\|\bx_0\|,
\
i=0,\ldots,N.\)
\item[(iii)] \emph{(Absorption)} If there exists an entrance index $k\in\{0,\ldots,N\}$ such that $\bz_k\in t_k\Conv(\mathcal{D})$, then
$\bz_i\in t_i\Conv(\mathcal{D})$ for every $i=k,k+1,\ldots,N$.
\end{enumerate}
\end{theorem}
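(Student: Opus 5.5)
The argument reduces to a one-step estimate for the Euler update, which is then iterated. Write $C:=\Conv(\mathcal D)$; it is compact and convex, since $\mathcal D$ is compact. The update with $\vv_\star=\vu$ is
\[
\bz_{i+1}=\bz_i+\tfrac{\Delta t}{1-t_i}\bigl(\hat{\by}_{t_i}(\bz_i)-\bz_i\bigr)
=\Bigl(1-\tfrac{\Delta t}{1-t_i}\Bigr)\bz_i+\tfrac{\Delta t}{1-t_i}\,\hat{\by}_{t_i}(\bz_i).
\]
The coefficient $\lambda_i:=\Delta t/(1-t_i)$ lies in $(0,1]$ whenever $t_i<1$ and $t_{i+1}\le1$, because $\Delta t\le 1-t_i$. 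Also $1-\lambda_i=(1-t_{i+1})/(1-t_i)$. By Lemma~\ref{lem:mean-in-closed-conv}, $\hat{\by}_{t_i}(\bz_i)\in C$.

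The key step is the one-step inequality
\[
\dist(\bz_{i+1},t_{i+1}C)\le \tfrac{1-t_{i+1}}{1-t_i}\,\dist(\bz_i,t_iC),
\qquad t_i<1.
\]
To prove it, let $\bq:=\Proj_{t_iC}(\bz_i)=t_i\ba$ with $\ba\in C$, and set $\by:=\hat{\by}_{t_i}(\bz_i)\in C$. Consider the candidate point
\[
\bz^\ast:=(1-\lambda_i)\bq+\lambda_i\by=(1-\lambda_i)t_i\ba+\lambda_i\by .
\]
Since $(1-\lambda_i)t_i+\lambda_i=1-(1-\lambda_i)(1-t_i)=1-(1-t_{i+1})=t_{i+1}$, we can rewrite
\[
\bz^\ast=t_{i+1}\Bigl(\tfrac{(1-\lambda_i)t_i}{t_{i+1}}\ba+\tfrac{\lambda_i}{t_{i+1}}\by\Bigr).
\]
The bracket is a convex combination of points of $C$, so $\bz^\ast\in t_{i+1}C$ by convexity. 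If $t_{i+1}=0$ this case cannot occur for $i\ge0$, so there is no degenerate case. Hence
\[
\dist(\bz_{i+1},t_{i+1}C)\le\|\bz_{i+1}-\bz^\ast\|=(1-\lambda_i)\|\bz_i-\bq\|,
\]
which is the claimed inequality. This is the discrete analogue of the differential inequality behind Theorem~\ref{thm:convex-hull}. The only point needing care is that $\lambda_i\le1$, which holds because the grid satisfies $t_N\le1$.

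Part (i) follows by telescoping the one-step inequality from $j$ to $i$. The factors multiply to $(1-t_i)/(1-t_j)$, and every intermediate $t_k$ with $k<i$ satisfies $t_k<t_i\le1$, so each step is valid. If $t_i=1$, the final step gives distance $0$, which is consistent. Monotonicity is immediate because the ratio is at most $1$. Part (ii) is the case $j=0$: the set $t_0C=\{\bzero\}$, so $\dist(\bz_0,\{\bzero\})=\|\bx_0\|$. Part (iii) follows from part (i) with $j=k$: the right-hand side is zero, so $\dist(\bz_i,t_iC)=0$, and closedness of $t_iC$ gives $\bz_i\in t_iC$. If $t_k=1$, then $k=N$ and the claim is trivial. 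The main obstacle is simply verifying the convex-combination bookkeeping and the step-size constraint $\lambda_i\in(0,1]$. Everything else is exact, with no discretization error.
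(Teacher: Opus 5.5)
Your proof is correct and follows essentially the same route as the paper: you rewrite the Euler step as the affine combination $\tfrac{1-t_{i+1}}{1-t_i}\bz_i+\tfrac{t_{i+1}-t_i}{1-t_i}\hat{\by}_{t_i}(\bz_i)$, use the comparison point $(1-\lambda_i)\bq+\lambda_i\hat{\by}_{t_i}(\bz_i)$, which lies in $t_{i+1}\Conv(\mathcal D)$ by convexity, and telescope; this is exactly the content of Lemma~\ref{lem:euler-scaled-convex-contraction}. The differences are only cosmetic and both valid: you handle $t_i=0$ uniformly by writing $\bq=\bzero=0\cdot\ba$ rather than as a separate zero-time case, and you obtain absorption directly from part~(i) with $j=k$ plus closedness rather than by a separate induction.
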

For the proof, see Appendix~\ref{proof:cor:convex-hull-euler}.

\subsection{Final-stage Attraction and Absorption by the Local Cluster}\label{sec:final stage of cluster}

The convex-hull result gives a coarse end-to-end localization region, but it is not sharp near the terminal time. As $t\uparrow1$, the posterior weights concentrate around data points close to the current trajectory, so the relevant geometry becomes local. 
It turns out that, once the trajectory is located near a target cluster, it moves toward a scaled neighborhood of that cluster; see illustration in Figure~\ref{fig:early-final-geometry}(c).
Since the local data geometry is typically non-convex, we formulate the final-stage geometry directly for a local cluster $\Omega\subsetneq\mathcal D$. Fix $R>0$, which is the radius of the working neighborhood, then define $$D_\Omega:=\diam(\Omega), \delta_\Omega:=\dist(B_R(\Omega),\mathcal D\setminus\Omega), \text{ and }\gamma_\Omega:=\delta_\Omega^2-(R+D_\Omega)^2.$$ Under the prox-regularity condition below, whenever $\dist(\bx,t\Omega)<tR$, the projection is single-valued and we write 
\begin{equation}
\label{eq:y_t^omega}
    \by_t^\Omega(\bx):=\Proj_{t\Omega}(\bx)/t = \Proj_{\Omega}(\bx/t). 
\end{equation}
We first list several geometric conditions on the local cluster.

\begin{assumption}[Local-cluster regularity]\label{assump:local-cluster}
Let \(\Omega\subsetneq\mathcal D\) be a nonempty closed local cluster
satisfying \(0<p_\emptyset(\Omega)<1\), and let \(R>0\). Assume the
following conditions:
\begin{enumerate}
\item[(i)] \emph{Local prox-regular geometry:} the target cluster $\Omega$ is $\rho_\Omega$-prox-regular for some $\rho_\Omega>R$.
\item[(ii)] \emph{Uniform local polynomial mass:} there exist constants $c_\Omega>0$, $\alpha_\Omega>0$, and $s_\Omega>0$ such that
\[
p_\emptyset\bigl(\Omega\cap B_s(\by)\bigr)
\ge c_\Omega s^{\alpha_\Omega},
\qquad
\by\in\Omega,\quad 0<s\le s_\Omega.
\]
\item[(iii)] \emph{Cluster isolation:} $\gamma_\Omega=\delta_\Omega^2-(R+D_\Omega)^2>0$.
\end{enumerate}
\end{assumption}

\begin{center}
\begin{minipage}{\textwidth}
\centering
\begin{minipage}[t]{0.32\textwidth}
\centering
\includegraphics[page=1,width=\linewidth]{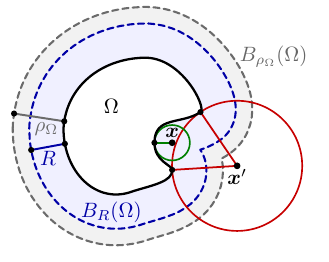}\\[-0.2em]
\parbox[t][2.6em][t]{\linewidth}{\centering\small
(i) Local prox-regularity}
\end{minipage}
\hfill
\begin{minipage}[t]{0.32\textwidth}
\centering
\includegraphics[page=2,width=\linewidth]{figure/sec3/prox_regular_local_mass.pdf}\\[-0.2em]
\parbox[t][2.6em][t]{\linewidth}{\centering\small
(ii) Uniform local mass}
\end{minipage}
\hfill
\begin{minipage}[t]{0.32\textwidth}
\centering
\includegraphics[page=3,width=\linewidth]{figure/sec3/prox_regular_local_mass.pdf}\\[-0.2em]
\parbox[t][2.6em][t]{\linewidth}{\centering\small
(iii) Cluster isolation}
\end{minipage}
\makeatletter
\def\@captype{figure}
\makeatother
\caption{
Conditions behind the final-stage theorem.
(i) The working neighborhood \(B_R(\Omega)\) lies inside the
\(\rho_\Omega\)-neighborhood on which the metric projection is single-valued.
(ii) The polynomial lower mass bound holds uniformly at every
\(\by\in\Omega\).
(iii) The separation from \(B_R(\Omega)\) to the rest of the data is larger
than \(R+D_\Omega\), equivalently
\(\gamma_\Omega=\delta_\Omega^2-(R+D_\Omega)^2>0\).
}
\label{fig:prox-regular-local-mass}
\end{minipage}
\end{center}

\begin{remark}[Rationale of the assumptions]\label{rem:local-cluster-roles}
 Assumption~\ref{assump:local-cluster}(i) and
Lemma~\ref{lem:prox-regular-projection-estimate} give the projection estimate on the final-stage
neighborhood. The strict inequality \(R<\rho_\Omega\) ensures that
\(1-R/\rho_\Omega>0\); the factor diverges as
\(R\uparrow\rho_\Omega\), as illustrated in
Figure~\ref{fig:prox-regular-local-mass}(i).
Assumption~\ref{assump:local-cluster}(ii) is a uniform local polynomial-mass condition at every point of \(\Omega\). It is applied at \(\by_t^\Omega(\bx)\) in Lemma~\ref{lem:polynomial-local-mass-radius}, which leads to the posterior mean localization in Lemma~\ref{lem:final-local-posterior}, as illustrated in Figure~\ref{fig:prox-regular-local-mass}(ii).
Assumption~\ref{assump:local-cluster}(iii) is a sufficient separation condition preventing capture by the rest of the data, as illustrated in Figure~\ref{fig:prox-regular-local-mass}(iii).
\end{remark}
Under Assumption~\ref{assump:local-cluster}, choose a final-stage starting
time $t_{\mathrm{loc}}\in(0,1)$ satisfying
\begin{equation}\label{eq:final-local-time-choice}
(1-t_{\mathrm{loc}})^2
\log\tfrac{1}{1-t_{\mathrm{loc}}}
\le
\min\left\{
\tfrac{1}{4(2\alpha_\Omega+1)},
\tfrac{(2R+1)s_\Omega}{2\alpha_\Omega+1}
\right\}.
\end{equation}
Such a choice exists because the left-hand side converges to zero as
$t_{\mathrm{loc}}\uparrow1$. For any such $t_{\mathrm{loc}}$, define the
inflation radius
\begin{equation}\label{eq:final-local-inflation-radius}
\begin{aligned}
r_\Omega
:={}
\tfrac{\varepsilon_\Omega}
{t_{\mathrm{loc}}\sqrt{1-R/\rho_\Omega}}
+
\tfrac{D_\Omega}{c_{\Omega,\mathrm{loc}}}
\exp\!\left(-\tfrac{\varepsilon_\Omega^2}{4(1-t_{\mathrm{loc}})^2}\right)
+
2M_{\mathcal{D}}
\tfrac{1-p_\emptyset(\Omega)}{p_\emptyset(\Omega)}
\exp\!\left(-\tfrac{t_{\mathrm{loc}}^2\gamma_\Omega}{2(1-t_{\mathrm{loc}})^2}\right),
\end{aligned}
\end{equation}
where
\begin{equation}\label{eq:final-local-quantities}
\begin{aligned}
\varepsilon_\Omega
&:=
2(1-t_{\mathrm{loc}})
\sqrt{
(2\alpha_\Omega+1)
\log\tfrac{1}{1-t_{\mathrm{loc}}}
}, \quad
c_{\Omega,\mathrm{loc}}
:=
c_\Omega(8R+4)^{-\alpha_\Omega}
\varepsilon_\Omega^{2\alpha_\Omega}.
\end{aligned}
\end{equation}
With $\Omega,\mathcal D,R,\rho_\Omega,c_\Omega,\alpha_\Omega,s_\Omega$,
and $p_\emptyset$ fixed, the inflation radius satisfies
\[
r_\Omega
=
\bigO\!\left(
(1-t_{\mathrm{loc}})
\sqrt{\log\tfrac{1}{1-t_{\mathrm{loc}}}}
\right)
\qquad\text{as }t_{\mathrm{loc}}\uparrow1.
\]
A proof of this order is given in
Lemma~\ref{lem:polynomial-local-mass-radius}.
The theorem below shows that \((\bx_t)_{t\in[t_{\mathrm{loc}},1)}\) moves toward \(tB_{r_\Omega}(\Omega)\) at the rate $1-t$ 
under certain initial conditions at $t_{\mathrm{loc}}$; once it enters this moving
neighborhood, it remains inside for all later times.

\begin{theorem}[Final-stage local-cluster attraction and absorption]\label{thm:final-local}
Under Assumption~\ref{assump:local-cluster}, let $t_{\mathrm{loc}}$ and
$r_\Omega$ be as above, and  assume further that 
\begin{equation}\label{eq:final-local-capture-condition}
\dist\bigl(
\bx_{t_{\mathrm{loc}}},
t_{\mathrm{loc}}B_{r_\Omega}(\Omega)
\bigr)+r_\Omega
\le t_{\mathrm{loc}}R. 
\end{equation} Let $\bx_t$ solve the unconditional flow ODE
(\ref{eq:uncond flow}) on $[t_{\mathrm{loc}},1)$, 
then the following hold.
\begin{enumerate}
\item[(i)] \emph{(Attraction)} For every
$t_{\mathrm{loc}}\le s\le t<1$,
\[
\dist\bigl(\bx_t,tB_{r_\Omega}(\Omega)\bigr)
\le
\tfrac{1-t}{1-s}
\dist\bigl(\bx_s,sB_{r_\Omega}(\Omega)\bigr).
\]
Hence the function
$t\mapsto\dist\bigl(\bx_t,tB_{r_\Omega}(\Omega)\bigr)$ is
non-increasing on $[t_{\mathrm{loc}},1)$.
\item[(ii)] Taking $s=t_{\mathrm{loc}}$ in part~(i) gives
\begin{equation*}
\dist\bigl(\bx_t,tB_{r_\Omega}(\Omega)\bigr)
\le
\tfrac{1-t}{1-t_{\mathrm{loc}}}
\dist\bigl(\bx_{t_{\mathrm{loc}}},t_{\mathrm{loc}}B_{r_\Omega}(\Omega)\bigr)=\bigO(1-t),
\qquad
t\in[t_{\mathrm{loc}},1).
\end{equation*}
\item[(iii)] \emph{(Absorption)} If there exists $t_{\Omega}\in[t_{\mathrm{loc}},1)$ such that $\bx_{t_{\Omega}}\in t_{\Omega}B_{r_\Omega}(\Omega)$, then $\bx_t\in tB_{r_\Omega}(\Omega)$ for every $t\in[t_{\Omega},1)$.
\end{enumerate}
\end{theorem}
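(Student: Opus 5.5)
The plan is the same moving-distance argument that underlies Theorem~\ref{thm:convex-hull}, with the convex hull replaced by the cluster neighborhood $B_{r_\Omega}(\Omega)$. The only new ingredient is a \emph{posterior-mean localization}: whenever $\dist(\bx,tB_{r_\Omega}(\Omega))+r_\Omega\le tR$ (equivalently $\dist(\bx,t\Omega)\le tR$) and $t\ge t_{\mathrm{loc}}$, we want $\hat{\by}_t(\bx)\in B_{r_\Omega}(\Omega)$. Granting this, write $d(t):=\dist(\bx_t,tS)$ with $S:=B_{r_\Omega}(\Omega)$. The ODE reads $(1-t)\dot\bx_t=\hat\by_t(\bx_t)-\bx_t$. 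Pick $\bq\in S$ with $d(s)=\|\bx_s-s\bq\|$ and compare with the candidate $t\bq+h(\hat\by_s(\bx_s)-\bq)$, which lies in $tS$ to first order when $S$ is convex. Since $S$ need not be convex, I would instead use the elementary bound
\[
\dist(\bx+h\bv,(s+h)S)\le \|\bx+h\bv-(s+h)\bq'\|,
\]
taking $\bq'$ in the segment structure carefully. The cleaner route is to work with $\phi(t):=d(t)/(1-t)$ and show it is non-increasing via the upper Dini derivative. Set $\bx=s\bq+\bu$ with $\|\bu\|=d(s)$. The Euler step from $(s,\bx)$ gives
\[
\bx+h\frac{\hat\by-\bx}{1-s}=\Bigl(1-\frac{h}{1-s}\Bigr)\bx+\frac{h}{1-s}\hat\by .
\]
The comparison point $(1-\frac{h}{1-s})s\bq+\frac{h}{1-s}\hat\by$ has scale exactly $s+h-\frac{hs}{1-s}\cdot 0$. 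More precisely, $(1-\lambda)s+\lambda=s+h$ with $\lambda=h/(1-s)$, so the comparison point is $(s+h)$ times a convex combination of $\bq$ and $\hat\by$. Its distance to the new point is $(1-\lambda)d(s)=\frac{1-s-h}{1-s}d(s)$, which is exactly the claimed ratio.

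The difficulty is that a convex combination of $\bq\in S$ and $\hat\by\in S$ need not lie in $S$, since $S$ is nonconvex. To handle this I would exploit $\rho_\Omega$-prox-regularity: $B_{r}(\Omega)$ with $r<\rho_\Omega$ has the property that convex combinations of two points at distance $O(h)$-weight stay within $O(h^2)$-ish of the set. More usefully, I would choose $\bq=\Proj_{S}(\bx/s)$ and replace the comparison point by $(s+h)\Proj_\Omega$-based points, bounding the defect by $o(h)$. That is enough for a Dini-derivative argument giving $D^+\phi\le0$, and absolute continuity of $\phi$ (Lipschitz, since $\bx_\cdot\in C^1$ by Theorem~\ref{thm:wellposed-flow}) then yields (i). Alternatively, and probably how I would actually present it, I would use that $\lambda\to0$. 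The point $(1-\lambda)\bq+\lambda\hat\by$ lies within $\lambda\|\hat\by-\bq\|$ of $\bq\in S$, but that gives only $O(h)$ error, which is not enough. So the prox-regular estimate of Lemma~\ref{lem:prox-regular-projection-estimate} (with its $1/\sqrt{1-R/\rho_\Omega}$ factor) is where the $\varepsilon_\Omega/(t_{\mathrm{loc}}\sqrt{1-R/\rho_\Omega})$ term of $r_\Omega$ gets spent. Concretely, the localization should really give $\hat\by_t(\bx)$ within the inflation radius of $\by^\Omega_t(\bx)=\Proj_\Omega(\bx/t)$, the projection point itself. Then $\bq':=\by^\Omega_s(\bx)+r_\Omega\,\mathbf{n}$ along the segment toward $\bx/s$ realizes $d(s)$. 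In addition, $\hat\by\in B_{r_\Omega}(\by^\Omega_s(\bx))$, a ball, which is convex and contained in $S$. The convex combination then stays in that ball, and the estimate becomes exact to first order. This ball trick is the cleanest way to avoid nonconvexity, and I would commit to it.

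The main obstacle is proving $\|\hat\by_t(\bx)-\by^\Omega_t(\bx)\|\le r_\Omega$ under $\dist(\bx,t\Omega)\le tR$ and $t\ge t_{\mathrm{loc}}$; this is Lemma~\ref{lem:final-local-posterior}. I would split the posterior $\eta_t^{\bx}$ into three parts: mass on $\mathcal D\setminus\Omega$, mass on $\Omega$ far from $\by^\Omega$, and mass on $\Omega$ near $\by^\Omega$.
\begin{itemize}
\item For $\mathcal D\setminus\Omega$, isolation gives $\|\bx-t\by\|^2-\|\bx-t\by'\|^2\ge t^2\gamma_\Omega$ for $\by\notin\Omega$ and $\by'\in\Omega$. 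This produces the third term of $r_\Omega$.
\item For $\Omega$, prox-regularity gives $\|\bx/t-\by\|^2\ge\dist^2+(1-R/\rho_\Omega)\|\by-\by^\Omega\|^2$. Combined with the polynomial mass lower bound at scale $\varepsilon_\Omega^2$ (Lemma~\ref{lem:polynomial-local-mass-radius}), this yields the first two terms.
\end{itemize}
The choice \eqref{eq:final-local-time-choice} is what makes the mass ball radius admissible and the rates work, and monotonicity in $t$ of these bounds lets $t_{\mathrm{loc}}$ replace $t$.

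With localization and the contraction in hand, I would close the argument by a continuity (bootstrap) argument. Suppose the capture condition \eqref{eq:final-local-capture-condition} holds at $t_{\mathrm{loc}}$. Let $\tau$ be the supremum of times up to which $d(t)+r_\Omega\le tR$. On $[t_{\mathrm{loc}},\tau)$, $d$ is non-increasing while $tR$ increases, so the condition holds strictly beyond $\tau$, and hence $\tau=1$. This gives (i); (ii) is immediate from (i); and (iii) follows because $d(t_\Omega)=0$ forces $d\equiv0$ afterwards, and the capture condition then holds trivially since $r_\Omega\le t_{\mathrm{loc}}R$.
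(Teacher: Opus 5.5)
Your proposal is correct, and its architecture matches the paper's. It localizes the posterior mean in the ball $B_{r_\Omega}(\by_t^\Omega(\bx))$ (Lemma~\ref{lem:final-local-posterior}). It uses the convex ball $B_{r_\Omega}(\by_t^\Omega(\bx_t))\subset B_{r_\Omega}(\Omega)$, which contains both the normalized nearest point (Lemma~\ref{lem:inflated-neighborhood-projection-regularity}) and $\hat{\by}_t(\bx_t)$. It closes with a continuation argument.

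Where you differ is the contraction step.
\begin{itemize}
\item The paper (Lemma~\ref{lem:continuous-scaled-convex-contraction}) differentiates $\dist^2(\bx_t,tS)$ exactly. It uses the gradient formula for the squared distance, which needs a single-valued projection onto $tS$ at every time. It then uses the variational inequality $\langle t\boldsymbol m_t-\bq_t,\bq_t-\bx_t\rangle\ge0$ coming from convexity of $C_t$, which gives $\frac{\dd}{\dd t}\bigl[\dist^2(\bx_t,tS)/(1-t)^2\bigr]\le0$.
\item You compare $\bx_{s+h}$ with the Euler point $(1-\lambda)\bx_s+\lambda\hat{\by}_s(\bx_s)$, where $\lambda=h/(1-s)$. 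This is exactly the one-step identity of Lemma~\ref{lem:euler-scaled-convex-contraction}, plus an $o(h)$ error from $C^1$ regularity. It yields $D^+\bigl[d(t)/(1-t)\bigr]\le0$, and monotonicity follows since $d(t)/(1-t)$ is continuous.
\end{itemize}
Your route needs only the existence of some nearest point lying in the convex ball, not differentiability of the distance function. It also makes the continuous theorem the $h\downarrow0$ limit of its Euler counterpart. The paper's route gives an exact derivative identity that exhibits the extra dissipation term $-\frac{2}{t(1-t)}\langle t\boldsymbol m_t-\bq_t,\bq_t-\bx_t\rangle$.

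Two small repairs are needed.

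First, $\dist(\bx,tB_{r_\Omega}(\Omega))+r_\Omega\le tR$ is not equivalent to $\dist(\bx,t\Omega)\le tR$. It only implies $\dist(\bx,t\Omega)\le tR-(1-t)r_\Omega$, but this implication is all you use.

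Second, your continuation argument runs on the condition $d(t)+r_\Omega\le tR$. It gives strict inequality at $\tau$ only once $\tau>t_{\mathrm{loc}}$. If \eqref{eq:final-local-capture-condition} holds with equality, you must show $\tau>t_{\mathrm{loc}}$ separately. Your pointwise Dini bound does this, since it gives $D^+\bigl[d(t)+r_\Omega-tR\bigr]\le-R<0$ at $t_{\mathrm{loc}}$. The paper sidesteps the issue by running the first-exit argument on the weaker condition $\dist(\bx_t,t\Omega)<tR$. That condition holds strictly at $t_{\mathrm{loc}}$ because $(1-t_{\mathrm{loc}})r_\Omega>0$.
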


For the proof, see Appendix~\ref{proof:thm:final-local}.

\begin{remark}[Rationale of the final-stage conditions]
Here, \eqref{eq:final-local-time-choice} ensures the validity of the
local-mass construction in Lemma~\ref{lem:polynomial-local-mass-radius},
which is used in the posterior-localization estimate of
Lemma~\ref{lem:final-local-posterior}. In contrast,
\eqref{eq:final-local-capture-condition} is trajectory-dependent and places
the starting point of the final-stage tail in the local regime around
$t_{\mathrm{loc}}\Omega$.
\end{remark}
The final-stage result also has a discrete counterpart on a shifted Euler grid.

\begin{theorem}[Discrete version of Theorem~\ref{thm:final-local}]\label{cor:final-local-euler}
Under Assumption~\ref{assump:local-cluster}, let $t_{\mathrm{loc}}$ and
$r_\Omega$ be as above. Let $(\bz_i)_{i=0}^N$ be generated by the
shifted Euler update
$\bz_{i+1}=\bz_i+\Delta t\,\vu(t_i,\bz_i)$ on the uniform grid
$t_i=t_{\mathrm{loc}}+i\Delta t$, $i=0,\ldots,N$, with $t_N\le1$,
initialized at the exact final-stage starting point
$\bz_0=\bx_{t_{\mathrm{loc}}}$. Assume 
\[
\dist\bigl(
\bz_0,t_{\mathrm{loc}}B_{r_\Omega}(\Omega)
\bigr)+r_\Omega\le t_{\mathrm{loc}}R,
\]
then the following hold.
\begin{enumerate}
\item[(i)] \emph{(Attraction)} For every $0\le j\le i\le N$ with
$t_j<1$,
\[
\dist\bigl(\bz_i,t_iB_{r_\Omega}(\Omega)\bigr)
\le
\tfrac{1-t_i}{1-t_j}
\dist\bigl(\bz_j,t_jB_{r_\Omega}(\Omega)\bigr).
\]
Hence the sequence
$i\mapsto\dist\bigl(\bz_i,t_iB_{r_\Omega}(\Omega)\bigr)$ is
non-increasing on $i=0,\ldots,N$.
\item[(ii)] Taking $j=0$ in part~(i) gives
\[
\dist\bigl(\bz_i,t_iB_{r_\Omega}(\Omega)\bigr)
\le
\tfrac{1-t_i}{1-t_{\mathrm{loc}}}
\dist\bigl(\bz_0,t_{\mathrm{loc}}B_{r_\Omega}(\Omega)\bigr)
=\bigO(1-t_i),
\qquad
i=0,\ldots,N.
\]
\item[(iii)] \emph{(Absorption)} If there exists
$k\in\{0,\ldots,N\}$ such that
$\bz_k\in t_kB_{r_\Omega}(\Omega)$, then
$\bz_i\in t_iB_{r_\Omega}(\Omega)$ for all $i=k,k+1,\ldots,N$.
\end{enumerate}
\end{theorem}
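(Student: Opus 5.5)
The approach is a one-step contraction inequality for the Euler map, iterated by induction on the grid index. Write $S:=B_{r_\Omega}(\Omega)$ and $d_i:=\dist(\bz_i,t_iS)$. The key identity is that one Euler step is a convex combination. Since $\vu(t_i,\bz_i)=(\hat\by_{t_i}(\bz_i)-\bz_i)/(1-t_i)$, setting $\lambda_i:=\Delta t/(1-t_i)\in(0,1]$ (valid because $t_{i+1}\le1$) gives
\[
\bz_{i+1}=(1-\lambda_i)\bz_i+\lambda_i\hat\by_{t_i}(\bz_i),\qquad t_{i+1}=(1-\lambda_i)t_i+\lambda_i\cdot1 .
\]
The continuous proof presumably rests on the posterior-localization estimate (Lemma~\ref{lem:final-local-posterior}): whenever $\dist(\bx,t\Omega)<tR$ and $t\ge t_{\mathrm{loc}}$, the posterior mean satisfies $\hat\by_t(\bx)\in S$. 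Granting this, let $\bq_i\in t_iS$ be a nearest point to $\bz_i$. Then $(1-\lambda_i)\bq_i+\lambda_i\hat\by_{t_i}(\bz_i)$ lies in $(1-\lambda_i)t_iS+\lambda_iS$. For a convex set this would be contained in $t_{i+1}S$; since $S$ need not be convex, we must handle this step separately (see below). Assuming the inclusion holds, we get
\[
d_{i+1}\le (1-\lambda_i)d_i=\tfrac{1-t_{i+1}}{1-t_i}\,d_i .
\]
Telescoping this gives (i), and (ii) is the case $j=0$. For (iii), if $d_k=0$ the inequality forces $d_i=0$ for all $i\ge k$, and closedness of $t_iS$ then gives membership.

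The induction must maintain the hypothesis of the posterior lemma, namely $\dist(\bz_i,t_i\Omega)<t_iR$ (or $\le$, in whatever form the lemma needs). We have $\dist(\bz_i,t_i\Omega)\le d_i+t_ir_\Omega$. By the induction hypothesis $d_i\le d_0$, so
\[
\dist(\bz_i,t_i\Omega)\le d_0+t_ir_\Omega\le d_0+r_\Omega\le t_{\mathrm{loc}}R\le t_iR .
\]
This is exactly the role of the capture assumption, and it is the same bookkeeping as in the proof of Theorem~\ref{thm:final-local}. One also needs $t_i<1$ for the field to be defined: the step is only applied for $i<N$ with $t_i<1$. The bound for indices with $t_i=1$ then reads $d_N\le 0$, which the contraction formula delivers with $\lambda=1$.

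The main obstacle is the nonconvexity of $S$: showing $(1-\lambda)tS+\lambda S\subset ((1-\lambda)t+\lambda)S$. My first attempt is to avoid this inclusion and instead reuse whatever mechanism the continuous proof uses. Most likely the posterior lemma gives the sharper statement $\|\hat\by_t(\bx)-\by_t^\Omega(\bx)\|\le r_\Omega$, with the base point being the projection $\by_t^\Omega(\bx)$ of $\bx/t$. In that case I would take $\bq_i$ as the point on the segment from $\bz_i$ to $t_i\by^\Omega_{t_i}(\bz_i)$ at distance $t_ir_\Omega$ from the projection. Note $t_i\by^\Omega_{t_i}(\bz_i)=\Proj_{t_i\Omega}(\bz_i)$, so this segment lies along the projection direction. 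Combine it with $\hat\by=\by^\Omega+\bu$, where $\|\bu\|\le r_\Omega$. The candidate point is then
\[
((1-\lambda)t_i+\lambda)\,\by^\Omega+\text{(a vector of norm}\le ((1-\lambda)t_i+\lambda)r_\Omega),
\]
which lies in $t_{i+1}S$ because it is within $t_{i+1}r_\Omega$ of a single point $t_{i+1}\by^\Omega\in t_{i+1}\Omega$. The distance from $\bz_{i+1}$ to it is $(1-\lambda)d_i$. This uses the common anchor $\by^\Omega$ instead of convexity, and is the step I would check most carefully against the exact form of Lemma~\ref{lem:final-local-posterior}.
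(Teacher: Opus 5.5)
Your proposal is correct and is essentially the paper's proof: the paper also writes the Euler step as the convex combination $\frac{1-t_{i+1}}{1-t_i}\bz_i+\frac{t_{i+1}-t_i}{1-t_i}\hat\by_{t_i}(\bz_i)$, and it handles nonconvexity with your common-anchor idea by applying Lemma~\ref{lem:euler-scaled-convex-contraction} with the convex set $C=B_{r_\Omega}(\by^\Omega_{t_i}(\bz_i))$, which contains both $\hat\by_{t_i}(\bz_i)$ (Lemma~\ref{lem:final-local-posterior}) and $\bq_i/t_i$ (Lemma~\ref{lem:inflated-neighborhood-projection-regularity}, which also covers the inside case $\bq_i=\bz_i$ that your segment construction leaves implicit). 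The differences are only cosmetic: you stay in the local regime via $d_i\le d_0$ and $t_{\mathrm{loc}}\le t_i$ instead of the paper's invariant $d_i+r_\Omega\le t_iR$ (the strict inequality the posterior lemma needs comes from $t_ir_\Omega<r_\Omega$ for $t_i<1$), and you read absorption directly off the one-step contraction rather than running a separate induction.
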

For the proof, see Appendix~\ref{proof:cor:final-local-euler}.

\subsection{Conditional Flow Matching by Restriction}\label{sec:cond}

The unconditional theory comes first because it contains the main geometric mechanism. The conditional setting is obtained by restricting the data distribution from $p_\emptyset$ to $p_c$ and the ambient geometry from $\mathcal{D}$ to $\mathcal{D}_c$. The corresponding conditional flow ODE is (\ref{eq:cond flow}).
Apart from these substitutions, the assumptions and proofs are identical to those in Section~\ref{sec:uncond}. In particular, Theorem~\ref{thm:early-mean} gives attraction to the conditional-mean ball after replacing $\bar{\by}$ by $\bar{\by}_c$; Theorem~\ref{thm:convex-hull} gives attraction to $t\Conv(\mathcal{D}_c)$ and absorption after any entrance time; and Theorem~\ref{thm:final-local} applies to local clusters $\Omega_c\subset \mathcal{D}_c$ with probabilities and constants computed under $p_c$. In the final-stage specialization, the uniform mass condition is
\(
p_c\bigl(\Omega_c\cap B_s(\by)\bigr)
\ge c_{\Omega_c}s^{\alpha_{\Omega_c}},\
\by\in\Omega_c,\ 0<s\le s_{\Omega_c}.
\)
The same restriction \eqref{eq:final-local-time-choice} and definitions
\eqref{eq:final-local-inflation-radius}--\eqref{eq:final-local-quantities}
apply with all constants computed under $p_c$. The corresponding continuous rates remain
$\bigO(1-t)$, and the Euler counterparts also inherit corresponding estimates. Since these statements require no new
argument, we do not repeat them as separate formal corollaries.

\section{Analysis of Flow Matching with Classifier-Free Guidance Extrapolation}\label{sec:cfg}

We now extend the stagewise geometric theory to the CFG ODE. In practice, the CFG ODE (\ref{eq:cfg flow}) is more commonly used than the conditional flow ODE (\ref{eq:cond flow}) 
\citep{DBLP:journals/corr/abs-2506-15742, DBLP:conf/icml/EsserKBEMSLLSBP24, peebles2023scalable}, 
but its geometry remains less well understood
\citep{pavasovic2026overshoot, 
DBLP:conf/aaai/ZhaoS26,
li2025understandingmechanismsclassifierfreeguidance}.
Following \citet{cai2026improvingclassifierfreeguidanceflow,wang2025towards}, we call 
\[\bgc(t,\bx) := \vc(t,\bx)-\vu(t,\bx)\]
the \emph{prediction gap}. 
Compared with ordinary conditional flow, CFG changes the geometry through an additional $w$-weighted prediction gap, and this affects the stages in different ways. 
In the early stage, the conditional mean ball is replaced by a ball around the extrapolated mean. 
In the convex-hull regime, the extrapolated CFG posterior mean need not lie in $\Conv(\mathcal D_c)$.
 Instead, each tail of a fixed CFG trajectory is controlled by an inflated conditional convex hull. This inflation radius is nonincreasing as the tail starting time moves later; after capture and under a quantitative separation condition, it admits an explicit exponential bound. In the final stage, once the trajectory is localized near a well-isolated conditional cluster, 
 the prediction gap $\bgc(t,\bx)$ vanishes. Hence the terminal dynamics return to conditional local-cluster geometry, up to an exponentially small enlargement of the inflation radius.

\begin{figure*}[h]
\centering
\begin{minipage}[t]{0.47\textwidth}
\centering
\includegraphics[page=1,width=\linewidth,trim=2pt 2pt 2pt 2pt,clip]{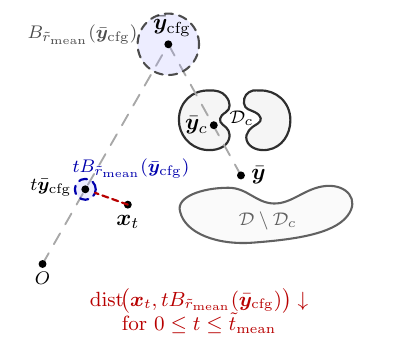}\\[-0.3em]
{\small (a) Early extrapolated mean ball.\par}
\end{minipage}
\hfill
\begin{minipage}[t]{0.47\textwidth}
\centering
\includegraphics[page=2,width=\linewidth,trim=2pt 2pt 2pt 2pt,clip]{figure/sec3/cfg_stagewise_geometry.pdf}\\[-0.3em]
{\small (b) Inflated conditional convex hull.\par}
\end{minipage}
\par\vspace{0.6em}
\begin{minipage}[t]{0.47\textwidth}
\centering
\includegraphics[page=3,width=\linewidth,trim=2pt 2pt 2pt 2pt,clip]{figure/sec3/cfg_stagewise_geometry.pdf}\\[-0.3em]
{\small (c) Refined inflated conditional convex hull.\par}
\end{minipage}
\hfill
\begin{minipage}[t]{0.47\textwidth}
\centering
\includegraphics[page=4,width=\linewidth,trim=2pt 2pt 2pt 2pt,clip]{figure/sec3/cfg_stagewise_geometry.pdf}\\[-0.3em]
{\small (d) Final conditional local cluster.\par}
\end{minipage}
\caption{
Stagewise moving geometries under classifier-free guidance. In the early
stage, guidance changes the attractor to the extrapolated mean
\(\bar{\by}_{\mathrm{cfg}}\). The neighborhood of the scaled, inflated conditional convex hull provides a coarse localization geometry; after capture and
separation, this neighborhood is refined to a smaller one. In the final stage
near a conditional cluster \(\Omega_c\), \(\bx_t\) moves toward
\(tB_{\tilde r_{\Omega_c}}(\Omega_c)\).
}
\label{fig:cfg-stagewise-geometry}
\end{figure*}
\subsection{Early-stage Attraction by the Extrapolated Mean}

The first effect of classifier-free guidance appears in the early stages. For the  conditional flow ODE (\ref{eq:cond flow}), the early trajectory is controlled by the conditional clean-data mean; for CFG ODE (\ref{eq:cfg flow}), the additional term $\bgc(t,\bx)$ leads to
\[
\frac{\dd \bx_t}{\dd t} = \frac{\hat{\by}_{t,\mathrm{cfg}}(\bx_t) - \bx_t}{1-t}, \quad 
\text{with } \
\hat{\by}_{t,\mathrm{cfg}}(\bx)
=
w\hat{\by}_{t,c}(\bx)
+
(1-w)\hat{\by}_t(\bx).
\]
Thus, the early-stage moving geometry of CFG becomes a ball centered at the extrapolated mean
$\bar{\by}_{\mathrm{cfg}}:=\bar{\by}+w(\bar{\by}_c-\bar{\by}),$ as illustrated in Figure~\ref{fig:cfg-stagewise-geometry}(a).
Now we define
$M_{\mathcal D}:=\max_{\by\in\mathcal D}\|\by\|$ and
$M_{\mathcal D_c}:=\max_{\by\in\mathcal D_c}\|\by\|$, and define
$D_\emptyset:=\diam(\mathcal D)$ and $D_c:=\diam(\mathcal D_c)$ as their
corresponding diameters.
Fix $\bx_0\in\R^d\setminus\{\bzero\}$ and $w>1$, for $t\in [0,1)$, define
\begin{equation*}
    \begin{split}
\tilde r(t)
&:=
(w-1)M_{\mathcal D}
\bigl(\exp (g_{\emptyset}(t))-1\bigr)
+
wM_{\mathcal D_c}
\bigl(\exp (g_c(t))-1\bigr),
        \\
        g_{\emptyset}(t)
&:=
\tfrac{tD_{\emptyset}}{(1-t)^2}
\left[
(1-t)\|\boldsymbol x_0\|
+
tw\bigl(M_{\mathcal D}+M_{\mathcal D_c}\bigr)
\right],\\
g_c(t)
& :=
\tfrac{tD_c}{(1-t)^2}
\left[
(1-t)\|\boldsymbol x_0\|
+
t\Bigl(
(w-1)M_{\mathcal D}
+
(w+1)M_{\mathcal D_c}
\Bigr)
\right].
    \end{split}
\end{equation*}
and choose $\tilde t_{\mathrm{mean}}$ such that
\(
(1-\tilde t_{\mathrm{mean}})\|\bx_0\|
>
\tilde t_{\mathrm{mean}}
\left[
(w-1)D_\emptyset+wD_c+\tilde r(\tilde t_{\mathrm{mean}})
\right].
\)
Such a choice exists because
$\tilde r(t)=\bigO(t)$ as
$t\downarrow0$, while the left-hand side of the
preceding inequality converges to $\|\bx_0\|>0$. 
Define $\tilde r_{\mathrm{mean}} := \tilde r(\tilde t_{\mathrm{mean}})$, then we have:

\begin{theorem}[CFG early-stage attraction by the extrapolated mean]
\label{thm:cfg-early}
Let $\bx_t$ solve the CFG ODE~(\ref{eq:cfg flow}) with initial condition
$\bx_0$, and let $\tilde t_{\mathrm{mean}}$ and
$\tilde r_{\mathrm{mean}}$ be fixed as above. Then the following hold.
\begin{enumerate}
\item[(i)]
\emph{(Non-entry)}
For every $t\in[0,\tilde t_{\mathrm{mean}}]$,
\(
\bx_t
\notin
tB_{\tilde r_{\mathrm{mean}}}
(\bar{\by}_{\mathrm{cfg}}).
\)

\item[(ii)]
\emph{(Attraction)}
For every $0\le s\le t\le\tilde t_{\mathrm{mean}}$,
\[
\dist\!\left(
\bx_t,
tB_{\tilde r_{\mathrm{mean}}}
(\bar{\by}_{\mathrm{cfg}})
\right)
\le
\tfrac{1-t}{1-s}
\dist\!\left(
\bx_s,
sB_{\tilde r_{\mathrm{mean}}}
(\bar{\by}_{\mathrm{cfg}})
\right).
\]
Hence the function
\(
t\longmapsto
\dist\!\left(
\bx_t,
tB_{\tilde r_{\mathrm{mean}}}
(\bar{\by}_{\mathrm{cfg}})
\right)
\)
is strictly decreasing on $[0,\tilde t_{\mathrm{mean}}]$.

\item[(iii)]
Taking $s=0$ in part~(ii) gives
\(
\dist\!\left(
\bx_t,
tB_{\tilde r_{\mathrm{mean}}}
(\bar{\by}_{\mathrm{cfg}})
\right)
\le
(1-t)\|\bx_0\|,
\
t\in[0,\tilde t_{\mathrm{mean}}].
\)
\end{enumerate}
\end{theorem}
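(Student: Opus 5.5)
The plan is to reduce everything to a scalar differential inequality for the gap $d(t):=\|\bx_t-t\bar{\by}_{\mathrm{cfg}}\|-t\tilde r_{\mathrm{mean}}$. As long as $\bx_t\notin tB_{\tilde r_{\mathrm{mean}}}(\bar{\by}_{\mathrm{cfg}})$, this gap equals $\dist(\bx_t,tB_{\tilde r_{\mathrm{mean}}}(\bar{\by}_{\mathrm{cfg}}))$. We use the CFG form $\dot\bx_t=(\hat{\by}_{t,\mathrm{cfg}}(\bx_t)-\bx_t)/(1-t)$, with $\bx_\cdot\in C^1$ by Theorem~\ref{thm:wellposed-flow}(iii).

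\textbf{Step 1 (a priori bound on the trajectory).} Since $\frac{\dd}{\dd t}\bigl(\bx_t/(1-t)\bigr)=\hat{\by}_{t,\mathrm{cfg}}(\bx_t)/(1-t)^2$ and $\int_0^t(1-s)^{-2}\dd s=t/(1-t)$, we get
\[
\bx_t=(1-t)\bx_0+(1-t)\int_0^t\tfrac{\hat{\by}_{s,\mathrm{cfg}}(\bx_s)}{(1-s)^2}\,\dd s .
\]
By Lemma~\ref{lem:mean-in-closed-conv}, $\|\hat{\by}_t\|\le M_{\mathcal D}$ and $\|\hat{\by}_{t,c}\|\le M_{\mathcal D_c}$. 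Hence
\[
\|\bx_t\|\le(1-t)\|\bx_0\|+t\bigl((w-1)M_{\mathcal D}+wM_{\mathcal D_c}\bigr).
\]
The same representation with $\bx_t-t\bar{\by}_{\mathrm{cfg}}$ in place of $\bx_t$ gives
\[
\|\bx_t-t\bar{\by}_{\mathrm{cfg}}\|\ge(1-t)\|\bx_0\|-t\sup_s\|\hat{\by}_{s,\mathrm{cfg}}-\bar{\by}_{\mathrm{cfg}}\|,
\]
and trivially $\|\hat{\by}_{s,\mathrm{cfg}}-\bar{\by}_{\mathrm{cfg}}\|\le(w-1)D_\emptyset+wD_c$.

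\textbf{Step 2 (posterior means stay near the prior means).} Write $\eta_t^{\bx}(\dd\by)=L(\by)\,p_\emptyset(\dd\by)$ with $L=e^{-\phi(\by)}/\int e^{-\phi}\dd p_\emptyset$, where $\phi(\by)=\|\bx-t\by\|^2/(2(1-t)^2)$. For $\by,\by'\in\mathcal D$,
\[
|\phi(\by)-\phi(\by')|=\frac{t\,|\langle \by'-\by,\,2\bx-t(\by+\by')\rangle|}{2(1-t)^2}\le \frac{tD_\emptyset(\|\bx\|+tM_{\mathcal D})}{(1-t)^2}.
\]
Hence $L\in[e^{-g},e^{g}]$ with $g$ equal to this bound, and
\[
\|\hat{\by}_t-\bar{\by}\|=\Bigl\|\int \by\,(L-1)\,\dd p_\emptyset\Bigr\|\le M_{\mathcal D}(e^{g}-1).
\]
Inserting the Step 1 bound on $\|\bx_t\|$ gives exactly $g\le g_\emptyset(t)$. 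The conditional computation is identical and yields $g_c(t)$: the extra $tM_{\mathcal D_c}$ produces the coefficient $w+1$. Combining the two with weights $w$ and $w-1$ yields
\[
\|\hat{\by}_{t,\mathrm{cfg}}(\bx_t)-\bar{\by}_{\mathrm{cfg}}\|\le\tilde r(t)\le\tilde r_{\mathrm{mean}},\qquad t\le\tilde t_{\mathrm{mean}},
\]
using that $\tilde r$ is increasing. This is the step I expect to be the main obstacle. It requires matching constants and checking the self-consistency: the a priori bound uses only boundedness of the posterior means, so there is no circularity.

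\textbf{Step 3 (non-entry).} From Step 1,
\[
\|\bx_t-t\bar{\by}_{\mathrm{cfg}}\|-t\tilde r_{\mathrm{mean}}\ge(1-t)\|\bx_0\|-t\bigl[(w-1)D_\emptyset+wD_c+\tilde r_{\mathrm{mean}}\bigr].
\]
The right-hand side is decreasing in $t$ and positive at $\tilde t_{\mathrm{mean}}$ by the choice of $\tilde t_{\mathrm{mean}}$. This proves (i) and shows $d(t)>0$ on $[0,\tilde t_{\mathrm{mean}}]$.

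\textbf{Step 4 (contraction).} Let $\bu_t:=\bx_t-t\bar{\by}_{\mathrm{cfg}}$. Then
\[
\dot\bu_t=\bigl(\hat{\by}_{t,\mathrm{cfg}}-\bar{\by}_{\mathrm{cfg}}-\bu_t\bigr)/(1-t).
\]
Since $\bu_t\neq\bzero$, the norm is differentiable and
\[
\tfrac{\dd}{\dd t}\|\bu_t\|\le\bigl(\|\hat{\by}_{t,\mathrm{cfg}}-\bar{\by}_{\mathrm{cfg}}\|-\|\bu_t\|\bigr)/(1-t).
\]
Subtracting $\tilde r_{\mathrm{mean}}$ and rewriting $\|\bu_t\|=d+t\tilde r_{\mathrm{mean}}$ gives
\[
\dot d\le\bigl(\|\hat{\by}_{t,\mathrm{cfg}}-\bar{\by}_{\mathrm{cfg}}\|-\tilde r_{\mathrm{mean}}-d\bigr)/(1-t)\le -d/(1-t)
\]
by Step 2. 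Thus $\frac{\dd}{\dd t}\bigl(d/(1-t)\bigr)\le0$, which gives (ii). Strict decrease follows from $\dot d\le -d/(1-t)<0$, and (iii) is the case $s=0$ with $d(0)=\|\bx_0\|$.
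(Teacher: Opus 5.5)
Your proposal is correct. Steps 1--3 coincide with the paper's proof: the integral representation of $\bx_t-t\bar{\by}_{\mathrm{cfg}}$, the a priori bound of Lemma~\ref{lem:cfg-norm-bound}, the log-weight oscillation bound that produces $g_\emptyset$ and $g_c$, and the reverse-triangle non-entry estimate.

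Your contraction step, however, takes a different route. The paper applies the general Lemma~\ref{lem:continuous-scaled-convex-contraction} with $S=C_t=B_{\tilde r_{\mathrm{mean}}}(\bar{\by}_{\mathrm{cfg}})$ and $\boldsymbol m_t=\hat{\by}_{t,\mathrm{cfg}}(\bx_t)$. That lemma differentiates $\dist^2(\bx_t,tS)$ via Lemma~\ref{lem:dist-derivative} and uses the projection variational inequality to get $\frac{\dd}{\dd t}\bigl[\dist^2(\bx_t,tS)/(1-t)^2\bigr]\le 0$ on $(0,\tilde t_{\mathrm{mean}}]$. The paper then recovers $s=0$ by letting $s\downarrow 0$, and uses non-entry only to upgrade to strict decrease.

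You instead use the closed form $\dist(\bx,tB_{r}(\bar{\by}_{\mathrm{cfg}}))=\max\{\|\bx-t\bar{\by}_{\mathrm{cfg}}\|-tr,0\}$ and differentiate $\|\bu_t\|$ directly. This makes non-entry an essential input, since it guarantees $\bu_t\neq\bzero$ and that the maximum is attained by the first entry. In exchange, your argument is elementary and self-contained. It also works on the closed interval $[0,\tilde t_{\mathrm{mean}}]$, including $t=0$, with no limiting step, and gives strict decrease directly from $\dot d\le -d/(1-t)<0$.

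The paper's route buys uniformity. The same lemma, whose contraction estimate needs no non-entry hypothesis, is reused unchanged for the convex hull, the inflated conditional hulls, and the nonconvex local-cluster neighborhoods. In the last case the convex set $C_t$ moves with $t$ and no closed-form distance is available, so your ball-specific computation would not extend there.
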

For the proof, see Appendix~\ref{proof:thm:cfg-early}. 
The theorem identifies the early-stage mechanism of guidance: the guidance
scale \(w\) acts by relocating the extrapolated mean
\(\bar{\by}_{\mathrm{cfg}}\). Increasing \(w\)
moves this attractor from the unconditional mean toward and beyond the
conditional mean, thereby changing the moving target that controls \((\bx_t)_{t \in [0,\tilde t_{\mathrm{mean}})}\). In this sense, the early condition strength is fundamentally determined by the
position of \(\bar{\by}_{\mathrm{cfg}}\).
Empirically, it can be observed that
 moderate
extrapolation can adequately strengthen condition-specific features in \(\bar{\by}_{\mathrm{cfg}}\) and preserve sample quality. In contrast, overly small or overly large $w$ may under-amplify or over-amplify the features in \(\bar{\by}_{\mathrm{cfg}}\), leading to  weakly conditioned or oversaturated samples; see 
  Figure~\ref{fig:cfg-imagenet-mean-exp} for an illustration.

\subsection{Early and Intermediate Attraction and Absorption by Inflated Convex Hulls}

For ordinary conditional flow, the posterior mean satisfies
\(
\hat{\by}_{t,c}(\bx)\in \Conv(\mathcal D_c).
\)
Consequently,
\(t\Conv(\mathcal D_c)\) directly governs the corresponding dynamics: the conditional trajectory is attracted to this
moving convex hull and, after entrance, is absorbed by it.

Under CFG, the effective posterior mean is instead extrapolated according to
\[
\hat{\by}_{t,\mathrm{cfg}}(\bx)
=
\hat{\by}_{t,c}(\bx)
+
(w-1)
\bigl(
\hat{\by}_{t,c}(\bx)-\hat{\by}_t(\bx)
\bigr).
\]
Although \(\hat{\by}_{t,c}(\bx)\in\Conv(\mathcal D_c)\), the CFG posterior
mean need not lie in the conditional convex hull. Instead,
\[
\dist\!\left(
\hat{\by}_{t,\mathrm{cfg}}(\bx),
\Conv(\mathcal D_c)
\right)
\le
(w-1)
\left\|
\hat{\by}_{t,c}(\bx)-\hat{\by}_t(\bx)
\right\|.
\]
Thus, under CFG, the exact conditional convex hull is replaced by an
inflated conditional convex hull whose radius is controlled by this
posterior-mean discrepancy. Once a uniform bound on the discrepancy is
available along a trajectory tail, the same  mechanism
applies to the corresponding inflated geometry.

For a fixed CFG trajectory, Theorem~\ref{thm:cfg-convex} controls each tail
using the supremum of the posterior-mean discrepancy over that tail.
By construction, delaying the tail starting time can only decrease this
supremum. Corollary~\ref{cor:cfg-convex-small} further shows that, after
the trajectory has entered the initially inflated conditional convex hull
and under a quantitative separation condition, the required tail inflation
admits an explicit exponential bound.

Fix the data measures \(p_\emptyset\) and \(p_c\), the condition \(c\), the
guidance scale \(w>1\), and the initial point \(\bx_0\). These quantities
determine a unique CFG ODE \((\bx_t)_{t\in[0,1)}\). Next define
\begin{equation}\label{eq:cfg-tail-inflation-function}
\epsilon_{\mathrm{infl}}(t)
:=
(w-1)
\sup_{s\in[t,1)}
\left\|
\hat{\by}_{s,c}(\bx_s)-\hat{\by}_s(\bx_s)
\right\|,
\qquad t\in[0,1).
\end{equation}
Thus \(\epsilon_{\mathrm{infl}}(t)\) depends on \(p_\emptyset\), \(p_c\), \(c\), \(w\), and \(\bx_0\). Since \([t_2,1)\subset[t_1,1)\) whenever
\(0\le t_1\le t_2<1\),
\begin{equation}\label{eq:cfg-tail-inflation-monotonicity}
\epsilon_{\mathrm{infl}}(t_2)
\le
\epsilon_{\mathrm{infl}}(t_1).
\end{equation}
Hence, a later tail starting time requires no larger inflation radius. Now we can state the result quantitatively: 

\begin{theorem}[CFG inflated convex-hull attraction and absorption]\label{thm:cfg-convex}
Let $\bx_t$ be the CFG trajectory fixed above. Fix
\(t_{\mathrm{infl}}\in[0,1)\), and let
\(\epsilon_{\mathrm{infl}}(t_{\mathrm{infl}})\) be given by
\eqref{eq:cfg-tail-inflation-function}. Then the following hold.
\begin{enumerate}
\item[(i)]
\emph{(Attraction)}
For every
\(t_{\mathrm{infl}}\le s\le t<1\),
\[
\dist\!\left(
\bx_t,
tB_{\epsilon_{\mathrm{infl}}(t_{\mathrm{infl}})}(\Conv(\mathcal D_c))
\right)
\le
\tfrac{1-t}{1-s}
\dist\!\left(
\bx_s,
sB_{\epsilon_{\mathrm{infl}}(t_{\mathrm{infl}})}(\Conv(\mathcal D_c))
\right).
\]
Hence the function
\(
t\longmapsto
\dist\!\left(
\bx_t,
tB_{\epsilon_{\mathrm{infl}}(t_{\mathrm{infl}})}(\Conv(\mathcal D_c))
\right)
\)
is non-increasing on \([t_{\mathrm{infl}},1)\).

\item[(ii)] Taking $s=t_{\mathrm{infl}}$ in part~(i) gives, for every
\(t\in[t_{\mathrm{infl}},1)\),
\[
\begin{aligned}
\dist\!\left(
\bx_t,
tB_{\epsilon_{\mathrm{infl}}(t_{\mathrm{infl}})}(\Conv(\mathcal D_c))
\right)
\le
\tfrac{1-t}{1-t_{\mathrm{infl}}}
\dist\!\left(
\bx_{t_{\mathrm{infl}}},
t_{\mathrm{infl}}
B_{\epsilon_{\mathrm{infl}}(t_{\mathrm{infl}})}(\Conv(\mathcal D_c))
\right).
\end{aligned}
\]

\item[(iii)]
\emph{(Absorption)}
If there exists
\(\tilde{t}_{\mathrm{conv}}\in[t_{\mathrm{infl}},1)\)
such that
\(
\bx_{\tilde{t}_{\mathrm{conv}}}
\in
\tilde{t}_{\mathrm{conv}}
B_{\epsilon_{\mathrm{infl}}(t_{\mathrm{infl}})}(\Conv(\mathcal D_c)),
\)
then
\(
\bx_t\in tB_{\epsilon_{\mathrm{infl}}(t_{\mathrm{infl}})}(\Conv(\mathcal D_c))
\)
for every \(t\in[\tilde{t}_{\mathrm{conv}},1)\).
\end{enumerate}
\end{theorem}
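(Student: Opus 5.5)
Set $\epsilon:=\epsilon_{\mathrm{infl}}(t_{\mathrm{infl}})$ and $K:=B_{\epsilon}(\Conv(\mathcal D_c))$. The set $K$ is a closed convex set, being the $\epsilon$-neighborhood of the compact convex set $\Conv(\mathcal D_c)$. The plan is to reduce everything to one pointwise fact: along the tail, the CFG posterior mean lies in $K$. Then I will run the same moving-convex-set argument as in Theorem~\ref{thm:convex-hull}, in the form used in Appendix~\ref{proof:thm:convex-hull}.

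\textbf{Step 1 (membership).} Fix $s\in[t_{\mathrm{infl}},1)$. By Lemma~\ref{lem:mean-in-closed-conv}, applied to $p_c$, we have $\hat{\by}_{s,c}(\bx_s)\in\Conv(\mathcal D_c)$. The decomposition $\hat{\by}_{s,\mathrm{cfg}}=\hat{\by}_{s,c}+(w-1)(\hat{\by}_{s,c}-\hat{\by}_s)$ then gives
\[
\dist\bigl(\hat{\by}_{s,\mathrm{cfg}}(\bx_s),\Conv(\mathcal D_c)\bigr)\le (w-1)\|\hat{\by}_{s,c}(\bx_s)-\hat{\by}_s(\bx_s)\|\le\epsilon .
\]
Hence $\hat{\by}_{s,\mathrm{cfg}}(\bx_s)\in K$ for all $s$ in the tail, and the ODE reads $\dot\bx_s=(\hat{\by}_{s,\mathrm{cfg}}(\bx_s)-\bx_s)/(1-s)$.

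\textbf{Step 2 (differential inequality).} Define $u(s):=\bx_s/s$ on $s>0$ and the function $h(s):=\dist(\bx_s,sK)=s\,\dist(u(s),K)$. I will avoid the singularity at $s=0$ by working directly with $h$. Because $K$ is convex, $\phi(\bx,s):=\dist(\bx,sK)$ is jointly convex and Lipschitz in $(\bx,s)$. Moreover, $\tfrac12\dist^2(\cdot,sK)$ is $C^1$ with gradient $\bx-\Proj_{sK}(\bx)$. I will compute the derivative of $\tfrac12h^2$ along the trajectory; it is absolutely continuous since $\bx_\cdot\in C^1$. Write $\bq_s:=\Proj_{sK}(\bx_s)=s\bk_s$ with $\bk_s\in K$. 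Moving $s$ to $s+\dd s$ shifts the target set $sK$ by the scaling factor. The upper derivative of $\tfrac12 h^2$ is bounded by testing against the competitor point $(s+\dd s)\bk_s\in(s+\dd s)K$:
\[
\tfrac{\dd}{\dd s}\tfrac12 h^2\le \langle \bx_s-\bq_s,\ \dot\bx_s-\bk_s\rangle .
\]
Now substitute $\dot\bx_s=(\by^*-\bx_s)/(1-s)$ with $\by^*:=\hat{\by}_{s,\mathrm{cfg}}(\bx_s)\in K$. The right-hand side becomes
\[
\tfrac{1}{1-s}\langle \bx_s-\bq_s,\ \by^*-\bx_s-(1-s)\bk_s\rangle
=\tfrac{1}{1-s}\Bigl[\langle \bx_s-\bq_s,\ \by^*-\bk_s\rangle-h^2\Bigr],
\]
using $\bx_s-(\bk_s-s\bk_s)-\bk_s\cdot 0$ rearranged as $-(\bx_s-\bq_s)-(\bk_s-\by^*)$. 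By the variational characterization of projection onto the convex set $sK$, and since $s\by^*\in sK$,
\[
\langle \bx_s-\bq_s,\ s\by^*-s\bk_s\rangle\le0 .
\]
For $s>0$ this gives $\langle \bx_s-\bq_s,\by^*-\bk_s\rangle\le 0$. At $s=0$ the set $sK=\{\bzero\}$ and the term equals $\langle\bx_0,\by^*\rangle$, which has no sign. However, $s=0$ is a single point, so the a.e. inequality is unaffected.

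\textbf{Step 3 (conclusion).} Steps 1 and 2 give $\tfrac{\dd}{\dd s}h\le -h/(1-s)$ a.e. where $h>0$. Hence $\tfrac{\dd}{\dd s}\bigl(h/(1-s)\bigr)\le0$, and integrating from $s$ to $t$ yields part~(i). Where $h=0$, I will use a standard argument on the open set $\{h>0\}$ together with continuity; it also gives (iii), since a component interval of $\{h>0\}$ starting at a zero of $h$ would contradict monotonicity of $h/(1-s)$. Part (ii) is the specialization $s=t_{\mathrm{infl}}$.

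\textbf{Main obstacle.} The main obstacle is justifying the one-sided derivative bound for the moving set $sK$ rigorously, including the Danskin-type differentiability of $s\mapsto\dist^2(\bx_s,sK)$. I would handle this by noting that $\dist^2(\bx,sK)=\min_{\bk\in K}\|\bx-s\bk\|^2$ with a unique minimizer, so the envelope theorem gives a genuine derivative for $s>0$.
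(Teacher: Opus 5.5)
Your proposal is correct and follows essentially the same route as the paper. The paper first shows $\hat{\by}_{t,\mathrm{cfg}}(\bx_t)\in B_{\epsilon_{\mathrm{infl}}(t_{\mathrm{infl}})}(\Conv(\mathcal D_c))$ on the tail, then applies its scaled-target contraction lemma, whose proof is exactly your computation of $\tfrac{\dd}{\dd t}\tfrac12\dist^2(\bx_t,tK)$ plus the projection variational inequality. The differences are minor: the paper integrates $\dist^2(\bx_t,tK)/(1-t)^2$ directly, so your separate handling of the zero set of $h$ is unnecessary and absorption follows immediately from (i), and it reaches $s=0$ by letting $s\downarrow0$ rather than through absolute continuity of $h$.
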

For the proof, see Appendix~\ref{proof:thm:cfg-convex}. 
For each fixed \(t_{\mathrm{infl}}\), Theorem~\ref{thm:cfg-convex}
controls the tail \((\bx_t)_{t \in [t_{\mathrm{infl}},1)}\) by the inflated conditional
convex hull
\(
tB_{\epsilon_{\mathrm{infl}}(t_{\mathrm{infl}})}
\bigl(\Conv(\mathcal D_c)\bigr).
\)
To quantify when this geometry approaches the exact conditional convex
hull, define
\begin{equation}\label{eq:cfg-convex-separation-condition}
\begin{aligned}
\rho_{\mathrm{infl}}
&:=
\dist\!\left(
B_{\epsilon_{\mathrm{infl}}(t_{\mathrm{infl}})}
(\Conv(\mathcal D_c)),
\mathcal D\setminus\mathcal D_c
\right), \quad 
\Delta_{\mathrm{infl}}
:=
\rho_{\mathrm{infl}}^2
-
\bigl(\epsilon_{\mathrm{infl}}(t_{\mathrm{infl}})+D_c\bigr)^2.
\end{aligned}
\end{equation}
Now we assume 
\begin{equation}\label{eq:cfg-convex-positive-separation}
\epsilon_{\mathrm{infl}}(t_{\mathrm{infl}}) > 0, \quad
\Delta_{\mathrm{infl}}>0.
\end{equation}
The condition \(\epsilon_{\mathrm{infl}}(t_{\mathrm{infl}})>0\) excludes the degenerate case in which the conditional and unconditional posterior means already coincide along the entire tail, so that no inflation is needed.
The condition \(\Delta_{\mathrm{infl}}>0\) is a separation condition requiring the remaining data outside \(\mathcal D_c\) to lie sufficiently far from \(\Conv(\mathcal D_c)\), relative to both the intrinsic scale \(D_c\) of the conditional data and the tail inflation \(\epsilon_{\mathrm{infl}}(t_{\mathrm{infl}})\).

If \((\bx_t)_{t \in [t_{\mathrm{infl}},1)}\) indeed enters this inflated convex hull before the terminal time $t = 1$, then it remains there along the remaining tail.
Together with \eqref{eq:cfg-convex-positive-separation}, these properties
yield the following exponential refinement.


\begin{corollary}[Exponential refinement of the CFG inflation radius]\label{cor:cfg-convex-small}
 Assume \eqref{eq:cfg-convex-positive-separation} holds at some \(t_{\mathrm{infl}}\in[0,1)\), and suppose 
\(
\bx_{\tilde{t}_{\mathrm{conv}}}
\in
\tilde{t}_{\mathrm{conv}}B_{\epsilon_{\mathrm{infl}}(t_{\mathrm{infl}})}(\Conv(\mathcal D_c)),
\) for some
\(\tilde{t}_{\mathrm{conv}}\in[t_{\mathrm{infl}},1)\).  
Then
\begin{equation}\label{eq:cfg-convex-exponential-inflation}
\begin{aligned}
\epsilon_{\mathrm{infl}}(t)
\le
(w-1)
D_\emptyset
\tfrac{1-p_\emptyset(\mathcal D_c)}
{p_\emptyset(\mathcal D_c)}
\exp\left(
-\tfrac{t^2\Delta_{\mathrm{infl}}}{2(1-t)^2}
\right), \text{ for every  } t\in[\tilde t_{\mathrm{conv}},1)
\end{aligned}
\end{equation}
Consequently, there exists
\(t_{\mathrm{ref}}\in[\tilde{t}_{\mathrm{conv}},1)\)
such that, with
\[
0\le
\epsilon_{\mathrm{ref}}
:=
\epsilon_{\mathrm{infl}}(t_{\mathrm{ref}})
<\epsilon_{\mathrm{infl}}(t_{\mathrm{infl}}).
\]
Moreover, the following hold.
\begin{enumerate}
\item[(i)]
\emph{(Refined attraction)}
For every
\(t_{\mathrm{ref}}\le s\le t<1\),
\[
\dist\!\left(
\bx_t,
tB_{\epsilon_{\mathrm{ref}}}(\Conv(\mathcal D_c))
\right)
\le
\tfrac{1-t}{1-s}
\dist\!\left(
\bx_s,
sB_{\epsilon_{\mathrm{ref}}}(\Conv(\mathcal D_c))
\right).
\]
Hence the function
\(
t\longmapsto
\dist\!\left(
\bx_t,
tB_{\epsilon_{\mathrm{ref}}}(\Conv(\mathcal D_c))
\right)
\)
is non-increasing on \([t_{\mathrm{ref}},1)\).

\item[(ii)] Taking $s=t_{\mathrm{ref}}$ in part~(i) gives, for every
\(t\in[t_{\mathrm{ref}},1)\),
\[
\dist\!\left(
\bx_t,
tB_{\epsilon_{\mathrm{ref}}}(\Conv(\mathcal D_c))
\right)
\le
\tfrac{1-t}{1-t_{\mathrm{ref}}}
\dist\!\left(
\bx_{t_{\mathrm{ref}}},
t_{\mathrm{ref}}
B_{\epsilon_{\mathrm{ref}}}(\Conv(\mathcal D_c))
\right).
\]

\item[(iii)]
\emph{(Refined absorption)}
If there exists
\(\tilde{t}'_{\mathrm{conv}}\in[t_{\mathrm{ref}},1)\)
such that
\(
\bx_{\tilde{t}'_{\mathrm{conv}}}
\in
\tilde{t}'_{\mathrm{conv}}
B_{\epsilon_{\mathrm{ref}}}(\Conv(\mathcal D_c)),
\)
then
\(
\bx_t\in tB_{\epsilon_{\mathrm{ref}}}(\Conv(\mathcal D_c))
\)
for every \(t\in[\tilde{t}'_{\mathrm{conv}},1)\).
\end{enumerate}
\end{corollary}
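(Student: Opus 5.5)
The plan has three steps: turn absorption into a localization of the rescaled position, bound the posterior-mean gap by the posterior mass outside $\mathcal D_c$, and then reuse Theorem~\ref{thm:cfg-convex} with a later tail start.

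\textbf{Step 1: localization.} By Theorem~\ref{thm:cfg-convex}(iii), entrance at $\tilde t_{\mathrm{conv}}$ gives, for every $s\in[\tilde t_{\mathrm{conv}},1)$,
\[
\bx_s\in sB_{\epsilon}(\Conv(\mathcal D_c)),\qquad \epsilon:=\epsilon_{\mathrm{infl}}(t_{\mathrm{infl}}).
\]
Write $\bx_s=s\bu$ with $\dist(\bu,\Conv(\mathcal D_c))\le\epsilon$.

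\textbf{Step 2: bound the posterior-mean gap.} Both $\eta_s^{\bx}$ and $\eta_{s,c}^{\bx}$ use the same Gaussian kernel, and $p_c$ is $p_\emptyset$ restricted to $\mathcal D_c$ and renormalized. Hence $\eta_{s,c}^{\bx}$ is exactly $\eta_s^{\bx}$ conditioned on $\mathcal D_c$. Writing $\pi:=\eta_s^{\bx}(\mathcal D\setminus\mathcal D_c)$, we get
\[
\hat{\by}_s=(1-\pi)\hat{\by}_{s,c}+\pi\,\bm{m}_{\mathrm{out}},
\]
where $\bm{m}_{\mathrm{out}}\in\Conv(\mathcal D)$ is the posterior mean over the complement. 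Therefore
\[
\|\hat{\by}_{s,c}-\hat{\by}_s\|=\pi\|\hat{\by}_{s,c}-\bm m_{\mathrm{out}}\|\le \pi D_\emptyset,
\]
since both points lie in $\Conv(\mathcal D)$, whose diameter is $D_\emptyset$.

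We then bound $\pi$ through the ratio
\[
\frac{\pi}{1-\pi}=\frac{\int_{\mathcal D\setminus\mathcal D_c}e^{-s^2\|\bu-\by\|^2/(2(1-s)^2)}p_\emptyset(\dd\by)}{\int_{\mathcal D_c}e^{-s^2\|\bu-\by\|^2/(2(1-s)^2)}p_\emptyset(\dd\by)}.
\]
For $\by\in\mathcal D\setminus\mathcal D_c$ we have $\|\bu-\by\|\ge\rho_{\mathrm{infl}}$. For $\by\in\mathcal D_c$ we have $\|\bu-\by\|\le\epsilon+D_c$: take $\bu'\in\Conv(\mathcal D_c)$ with $\|\bu-\bu'\|\le\epsilon$, and note that $\|\bu'-\by\|\le D_c$ because the diameter of the hull equals the diameter of the set. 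So the numerator is at most $(1-p_\emptyset(\mathcal D_c))e^{-s^2\rho_{\mathrm{infl}}^2/(2(1-s)^2)}$ and the denominator is at least $p_\emptyset(\mathcal D_c)e^{-s^2(\epsilon+D_c)^2/(2(1-s)^2)}$. This gives
\[
\pi\le\frac{\pi}{1-\pi}\le\frac{1-p_\emptyset(\mathcal D_c)}{p_\emptyset(\mathcal D_c)}\exp\!\Big(-\frac{s^2\Delta_{\mathrm{infl}}}{2(1-s)^2}\Big).
\]
The map $s\mapsto s^2/(1-s)^2$ is increasing and $\Delta_{\mathrm{infl}}>0$, so this bound is nonincreasing in $s$. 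Taking the supremum over $s\in[t,1)$ for $t\ge\tilde t_{\mathrm{conv}}$ and multiplying by $(w-1)D_\emptyset$ yields \eqref{eq:cfg-convex-exponential-inflation}.

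\textbf{Step 3: choose $t_{\mathrm{ref}}$ and conclude.} The right-hand side of \eqref{eq:cfg-convex-exponential-inflation} tends to $0$ as $t\uparrow1$, and $\epsilon_{\mathrm{infl}}(t_{\mathrm{infl}})>0$ by \eqref{eq:cfg-convex-positive-separation}. Hence some $t_{\mathrm{ref}}\in[\tilde t_{\mathrm{conv}},1)$ makes the bound strictly smaller than $\epsilon_{\mathrm{infl}}(t_{\mathrm{infl}})$, which gives $0\le\epsilon_{\mathrm{ref}}<\epsilon_{\mathrm{infl}}(t_{\mathrm{infl}})$. Parts (i)--(iii) then follow directly from Theorem~\ref{thm:cfg-convex} with $t_{\mathrm{infl}}$ replaced by $t_{\mathrm{ref}}$, because $\epsilon_{\mathrm{ref}}=\epsilon_{\mathrm{infl}}(t_{\mathrm{ref}})$ by definition. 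No new dynamical argument is needed.

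The main obstacle is Step 2. It depends on the conditioning identity, which holds only because $p_c$ is an exact restriction of $p_\emptyset$. It also depends on the uniform kernel lower bound on $\mathcal D_c$ using $\epsilon+D_c$; that bound is where absorption, and not mere attraction, is essential.
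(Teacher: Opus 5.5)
Your proposal is correct and follows the paper's proof: Steps 1--2 use the same absorption-based localization and the same $Z_{\mathrm{out},t}/Z_{c,t}$ ratio bound, and your mixture identity $\hat{\by}_{s,c}-\hat{\by}_s=\pi(\hat{\by}_{s,c}-\boldsymbol m_{\mathrm{out}})$ is exactly the paper's identity \eqref{eq:posterior-mean-difference-algebra}. The one difference is in Step 3. The paper re-verifies $\hat{\by}_{t,\mathrm{cfg}}(\bx_t)\in B_{\epsilon_{\mathrm{ref}}}(\Conv(\mathcal D_c))$ and re-applies Lemma~\ref{lem:continuous-scaled-convex-contraction}, whereas you invoke Theorem~\ref{thm:cfg-convex} with tail start $t_{\mathrm{ref}}$; this is valid because that theorem holds for every tail start in $[0,1)$ and $\epsilon_{\mathrm{ref}}=\epsilon_{\mathrm{infl}}(t_{\mathrm{ref}})$, and it is slightly more economical.
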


For the proof, see
Appendix~\ref{proof:cor:cfg-convex-small}. Consequently,
when such \(t_{\mathrm{infl}}\) and \(\tilde t_{\mathrm{conv}}\) exist,
for sufficiently late \(t_{\mathrm{ref}}\), the tail of the trajectory \((\bx_t)_{t \in [t_{\mathrm{ref}},1)}\) is attracted to
and, after entrance, absorbed by a conditional convex-hull neighborhood
whose inflation radius is bounded at the explicit scale
\(\exp(-t_{\mathrm{ref}}^2\Delta_{\mathrm{infl}}/[2(1-t_{\mathrm{ref}})^2])\).
This exponential refinement is illustrated in
Figure~\ref{fig:cfg-stagewise-geometry}(b) and (c).

\subsection{Final-stage Attraction and Absorption by the Conditional Local Cluster}

The convex-hull neighborhood result explains how CFG remains controlled by conditional
geometry in the early and intermediate regimes. We next pass to the
terminal local-cluster regime. \(\hat{\by}_{t,\mathrm{cfg}}(\bx)\) differs from 
\(\hat{\by}_{t,{c}}(\bx)\) by
\((w-1)(\hat{\by}_{t,c}-\hat{\by}_t)\). The next theorem bounds this
quantity and the corresponding difference between \(\vc(t,\bx)\) and \(\vu(t,\bx)\)
at every \(\bx\in B_{tR}(t\Omega_c)\) under certain conditions 
stated below.

\begin{theorem}[Prediction-gap bound]\label{thm:prediction-gap}
Fix \(t_{\mathrm{gap}}\in(0,1)\) and \(R>0\). Let \(\Omega_c\) be a
closed subset of \(\mathcal D_c\) satisfying
\(p_\emptyset(\Omega_c)>0\), and write 
\(M_{\mathcal D}:=\max_{\by\in\mathcal D}\|\by\|,
\
D_{\Omega_c}:=\diam(\Omega_c) 
\) and 
\[
\gamma_{\Omega_c}
:=
\dist\bigl(B_R(\Omega_c),\mathcal D\setminus\Omega_c\bigr)^2
-(R+D_{\Omega_c})^2.
\]
If \(\gamma_{\Omega_c}>0\),
then for every \(t\in[t_{\mathrm{gap}},1)\) and
\(\bx\in B_{tR}(t\Omega_c)\) we have
\[
\left\|\hat{\by}_{t,c}(\bx) - \hat{\by}_{t}(\bx)\right\|
\le
2M_{\mathcal D}
\tfrac{1-p_\emptyset(\Omega_c)}{p_\emptyset(\Omega_c)}
\exp\left(
-\tfrac{t_{\mathrm{gap}}^2\gamma_{\Omega_c}}
{2(1-t)^2}
\right).
\]
Moreover, the prediction gap \(\bgc(t,\bx)\) satisfies the estimate
\[
\|\bgc(t,\bx)\| = \|\vc(t,\bx)-\vu(t,\bx)\|
\le
\tfrac{2M_{\mathcal D}}{1-t}
\tfrac{1-p_\emptyset(\Omega_c)}{p_\emptyset(\Omega_c)}
\exp\left(
-\tfrac{t_{\mathrm{gap}}^2\gamma_{\Omega_c}}
{2(1-t)^2}
\right).
\]
Consequently, both displayed upper bounds converge to zero as \(t\uparrow1\).
\end{theorem}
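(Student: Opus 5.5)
The plan is to reduce the prediction gap to the posterior mass that $\eta_t^{\bx}$ puts outside $\Omega_c$, and then to bound that mass by comparing Gaussian weights on $\Omega_c$ with those on $\mathcal D\setminus\Omega_c$.

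\emph{Step 1: express the conditional posterior through the unconditional one.} Write $w_t(\by):=\exp\!\bigl(-\|\bx-t\by\|^2/(2(1-t)^2)\bigr)$. Since $p_c$ is $p_\emptyset$ restricted to $\mathcal D_c$ and renormalized, the normalizing constant $p_\emptyset(\mathcal D_c)$ cancels, and
\[
\eta_{t,c}^{\bx}(A)=\frac{\eta_t^{\bx}(A\cap\mathcal D_c)}{\eta_t^{\bx}(\mathcal D_c)}.
\]
This is well defined because $\eta_t^{\bx}(\mathcal D_c)\ge\eta_t^{\bx}(\Omega_c)>0$.

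\emph{Step 2: reduce the mean gap to an outside mass.} Set $m:=\eta_t^{\bx}(\mathcal D\setminus\mathcal D_c)$. If $m>0$, let $\by_{\mathrm{out}}$ be the mean of $\eta_t^{\bx}$ restricted to $\mathcal D\setminus\mathcal D_c$ and normalized; if $m=0$ the two posterior means coincide. Decomposing the unconditional posterior mean gives
\[
\hat{\by}_t(\bx)=(1-m)\hat{\by}_{t,c}(\bx)+m\,\by_{\mathrm{out}}.
\]
Hence
\[
\hat{\by}_{t,c}(\bx)-\hat{\by}_t(\bx)=m\bigl(\hat{\by}_{t,c}(\bx)-\by_{\mathrm{out}}\bigr).
\]
Both means are averages of points of $\mathcal D$, so their norms are at most $M_{\mathcal D}$. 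This gives $\|\hat{\by}_{t,c}(\bx)-\hat{\by}_t(\bx)\|\le 2M_{\mathcal D}\,m$. Since $\Omega_c\subset\mathcal D_c$, we also have $m\le \eta_t^{\bx}(\mathcal D\setminus\Omega_c)$.

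\emph{Step 3: bound the mass outside $\Omega_c$.} We have
\[
\eta_t^{\bx}(\mathcal D\setminus\Omega_c)\le \frac{\int_{\mathcal D\setminus\Omega_c}w_t\,\dd p_\emptyset}{\int_{\Omega_c}w_t\,\dd p_\emptyset}.
\]
Take $\bx\in B_{tR}(t\Omega_c)$, so that $\bx/t\in B_R(\Omega_c)$.
\begin{itemize}
\item For $\by\in\Omega_c$, pick $\by'\in\Omega_c$ with $\|\bx-t\by'\|\le tR$. Then $\|\bx-t\by\|\le tR+tD_{\Omega_c}$, so
\[
w_t(\by)\ge \exp\!\bigl(-t^2(R+D_{\Omega_c})^2/(2(1-t)^2)\bigr),
\]
and the denominator is at least $p_\emptyset(\Omega_c)$ times this value.
\item For $\by\in\mathcal D\setminus\Omega_c$, we have $\|\bx-t\by\|=t\|\bx/t-\by\|\ge t\,\dist(B_R(\Omega_c),\mathcal D\setminus\Omega_c)$. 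Hence the numerator is at most
\[
(1-p_\emptyset(\Omega_c))\exp\!\bigl(-t^2\dist(B_R(\Omega_c),\mathcal D\setminus\Omega_c)^2/(2(1-t)^2)\bigr).
\]
\end{itemize}
Dividing, the ratio is at most
\[
\frac{1-p_\emptyset(\Omega_c)}{p_\emptyset(\Omega_c)}\exp\!\bigl(-t^2\gamma_{\Omega_c}/(2(1-t)^2)\bigr).
\]
Because $\gamma_{\Omega_c}>0$ and $t\ge t_{\mathrm{gap}}$, replacing $t^2$ by $t_{\mathrm{gap}}^2$ only enlarges this bound. Combining with Step 2 yields the first claimed inequality.

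\emph{Step 4: the prediction gap and the limit.} By \eqref{eq:uncond ideal field} and \eqref{eq:cond ideal field}, $\bgc(t,\bx)=(\hat{\by}_{t,c}(\bx)-\hat{\by}_t(\bx))/(1-t)$, so the second estimate is the first divided by $1-t$. Both bounds tend to zero as $t\uparrow1$, since $\exp(-C/(1-t)^2)/(1-t)\to0$ for any $C>0$.

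The only delicate point is Step 3: the lower bound on the $\Omega_c$-weights has to be uniform over all of $\Omega_c$, not just near a projection point. This is handled by the crude estimate $\|\bx-t\by\|\le t(R+D_{\Omega_c})$, which is exactly why $\gamma_{\Omega_c}$ subtracts $(R+D_{\Omega_c})^2$. No prox-regularity or mass-density assumption is needed here.
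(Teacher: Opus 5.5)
Your proposal is correct and follows essentially the same route as the paper: the paper writes the gap as $\frac{1}{Z_{c,t}+Z_{\mathrm{out},t}}\int_{\mathcal D\setminus\mathcal D_c}(\hat{\by}_{t,c}(\bx)-\by)\,w_t(\by)\,p_\emptyset(\dd\by)$, which is your mixture identity $m(\hat{\by}_{t,c}(\bx)-\by_{\mathrm{out}})$, and bounds it by $2M_{\mathcal D}Z_{\mathrm{out},t}/Z_{c,t}$. It then applies the same two Gaussian-weight estimates (a lower bound on $\Omega_c$ via $R+D_{\Omega_c}$, an upper bound on $\mathcal D\setminus\Omega_c$ via the separation distance) before replacing $t^2$ by $t_{\mathrm{gap}}^2$ and dividing by $1-t$.
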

For the proof, see Appendix~\ref{proof:thm:prediction-gap}.

We now state the final-stage local-cluster result for the CFG ODE
\((\ref{eq:cfg flow})\). The CFG local-cluster assumptions are stated in
Appendix~\ref{app:cfg-proofs}; see
Assumption~\ref{assump:cfg-local-cluster}. They are mostly the conditional
counterparts of Assumption~\ref{assump:local-cluster}. In particular,
$\Omega_c$, $R$, $\rho_{\Omega_c}$, $D_{\Omega_c}$,
$\gamma_{\Omega_c}$, $\tilde c_{\Omega_c}$,
$\tilde\alpha_{\Omega_c}$, and $\tilde s_{\Omega_c}$ used below are
defined in Assumption~\ref{assump:cfg-local-cluster}.
Under Assumption~\ref{assump:cfg-local-cluster}, fix $w>1$.
Choose a final-stage starting time
$\tilde t_{\mathrm{loc}}\in(0,1)$ satisfying
\begin{equation}\label{eq:cfg-final-time-choice}
(1-\tilde t_{\mathrm{loc}})^2
\log\tfrac{1}{1-\tilde t_{\mathrm{loc}}}
\le
\min\left\{
\tfrac{1}{4(2\tilde\alpha_{\Omega_c}+1)},
\tfrac{(2R+1)\tilde s_{\Omega_c}}{2\tilde\alpha_{\Omega_c}+1}
\right\}.
\end{equation}
Such a choice exists because the left-hand side converges to zero as
$\tilde t_{\mathrm{loc}}\uparrow1$. For any such
$\tilde t_{\mathrm{loc}}$, define the CFG inflation radius
\begin{equation}\label{eq:cfg-final-inflation-radius}
\begin{aligned}
\tilde r_{\Omega_c}
:={}
\tfrac{\tilde\varepsilon_{\Omega_c}}
{\tilde{t}_{\mathrm{loc}}\sqrt{1-R/\rho_{\Omega_c}}}
+
\tfrac{D_{\Omega_c}}{\tilde c_{\Omega_c,\mathrm{loc}}}
\exp\!\left(
-\tfrac{\tilde\varepsilon_{\Omega_c}^2}
{4(1-\tilde{t}_{\mathrm{loc}})^2}
\right)
+
2wM_{\mathcal D}
\tfrac{1-p_\emptyset(\Omega_c)}{p_\emptyset(\Omega_c)}
\exp\!\left(
-\tfrac{\tilde t_{\mathrm{loc}}^2\gamma_{\Omega_c}}
{2(1-\tilde{t}_{\mathrm{loc}})^2}
\right),
\end{aligned}
\end{equation}
where
\begin{equation}\label{eq:cfg-final-quantities}
\begin{aligned}
\tilde\varepsilon_{\Omega_c}
&:=
2(1-\tilde{t}_{\mathrm{loc}})
\sqrt{
(2\tilde\alpha_{\Omega_c}+1)
\log\tfrac{1}{1-\tilde{t}_{\mathrm{loc}}}
}, \quad
\tilde c_{\Omega_c,\mathrm{loc}}
:=
\tilde c_{\Omega_c}(8R+4)^{-\tilde\alpha_{\Omega_c}}
\tilde\varepsilon_{\Omega_c}^{2\tilde\alpha_{\Omega_c}}.
\end{aligned}
\end{equation}
With $\Omega_c,\mathcal D,R,\rho_{\Omega_c},\tilde c_{\Omega_c}$,
$\tilde\alpha_{\Omega_c},\tilde s_{\Omega_c},p_\emptyset$, and $w$
fixed,
\(
\tilde r_{\Omega_c}
=
\bigO\!\left(
(1-\tilde t_{\mathrm{loc}})
\sqrt{\log\tfrac{1}{1-\tilde t_{\mathrm{loc}}}}
\right)
\ \text{as }\tilde t_{\mathrm{loc}}\uparrow1.
\)
A proof of this order is given in
Lemma~\ref{lem:cfg-polynomial-local-mass-radius}.

\begin{theorem}[CFG final-stage local-cluster attraction and absorption]\label{thm:cfg-final}
Under Assumption~\ref{assump:cfg-local-cluster}, fix $w>1$, let
$\tilde t_{\mathrm{loc}}$ and $\tilde r_{\Omega_c}$ be as above, and assume further that 
\begin{equation}\label{eq:cfg-final-capture-condition}
\dist\bigl(
\bx_{\tilde t_{\mathrm{loc}}},
\tilde t_{\mathrm{loc}}B_{\tilde r_{\Omega_c}}(\Omega_c)
\bigr)+\tilde r_{\Omega_c}
\le
\tilde t_{\mathrm{loc}}R.
\end{equation}
Let $\bx_t$ solve the CFG ODE (\ref{eq:cfg flow}) on
$[\tilde{t}_{\mathrm{loc}},1)$, 
then the following hold.
\begin{enumerate}
\item[(i)] \emph{(Attraction)} For every
$\tilde t_{\mathrm{loc}}\le s\le t<1$,
\[
\dist\bigl(\bx_t,tB_{\tilde r_{\Omega_c}}(\Omega_c)\bigr)
\le
\tfrac{1-t}{1-s}
\dist\bigl(\bx_s,sB_{\tilde r_{\Omega_c}}(\Omega_c)\bigr).
\]
Hence the function
$t\mapsto\dist\bigl(\bx_t,tB_{\tilde r_{\Omega_c}}(\Omega_c)\bigr)$
is non-increasing on $[\tilde{t}_{\mathrm{loc}},1)$.
\item[(ii)] Taking $s=\tilde t_{\mathrm{loc}}$ in part~(i) gives
\[
\dist\bigl(\bx_t,tB_{\tilde r_{\Omega_c}}(\Omega_c)\bigr)
\le
\tfrac{1-t}{1-\tilde{t}_{\mathrm{loc}}}
\dist\bigl(\bx_{\tilde{t}_{\mathrm{loc}}},\tilde{t}_{\mathrm{loc}}B_{\tilde r_{\Omega_c}}(\Omega_c)\bigr)
=\bigO(1-t),
\qquad
t\in[\tilde{t}_{\mathrm{loc}},1).
\]
\item[(iii)] \emph{(Absorption)} If there exists
\(\tilde{t}_{\Omega_c}\in[\tilde t_{\mathrm{loc}},1)\) such that
\(\bx_{\tilde{t}_{\Omega_c}}\in
\tilde{t}_{\Omega_c}B_{\tilde r_{\Omega_c}}(\Omega_c)\), then
\(\bx_t\in tB_{\tilde r_{\Omega_c}}(\Omega_c)\) for every
\(t\in[\tilde{t}_{\Omega_c},1)\).
\end{enumerate}
\end{theorem}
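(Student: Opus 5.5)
The proof reuses the mechanism behind Theorem~\ref{thm:final-local}, applied to the CFG posterior mean $\hat{\by}_{t,\mathrm{cfg}}=\hat{\by}_{t,c}+(w-1)(\hat{\by}_{t,c}-\hat{\by}_t)$. The CFG ODE has the form $\dot\bx_t=(\hat{\by}_{t,\mathrm{cfg}}(\bx_t)-\bx_t)/(1-t)$. Write $K:=B_{\tilde r_{\Omega_c}}(\Omega_c)$ and $d(t):=\dist(\bx_t,tK)$. The core pointwise claim is:
\[
\text{if } \dist(\bx,tK)+\tilde r_{\Omega_c}\le tR \text{ and } t\ge\tilde t_{\mathrm{loc}}, \text{ then } \hat{\by}_{t,\mathrm{cfg}}(\bx)\in K .
\]

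Given this claim, the contraction follows as in the convex-hull argument. Fix $\by^\ast\in K$ with $\|\bx-t\by^\ast\|=\dist(\bx,tK)$. For a small step $h$, consider the point
\[
\bx+h\,\frac{\hat{\by}_{\mathrm{cfg}}-\bx}{1-t}
=\Bigl(1-\tfrac{h}{1-t}\Bigr)\bx+\tfrac{h}{1-t}\hat{\by}_{\mathrm{cfg}} .
\]
Compare it with the point $(1-\tfrac{h}{1-t})t\by^\ast+\tfrac{h}{1-t}\hat{\by}_{\mathrm{cfg}}$. This point lies in $(t+h)K$ once we use that $K$ is star-shaped enough. In fact $K$ need not be convex, so I would instead write $t\by^\ast$ and $\hat{\by}_{\mathrm{cfg}}$ each as points of scaled copies of $K$. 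A cleaner route is to use $\dist(\bx,(t+h)K)\le\|\bx'-(t+h)\bz\|$ for a suitable single $\bz\in K$. Here I would mimic exactly the proof of Theorem~\ref{thm:final-local}, whose appendix handles the same nonconvexity; I assume its argument only uses the inclusion $\hat{\by}(\bx)\in K$ on the capture region.

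This yields the upper Dini derivative bound $D^+d(t)\le -d(t)/(1-t)$. Grönwall then gives (i). Parts (ii) and (iii) follow immediately: (ii) by taking $s=\tilde t_{\mathrm{loc}}$, and (iii) because $d$ is nonnegative and nonincreasing, so $d(\tilde t_{\Omega_c})=0$ forces $d\equiv0$ afterwards. We must also check that the trajectory stays in the capture region for all later times. Since $d$ is nonincreasing, $d(t)+\tilde r_{\Omega_c}\le d(\tilde t_{\mathrm{loc}})+\tilde r_{\Omega_c}\le\tilde t_{\mathrm{loc}}R\le tR$. A continuity/first-exit argument makes this rigorous. The condition also implies $\dist(\bx_t,t\Omega_c)\le d(t)+t\tilde r_{\Omega_c}\le tR$, so $\bx_t\in B_{tR}(t\Omega_c)$ and the projection $\by_t^{\Omega_c}$ is defined.

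The pointwise claim splits according to the three terms of $\tilde r_{\Omega_c}$. First, the conditional version of Lemma~\ref{lem:final-local-posterior} handles $\hat{\by}_{t,c}(\bx)$. It uses Assumption~\ref{assump:cfg-local-cluster}, Lemma~\ref{lem:cfg-polynomial-local-mass-radius}, and the prox-regular projection estimate, and it places $\hat{\by}_{t,c}$ within distance
\[
\frac{\tilde\varepsilon}{\tilde t_{\mathrm{loc}}\sqrt{1-R/\rho}}+\frac{D_{\Omega_c}}{\tilde c_{\mathrm{loc}}}e^{-\tilde\varepsilon^2/4(1-\tilde t_{\mathrm{loc}})^2}+2M_{\mathcal D}\frac{1-p_c(\Omega_c)}{p_c(\Omega_c)}e^{-\cdots}
\]
of $\Omega_c$. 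Second, Theorem~\ref{thm:prediction-gap} with $t_{\mathrm{gap}}=\tilde t_{\mathrm{loc}}$ bounds $(w-1)\|\hat{\by}_{t,c}-\hat{\by}_t\|$ by $2(w-1)M_{\mathcal D}\frac{1-p_\emptyset(\Omega_c)}{p_\emptyset(\Omega_c)}e^{-\tilde t_{\mathrm{loc}}^2\gamma/2(1-\tilde t_{\mathrm{loc}})^2}$. Since $\exp(-\tilde t_{\mathrm{loc}}^2\gamma/2(1-t)^2)$ decreases in $t$, the bound at $\tilde t_{\mathrm{loc}}$ dominates the bound at every later $t$.

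The main obstacle is bookkeeping the isolation term so that the two contributions combine to exactly $2wM_{\mathcal D}\frac{1-p_\emptyset(\Omega_c)}{p_\emptyset(\Omega_c)}$. This requires checking how Assumption~\ref{assump:cfg-local-cluster} normalizes the conditional leakage term: since $\Omega_c\subset\mathcal D_c$, we have $\frac{1-p_c(\Omega_c)}{p_c(\Omega_c)}\le\frac{1-p_\emptyset(\Omega_c)}{p_\emptyset(\Omega_c)}$. With that inequality, the conditional leakage $2M\cdot(\cdot)$ plus the guidance leakage $2(w-1)M\cdot(\cdot)$ sums to $2wM\cdot(\cdot)$, matching the radius in \eqref{eq:cfg-final-inflation-radius}.
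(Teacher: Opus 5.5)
There is a genuine gap, and it sits exactly at the step you defer to the proof of Theorem~\ref{thm:final-local}. Your core claim only gives $\hat{\by}_{t,\mathrm{cfg}}(\bx)\in K$, and you assume this is all that proof uses. It is not, and it cannot be.

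Because $K=B_{\tilde r_{\Omega_c}}(\Omega_c)$ is nonconvex, the contraction needs the sign condition $\langle t\hat{\by}_{t,\mathrm{cfg}}(\bx)-\bq,\ \bx-\bq\rangle\le0$ at the nearest point $\bq$ of $tK$. In your Dini formulation, this is the requirement that the segment from $\bq/t$ to $\hat{\by}_{t,\mathrm{cfg}}(\bx)$ lies in $K$, so that your comparison point lies in $(t+h)K$. Membership of $\hat{\by}_{t,\mathrm{cfg}}(\bx)$ in $K$ does not give this.

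For a counterexample, take $\Omega_c$ to be a circular arc and put $\bx/t$ on its concave side, at distance greater than $\tilde r_{\Omega_c}$ but inside the capture region. Then $\bq/t$ is the radial point on the inner boundary of $K$. Any other inner-boundary point of $K$ at a different angle along the arc makes the inner product above strictly positive. A target there would violate the exact factor $(1-t)/(1-s)$. Prox-regularity of $K$ only bounds this inner product by a quadratic error term, which is not enough for the exact estimate.

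The missing idea is that the normalized projection point and the guided posterior mean lie in a common convex subset of $K$. That subset is the ball $C_t:=B_{\tilde r_{\Omega_c}}(\by_t^{\Omega_c}(\bx))$ around the specific point $\by_t^{\Omega_c}(\bx)=\Proj_{\Omega_c}(\bx/t)$. Both memberships come from the paper's lemmas:
\begin{itemize}
\item Lemma~\ref{lem:inflated-neighborhood-projection-regularity} shows that, when $\dist(\bx,t\Omega_c)<tR$, the projection onto $tK$ is unique and obtained radially from $\Proj_{t\Omega_c}(\bx)$. Hence $\Proj_{tK}(\bx)/t\in C_t$.
\item The localization estimates you cite actually bound $\|t\by_t^{\Omega_c}(\bx)-t\hat{\by}_{t,c}(\bx)\|$, the distance to this particular point, not merely to $\Omega_c$. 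Adding the prediction-gap term gives $\hat{\by}_{t,\mathrm{cfg}}(\bx)\in C_t$, which is Lemma~\ref{lem:cfg-final-local-posterior}.
\end{itemize}
The paper feeds exactly this time-dependent $C_t$, with $S=K$, into Lemma~\ref{lem:continuous-scaled-convex-contraction}.

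Once you strengthen your pointwise claim this way, your comparison point $(1-\tfrac{h}{1-t})t\by^\ast+\tfrac{h}{1-t}\hat{\by}_{t,\mathrm{cfg}}$ lies in $(t+h)C_t\subset(t+h)K$. The Dini bound $D^+d\le-d/(1-t)$ then holds; it is a slightly more elementary substitute for the paper's differentiation of $\dist^2(\bx_t,tK)$. The rest of your outline then matches the paper's proof:
\begin{itemize}
\item the first-exit argument;
\item the monotonicity of the exponential factors in $t$;
\item merging the two leakage terms into $2wM_{\mathcal D}\tfrac{1-p_\emptyset(\Omega_c)}{p_\emptyset(\Omega_c)}$ via $\tfrac{1-p_c(\Omega_c)}{p_c(\Omega_c)}\le\tfrac{1-p_\emptyset(\Omega_c)}{p_\emptyset(\Omega_c)}$.
\end{itemize}
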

For the proof, see Appendix~\ref{proof:thm:cfg-final}.
Consequently, the same final-stage local-cluster mechanism governs the
unconditional flow ODE~(\ref{eq:uncond flow}), the conditional flow
ODE~(\ref{eq:cond flow}), and the CFG ODE~(\ref{eq:cfg flow}):
each trajectory is attracted at
rate \(\bigO(1-t)\) to a scaled inflated neighborhood of the corresponding
cluster, the associated inflation radii satisfy the same asymptotic upper
bound, and once a trajectory enters this neighborhood, it remains there.
See illustration in Figure~\ref{fig:cfg-stagewise-geometry}(d).

\begin{remark}[Discrete CFG statements]\label{rem:cfg-discrete-summary}
The CFG Euler counterparts are collected in Appendix~\ref{thm:cfg-euler-stagewise}. They use the same exact mechanism as the unconditional discrete arguments.
\end{remark}

\begin{remark} [Terminal-stage simplification for CFG sampling]
    Motivated by Theorem~\ref{thm:prediction-gap}, in the late stage one may replace the CFG vector field (\ref{eq:cfg field}) by the unconditional network field $\vtheta(t,\bx_t,\emptyset)$ or the conditional $\vtheta(t,\bx_t,c)$ with little change in the local dynamics, saving one network evaluation per step. Empirically, applying this switch sufficiently late has little effect on sample quality \citep{DBLP:conf/ecai/MalarzKZTS25,DBLP:conf/nips/KynkaanniemiAKL24}; see Figure~\ref{fig:cfg-switch-off-exp} and Table~\ref{tab:cfg-switch-off-fid} for experimental results.
\end{remark}

\section{Flow Matching with a General Time Schedule}\label{sec:gen-schedule}

Our previous sections suggest that the final-stage estimate is governed by \(1-t\). This motivates replacing the
standard time interpolation \((t, 1-t)\) by a more general time schedule \((a(t), 1-a(t))\), where
\begin{equation}\label{eq:general-schedule-assumption}
a\in C^1([0,1]),
\qquad
a(0)=0,
\qquad
a(1)=1,
\qquad
\dot a(t)>0 \quad \text{for }t\in[0,1).
\end{equation}
Thus $a(t)$ is strictly increasing on $[0,1)$. The identity choice $a(t)=t$ recovers standard training. 
For $\star\in\{\emptyset,c\}$, let $\rY_\emptyset\sim p_\emptyset$ and
$\rY_c\sim p_c$, and define
\begin{equation*}
\rX_{t,\star}^a
=
a(t)\rY_\star+(1-a(t))\bxi,
\qquad
\bxi\sim\mathcal N(\bzero,\Id).
\end{equation*}
The variables $t\sim\mathrm{Unif}[0,1]$, $\rY_\star$, and $\bxi$ are sampled
independently, and the training process now uses
\begin{equation}\label{eq:generalized-loss}
\mathcal L_{\star,a}(\theta)
=
\E_{t,\rY_\star,\bxi}
\left[
\left\|
\vtheta(t,\rX_{t,\star}^a,\star)
-\dot a(t)(\rY_\star-\bxi)
\right\|^2
\right].
\end{equation}
For CFG, set
$\mathcal L_{\mathrm{cfg},a} 
=q\mathcal L_{c,a}+(1-q)\mathcal L_{\emptyset,a}$.
The time change is therefore incorporated into both the interpolation states
and the regression targets, rather than being applied in sampling. As in Section~\ref{sec:cfm}, we use
$\mathcal L_{\star,a}(\bu)$ for the function-space counterpart of
\eqref{eq:generalized-loss}, obtained by replacing
$\vtheta(t,\cdot,\star)$ with $\bu(t,\cdot)$ for $\bu\in\mathcal V$.
The ideal field below refers to the minimizer of this function-space functional. 

\begin{proposition}[General-schedule dynamics]\label{prop:general-schedule}
Under \eqref{eq:general-schedule-assumption}, the following claims hold.
\begin{enumerate}
\item[(i)] For each $\star\in\{\emptyset,c\}$, the function-space loss
$\mathcal L_{\star,a}$ has a unique global minimizer
$\vv_\star^a\in\mathcal V$. Its canonical continuous extension to
$[0,1)\times\R^d$ satisfies
\begin{equation}\label{eq:general-field-identity}
\vv_\star^a(t,\bx)
=\dot a(t)\vv_\star(a(t),\bx),
\qquad t\in[0,1),\quad \bx\in\R^d.
\end{equation}
Here \(\vv_\star(\cdot,\cdot)\) are defined in \eqref{eq:uncond ideal field} and \eqref{eq:cond ideal field}. When the unconditional and conditional branches use the same schedule, their
ideal CFG extrapolation satisfies
\begin{equation*}
\vcfg^a(t,\bx)
:=(1-w)\vv_\emptyset^a(t,\bx)+w\vv_c^a(t,\bx)
=\dot a(t)\vcfg(a(t),\bx).
\end{equation*}

\item[(ii)] For every $T\in(0,1)$, the general-schedule unconditional flow,
conditional flow, and CFG ODEs admit unique $C^1$ solutions on $[0,T]$.
For any of these three dynamics, 
let $\bx_t$ and $\bx_t^a$ denote the
solutions of the standard \((a(t) = t)\) and general-schedule ODEs.
If they start from the same
initial condition, then their solutions satisfy
\begin{equation}\label{eq:general-trajectory-identity}
\bx_t^a=\bx_{a(t)},
\qquad t\in[0,1).
\end{equation}

\item[(iii)] Fix $t_0\in(0,1)$. If the corresponding
standard unconditional trajectory satisfies the final-stage setup and
assumptions of Theorem~\ref{thm:final-local} with
$t_{\mathrm{loc}}=a(t_0)$, and $r'_\Omega$ denotes the corresponding
inflation radius, then for
$t_0\le s\le t<1$, $\dist\bigl(\bx_t^a,a(t)B_{r'_\Omega}(\Omega)\bigr)$ is non-increasing in $t$ and satisfies
\begin{equation}\label{eq:general-final-ratio}
\dist\bigl(\bx_t^a,a(t)B_{r'_\Omega}(\Omega)\bigr)
\le
\tfrac{1-a(t)}{1-a(s)}
\dist\bigl(\bx_s^a,a(s)B_{r'_\Omega}(\Omega)\bigr).
\end{equation}
For each fixed $s\in[t_0,1)$, this gives
$\dist\bigl(\bx_t^a,a(t)B_{r'_\Omega}(\Omega)\bigr)=\bigO(1-a(t))$ as \(t\uparrow1\).
If the trajectory enters
$a(t_\Omega^a)B_{r'_\Omega}(\Omega)$ at some $t_\Omega^a\in[t_0,1)$, it remains in
$a(t)B_{r'_\Omega}(\Omega)$ for every later \(t\in[t_\Omega^a,1)\). A similar result holds for the conditional and CFG ODEs under the corresponding final-stage assumptions.
\end{enumerate}
\end{proposition}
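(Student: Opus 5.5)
For part~(i), I will reduce the general-schedule loss to the standard one by conditioning on the interpolation state. For fixed $t$, the random pair $(\rX^a_{t,\star},\,\dot a(t)(\rY_\star-\bxi))$ coincides in law with $(\rX_{a(t),\star},\,\dot a(t)(\rY_\star-\bxi))$, where $\rX_{\tau,\star}=(1-\tau)\bxi+\tau\rY_\star$ is the standard interpolant. By the usual $L^2$ projection argument, the pointwise minimizer of the conditional quadratic risk at $(t,\bx)$ is
\[
\dot a(t)\,\E[\rY_\star-\bxi\mid \rX_{a(t),\star}=\bx].
\]
The proof of Proposition~\ref{thm:posterior-mean} identifies $\E[\rY_\star-\bxi\mid\rX_{\tau,\star}=\bx]$ with $(\hat\by_{\tau,\star}(\bx)-\bx)/(1-\tau)=\vv_\star(\tau,\bx)$. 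To see this, I use $\bxi=(\bx-\tau\rY_\star)/(1-\tau)$ on the event $\{\rX_{\tau,\star}=\bx\}$ together with the Gaussian posterior $\eta^{\bx}_\tau$. Applying this at $\tau=a(t)$ gives \eqref{eq:general-field-identity}.

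Uniqueness in $\mathcal V$ follows because $\rX^a_{t,\star}$ has an everywhere positive density for $a(t)<1$, being a Gaussian convolution. Hence two continuous minimizers agree a.e.\ and therefore everywhere. The time marginal is uniform, so a.e.-in-$t$ uniqueness upgrades to everywhere by continuity. Continuity of the candidate holds because $a\in C^1$ and $\vv_\star$ is continuous by Theorem~\ref{thm:wellposed-flow}(i). The CFG identity is then the linear combination $(1-w)\vv^a_\emptyset+w\vv^a_c=\dot a(t)[(1-w)\vu+w\vc](a(t),\cdot)$.

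For part~(ii), on $[0,T]$ we have $a(T)<1$, so by Theorem~\ref{thm:wellposed-flow}(i) the field $\dot a(t)\vv_\star(a(t),\bx)$ is continuous in $t$ and uniformly Lipschitz in $\bx$, with constant $\max|\dot a|$ times the Jacobian bound at $a(T)$. Picard--Lindel\"of therefore gives a unique global $C^1$ solution. To obtain \eqref{eq:general-trajectory-identity}, set $\by_t:=\bx_{a(t)}$. The chain rule gives $\dot\by_t=\dot a(t)\,\vv_\star(a(t),\by_t)$ and $\by_0=\bx_0$ since $a(0)=0$. Uniqueness then yields $\bx^a_t=\by_t$. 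The same computation applies verbatim to the CFG field.

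For part~(iii), I will transfer Theorem~\ref{thm:final-local} through the time change. Because $a$ is strictly increasing and continuous, $t_0\le s\le t<1$ implies $a(t_0)\le a(s)\le a(t)<1$, with $t_{\mathrm{loc}}=a(t_0)$. Theorem~\ref{thm:final-local}(i) applied at the times $a(s),a(t)$, together with $\bx^a_t=\bx_{a(t)}$, gives \eqref{eq:general-final-ratio}. Monotonicity follows because $(1-a(t))/(1-a(s))\le1$. Fixing $s$ gives the $\bigO(1-a(t))$ rate. For absorption, entry at $t^a_\Omega$ means $\bx_{a(t^a_\Omega)}\in a(t^a_\Omega)B_{r'_\Omega}(\Omega)$, so Theorem~\ref{thm:final-local}(iii) with $t_\Omega=a(t^a_\Omega)$ keeps the standard trajectory inside for all $\tau\ge a(t^a_\Omega)$. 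Since $a$ maps $[t^a_\Omega,1)$ onto $[a(t^a_\Omega),1)$, the conclusion follows. The conditional and CFG cases use the conditional restriction of Section~\ref{sec:cond} and Theorem~\ref{thm:cfg-final} in the same way.

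The main obstacle is part~(i): making the minimizer claim rigorous on the function space $\mathcal V=C((0,1)\times\R^d)$. This requires finiteness of the loss at the candidate, which follows from boundedness of $\mathcal D$ and Gaussian moments of $\bxi$, as in Proposition~\ref{thm:posterior-mean}. It also requires checking that the pointwise conditional-expectation minimizer actually lies in $\mathcal V$, including the behaviour near $t=1$, where $a(t)\to1$ and $\vv_\star(a(t),\cdot)$ may blow up. The plan is to work on $(0,1)$ only, as $\mathcal V$ does, and to take the canonical extension to $t=0$ using continuity from Theorem~\ref{thm:wellposed-flow}.
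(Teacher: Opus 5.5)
Your proposal is correct and follows the paper's proof essentially step for step. For (i), the paper likewise identifies the posterior of $\rY_\star$ given $\rX^a_{t,\star}=\bx$ as $\eta^{\bx}_{a(t)}$, then minimizes the pointwise quadratic $K_{t,\bx}$, which is your conditional-expectation step written in unnormalized form, and obtains uniqueness by completing the square and using positivity of the marginal density plus continuity. For (ii) and (iii), the paper argues exactly as you do: Picard--Lindel\"of, the chain rule applied to $t\mapsto\bx_{a(t)}$, and transfer of Theorem~\ref{thm:final-local} through the monotone time change.
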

For the proof, see Appendix~\ref{app:gen-schedule-proofs}.
Hence the general schedule does not create a new ideal spatial curve; it
changes the clock used to travel the curve. This can be viewed as a
time reparameterization incorporated at the training side of the generative
model.
Next, we show a specific example of a nonlinear schedule that gives a
quadratic terminal rate with respect to $t$.

\paragraph{Example.}
For $a(t)=t+t^2-t^3$, one has
\begin{equation*}
\dot a(t)=(1-t)(1+3t)>0,
\qquad
1-a(t)=(1-t)^2(1+t)\sim2(1-t)^2
\quad\text{as }t\uparrow1.
\end{equation*}
Proposition~\ref{prop:general-schedule} therefore yields
\begin{equation*}
\dist\bigl(\bx_t^a,a(t)B_{r'_\Omega}(\Omega)\bigr)
=
\bigO((1-t)^2)
\qquad\text{as }t\uparrow1.
\end{equation*}
Thus the final-stage moving distance decays quadratically in the remaining
time $1-t$ at the ideal continuous level. The consequences for
finite-step and adaptive numerical solvers are examined in
Section~\ref{exp:generalized schedule}.

\section{Experiments}\label{sec:experiments}

We test the particle-level mechanisms predicted by the theory in both
synthetic and image-generation settings. Similar synthetic trajectory-based experiments with exact empirical posterior means
 have been considered in \citet{wan2025elucidating} and \citet{liu2025pdeperspectivegenerativediffusion}.
 In the two-dimensional synthetic
experiments, the data laws are uniform empirical measures. The unconditional
law is $p_\emptyset=N^{-1}\sum_{n=1}^N\delta_{\by^{(n)}}$; when conditioning is
present, the conditional law is
$p_c=N_c^{-1}\sum_{n=1}^{N_c}\delta_{\by_c^{(n)}}$. Here, $N$ and $N_c$ are the
respective numbers of data points, and $\delta_{\by}$ denotes the Dirac measure at $\by$. For either law, we compute posterior means by exact finite sums
and obtain the ideal fields from the posterior-mean
representations~\eqref{eq:uncond ideal field} and~\eqref{eq:cond ideal field}.
Synthetic trajectories then follow explicit Euler,
$\bz_{i+1}=\bz_i+\Delta t\,\vv_\star(t_i,\bz_i)$, as in
\eqref{eq:ideal-euler}, directly probing the ideal discrete dynamics analyzed
in our theory. In the image-generation experiments, the ideal fields are
replaced by the learned networks in the parametrized ODEs~\eqref{eq:ufm-ode},
\eqref{eq:cfm-ode}, and~\eqref{eq:cfg-ode}; sampling uses explicit Euler unless
another solver is specified. Finally, we test a general time schedule with
both fixed-step and adaptive ODE solvers.

\subsection{Convex-hull Attraction and Absorption}

In a two-dimensional synthetic example, we take $p_\emptyset$ to be the
uniform empirical distribution on the nonconvex point cloud $\mathcal D$
shown in Figure~\ref{fig:convex-hull-exp}. Starting from the displayed point
$\bz_0$ at time $t_0=0.3$, we simulate the explicit-Euler unconditional dynamics on the
shifted grid $t_i=t_0+i\Delta t$ with explicit Euler and stepsize
$\Delta t=0.05$, and track
$\dist(\bz_i,t_i\Conv(\mathcal D))$. Figure~\ref{fig:convex-hull-exp}
illustrates the attraction and absorption in
Theorem~\ref{cor:convex-hull-euler}: the distance decreases to zero when the
trajectory first enters $t_i\Conv(\mathcal D)$ and remains zero afterward.

\begin{figure}[h]
\centering
\begin{minipage}{0.50\textwidth}
\centering
\includegraphics[width=\linewidth]{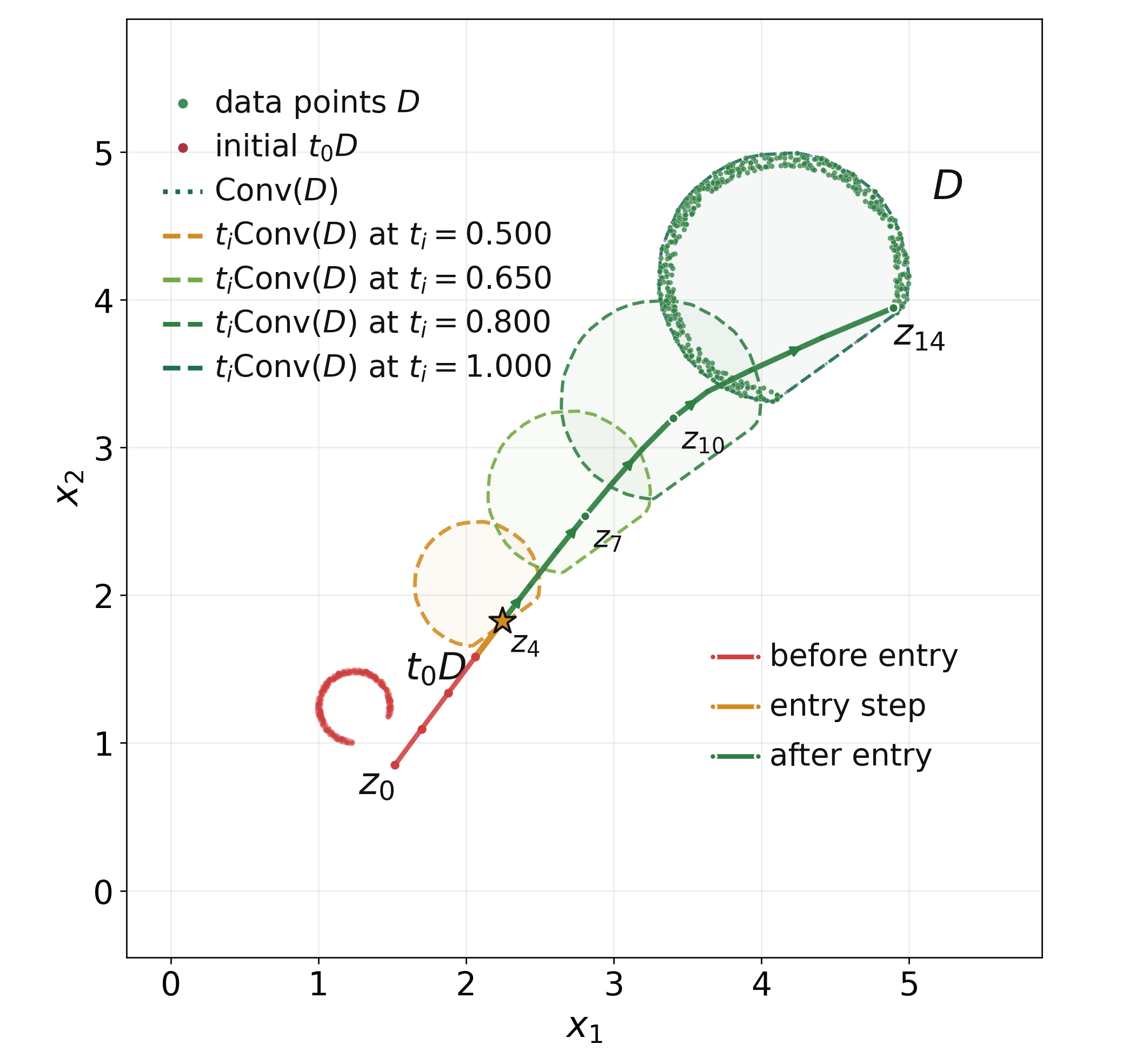}
\end{minipage}
\hfill
\begin{minipage}{0.45\textwidth}
\centering
\includegraphics[width=\linewidth]{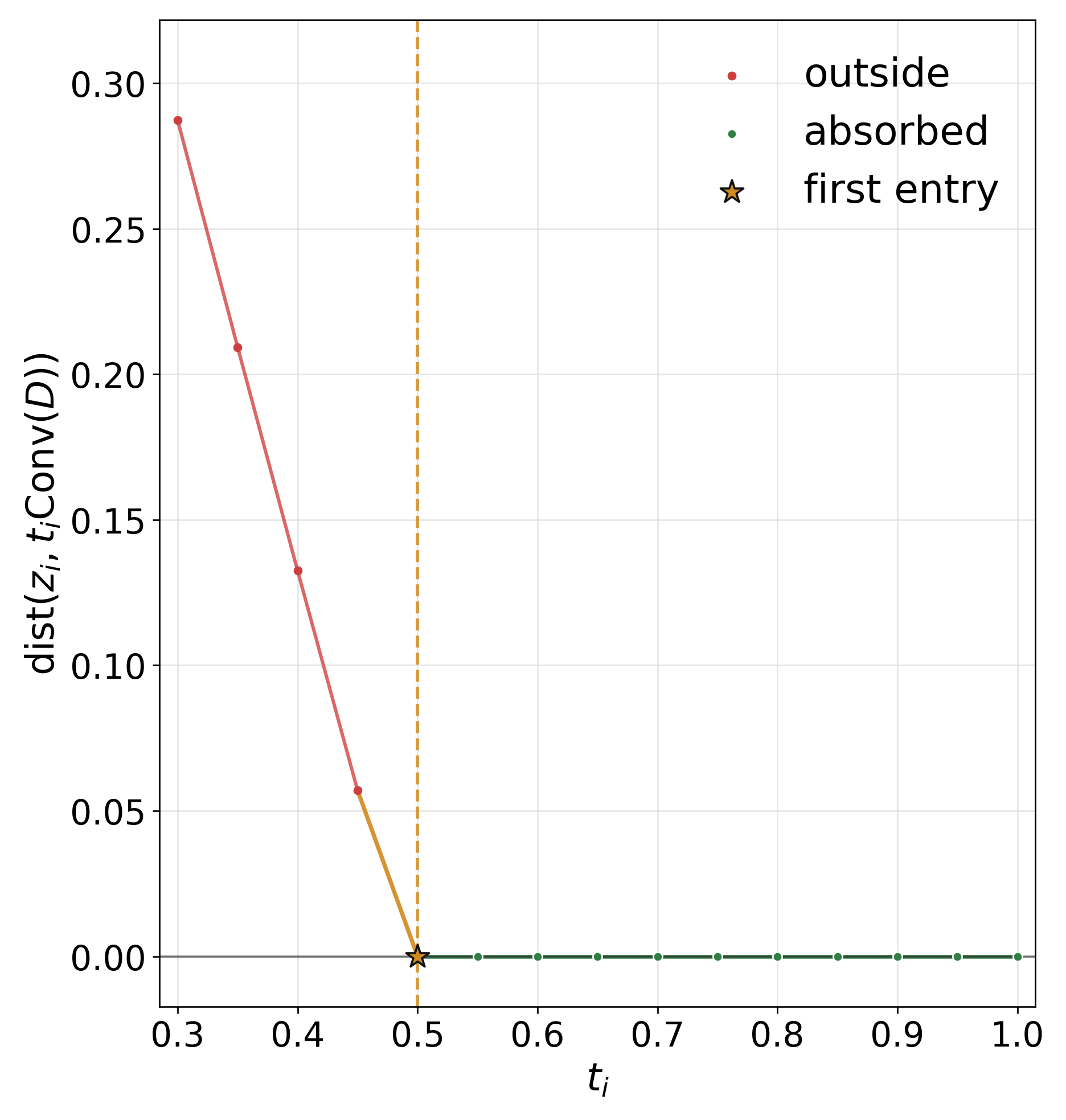}
\end{minipage}
\caption{Convex-hull attraction and absorption. Left: the Euler trajectory
and selected scaled convex-hull snapshots; the star marks the first entry into
$t_i\Conv(\mathcal D)$, and the dashed curves show the moving convex hull at
selected grid times. Right: $\dist(\bz_i,t_i\Conv(\mathcal D))$ decreases to
zero and remains zero after the first entry.}
\label{fig:convex-hull-exp}
\end{figure}

\subsection{Final-stage Local-cluster Attraction and Absorption}

We next take $\mathcal D=\bigcup_{j=1}^5\Omega_j$, where $\Omega_j$ is the $j$-th nonconvex point clusters shown in
Figure~\ref{fig:local-cluster-exp}, and let $p_\emptyset$ be the uniform
empirical distribution on their union. Starting at time $t_0=0.4$, we
simulate five explicit-Euler trajectories $\{\bz_i^j\}_{i=0}^{12}, j=1,\ldots,5$,  one initialized near
each scaled target cluster, with $\Delta t=0.05$. Using the empirically chosen display
radius $r=0.025$, we track
$\dist(\bz_i^j,t_iB_r(\Omega_j))$. Figure~\ref{fig:local-cluster-exp}
illustrates the discrete final-stage attraction and absorption in
Theorem~\ref{cor:final-local-euler}: each distance decreases to zero at the first entry and remains zero afterward.
Importantly, the relevant geometry is the scaled cluster $t_i\Omega_j$, rather than the unscaled cluster $\Omega_j$. For example, for $j=4$, $\dist(z_0^4,\Omega_1)<\dist(z_0^4,\Omega_4)$, whereas $\dist(z_0^4,t_0\Omega_4)<\dist(z_0^4,t_0\Omega_1)$ with $t_0=0.4$; accordingly, $\{z_i^4\}_{i=0}^{12}$ is attracted to and absorbed by $t_iB_r(\Omega_4)$.
The displayed radius is used only to
visualize this mechanism and is not the analytically constructed radius in the
theorem.

\begin{figure}[h]
\centering
\begin{minipage}{0.5\textwidth}
\centering
\includegraphics[width=\linewidth]{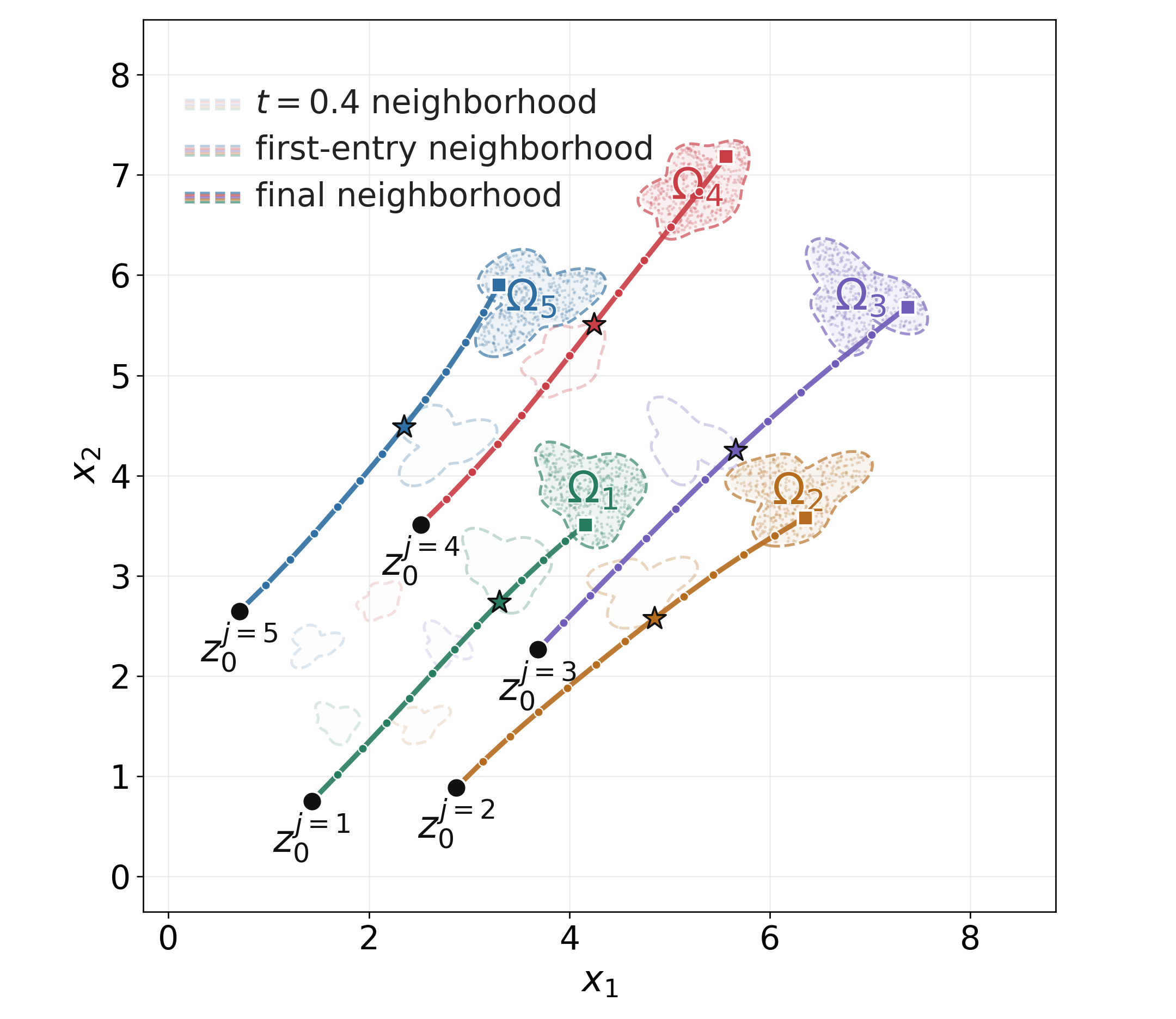}
\end{minipage}
\hfill
\begin{minipage}{0.45\textwidth}
\centering
\includegraphics[width=\linewidth]{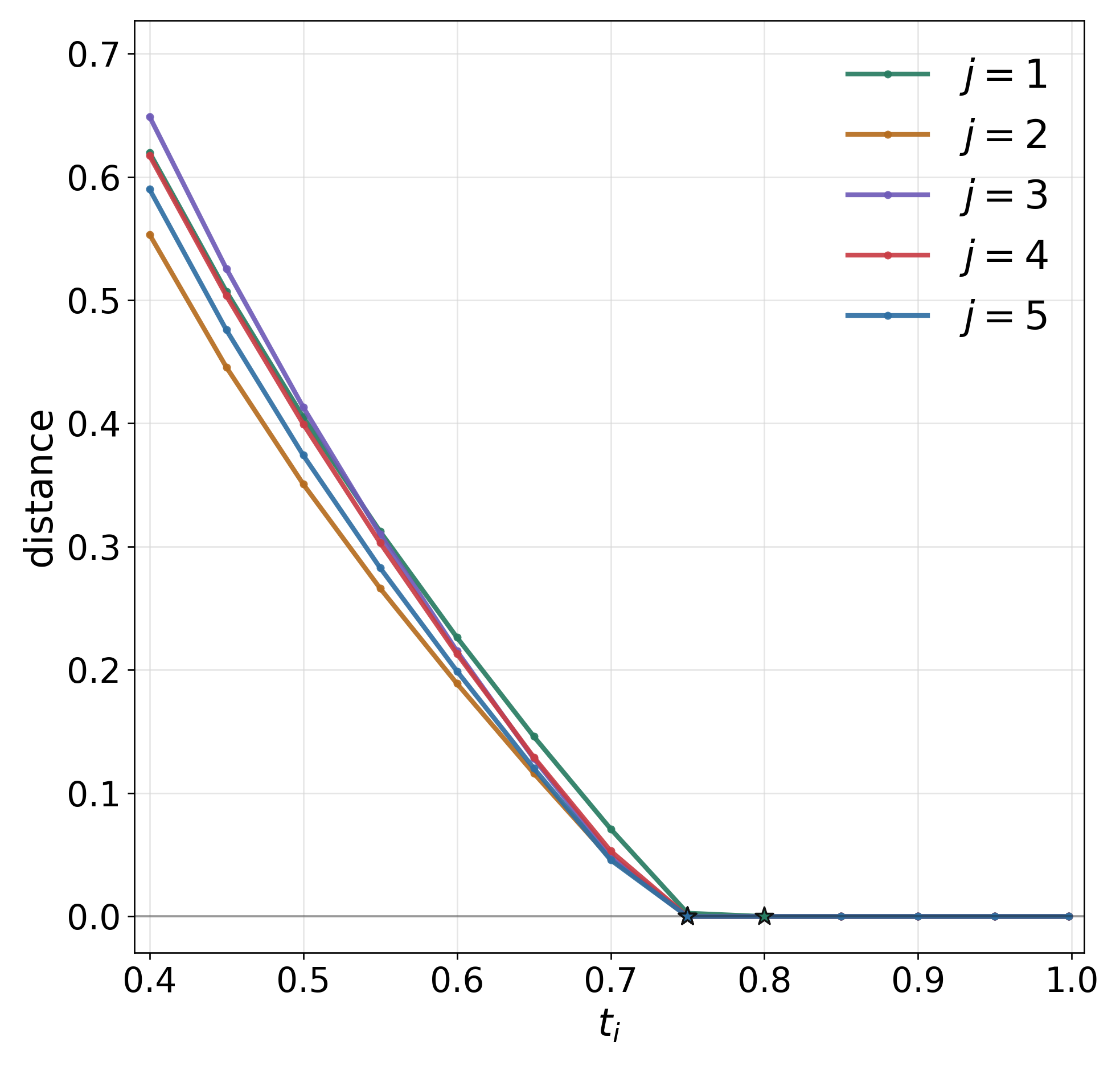}
\end{minipage}
\caption{Final-stage local-cluster attraction and absorption. Left: five
Euler trajectories enter their corresponding scaled cluster neighborhoods;
stars mark the first entry, squares mark the final positions, and dashed contours show
the initial, first-entry, and final scaled-neighborhood snapshots. Right:
$\dist(\bz_i^j,t_iB_r(\Omega_j))$ decreases to zero and remains zero after
first entry. 
}
\label{fig:local-cluster-exp}

\vspace{-1cm}
\end{figure}

\subsection{Classifier-free Guidance Dynamics}

We next test the two CFG-specific dynamical effects from
Section~\ref{sec:cfg} in the same synthetic geometry. We retain the uniform
empirical unconditional distribution on
$\mathcal D=\bigcup_{j=1}^5\Omega_j$, set $\mathcal D_c=\Omega_5$, and take
$p_c$ to be the uniform empirical distribution on $\Omega_5$. All displayed
CFG Euler trajectories start from the common point $\bz_0=(-1,0)$. The left
panel of Figure~\ref{fig:cfg-trajectory-final-exp} plots the full trajectories
for $w\in\{1,1.5,2,2.5,3\}$. Increasing $w$ moves the extrapolated mean
$\bar{\by}_{\mathrm{cfg}}=\bar{\by}+w(\bar{\by}_c-\bar{\by})$ from the
unconditional mean toward and beyond the conditional mean, while all curves subsequently turn toward $\Omega_5$ as
the terminal dynamics recover conditional local-cluster geometry, in accordance
with the discrete early- and final-stage results in
Theorems~\ref{thm:cfg-early-euler} and~\ref{thm:cfg-final-euler}, respectively. The right panel
quantifies entry into and absorption by $t_iB_{0.05}(\Omega_5)$.

Figure~\ref{fig:cfg-trajectory-final-exp} therefore previews both CFG regimes
examined below. Figures~\ref{fig:cfg-early-exp} and
\ref{fig:cfg-imagenet-mean-exp} develop the early mechanism, while
Figures~\ref{fig:cfg-gap-exp} and~\ref{fig:cfg-switch-off-exp} examine the
terminal prediction-gap decay and its image-generation consequence.

\begin{figure}[!htbp]
\centering
\begin{minipage}{0.50\textwidth}
\centering
\includegraphics[width=\linewidth]{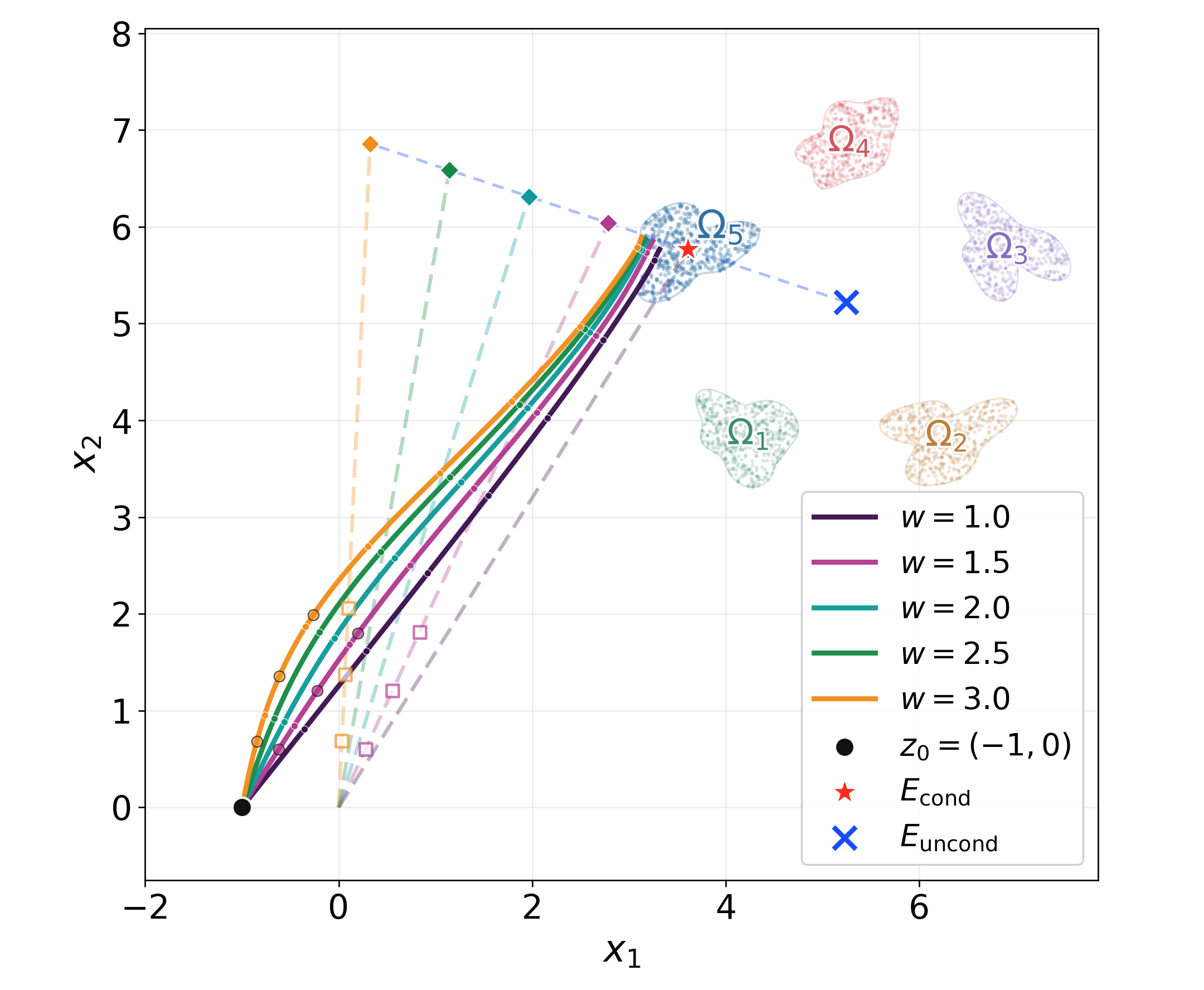}
\end{minipage}
\hfill
\begin{minipage}{0.45\textwidth}
\centering
\includegraphics[width=\linewidth]{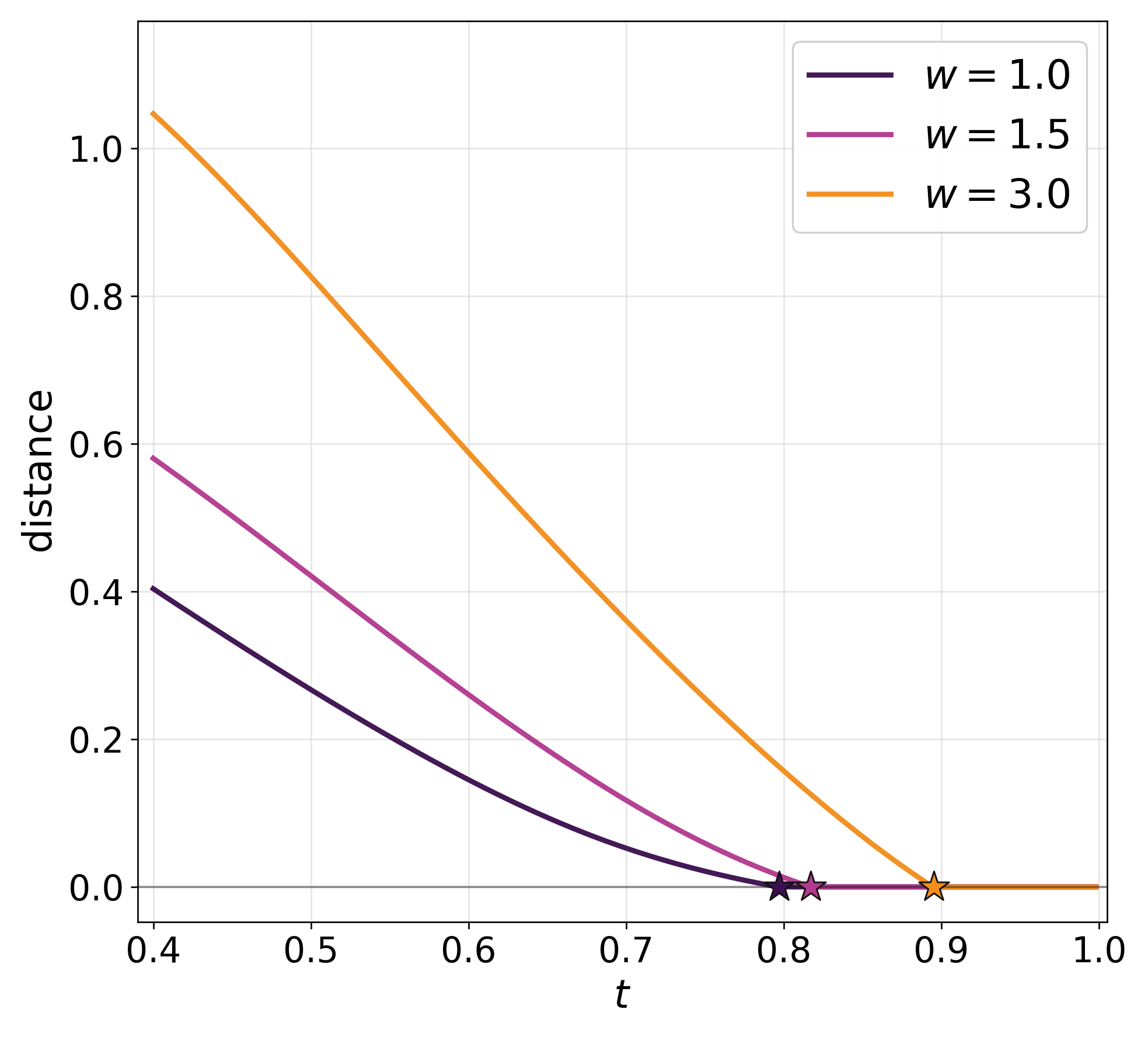}
\end{minipage}
\caption{CFG trajectory geometry. Left: larger $w$ shifts the extrapolated mean
$\bar{\by}_{\mathrm{cfg}}$ and redirects the early trajectory, while the full
trajectories subsequently turn toward $\Omega_5$; dashed rays indicate scaled
extrapolated means, circles mark trajectory snapshots, and squares mark
scaled-mean snapshots. Right: In the final stage, $\dist(\bz_i,t_iB_{0.05}(\Omega_5))$ 
decreases to zero and remains zero after
first entry. 
}
\label{fig:cfg-trajectory-final-exp}

\vspace{-1cm}
\end{figure}

\subsubsection{Early Attraction to the Extrapolated Mean}
Here, we track $\dist\bigl(\bz_i,t_iB_r(\bar{\by}_{\mathrm{cfg}})\bigr)$ for $w=1.5$ and $w=3.0$ with $r=0.05$. In Figure~\ref{fig:cfg-early-exp}, both curves decrease and stay below the linear reference $(1-t_i)\|\bx_0\|$. 
These observations are consistent with the discrete early-stage result in Theorem~\ref{thm:cfg-early-euler}.

\begin{figure}[!htbp]
\centering
\begin{minipage}{0.48\textwidth}
\centering
\includegraphics[width=\linewidth]{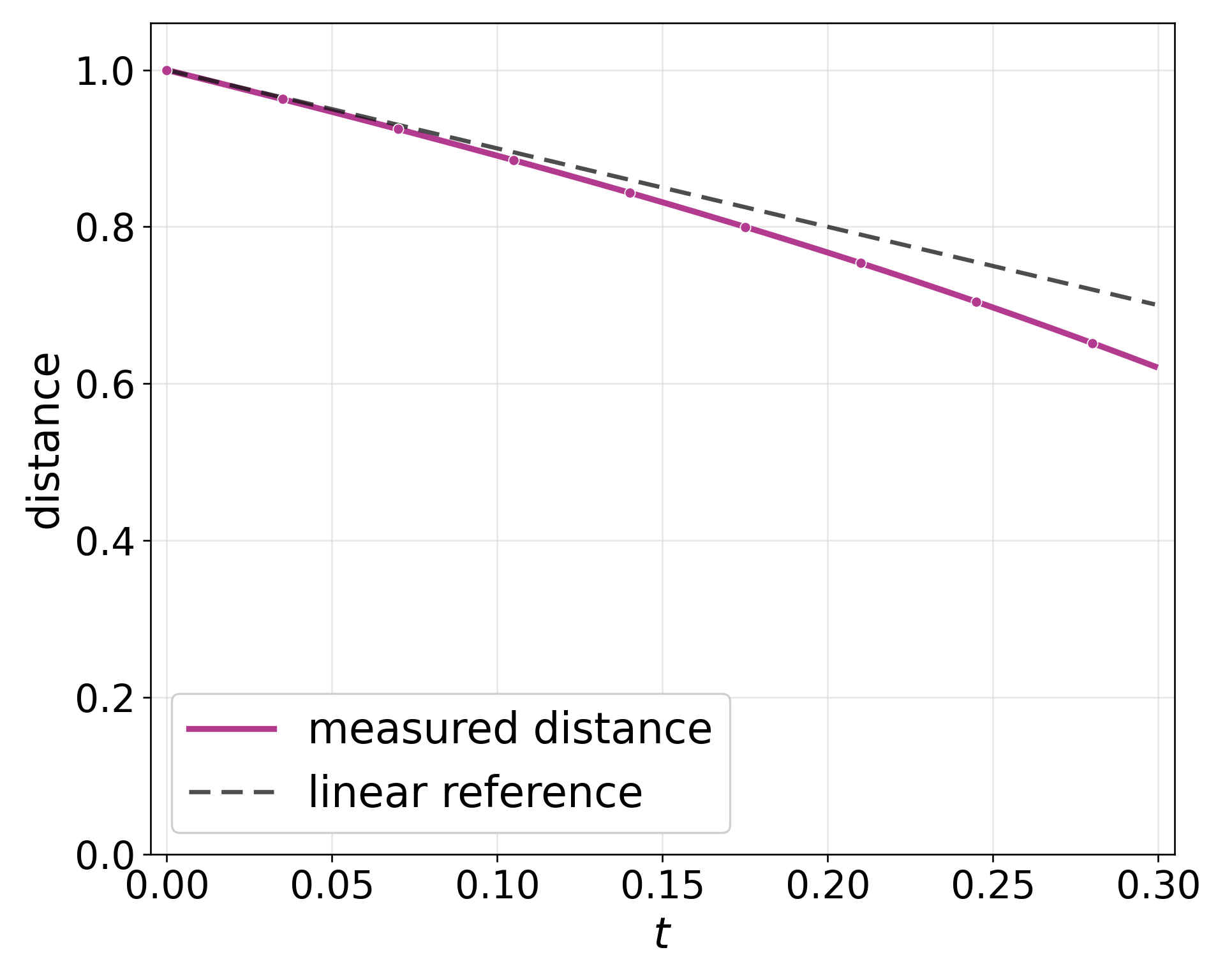}
\end{minipage}
\hfill
\begin{minipage}{0.48\textwidth}
\centering
\includegraphics[width=\linewidth]{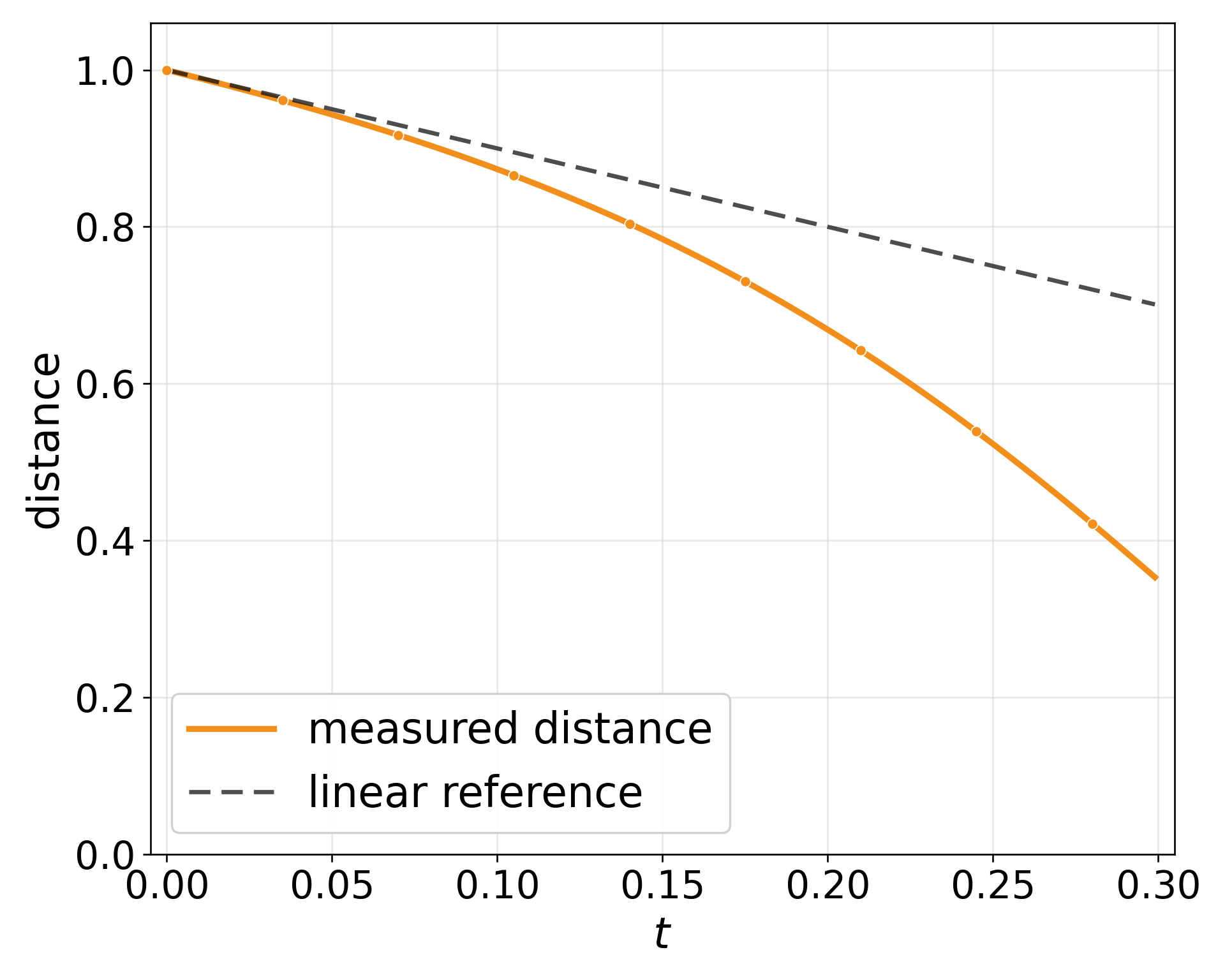}
\end{minipage}
\caption{Early-stage CFG attraction to the extrapolated mean with radius $r=0.05$. 
Left: $w=1.5$. Right: $w=3.0$. Here, $\dist(\bz_i,t_iB_r(\bar{\by}_{\mathrm{cfg}}))$ decreases and stays below the linear reference $(1-t_i)\|\bx_0\|$. }
\label{fig:cfg-early-exp}

\vspace{-1cm}
\end{figure}

\subsubsection{Prediction-gap decay and final cluster attraction and absorption}

We next measure the prediction gap $\|\vu(t_i,\bz_i)-\vc(t_i,\bz_i)\|$ along CFG Euler trajectories and fit the terminal values by the form $\tfrac{C_1}{1-t_i}\exp\!\left(-\frac{C_2}{(1-t_i)^2}\right)$ suggested by the upper bound in Theorem~\ref{thm:prediction-gap}.
Figure~\ref{fig:cfg-gap-exp} shows that the fitted asymptotic form captures the measured prediction gap for both guidance scales. Near the target cluster, the CFG extrapolation term becomes negligible, so the late-stage dynamics reduce to conditional local-cluster behavior.
The right panel of Figure~\ref{fig:cfg-trajectory-final-exp} shows the corresponding attraction and absorption: for each $w$,  $\dist(\bz_i,t_iB_{0.05}(\Omega_5))$ decays to zero and remains zero after first entry, consistent with the discrete CFG final-stage result in Theorem~\ref{thm:cfg-final-euler}.

\begin{figure}[h]
\centering
\begin{minipage}{0.48\textwidth}
\centering
\includegraphics[width=\linewidth]{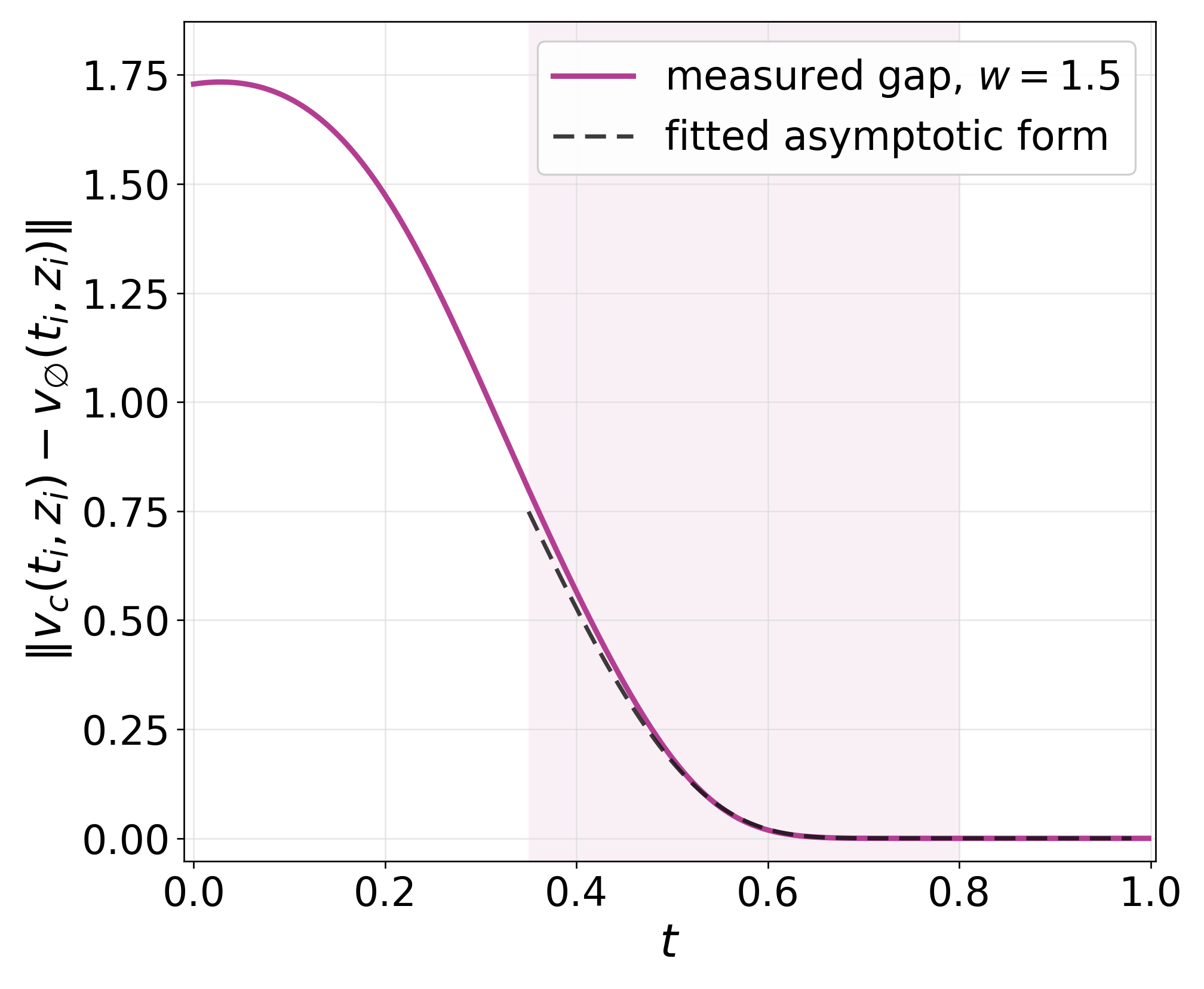}
\end{minipage}
\hfill
\begin{minipage}{0.48\textwidth}
\centering
\includegraphics[width=\linewidth]{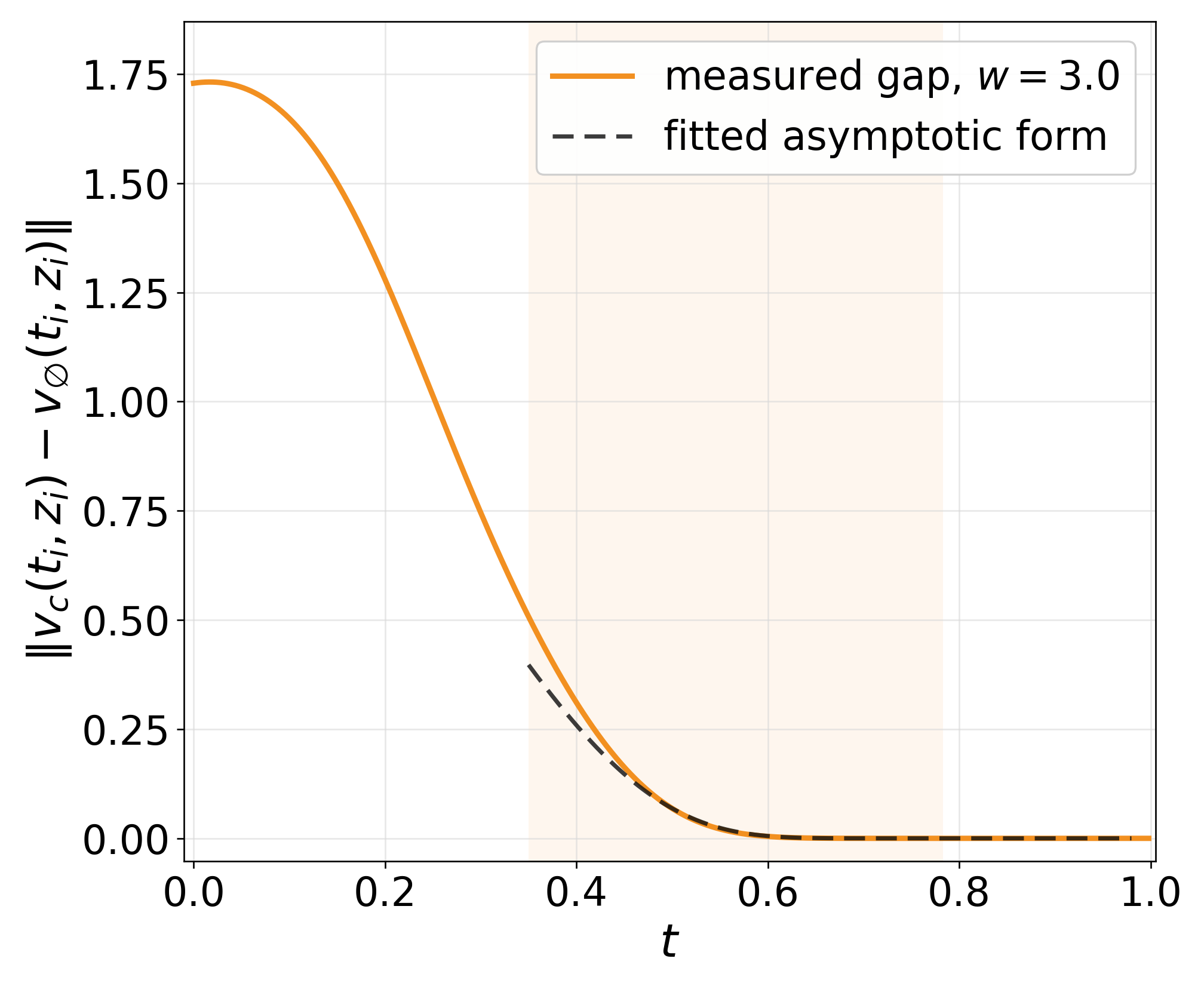}
\end{minipage}
\caption{Final-stage Prediction-gap decay. The measured gaps are well fit by the asymptotic form $\frac{C_1}{1-t_i}\exp(-\frac{C_2}{(1-t_i)^2})$ for $w=1.5$ and $w=3.0$; the shaded interval indicates the grid-time window used for the fit.}
\label{fig:cfg-gap-exp}

\vspace{-1cm}
\end{figure}

\subsubsection{Image-generation setting}

\paragraph{Early-stage extrapolated mean.} 
We visualize the same effect on ImageNet-256 goldfinch generation. We compute the empirical unconditional mean, the goldfinch class mean, and the extrapolated means $\bar{\by}_{\mathrm{cfg}}^w=\bar{\by}_{\emptyset}+w(\bar{\by}_c-\bar{\by}_{\emptyset})$. Figure~\ref{fig:cfg-imagenet-mean-exp} shows that increasing $w$ amplifies class-specific structure in the attractor and in samples generated from the same Gaussian initialization. Moderate guidance improves class alignment, while too small guidance gives poor class signal and very large guidance produces over-saturation.

\begin{figure}[h]
\centering
\setlength{\tabcolsep}{1pt}
\begin{tabular}{@{}cccccc@{}}
\includegraphics[width=0.155\textwidth]{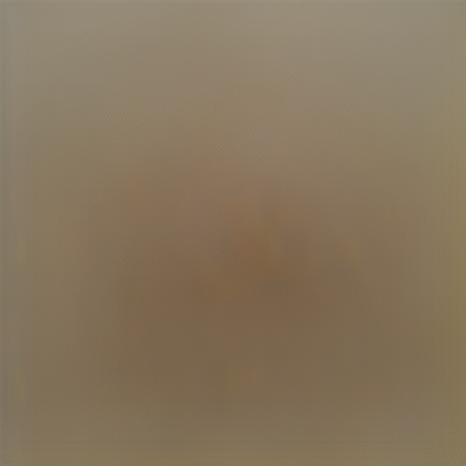} &
\includegraphics[width=0.155\textwidth]{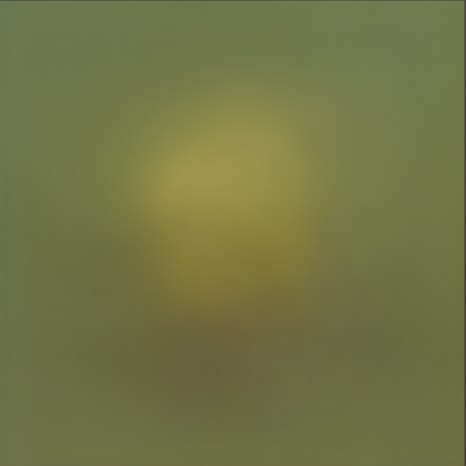} &
\includegraphics[width=0.155\textwidth]{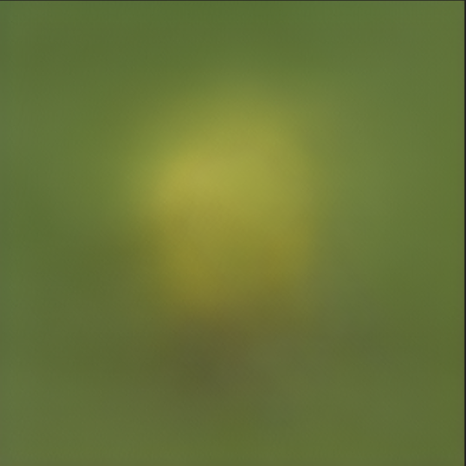} &
\includegraphics[width=0.155\textwidth]{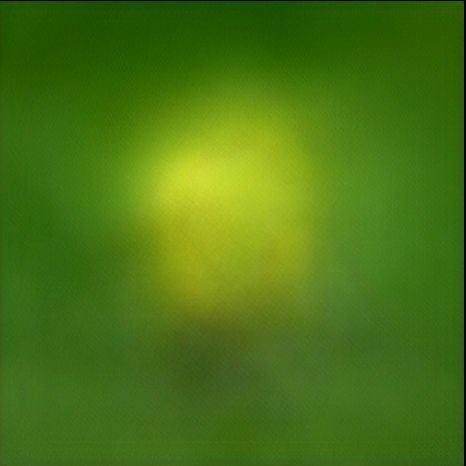} &
\includegraphics[width=0.155\textwidth]{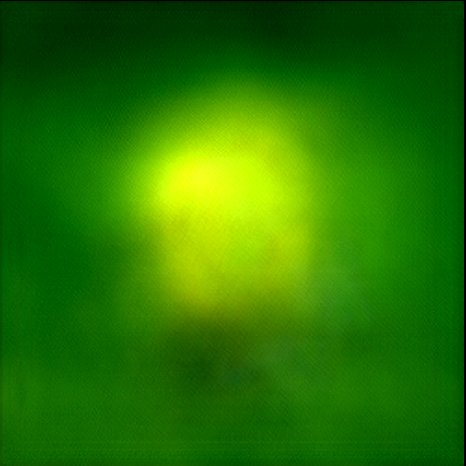} &
\includegraphics[width=0.155\textwidth]{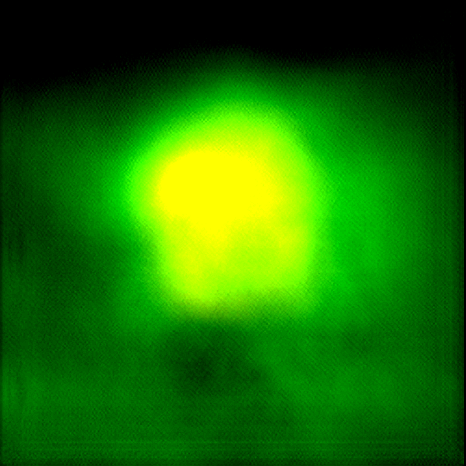} \\
\small unconditional mean &
\small class mean &
\small $E_{\mathrm{cfg}}^{1.5}$ &
\small $E_{\mathrm{cfg}}^{3}$ &
\small $E_{\mathrm{cfg}}^{5}$ &
\small $E_{\mathrm{cfg}}^{10}$
\end{tabular}

\vspace{0.4em}

\begin{tabular}{@{}ccccc@{}}
\includegraphics[width=0.18\textwidth]{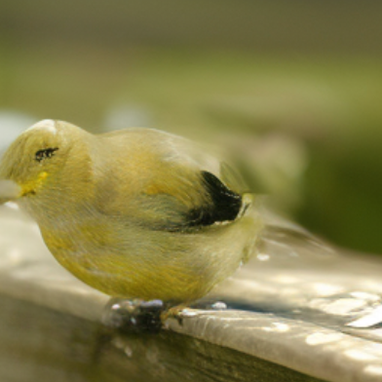} &
\includegraphics[width=0.18\textwidth]{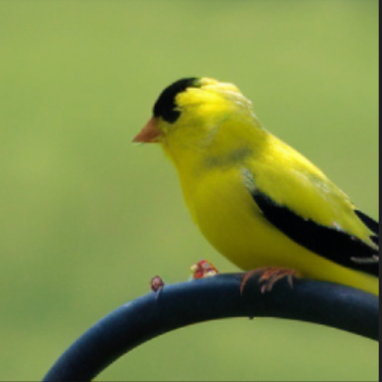} &
\includegraphics[width=0.18\textwidth]{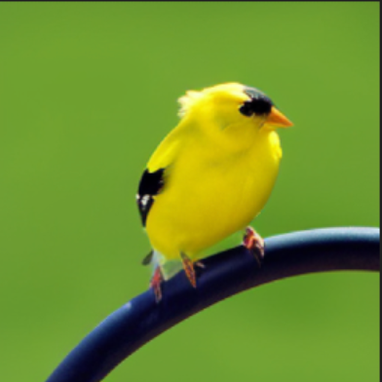} &
\includegraphics[width=0.18\textwidth]{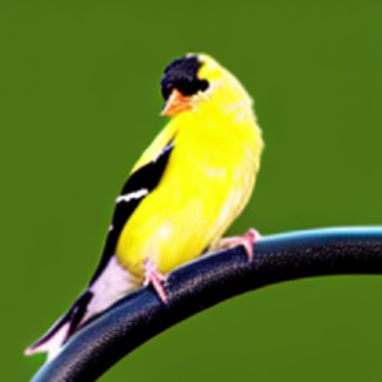} &
\includegraphics[width=0.18\textwidth]{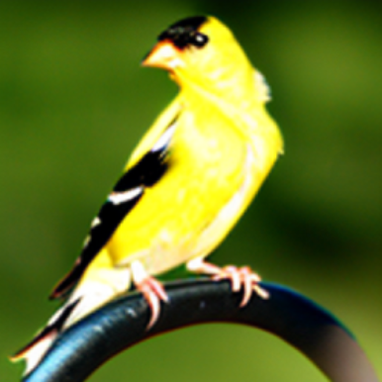} \\
\small $w=1.0$ &
\small $w=1.5$ &
\small $w=3$ &
\small $w=5$ &
\small $w=10$
\end{tabular}
\caption{ImageNet-256 goldfinch early-attractor visualization. Top: unconditional, class, and extrapolated means. Bottom: samples from the same initialization. Larger $w$ amplifies class-specific structure, while very large $w$ over-saturates the output.}
\label{fig:cfg-imagenet-mean-exp}

\vspace{-1cm}
\end{figure}

\paragraph{Late-stage CFG switch-off.} On ImageNet-256 with DiT-XL/2-256 \citep{peebles2023scalable}, $w=2$, and 20 sampling steps, we remove the extrapolation term $w(\vtheta(\cdot,\cdot,c)-\vtheta(\cdot,\cdot,\emptyset))$ after step $k\in\{6,12,16,20\}$ and finish with $\vu$ alone.
 Figure~\ref{fig:cfg-switch-off-exp} shows that removal after 6 steps weakens performance, while removal after 12 or 16 steps remains fairly close to full CFG. Table~\ref{tab:cfg-switch-off-fid} gives the corresponding FID values on 50k generated images: removal after 12 or 16 steps nearly matches full CFG, whereas removal after 6 steps is noticeably worse.
 This behavior is consistent with Theorem~\ref{thm:prediction-gap}: once the prediction gap has decayed in the late stage, this removal has little effect, whereas removing it too early has a significant impact.

\begin{figure}[h]
\centering
\setlength{\tabcolsep}{6pt}
\begin{tabular}{@{}cccc@{}}
\includegraphics[width=0.235\textwidth]{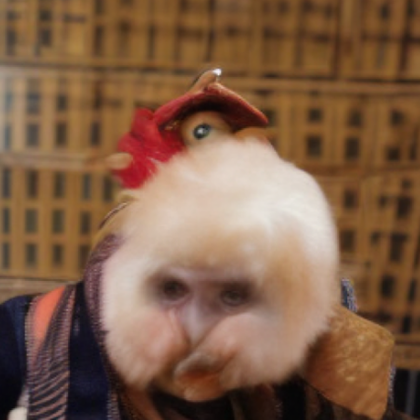} &
\includegraphics[width=0.235\textwidth]{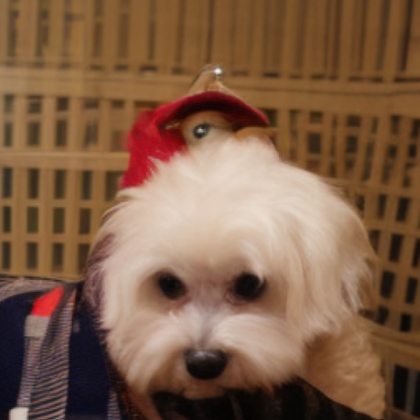} &
\includegraphics[width=0.235\textwidth]{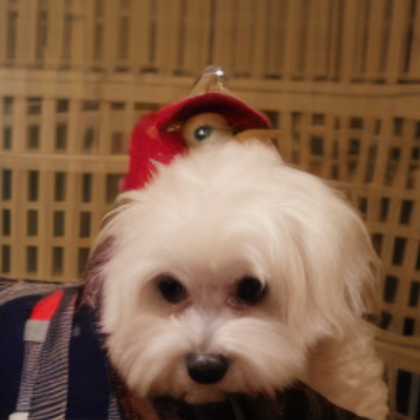} &
\includegraphics[width=0.235\textwidth]{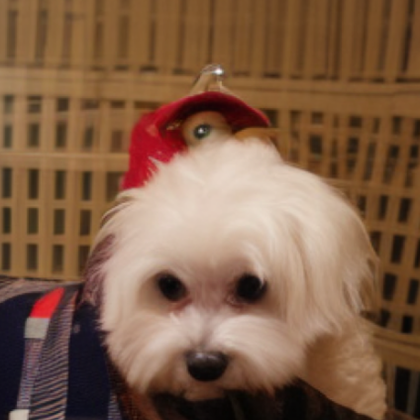} \\
\includegraphics[width=0.235\textwidth]{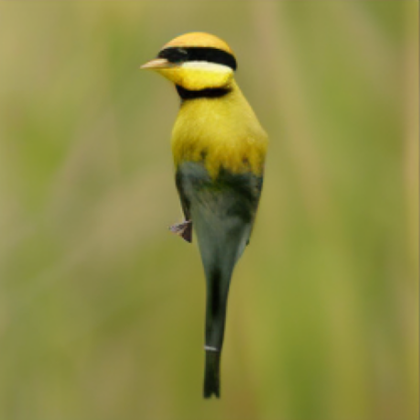} &
\includegraphics[width=0.235\textwidth]{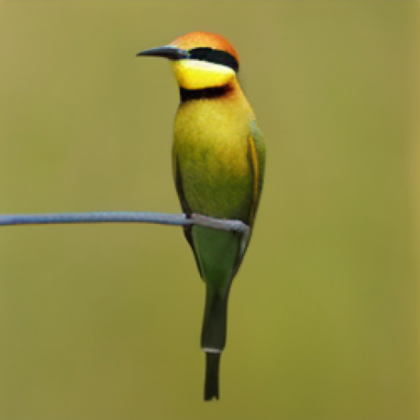} &
\includegraphics[width=0.235\textwidth]{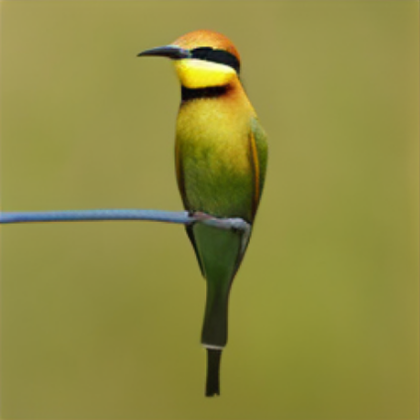} &
\includegraphics[width=0.235\textwidth]{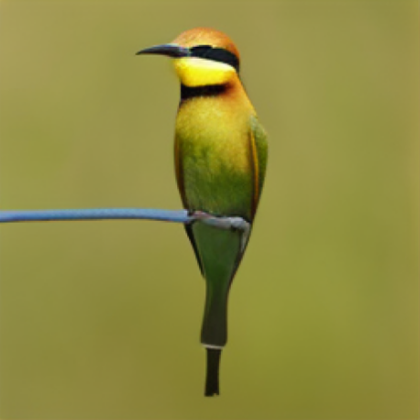} \\
\small after 6 &
\small after 12 &
\small after 16 &
\small full CFG
\end{tabular}
\caption[Late-stage switch-off of CFG on ImageNet-256.]{Late-stage removal of CFG extrapolation on ImageNet-256 with $w=2$ and 20 steps. After the indicated step, we remove $w(\vc-\vu)$ and finish with $\vu$; removing too early weakens performance, while removing after 12 or 16 steps stays close to the full CFG.}
\label{fig:cfg-switch-off-exp}

\vspace{-1cm}
\end{figure}

\begin{table}[h]
\centering
\label{tab:cfg-switch-off-fid}
\begingroup
\footnotesize
\setlength{\tabcolsep}{2pt}
\begin{tabular*}{0.62\textwidth}{@{\extracolsep{\fill}}lcccc@{}}
\toprule
\textbf{Switch-off} & \textbf{after 6} & \textbf{after 12} & \textbf{after 16} & \textbf{full CFG} \\
\textbf{NFE} & 26 & 32 & 36 & 40 \\
\midrule
\textbf{FID $\downarrow$} & 8.45 & 6.24 & 6.23 & 6.24 \\
\bottomrule
\end{tabular*}
\endgroup
\caption[Late-stage CFG switch-off FID.]{FID $\downarrow$ for late-stage CFG switch-off on ImageNet-256 using DiT-XL/2-256, $w=2$, 20 steps, step size \(1/20\), and 50k generated images. }
\end{table}

\subsection{Flow Matching with a General Time Schedule}\label{exp:generalized schedule}

\paragraph{Compared constructions.}
 Let $\theta_A$ denote the parameters obtained by
standard training with $a(t)=t$, and let $\theta_C$ denote those obtained by
schedule $a(t)=t+t^2-t^3$ training. The network architecture and training budget are
the same. On the uniform grid $t_i=i\Delta t\in[0,1]$, we compare three
fixed-step CFG samplers:
\begin{itemize}
\item \textbf{Scheme A} uses the standard objective~\eqref{eq:cfg-loss} and
applies explicit Euler to the parametrized CFG field in~\eqref{eq:cfg-ode}.
Its update is
$\bz_{i+1}^{A}=\bz_i^{A}+\Delta t\,\boldsymbol
v_{\mathrm{cfg},\theta_A}(t_i,\bz_i^{A},c)$.
\item \textbf{Scheme B} reuses the standard-trained network $\theta_A$ but
applies sampling-only reparameterization on the mapped nonuniform grid
$a(t_i)$. Its update is
$\bz_{i+1}^{B}=\bz_i^{B}+\bigl(a(t_{i+1})-a(t_i)\bigr)\boldsymbol
v_{\mathrm{cfg},\theta_A}(a(t_i),\bz_i^{B},c)$.
\item \textbf{Scheme C} uses direct general-schedule training with
\eqref{eq:generalized-loss} and samples on the uniform $t$-grid. Its update is
$\bz_{i+1}^{C}=\bz_i^{C}+\Delta t\,\boldsymbol
v_{\mathrm{cfg},\theta_C}(t_i,\bz_i^{C},c)$.
\end{itemize}
At the function-space ideal level,
Proposition~\ref{prop:general-schedule}(i)--(ii) gives
$\vcfg^a(t,\bx)=\dot a(t)\vcfg(a(t),\bx)$ and
$\bx_t^a=\bx_{a(t)}$. Thus, the standard and general-schedule ideal ODEs trace
the same spatial curve, parameterized by $t$ and $a(t)$, respectively. Meanwhile, the
learned implementations and their Euler discretizations need not coincide.

\paragraph{Euler interpretation.}
At the ideal-field level, Scheme B has the Euler increment
$\bigl(a(t_{i+1})-a(t_i)\bigr)\vcfg(a(t_i),\bz_i)$.
For Scheme C, Proposition~\ref{prop:general-schedule}(i) yields the calculation
\begin{equation*}
\Delta t\,\vcfg^a(t_i,\bz_i)
=\Delta t\,\dot a(t_i)\vcfg(a(t_i),\bz_i).
\end{equation*}
If $a\in C^2([0,1])$, then
\(a(t_{i+1})-a(t_i)
=
\dot a(t_i)\Delta t+\bigO(\Delta t^2).\)
Thus, the ideal Scheme B and Scheme C Euler increments are first-order
equivalent, with a per-step $\bigO(\Delta t^2)$ discrepancy. For the learned
fields, the schemes may differ additionally because $\theta_A$ and $\theta_C$
are obtained from different training objectives. We next examine the
fixed-step and adaptive-solver settings.

\paragraph{Fixed-step results.}
Figure~\ref{fig:gen-schedule-frames} compares Schemes A, B, and C under the same
base uniform $t$-grid with $\Delta t=1/105$. Comparing to Scheme A, Scheme C reaches recognizable
low-smoothing states at smaller $t$. This is consistent with
Proposition~\ref{prop:general-schedule}(ii): at the ideal continuous level,
$\bx_t^a=\bx_{a(t)}$. For $a(t)=t+t^2-t^3$,
$1-a(t)=(1+t)(1-t)^2=\bigO((1-t)^2)$ as $t\uparrow1$.
Thus, its residual smoothing scale is asymptotically smaller than the standard
schedule's $1-t$. Figure~\ref{fig:gen-schedule-frames} is a finite-network, finite-step illustration of
this ideal scaling.
\begin{figure}[h]
\centering
\setlength{\tabcolsep}{2pt}
\begin{tabular}{@{}r@{\hspace{0.35em}}c@{}}
\raisebox{1.35ex}{\small Scheme A} &
\includegraphics[width=0.78\textwidth]{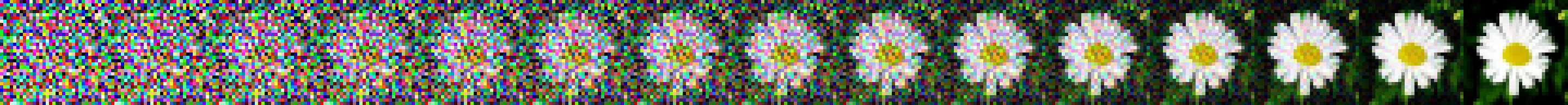}\\[-0.25em]
\raisebox{1.35ex}{\small Scheme B} &
\includegraphics[width=0.78\textwidth]{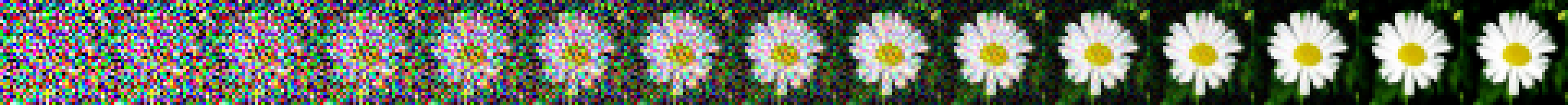}\\[-0.25em]
\raisebox{1.35ex}{\small Scheme C} &
\includegraphics[width=0.78\textwidth]{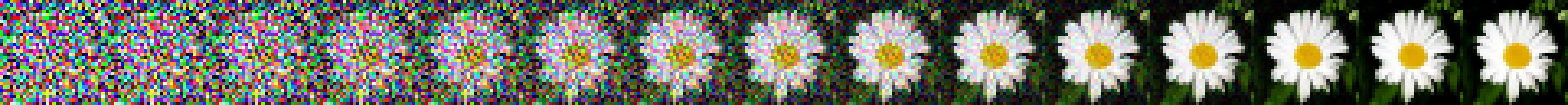}\\[0.25em]
\raisebox{1.35ex}{\small Scheme A} &
\includegraphics[width=0.78\textwidth]{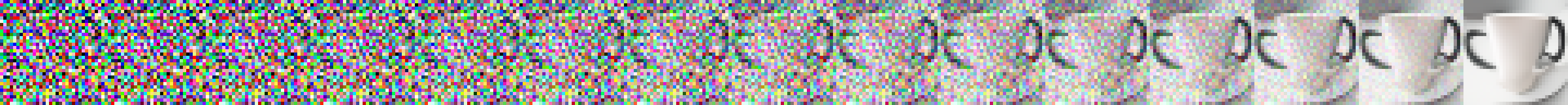}\\[-0.25em]
\raisebox{1.35ex}{\small Scheme B} &
\includegraphics[width=0.78\textwidth]{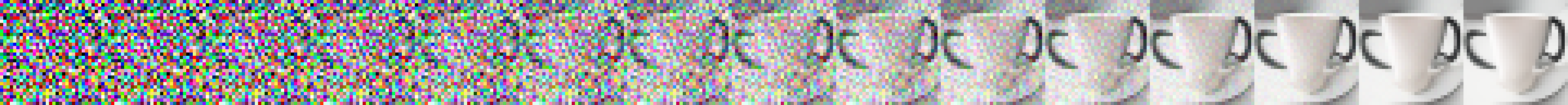}\\[-0.25em]
\raisebox{1.35ex}{\small Scheme C} &
\includegraphics[width=0.78\textwidth]{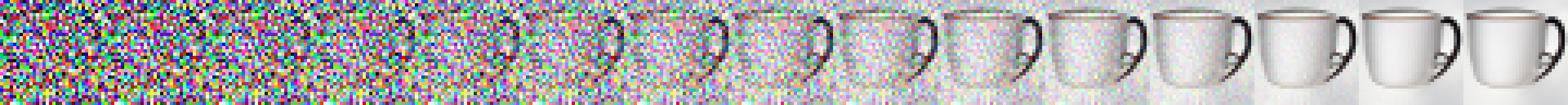}\\[0.25em]
\raisebox{1.35ex}{\small Scheme A} &
\includegraphics[width=0.78\textwidth]{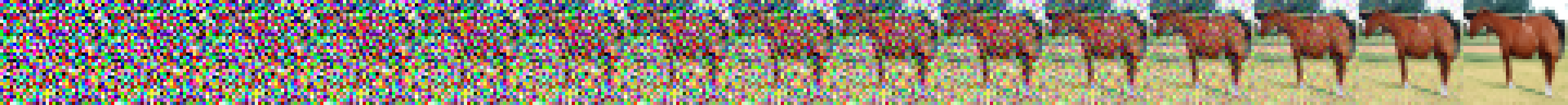}\\[-0.25em]
\raisebox{1.35ex}{\small Scheme B} &
\includegraphics[width=0.78\textwidth]{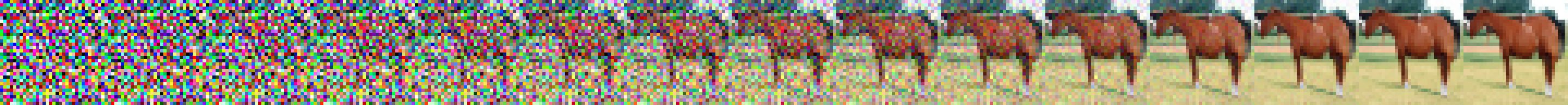}\\[-0.25em]
\raisebox{1.35ex}{\small Scheme C} &
\includegraphics[width=0.78\textwidth]{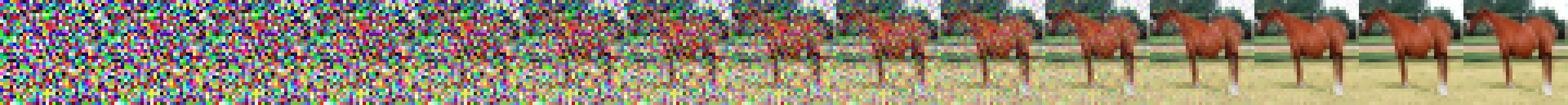}
\end{tabular}
\caption{Intermediate denoising states for Schemes A, B, and C under Euler
sampling with fixed base step size $\Delta t=1/105$. The three blocks show
daisy, cup, and horse trajectories, respectively; each row contains 15 frames.}
\label{fig:gen-schedule-frames}

\vspace{-1cm}
\end{figure}

Table~\ref{tab:gen-schedule-fid} compares FID on MNIST \citep{lecun1998gradient},
CIFAR-10 \citep{krizhevsky2009learning}, and ImageNet $32\times32$
\citep{deng2009imagenet,chrabaszcz2017downsampled} using 50k generated images in each setting. Under the same NFE and guidance scale,
Scheme B gives higher FID than Scheme A in all three reported settings, whereas
Scheme C gives the lowest FID in every case. Thus, the observed FID improvement
appears only when the nonlinear schedule is incorporated into training;
applying the same reparameterization only at sampling does not produce the
improvement in these experiments.
\begin{table}[h]
\centering
\footnotesize
\setlength{\tabcolsep}{4pt}
\label{tab:gen-schedule-fid}
\begin{tabular}{@{}lccccc@{}}
\toprule
\textbf{Dataset} & \textbf{NFE} & \textbf{$w$}
& A: standard & B: sampling reparam. & \textbf{C: training reparam.} \\
\midrule
MNIST & 60 & 1.5 & 10.24 & 10.67 & \textbf{9.78} \\
CIFAR-10 & 100 & 1.1 & 5.20 & 5.38 & \textbf{4.85} \\
ImageNet $32\times32$ & 100 & 1.2 & 5.05 & 5.45 & \textbf{4.93} \\
\bottomrule
\end{tabular}
\caption{Ablation of training and sampling reparameterization. FID $\downarrow$ for (A) standard training and Euler sampling, (B) standard training with reparameterization in sampling, and (C) direct general-schedule training with Euler sampling.}
\vspace{-0.5cm}
\end{table}

\paragraph{Adaptive solvers.}
DOPRI5 is an adaptive Runge--Kutta ODE solver that selects step sizes to meet an
error tolerance. Adaptive solvers can produce different NFE and wall-clock
costs under different time parameterizations even when the corresponding ideal
ODEs trace the same spatial curve. Figure~\ref{fig:gen-schedule-dopri5} compares these two directly trained models corresponding to Schemes A and C, using
$a(t)=t$ and $a(t)=t+t^2-t^3$, respectively. At the same solver tolerances, the
nonlinear-schedule model reduces both mean NFE/image and mean wall-clock
time/image across all six reported classes, with reductions of approximately
$19\%$--$27\%$.
\begin{figure}[h]
\centering
\includegraphics[width=0.98\textwidth]{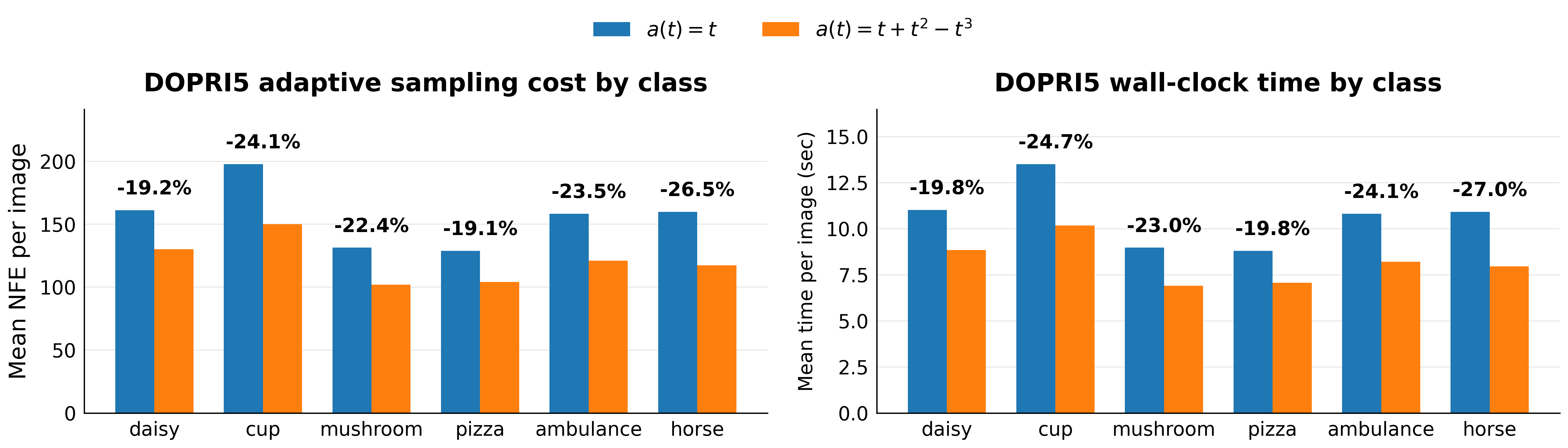}
\caption[DOPRI5 adaptive-solver cost.]{DOPRI5 adaptive-solver cost by class for
the directly trained models corresponding to Schemes A and C. Relative to
$a(t)=t$, the nonlinear schedule $a(t)=t+t^2-t^3$ reduces both mean NFE/image
and mean wall-clock time/image across all reported classes.
}
\label{fig:gen-schedule-dopri5}

\vspace{-1cm}
\end{figure}

Table~\ref{tab:gen-schedule-dopri5-quality} gives the corresponding aggregate comparison across the six classes. For the standard and nonlinear schedules,
respectively, FID is $5.74$ versus $5.67$, mean NFE/image is $156.21$ versus
$120.84$, and mean time/image is $10.67$ s versus $8.19$ s. Therefore in the
reported setting, the nonlinear schedule has a lower measured adaptive solver
cost together with a slightly lower reported FID.
\begin{table}[h]
\centering
\footnotesize
\label{tab:gen-schedule-dopri5-quality}
\setlength{\tabcolsep}{6pt}
\begin{tabular}{@{}lccc@{}}
\toprule
\textbf{Schedule}
& \textbf{FID $\downarrow$}
& \textbf{Mean NFE/image $\downarrow$}
& \textbf{Mean time/image (s) $\downarrow$} \\
\midrule
$a(t)=t$
& 5.74
& 156.21
& 10.67 \\
$a(t)=t+t^2-t^3$
& \textbf{5.67}
& \textbf{120.84}
& \textbf{8.19} \\
\bottomrule
\end{tabular}
\caption{Quality in FID($\downarrow$) and sampling cost under DOPRI5. Both rows use the same DOPRI5 solver tolerances
($\mathrm{atol}=\mathrm{rtol}=10^{-5}$) and guidance scale $w=3.5$.}
\end{table}

\section{Conclusion}\label{sec:conclusion}
This paper developed a particle-level geometric theory for flow matching and classifier-free guidance. For unconditional flow matching, we established a stagewise description in which trajectories were successively attracted toward scale-dependent geometries associated with the global mean, the data convex hull, and, under suitable local conditions, a possibly nonconvex local cluster. For CFG, we showed that guidance modified the early and intermediate attractors, while its correction became exponentially small near a well-isolated target cluster and the terminal dynamics recovered the local geometry of conditional flow matching. We further established exact discrete counterparts for the explicit Euler scheme. As a further consequence, we showed that a general time schedule reparameterized the ideal continuous trajectory without changing its spatial path.

Several directions remain for future work. A natural next step is an end-to-end theory that explains the transitions between geometric stages and the selection of the eventual local cluster without assuming capture in advance. Another important direction is to extend the analysis from ideal to learned velocity fields and quantify how approximation errors perturb the attraction and absorption mechanisms. The stage-dependent CFG behavior also motivates a principled characterization of when guidance can be weakened or removed during sampling.

 \acks{}
This work was partially supported by the National Natural Science Foundation of China under Grant 12571564, Guangdong Basic and Applied Research Foundation 2024A1515012347, Hong Kong Research Grant Council (HKRGC) GRF grants 16307325, 16306124, and 16307023.

\appendix

\section{Guide to the Appendix}\label{app:guide}
The appendices follow the proof dependencies shown in
Figure~\ref{fig:appendix-roadmap}. Appendix~\ref{app:cfm-project} establishes
regularity of the ideal vector fields and existence and uniqueness of the
corresponding ODE solutions, and Appendix~\ref{app:shared-lemmas}
collects the shared continuous and discrete tools.
Appendix~\ref{app:uncond-proofs} proves the unconditional stagewise results and
their Euler counterparts, Appendix~\ref{app:cfg-proofs} proves the CFG results,
and Appendix~\ref{app:gen-schedule-proofs} treats the general time schedule.
The continuous proofs use moving-distance differential estimates, while the
discrete proofs rely on the exact geometry of the Euler update.

\begin{figure}[h]
\centering
\begin{tikzpicture}[
>=Latex,
font=\footnotesize,
box/.style={
draw=black,
rounded corners=4pt,
thick,
align=center,
fill=blue!6,
text width=0.78\textwidth,
inner sep=5pt
},
auxbox/.style={
box,
fill=blue!6
},
arrow/.style={
-{Latex[length=2.5mm]},
thick
}
]
\node[auxbox] (wellposed) {Appendix~\ref{app:cfm-project}: ideal flow ODEs\\
vector-field regularity and existence--uniqueness};
\node[box, below=0.75cm of wellposed] (aux) {Appendix~\ref{app:shared-lemmas}: auxiliary tools\\
continuous scaled-distance calculus and discrete Euler attraction};
\node[box, below=0.75cm of aux] (uncond) {Appendix~\ref{app:uncond-proofs}: unconditional flow ODE analysis\\
early-stage mean attraction $\rightarrow$ convex-hull attraction / absorption $\rightarrow$ final local-cluster results};
\node[box, below=0.75cm of uncond] (cfg) {Appendix~\ref{app:cfg-proofs}: CFG ODE analysis\\
extrapolated mean attraction $\rightarrow$ conditional convex-hull neighborhoods $\rightarrow$ prediction-gap decay $\rightarrow$ CFG final local-cluster results };
\node[box, below=0.75cm of cfg] (genschedule) {Appendix~\ref{app:gen-schedule-proofs}: general time schedule\\
 };
\draw[arrow] (wellposed) -- (aux);
\draw[arrow] (aux) -- (uncond);
\draw[arrow] (uncond) -- (cfg);
\draw[arrow] (cfg) -- (genschedule);
\end{tikzpicture}
\caption{Appendix-proof dependency map. The appendix is read from top to bottom.}
\label{fig:appendix-roadmap}
\end{figure}

\section{Regularity and Existence--Uniqueness of the Ideal Flow ODEs}\label{app:cfm-project}

\subsection{Proof of Theorem~\ref{thm:wellposed-flow}}\label{proof:thm:wellposed-flow}
\begin{proof}
(i)
Fix $T \in (0,1)$ and define, for $(t,\bx)\in[0,T]\times\R^d$,
$w_t(\bx,\by):=\exp\left(-\frac{\|\bx-t\by\|^2}{2(1-t)^2}\right).$
For the unconditional field, write
$Z_t(\bx):=\int_{\mathcal{D}} w_t(\bx,\by)p_\emptyset(\dd \by),$ $A_t(\bx):=\int_{\mathcal{D}} \by\,w_t(\bx,\by)p_\emptyset(\dd \by),$
then $\hat{\by}_t(\bx)=A_t(\bx)/Z_t(\bx)$. Since $\mathcal{D}$ is bounded and $T<1$, the functions $w_t(\bx,\by)$ and their first partial derivatives $\partial_t w_t(\bx,\by)$ and $\nabla_{\bx}w_t(\bx,\by)$ are uniformly bounded in $\by\in \mathcal{D}$ on every compact subset of $[0,T]\times\R^d$. Therefore differentiation under the integral sign \citep{folland1999real} shows that $A_t(\bx)$ and $Z_t(\bx)$ are continuously differentiable in $(t,\bx)$. Because $Z_t(\bx)>0$, the quotient formula implies that $\hat{\by}_t(\bx)$, and hence $\vu(t,\bx)=(\hat{\by}_t(\bx)-\bx)/(1-t)$, is continuously differentiable in $(t,\bx)$ on $[0,T]\times\R^d$.

We now compute the Jacobian. Differentiating $Z_t$ and $A_t$ gives
\begin{align*}
\nabla_{\bx} Z_t(\bx)
&=
-\tfrac{1}{(1-t)^2}
\int_{\mathcal{D}}
(\bx-t\by)\,w_t(\bx,\by)p_\emptyset(\dd \by), \\
\nabla_{\bx} A_t(\bx)
&=
-\tfrac{1}{(1-t)^2}
\int_{\mathcal{D}}
\by(\bx-t\by)^\top w_t(\bx,\by)p_\emptyset(\dd \by).
\end{align*}
Using the posterior notation induced by $\eta_t^{\bx}$, these identities become
\begin{align*}
\frac{\nabla_{\bx} Z_t(\bx)}{Z_t(\bx)}
=
-\frac{\bx-t\hat{\by}_t(\bx)}{(1-t)^2}, \quad
\frac{\nabla_{\bx} A_t(\bx)}{Z_t(\bx)}
=
-\frac{\hat{\by}_t(\bx)\bx^\top - t\int_{\mathcal{D}} \by\by^\top\eta_t^{\bx}(\dd \by)}{(1-t)^2}.
\end{align*}
Applying the quotient rule to $\hat{\by}_t(\bx)=A_t(\bx)/Z_t(\bx)$ yields
\begin{align*}
\nabla_{\bx}\hat{\by}_t(\bx)
&=
\frac{\nabla_{\bx}A_t(\bx)}{Z_t(\bx)}
- \hat{\by}_t(\bx)\left(\frac{\nabla_{\bx}Z_t(\bx)}{Z_t(\bx)}\right)^\top \notag\\
&=
\tfrac{t}{(1-t)^2}
\left(
\int_{\mathcal{D}} \by\by^\top\eta_t^{\bx}(\dd \by)
- \hat{\by}_t(\bx)\hat{\by}_t(\bx)^\top
\right) =
\tfrac{t}{(1-t)^2}\Cov_{\eta_t^{\bx}}(\by).
\end{align*}
Therefore
\begin{equation*}
\mathsf{D}_{\bx}\vu(t,\bx)
=
\frac{\nabla_{\bx}\hat{\by}_t(\bx)-\Id}{1-t}
=
-\frac{1}{1-t}\Id
+ \frac{t}{(1-t)^3}\Cov_{\eta_t^{\bx}}(\by).
\end{equation*}

The conditional field is handled in exactly the same way, with $p_\emptyset$ replaced by $p_c$ and $\eta_t^{\bx}$ replaced by $\eta_{t,c}^{\bx}$, giving
\begin{equation*}
\mathsf{D}_{\bx}\vc(t,\bx)
=
-\frac{1}{1-t}\Id
+ \frac{t}{(1-t)^3}\Cov_{\eta_{t,c}^{\bx}}(\by).
\end{equation*}
Since $\eta_t^{\bx}$ and $\eta_{t,c}^{\bx}$ are supported in $\mathcal{D}$, for every unit vector $\boldsymbol{u}\in\R^d$ we have
\begin{equation*}
\boldsymbol{u}^\top \Cov_{\eta_t^{\bx}}(\by)\boldsymbol{u}
=
\int_{\mathcal{D}}
\bigl\langle \boldsymbol{u},\by-\hat{\by}_t(\bx)\bigr\rangle^2
\eta_t^{\bx}(\dd \by)
\le
\diam(\mathcal{D})^2,
\end{equation*}
and the same bound holds for $\Cov_{\eta_{t,c}^{\bx}}(\by)$. Hence
\begin{equation*}
\|\Cov_{\eta_t^{\bx}}(\by)\|_{\rm op},
\ \|\Cov_{\eta_{t,c}^{\bx}}(\by)\|_{\rm op}
\le
\diam(\mathcal{D})^2,
\end{equation*}
which implies
\begin{equation*}
\|\mathsf{D}_{\bx}\vu(t,\bx)\|_{\rm op},
\ \|\mathsf{D}_{\bx}\vc(t,\bx)\|_{\rm op}
\le
\frac{1}{1-T}
+ \frac{T\,\diam(\mathcal{D})^2}{(1-T)^3},
\qquad
(t,\bx)\in[0,T]\times\R^d.
\end{equation*}

(ii)
Thus $\vu$ and $\vc$ are globally Lipschitz in $\bx$, uniformly for
$t\in[0,T]$. Moreover, continuity of the vector fields implies that
$\sup_{t\in[0,T]}\|\vv(t,\boldsymbol{0})\|<\infty$, and hence each
field satisfies a uniform linear-growth bound. Therefore the
Picard--Lindel\"of theorem, together with the above linear-growth bound (\citealp{teschl2012ordinary}), gives a
unique classical solution on all of $[0,T]$ for every initial condition
$\bx(0)=\bx_0\in\R^d$.
Since the corresponding vector field is continuous, the derivative
of each solution is continuous; hence
$\bx_\cdot\in C^1([0,T])\subset AC([0,T])$.

(iii)
For fixed $w>1$, $\vcfg(t,\bx)=(1-w)\vu(t,\bx)+w\vc(t,\bx),$
so $\vcfg$ is continuously differentiable in $(t,\bx)$ and globally Lipschitz in $\bx$ on $[0,T]\times\R^d$ as well. The same existence-and-uniqueness and regularity argument therefore gives, for every initial condition $\bx(0)=\bx_0\in\R^d$, a unique $C^1$ solution of the CFG ODE on $[0,T]$.
\end{proof}

\subsection{Proofs for Section~\ref{sec:homotopy}}

\begin{lemma}[Uniform positive mass on compact support]\label{lem:compact-support-uniform-mass}
If $\mu$ is a Borel probability measure on $\R^d$ and
$\mathcal D:=\supp(\mu)$ is compact, then for every $\delta>0$,
$m_\delta:=\inf_{\by\in\mathcal D}\mu(\mathcal D\cap B_\delta(\by))>0$.
\end{lemma}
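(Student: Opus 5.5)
The plan is a compactness argument with a half-radius covering. By the definition of the support, every point of $\mathcal D$ has all of its open neighbourhoods of positive $\mu$-measure. Since $\mathcal D$ is compact, I will cover $\mathcal D$ by finitely many balls of radius $\delta/2$ and use these finitely many positive masses as a uniform lower bound.

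First, fix $\delta>0$. For each $\bz\in\mathcal D$, the open ball $U_{\bz}$ of radius $\delta/2$ centred at $\bz$ satisfies $\mu(U_{\bz})>0$, because $\bz\in\supp(\mu)$. The family $\{U_{\bz}\}_{\bz\in\mathcal D}$ is an open cover of the compact set $\mathcal D$. Hence there are finitely many points $\bz_1,\ldots,\bz_m\in\mathcal D$ with $\mathcal D\subset\bigcup_{k=1}^m U_{\bz_k}$.

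Second, set $m_\star:=\min_{1\le k\le m}\mu(U_{\bz_k})>0$. Given any $\by\in\mathcal D$, choose $k$ with $\|\by-\bz_k\|<\delta/2$. The triangle inequality gives $U_{\bz_k}\subset B_\delta(\by)$, since every point of $U_{\bz_k}$ lies within distance $\delta/2+\delta/2=\delta$ of $\by$.

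Third, I need to pass from $B_\delta(\by)$ to $\mathcal D\cap B_\delta(\by)$. Because $\mu(\R^d\setminus\mathcal D)=0$ ($\R^d$ is second countable, so the complement of the support is $\mu$-null), we get
\[
\mu\bigl(\mathcal D\cap B_\delta(\by)\bigr)=\mu\bigl(B_\delta(\by)\bigr)\ge\mu(U_{\bz_k})\ge m_\star.
\]
Taking the infimum over $\by\in\mathcal D$ yields $m_\delta\ge m_\star>0$.

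No step is a real obstacle. The only points needing care are the use of $\delta/2$ rather than $\delta$, so that a single cover element sits inside every relevant ball, and the fact that the support carries full measure. An alternative would be a contradiction argument: take $\by_n$ with masses tending to zero, extract a limit $\by$ by compactness, and note that $B_{\delta/2}(\by)\subset B_\delta(\by_n)$ for large $n$. I would use the covering version above.
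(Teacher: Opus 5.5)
Your proposal is correct and follows the paper's proof essentially step for step. The paper also takes a finite cover of $\mathcal D$ by $\delta/2$-balls centred in $\mathcal D$ and lets the minimum of the finitely many positive masses be the lower bound. It then uses the inclusion of the covering ball in $B_\delta(\by)$ together with $\mu(\mathcal D^c)=0$.
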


\begin{proof}
Fix $\delta>0$. Since $\mathcal D$ is compact, there exist finitely many points $\by_1,\ldots,\by_N\in\mathcal D$ such that $\mathcal D\subset\bigcup_{j=1}^N B_{\delta/2}(\by_j)$. Since each $\by_j\in\supp(\mu)$, every open neighborhood of $\by_j$ has positive $\mu$-mass. Also $\mu(\mathcal D^c)=0$, so $\mu(\mathcal D\cap B_{\delta/2}(\by_j))>0$ for every $j$. Let
$m_\delta':=\min_{1\le j\le N}\mu(\mathcal D\cap B_{\delta/2}(\by_j))>0$.
For any $\by\in\mathcal D$, choose $j$ with $\by\in B_{\delta/2}(\by_j)$, then $B_{\delta/2}(\by_j)\subset B_\delta(\by)$, and hence $\mu(\mathcal D\cap B_\delta(\by))\ge m_\delta'$. Taking the infimum over $\by\in\mathcal D$ proves the claim.

\end{proof}

\begin{lemma}[Distance under scaling of a compact set]\label{lem:distance-under-scaling}
Let $\mathcal D\subset\R^d$ be nonempty and compact, and define
\(
M_{\mathcal D}:=\max_{\by\in\mathcal D}\|\by\|.
\)
Then, for every $\bx\in\R^d$ and $t\in[0,1]$,
\[
\left|
\dist(\bx,t\mathcal D)-\dist(\bx,\mathcal D)
\right|
\le
(1-t)M_{\mathcal D}.
\]
Consequently, for every fixed $\bx\in\R^d$,
\[
\dist^2(\bx,t\mathcal D)
\to
\dist^2(\bx,\mathcal D),
\qquad t\uparrow1.
\]
\end{lemma}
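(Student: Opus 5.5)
The plan is to compare the two distances through the natural bijection $\by\mapsto t\by$ between $\mathcal D$ and $t\mathcal D$, which moves each point by at most $(1-t)M_{\mathcal D}$. Fix $\bx\in\R^d$ and $t\in[0,1]$. Since $\mathcal D$ is compact, $t\mathcal D$ is compact as well, so both infima defining $\dist(\bx,\mathcal D)$ and $\dist(\bx,t\mathcal D)$ are attained. This is true even at $t=0$, where $t\mathcal D=\{\bzero\}$.

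For the upper bound, choose $\by^\ast\in\mathcal D$ with $\|\bx-\by^\ast\|=\dist(\bx,\mathcal D)$. Then $t\by^\ast\in t\mathcal D$, and the triangle inequality gives
\[
\dist(\bx,t\mathcal D)\le\|\bx-t\by^\ast\|\le\|\bx-\by^\ast\|+(1-t)\|\by^\ast\|\le\dist(\bx,\mathcal D)+(1-t)M_{\mathcal D}.
\]
For the reverse bound, choose $\by^\sharp\in\mathcal D$ with $\|\bx-t\by^\sharp\|=\dist(\bx,t\mathcal D)$. Then
\[
\dist(\bx,\mathcal D)\le\|\bx-\by^\sharp\|\le\|\bx-t\by^\sharp\|+(1-t)\|\by^\sharp\|\le\dist(\bx,t\mathcal D)+(1-t)M_{\mathcal D}.
\]
Together these two inequalities give the absolute-value estimate.

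For the squared statement, write $a_t:=\dist(\bx,t\mathcal D)$ and $a:=\dist(\bx,\mathcal D)$. Then
\[
|a_t^2-a^2|=|a_t-a|\,(a_t+a)\le(1-t)M_{\mathcal D}\bigl(2a+(1-t)M_{\mathcal D}\bigr),
\]
and the right-hand side tends to $0$ as $t\uparrow1$ for fixed $\bx$.

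Every step is routine. The only point to watch is that the estimate must hold uniformly in $t\in[0,1]$, including the degenerate endpoint $t=0$. Using attained minimizers handles this; alternatively one can argue with near-minimizers, taking an arbitrary $\by\in\mathcal D$ and then the infimum.
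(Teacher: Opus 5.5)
Your proof is correct and is essentially the paper's argument. The paper also compares through attained minimizers, $\by^\ast\in\mathcal D$ for $\dist(\bx,\mathcal D)$ and a point $\by_t\in\mathcal D$ realizing $\dist(\bx,t\mathcal D)$, and pays $(1-t)M_{\mathcal D}$ in the triangle inequality each way. Your difference-of-squares bound simply spells out the step the paper calls immediate.
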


\begin{proof}
Fix $\bx\in\R^d$ and $t\in[0,1]$. Since $\mathcal D$ is compact,
we can choose $\by^\ast\in\mathcal D$ such that
$
\|\bx-\by^\ast\|=\dist(\bx,\mathcal D).
$
Since $t\by^\ast\in t\mathcal D$,
$\dist(\bx,t\mathcal D)\le\|\bx-t\by^\ast\|\le
\dist(\bx,\mathcal D)+(1-t)M_{\mathcal D}$. Conversely, choose
$\by_t\in\mathcal D$ such that
$\|\bx-t\by_t\|=\dist(\bx,t\mathcal D)$, then
$\dist(\bx,\mathcal D)\le\|\bx-\by_t\|\le
\dist(\bx,t\mathcal D)+(1-t)M_{\mathcal D}$.
Combining the two inequalities proves the distance estimate. The convergence
of the squared distances follows immediately.
\end{proof}

\subsubsection{Proof of Proposition~\ref{prop:soft-distance-representation}(ii)--(iii)}\label{proof:prop:soft-distance-representation}
\begin{proof}
Fix \(t\in[0,1)\) and \(\boldsymbol x\in\mathbb R^d\), and set \(\sigma:=1-t>0\).
Because \(\mathcal D\) is compact, the function
$
\boldsymbol y\longmapsto\|\boldsymbol x-t\boldsymbol y\|^2
$ attains its minimum and maximum on \(\mathcal D\). Therefore, for every \(\boldsymbol y\in\mathcal D\),
$
\exp\left(
-\tfrac{\max_{\boldsymbol z\in\mathcal D}
\|\boldsymbol x-t\boldsymbol z\|^2}{2\sigma^2}
\right)
\le
\exp\left(
-\tfrac{\|\boldsymbol x-t\boldsymbol y\|^2}{2\sigma^2}
\right)
\le
\exp\left(
-\tfrac{\operatorname{dist}^2(\boldsymbol x,t\mathcal D)}
{2\sigma^2}
\right).
$ Integrating with respect to the probability measure \(p_{\emptyset}\) preserves these inequalities. Applying the decreasing function
$
r\longmapsto-2\sigma^2\log r$,  then reverses their order. By the definition of the smoothed squared distance, this gives
\[
\operatorname{dist}^2(\boldsymbol x,t\mathcal D)
\le
\operatorname{dist}_t^2(\boldsymbol x,t\mathcal D)
\le
\max_{\boldsymbol y\in\mathcal D}
\|\boldsymbol x-t\boldsymbol y\|^2.
\]This proves part (ii).

For part (iii), let $
C:=\operatorname{Conv}(t\mathcal D)
=t\operatorname{Conv}(\mathcal D).$ The set \(C\) is compact and convex. We first observe that
$
\max_{\boldsymbol z\in C}
\|\boldsymbol x-\boldsymbol z\|
=
\max_{\boldsymbol z\in t\mathcal D}
\|\boldsymbol x-\boldsymbol z\|.$ Indeed, one inequality follows from \(t\mathcal D\subseteq C\). Conversely, if
$
\boldsymbol z
=
\sum_{i=1}^N\lambda_i\boldsymbol z_i
\in C,
\boldsymbol z_i\in t\mathcal D,
$ then 
$\|\bx-\bz\|=\left\|\sum_{i=1}^N\lambda_i(\bx-\bz_i)\right\| \le \sum_{i=1}^N\lambda_i\|\bx-\bz_i\| \le \max_{\boldsymbol{\zeta}\in t\mathcal{D}}\|\bx-\boldsymbol{\zeta}\|.$
Taking the maximum over $\bz\in\Conv(t\mathcal{D})$ gives the reverse inequality.
Combining this equality with part (ii) and the inclusion \(t\mathcal D\subseteq C\), we obtain
$
\min_{\boldsymbol z\in C}
\|\boldsymbol x-\boldsymbol z\|^2
\le
\operatorname{dist}_t^2(\boldsymbol x,t\mathcal D)
\le
\max_{\boldsymbol z\in C}
\|\boldsymbol x-\boldsymbol z\|^2.
$
By compactness, choose \(\boldsymbol z_-,\boldsymbol z_+\in C\) at which the minimum and maximum are attained. Since \(C\) is convex, the segment
$
(1-s)\boldsymbol z_-+s\boldsymbol z_+,
0\le s\le1, $ lies entirely in \(C\). The function
$
s\longmapsto
\left\|
\boldsymbol x-
\bigl((1-s)\boldsymbol z_-+s\boldsymbol z_+\bigr)
\right\|^2$ is continuous, and its values at \(s=0\) and \(s=1\) are respectively the minimum and maximum above. The intermediate value theorem therefore gives some \(s_\ast\in[0,1]\) such that, with
\[
\boldsymbol z_t(\boldsymbol x)
:=
(1-s_\ast)\boldsymbol z_-+s_\ast\boldsymbol z_+,
\]we have
$
\operatorname{dist}_t^2(\boldsymbol x,t\mathcal D)
=
\|\boldsymbol x-\boldsymbol z_t(\boldsymbol x)\|^2.
$ Moreover,
$
\boldsymbol z_t(\boldsymbol x)
\in C,$ which proves part (iii). 


Fix $\boldsymbol x\in\mathbb R^d$,  $t\in[0,1)$, choose $\boldsymbol y_t^\ast\in\mathcal D$ such that $\|\boldsymbol x-t\boldsymbol y_t^\ast\|=\operatorname{dist}(\boldsymbol x,t\mathcal D)$.
Given $\delta>0$, by Lemma~\ref{lem:compact-support-uniform-mass}, there exists $m_\delta>0$, independent of $t$, such that $p_{\emptyset}(\mathcal D\cap B_\delta(\boldsymbol y_t^\ast))\ge m_\delta$. If $\boldsymbol y\in\mathcal D\cap B_\delta(\boldsymbol y_t^\ast)$, then $\|t\boldsymbol y-t\boldsymbol y_t^\ast\|\le t\delta\le\delta$, so
\(\begin{aligned}
\|\boldsymbol x-t\boldsymbol y\|
\le
\|\boldsymbol x-t\boldsymbol y_t^\ast\|
+\|t\boldsymbol y_t^\ast-t\boldsymbol y\|
\le
\operatorname{dist}(\boldsymbol x,t\mathcal D)+\delta.
\end{aligned}\) Restricting the defining integral of the smoothed distance to this neighborhood gives
$$
\int_{\mathcal D}
\exp\left(-\tfrac{\|\boldsymbol x-t\boldsymbol y\|^2}{2(1-t)^2}\right)
p_{\emptyset}(\mathrm d\boldsymbol y)
\ge
m_\delta
\exp\left(
-\tfrac{(\operatorname{dist}(\boldsymbol x,t\mathcal D)+\delta)^2}
{2(1-t)^2}
\right).
$$  Applying the decreasing function $r\mapsto-2(1-t)^2\log r$ gives
\[
\operatorname{dist}_t^2(\boldsymbol x,t\mathcal D)
\le
\bigl(\operatorname{dist}(\boldsymbol x,t\mathcal D)+\delta\bigr)^2
+
2(1-t)^2\log\tfrac1{m_\delta}.
\]Together with the lower bound in part (ii), we obtain
\[
\begin{aligned}
0
&\le
\operatorname{dist}_t^2(\boldsymbol x,t\mathcal D)
-\operatorname{dist}^2(\boldsymbol x,t\mathcal D)
\le
2\operatorname{dist}(\boldsymbol x,t\mathcal D)\delta
+\delta^2
+2(1-t)^2\log\tfrac1{m_\delta}.
\end{aligned}
\]Let $M_{\mathcal D}:=\max_{\boldsymbol y\in\mathcal D}\|\boldsymbol y\|$. Since $\operatorname{dist}(\boldsymbol x,t\mathcal D)
=
\|\boldsymbol x-t\boldsymbol y_t^\ast\|
\le
\|\boldsymbol x\|+M_{\mathcal D},$ letting $t\uparrow1$ with $\delta$ fixed yields
\[
\limsup_{t\uparrow1}
\left[
\operatorname{dist}_t^2(\boldsymbol x,t\mathcal D)
-\operatorname{dist}^2(\boldsymbol x,t\mathcal D)
\right]
\le
2(\|\boldsymbol x\|+M_{\mathcal D})\delta+\delta^2.
\]Because $\delta>0$ is arbitrary, letting $\delta\downarrow0$ shows that the difference on the left converges to zero. Lemma~\ref{lem:distance-under-scaling} also gives $\operatorname{dist}^2(\boldsymbol x,t\mathcal D)\to\operatorname{dist}^2(\boldsymbol x,\mathcal D)$. Therefore,
$
\operatorname{dist}_t^2(\boldsymbol x,t\mathcal D)
\rightarrow
\operatorname{dist}^2(\boldsymbol x,\mathcal D).
$ Finally, $f_t(\boldsymbol x)=\tfrac12\operatorname{dist}_t^2(\boldsymbol x,t\mathcal D)-\tfrac{1-t}{2}|\boldsymbol x|^2$, and the second term tends to zero. Hence $f_t(\boldsymbol x)\to\tfrac12\operatorname{dist}^2(\boldsymbol x,\mathcal D)$, proving part (iv).
\end{proof}

\section{Auxiliary Tools for Continuous and Discrete Dynamics}\label{app:shared-lemmas}

We collect the auxiliary tools used in the stagewise analysis. The
continuous-time argument differentiates the squared distance from the
normalized trajectory to a fixed closed set and then scales the resulting
formula back to the moving target. The discrete-time tool gives the exact
one-step Euler counterpart.

\subsection{Continuous-time distance calculus for scaled sets}


\begin{lemma}[\citep{kolasinski2023regularity}Derivative of the squared distance]
\label{lem:squared-distance-differentiability}
Let \(\bz \in \mathbb R^d\) and \(S\subset\mathbb R^d\) be nonempty and closed. Suppose that
\(\Proj_S(\boldsymbol z)\) is well-defined, and write
\(\boldsymbol p:=\Proj_S(\boldsymbol z)\). Then
\(\tfrac12\dist^2(\cdot,S)\) is differentiable at \(\boldsymbol z\), with
\begin{equation}\label{eq:squared-distance-gradient}
\nabla_{\boldsymbol z}\tfrac12\dist^2(\boldsymbol z,S)
=
\boldsymbol z-\boldsymbol p.
\end{equation}
\end{lemma}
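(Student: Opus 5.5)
We need to prove: if the projection of $\bz$ onto the closed set $S$ is unique, with $\boldsymbol p:=\Proj_S(\bz)$, then $\phi:=\tfrac12\dist^2(\cdot,S)$ is differentiable at $\bz$ with gradient $\bz-\boldsymbol p$. The plan is to sandwich $\phi(\bz+\boldsymbol h)-\phi(\bz)$ between $\langle\bz-\boldsymbol p,\boldsymbol h\rangle$ plus $\tfrac12\|\boldsymbol h\|^2$ above and the same inner product minus an $o(\|\boldsymbol h\|)$ term below. Uniqueness of the projection enters only through continuity of near-minimizers.

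\emph{Upper bound.} Since $\boldsymbol p\in S$, for every $\boldsymbol h$ we have
\[
\phi(\bz+\boldsymbol h)\le\tfrac12\|\bz+\boldsymbol h-\boldsymbol p\|^2=\phi(\bz)+\langle \bz-\boldsymbol p,\boldsymbol h\rangle+\tfrac12\|\boldsymbol h\|^2 .
\]
This gives $\phi(\bz+\boldsymbol h)-\phi(\bz)-\langle\bz-\boldsymbol p,\boldsymbol h\rangle\le\tfrac12\|\boldsymbol h\|^2$.

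\emph{Lower bound.} Closedness of $S$ and boundedness of minimizing sequences give, for each $\boldsymbol h$, a point $\boldsymbol p_{\boldsymbol h}\in S$ with $\|\bz+\boldsymbol h-\boldsymbol p_{\boldsymbol h}\|=\dist(\bz+\boldsymbol h,S)$. Since $\phi(\bz)\le\tfrac12\|\bz-\boldsymbol p_{\boldsymbol h}\|^2$, expanding yields
\[
\phi(\bz+\boldsymbol h)-\phi(\bz)\ge\tfrac12\|\bz+\boldsymbol h-\boldsymbol p_{\boldsymbol h}\|^2-\tfrac12\|\bz-\boldsymbol p_{\boldsymbol h}\|^2=\langle \bz-\boldsymbol p_{\boldsymbol h},\boldsymbol h\rangle+\tfrac12\|\boldsymbol h\|^2 .
\]
Hence
\[
\phi(\bz+\boldsymbol h)-\phi(\bz)-\langle\bz-\boldsymbol p,\boldsymbol h\rangle\ge\langle\boldsymbol p-\boldsymbol p_{\boldsymbol h},\boldsymbol h\rangle\ge-\|\boldsymbol p_{\boldsymbol h}-\boldsymbol p\|\,\|\boldsymbol h\|.
\]
Combining both bounds, it suffices to show $\boldsymbol p_{\boldsymbol h}\to\boldsymbol p$ as $\boldsymbol h\to\bzero$. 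That makes the remainder $o(\|\boldsymbol h\|)$ and gives \eqref{eq:squared-distance-gradient}.

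\emph{Key step (main obstacle): $\boldsymbol p_{\boldsymbol h}\to\boldsymbol p$.} This is the only place where uniqueness is used. The argument is by compactness and contradiction.
\begin{itemize}
\item The points $\boldsymbol p_{\boldsymbol h}$ stay bounded: $\|\boldsymbol p_{\boldsymbol h}-\bz\|\le\|\boldsymbol h\|+\dist(\bz+\boldsymbol h,S)\le 2\|\boldsymbol h\|+\dist(\bz,S)$, using that $\dist(\cdot,S)$ is $1$-Lipschitz.
\item Suppose some sequence $\boldsymbol h_k\to\bzero$ has $\|\boldsymbol p_{\boldsymbol h_k}-\boldsymbol p\|\ge\epsilon$. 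Extract a convergent subsequence $\boldsymbol p_{\boldsymbol h_k}\to\boldsymbol q$. Then $\boldsymbol q\in S$ since $S$ is closed, and $\|\boldsymbol q-\boldsymbol p\|\ge\epsilon$.
\item Passing to the limit in $\|\bz+\boldsymbol h_k-\boldsymbol p_{\boldsymbol h_k}\|=\dist(\bz+\boldsymbol h_k,S)$ and using continuity of $\dist(\cdot,S)$ gives $\|\bz-\boldsymbol q\|=\dist(\bz,S)$.
\item So $\boldsymbol q$ is a second nearest point distinct from $\boldsymbol p$, contradicting the single-valuedness of $\Proj_S(\bz)$.
\end{itemize}

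This shows the lemma holds whenever the projection is unique. In particular it applies on the prox-regular neighborhoods used later in the paper.
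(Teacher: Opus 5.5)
Your proof is correct and complete. The paper gives no proof of this lemma: it imports the result by citation from Kolasi\'nski (2023). So your proposal is a self-contained replacement for a citation rather than a reconstruction of an in-paper argument.

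Your route has three ingredients:
\begin{itemize}
\item an upper bound from testing the fixed point $\boldsymbol p$;
\item a lower bound from testing an arbitrary nearest point $\boldsymbol p_{\boldsymbol h}$ of $\bz+\boldsymbol h$;
\item outer semicontinuity of the nearest-point map at a point where it is single-valued, via Bolzano--Weierstrass and closedness of $S$.
\end{itemize}
Together these give Fr\'echet differentiability, with remainder lying in $\bigl[-\|\boldsymbol p_{\boldsymbol h}-\boldsymbol p\|\,\|\boldsymbol h\|,\ \tfrac12\|\boldsymbol h\|^2\bigr]$. This is exactly what the chain-rule step in Lemma~\ref{lem:dist-derivative} needs.

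Your argument also makes two points explicit:
\begin{itemize}
\item Only uniqueness of the projection at the single point $\bz$ is used, not uniqueness on a neighbourhood or prox-regularity.
\item No measurable or continuous selection $\boldsymbol h\mapsto\boldsymbol p_{\boldsymbol h}$ is needed, because your contradiction argument controls all nearest points of $\bz+\boldsymbol h$ at once.
\end{itemize}

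An equally standard alternative writes $\tfrac12\|\bx\|^2-\tfrac12\dist^2(\bx,S)$ as the supremum over $\boldsymbol s\in S$ of the affine functions $\langle\bx,\boldsymbol s\rangle-\tfrac12\|\boldsymbol s\|^2$. This function is convex, and its subdifferential at $\bx$ is the convex hull of the nearest points. A singleton projection therefore forces differentiability with gradient $\bz-\boldsymbol p$ for the squared distance. Your direct argument avoids that Danskin-type subdifferential formula.

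The citation buys brevity and places the fact inside a general regularity theory for distance functions. Your proof buys self-containedness at the cost of a few lines. It relies only on local compactness of $\mathbb R^d$, which is harmless in this setting.
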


\begin{lemma}[Derivative of scaled squared distance]\label{lem:dist-derivative}
Let \(\Omega\subset\mathbb R^d\) be nonempty and closed, let \(I\) be
an interval, let \(f\in C^1(I)\) satisfy \(f(t)>0\), and let
\(\boldsymbol x_\cdot\in C^1(I)\). Suppose that, for every \(t\in I\),
\(
\boldsymbol p_t
:=
\operatorname{Proj}_{\Omega}
\left(\boldsymbol x_t/f(t)
\right)
\)
is well-defined. Then
\(t\mapsto \tfrac12\operatorname{dist}^2(\boldsymbol x_t,f(t)\Omega)\) is
differentiable on \(I\), and
\[
\frac{\mathrm d}{\mathrm dt}\frac12
\operatorname{dist}^2\bigl(\boldsymbol x_t,f(t)\Omega\bigr)
=
-\left\langle
\dot{\boldsymbol x}_t-\dot f(t)\boldsymbol p_t,
f(t)\boldsymbol p_t-\boldsymbol x_t
\right\rangle.
\]
\end{lemma}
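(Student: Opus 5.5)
The plan is to reduce everything to Lemma~\ref{lem:squared-distance-differentiability} through the scaling identity
\[
\dist\bigl(\bx,f\Omega\bigr)=f\,\dist(\bx/f,\Omega),\qquad f>0,
\]
and then apply the chain rule. Put $\bz_t:=\bx_t/f(t)$, which is $C^1$ on $I$ since $f>0$ is $C^1$. Then
\[
\phi(t):=\tfrac12\dist^2(\bx_t,f(t)\Omega)=f(t)^2\,g(\bz_t),\qquad g(\bz):=\tfrac12\dist^2(\bz,\Omega).
\]
By hypothesis $\bp_t=\Proj_\Omega(\bz_t)$ is single-valued. Lemma~\ref{lem:squared-distance-differentiability} then says $g$ is differentiable at $\bz_t$ with $\nabla g(\bz_t)=\bz_t-\bp_t$.

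Because $g$ is Fréchet differentiable at the point $\bz_t$ and $s\mapsto\bz_s$ is differentiable at $t$, the chain rule applies at $t$; no global smoothness of $g$ is needed. It gives $\frac{\dd}{\dd t}g(\bz_t)=\langle\bz_t-\bp_t,\dot\bz_t\rangle$, with
\[
\dot\bz_t=\frac{\dot\bx_t}{f}-\frac{\dot f\,\bx_t}{f^2}.
\]
The product rule then yields
\[
\dot\phi=2f\dot f\,g(\bz_t)+f^2\langle\bz_t-\bp_t,\dot\bz_t\rangle .
\]

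It remains to simplify, writing $f=f(t)$. Since $g(\bz_t)=\tfrac12\|\bz_t-\bp_t\|^2$, we have
\[
f^2(\bz_t-\bp_t)=f(\bx_t-f\bp_t),\qquad f^2\dot\bz_t=f\dot\bx_t-\dot f\,\bx_t .
\]
Substituting these into the product-rule formula gives
\[
\dot\phi=\frac{\dot f}{f}\|\bx_t-f\bp_t\|^2+\frac1f\langle\bx_t-f\bp_t,\,f\dot\bx_t-\dot f\bx_t\rangle .
\]
Now write $\bx_t=(\bx_t-f\bp_t)+f\bp_t$ inside the term $-\dot f\bx_t$. The piece $-\frac{\dot f}{f}\|\bx_t-f\bp_t\|^2$ exactly cancels the first term, leaving
\[
\dot\phi=\langle\bx_t-f\bp_t,\ \dot\bx_t-\dot f\bp_t\rangle=-\langle\dot\bx_t-\dot f\bp_t,\ f\bp_t-\bx_t\rangle,
\]
which is the claimed formula.

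The one step needing care is justifying differentiability of $t\mapsto g(\bz_t)$ pointwise using only single-valuedness of the projection at $\bz_t$. The plan is to rely on Lemma~\ref{lem:squared-distance-differentiability}, which provides Fréchet differentiability at that point, and then use the standard pointwise chain rule.
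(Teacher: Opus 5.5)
Your proposal is correct and follows essentially the same route as the paper: the positive-scaling identity $\dist^2(\bx_t,f(t)\Omega)=f(t)^2\dist^2(\bx_t/f(t),\Omega)$, the gradient formula from Lemma~\ref{lem:squared-distance-differentiability} at $\bx_t/f(t)$, the chain and product rules, and the same algebraic cancellation. Your explicit remark that only Fréchet differentiability at the single point $\bz_t$ is needed for the pointwise chain rule is the justification the paper's argument also relies on implicitly.
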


\begin{proof}
Positive scaling gives
\[
\operatorname{dist}^2\bigl(\boldsymbol x_t,f(t)\Omega\bigr)
=
f(t)^2
\operatorname{dist}^2\left(
\boldsymbol x_t/f(t),
\Omega\right).
\]
By Lemma~\ref{lem:squared-distance-differentiability},
\[
\nabla\!\left[
\tfrac12\operatorname{dist}^2(\,\cdot\,,\Omega)
\right]\!\left(\boldsymbol x_t/f(t)\right)
=\boldsymbol x_t/f(t)
-\boldsymbol p_t.
\]
The ordinary chain rule, therefore, gives
\[
\begin{aligned}
\frac{\mathrm d}{\mathrm dt}
\frac12
\operatorname{dist}^2\bigl(\boldsymbol x_t,f(t)\Omega\bigr)
={}&
f(t)\dot f(t)
\left\|\boldsymbol x_t/f(t)
-\boldsymbol p_t
\right\|^2
+
f(t)^2
\left\langle \boldsymbol x_t/f(t)
-\boldsymbol p_t,
\dot{\boldsymbol x}_t/f(t)
- \dot f(t)\boldsymbol x_t/f(t)^2
\right\rangle
\\
={}&
\left\langle
\boldsymbol x_t-f(t)\boldsymbol p_t,
\dot{\boldsymbol x}_t-\dot f(t)\boldsymbol p_t
\right\rangle.
\end{aligned}
\]
\end{proof}

Now we present the  continuous-time lemma for the attraction and absorption of scaled targets, which is the key tool for the continuous-time analysis of the stagewise dynamics in Section~\ref{app:uncond-proofs} and Section~\ref{app:cfg-proofs}.

\begin{lemma}[Continuous attraction and absorption for scaled targets]
\label{lem:continuous-scaled-convex-contraction}
Let \(S\subset\R^d\) be nonempty and closed, let
\(I\subset(0,1)\) be an interval, and 
\(\bx_\cdot\in C^1(I)\) satisfy
\begin{equation}\label{eq:continuous-scaled-target-dynamics}
\dot\bx_t
=
\frac{\boldsymbol m_t-\bx_t}{1-t},
\qquad t\in I,
\end{equation}
for a map \(t\mapsto\boldsymbol m_t\in\R^d\).
Assume that, for every \(t\in I\), the projection
\(
\bq_t:=\Proj_{tS}(\bx_t)
\)
is well-defined, and that there exists a nonempty closed convex set
\(C_t\subset S\) such that
\begin{equation}\label{eq:continuous-common-convex-subset}
\bq_t/t\in C_t,
\qquad
\boldsymbol m_t\in C_t.
\end{equation}
Then the following hold.
\begin{enumerate}
\item[(i)] \emph{(Attraction)} For every \(s,t\in I\) with \(s\le t\),
\begin{equation}\label{eq:continuous-scaled-convex-contraction}
\dist(\bx_t,tS)
\le
\tfrac{1-t}{1-s}\dist(\bx_s,sS).
\end{equation}
Consequently, \(t\mapsto\dist(\bx_t,tS)\) is non-increasing on \(I\).
\item[(ii)] \emph{(Absorption)} For every \(s\in I\), if
\(\bx_s\in sS\), then
\(
\bx_t\in tS
\)
for every \(t\in I\) with \(t\ge s\).
\end{enumerate}
\end{lemma}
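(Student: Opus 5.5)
The plan is to reduce everything to a scalar differential inequality for $\phi(t):=\tfrac12\dist^2(\bx_t,tS)$ and integrate it. By Lemma~\ref{lem:dist-derivative} with $f(t)=t$ and $\Omega=S$ (using that $\Proj_S(\bx_t/t)=\bq_t/t$ is well-defined), $\phi$ is differentiable on $I$ with
\[
\dot\phi(t)=-\bigl\langle \dot\bx_t-\bq_t/t,\ \bq_t-\bx_t\bigr\rangle .
\]
We substitute $\dot\bx_t=(\boldsymbol m_t-\bx_t)/(1-t)$. Writing $\bp_t:=\bq_t/t$, the key algebraic identity is
\[
\dot\bx_t-\bp_t=\frac{\boldsymbol m_t-\bx_t-(1-t)\bp_t}{1-t}
=\frac{(\bq_t-\bx_t)+(\boldsymbol m_t-\bp_t)}{1-t}.
\]
Hence
\[
\dot\phi(t)=-\frac{\|\bq_t-\bx_t\|^2}{1-t}-\frac{\langle \boldsymbol m_t-\bp_t,\ \bq_t-\bx_t\rangle}{1-t}.
\]

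The main step is to show that the cross term is nonpositive, i.e.\ $\langle \boldsymbol m_t-\bp_t,\ \bx_t-\bq_t\rangle\le 0$. This is where the convex set $C_t$ is used. Since $\bq_t=\Proj_{tS}(\bx_t)$ and $tC_t\subset tS$, the point $\bq_t$ is also the nearest point of $tC_t$ to $\bx_t$. Moreover $\bq_t\in tC_t$ by \eqref{eq:continuous-common-convex-subset}, and $tC_t$ is closed and convex. The variational inequality for convex projection then gives
\[
\langle \bx_t-\bq_t,\ \boldsymbol z-\bq_t\rangle\le0 \quad\text{for all } \boldsymbol z\in tC_t.
\]
Taking $\boldsymbol z=t\boldsymbol m_t\in tC_t$ gives $t\langle \bx_t-\bq_t,\boldsymbol m_t-\bp_t\rangle\le0$. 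Since $t>0$ on $I\subset(0,1)$, the cross term has the right sign. This gives
\[
\dot\phi(t)\le-\frac{2}{1-t}\phi(t).
\]
I expect this step to be the only delicate point. The subtlety is that $S$ itself need not be convex, which is exactly why the hypothesis supplies the common convex subset containing both $\bp_t$ and $\boldsymbol m_t$.

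For attraction we integrate. The function $\psi(t):=\phi(t)/(1-t)^2$ satisfies
\[
\dot\psi=\frac{\dot\phi+2\phi/(1-t)}{(1-t)^2}\le0,
\]
so $\psi$ is non-increasing on $I$. Thus $\phi(t)\le\frac{(1-t)^2}{(1-s)^2}\phi(s)$ for $s\le t$, and taking square roots yields \eqref{eq:continuous-scaled-convex-contraction}. Because $\phi$ is only differentiable and not necessarily $C^1$, I will invoke the mean value theorem rather than the fundamental theorem of calculus: a differentiable function with nonpositive derivative is non-increasing. Monotonicity of $t\mapsto\dist(\bx_t,tS)$ follows because $(1-t)/(1-s)\le1$.

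Absorption is immediate from attraction. If $\bx_s\in sS$, then $\dist(\bx_s,sS)=0$, so $\dist(\bx_t,tS)=0$ for every $t\ge s$ in $I$. Since $tS$ is closed, this gives $\bx_t\in tS$.
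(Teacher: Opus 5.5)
Your proof is correct and follows the paper's argument step for step. Both proofs take the derivative formula from Lemma~\ref{lem:dist-derivative} with $f(t)=t$, split $\dot\bx_t-\bq_t/t$ into $(\bq_t-\bx_t)/(1-t)$ plus a cross term made nonpositive by the projection variational inequality on $tC_t$, and conclude from the monotonicity of $\dist^2(\bx_t,tS)/(1-t)^2$. The differences are cosmetic: you quote the convex-projection inequality after noting $\bq_t=\Proj_{tC_t}(\bx_t)$, where the paper derives it by perturbing to $\bq_t+\lambda(t\boldsymbol m_t-\bq_t)$, and you use the mean value theorem rather than integrating, which is slightly more careful since the squared distance is only known to be differentiable.
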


\begin{proof}
By positive scaling and the uniqueness of the projection,
\[
{\bq_t}/{t}
=
\Proj_S\left({\bx_t}/{t}\right).
\]
Lemma~\ref{lem:dist-derivative}, applied with \(f(t)=t\), gives
\begin{equation}\label{eq:continuous-scaled-target-distance-derivative}
\tfrac{\dd}{\dd t}\dist^2(\bx_t,tS)
=
-2\left\langle
\dot\bx_t-\tfrac{\bq_t}{t},
\bq_t-\bx_t
\right\rangle.
\end{equation}
This identity holds at every \(t\in I\). For every
\(\lambda\in(0,1)\), convexity gives
\(
\bq_t+\lambda(t\boldsymbol m_t-\bq_t)
\in
tC_t
\subset
tS.
\)
Since \(\bq_t\) is a nearest point to \(\bx_t\) in \(tS\),
\[
0
\le
\left\|
\bx_t-\bq_t-\lambda(t\boldsymbol m_t-\bq_t)
\right\|^2
-\|\bx_t-\bq_t\|^2.
\]
Expanding, dividing by \(2\lambda>0\), and letting
\(\lambda\downarrow0\) yields
\begin{equation}\label{eq:continuous-local-variational-inequality}
\left\langle
t\boldsymbol m_t-\bq_t,
\bq_t-\bx_t
\right\rangle
\ge0.
\end{equation}
Using \eqref{eq:continuous-scaled-target-dynamics} and the identity
\[
\dot\bx_t-\frac{\bq_t}{t}
=
\frac{\bq_t-\bx_t}{1-t}
+
\frac{t\boldsymbol m_t-\bq_t}{t(1-t)}
\]
in \eqref{eq:continuous-scaled-target-distance-derivative}, and then
applying \eqref{eq:continuous-local-variational-inequality} yields, for every \(t\in I\),
\begin{align}\label{eq:ineq-master}
\frac{\dd}{\dd t}\dist^2(\bx_t,tS)
&=
-\tfrac{2}{1-t}\dist^2(\bx_t,tS)
-
\tfrac{2}{t(1-t)}\left\langle
t\boldsymbol m_t-\bq_t,
\bq_t-\bx_t
\right\rangle \notag \\
&\le
-\tfrac{2}{1-t}\dist^2(\bx_t,tS).
\end{align}
When \(\dist(\bx_t,tS)=0\), one has \(\bq_t=\bx_t\).
By \eqref{eq:continuous-scaled-target-distance-derivative},
 both sides of \eqref{eq:ineq-master} are zero. Equivalently,
\begin{equation}\label{eq:continuous-scaled-target-weighted-distance}
\frac{\dd}{\dd t}
\left[
\tfrac{\dist^2(\bx_t,tS)}{(1-t)^2}
\right]
\le0
\qquad\text{for every }t\in I.
\end{equation}
Integrating this inequality and taking square roots we obtain
\[
\dist(\bx_t,tS)
\le
\tfrac{1-t}{1-s}\dist(\bx_s,sS),
\]
which gives
\eqref{eq:continuous-scaled-convex-contraction} and proves part~(i). Since
\((1-t)/(1-s)\le1\),
\(t\mapsto\dist(\bx_t,tS)\) is non-increasing. If
\(\bx_s\in sS\) for some \(s\in I\), then the right-hand side of
\eqref{eq:continuous-scaled-convex-contraction} is zero for every
\(t\in I\) with \(t\ge s\), which proves part~(ii).
\end{proof}

\subsection{Discrete-time attraction and absorption for scaled targets}


\begin{lemma}[Discrete attraction and absorption for scaled targets]\label{lem:euler-scaled-convex-contraction}
Let $S\subset\R^d$ be nonempty and closed. Let $0\le s<t\le1$, define
\[
a:=\frac{1-t}{1-s},
\qquad
b:=\frac{t-s}{1-s}.
\]
For $\bx,\by\in\R^d$, define $\bx^+:=a\bx+b\by$,
then the following hold.
\begin{enumerate}
\item[(i)] \emph{(Attraction)} Suppose that, when \(s>0\), there
exist a nearest point \(\boldsymbol q\in sS\) and a convex set
\(C\subset S\) such that
\[
\|\boldsymbol x-\boldsymbol q\|
=
\dist(\boldsymbol x,sS),
\qquad
\tfrac{\boldsymbol q}{s}\in C,
\qquad
\boldsymbol y\in C.
\]
When \(s=0\), suppose instead that \(\boldsymbol y\in S\), then
\begin{equation}\label{eq:euler-scaled-convex-contraction}
\dist(\bx^+,tS)
\le
a\,\dist(\bx,sS)= \tfrac{1-t}{1-s}\dist(\bx,sS). 
\end{equation}
\item[(ii)] \emph{(Absorption)} Suppose first that \(s>0, \boldsymbol x\in sS\),
 and there exists a convex set \(C\subset S\)
such that
\(
{\boldsymbol x}/{s}\in C,
\
\boldsymbol y\in C,
\)
then \(\boldsymbol x^+\in tS\).
If \(s=0\), suppose instead that
\(\boldsymbol x=\boldsymbol0\) and \(\boldsymbol y\in S\), then again \(\boldsymbol x^+\in tS\).
\end{enumerate}
\end{lemma}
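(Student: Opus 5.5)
The natural candidate is $t\boldsymbol q/s$ (for $s>0$) or $t\boldsymbol y$. Take the point
\[
\boldsymbol p:=a\,\tfrac{t}{s}\boldsymbol q+b\,t\boldsymbol y
\]
and check that it lies in $tS$ and is within $a\,\dist(\bx,sS)$ of $\bx^+$. The key observation is that $a+b=1$. So $\boldsymbol p/t=a(\boldsymbol q/s)+b\boldsymbol y$ is a convex combination of two points of the convex set $C\subset S$; hence $\boldsymbol p/t\in C\subset S$ and $\boldsymbol p\in tS$. Then
\[
\bx^+-\boldsymbol p=a\bigl(\bx-\tfrac{t}{s}\boldsymbol q\bigr)+b(1-t)\boldsymbol y ,
\]
which does not reduce to $a(\bx-\boldsymbol q)$. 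So this first guess is off, and the target point must be chosen more carefully.

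Set $\boldsymbol p:=a\boldsymbol q+b\,\boldsymbol y$, so that $\bx^+-\boldsymbol p=a(\bx-\boldsymbol q)$ exactly and $\|\bx^+-\boldsymbol p\|=a\,\dist(\bx,sS)$. It remains to show $\boldsymbol p\in tS$, i.e.\ $\boldsymbol p/t\in S$. Write
\[
\frac{\boldsymbol p}{t}=\frac{as}{t}\cdot\frac{\boldsymbol q}{s}+\frac{b}{t}\cdot\boldsymbol y .
\]
The weights satisfy
\[
\frac{as+b}{t}=\frac{(1-t)s+(t-s)}{t(1-s)}=\frac{t(1-s)}{t(1-s)}=1 ,
\]
and both are nonnegative since $0\le s<t\le1$. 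Hence $\boldsymbol p/t$ is a convex combination of $\boldsymbol q/s\in C$ and $\boldsymbol y\in C$, so it lies in $C\subset S$. This gives $\dist(\bx^+,tS)\le\|\bx^+-\boldsymbol p\|=a\,\dist(\bx,sS)$, which is part (i) for $s>0$. The identity $as+b=t$ is the heart of the argument: the Euler step is affine with exactly the weights that map $sS$-points and $S$-points into $tS$.

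When $s=0$ we have $a=1-t$, $b=t$, and $sS=\{\bzero\}$, so $\dist(\bx,sS)=\|\bx\|$ (up to the degenerate meaning of $0\cdot S$, which I take to be $\{\bzero\}$ since $S\neq\emptyset$). Choose $\boldsymbol p:=t\boldsymbol y\in tS$. Then $\bx^+-\boldsymbol p=(1-t)\bx$, which gives the bound $(1-t)\|\bx\|$ directly, with no convexity needed.

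For part (ii), apply the same construction with $\boldsymbol q:=\bx$, which is admissible because $\bx\in sS$ and $\bx/s\in C$. Then $\boldsymbol p=a\bx+b\boldsymbol y=\bx^+$, and the convex-combination argument above gives $\bx^+\in tC\subset tS$. For $s=0$ with $\bx=\bzero$, we get $\bx^+=t\boldsymbol y\in tS$ directly.

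The only delicate points are the weight identity $as+b=t$ and the conventions at $s=0$ and $t=1$. At $t=1$ we have $a=0$, and the bound says $\bx^+\in S$; this is consistent, because $\bx^+=\boldsymbol y\in C$ there. Nearest-point existence is given by hypothesis, so no prox-regularity is needed here.
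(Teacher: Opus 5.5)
Your proof is correct and follows the paper's argument: the comparison point $a\boldsymbol q+b\boldsymbol y$, the weight identity $as+b=t$ that makes its rescaling by $1/t$ a convex combination of $\boldsymbol q/s$ and $\boldsymbol y$ in $C$, the choice $t\boldsymbol y$ when $s=0$, and taking $\boldsymbol q=\bx$ for absorption are exactly the paper's steps. The only difference is in part (ii): you read off $\bx^+\in tC$ directly, whereas the paper gets zero distance and then uses closedness of $tS$, which is cosmetic.
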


\begin{proof}
For part~(i), first suppose that \(s>0\). Let
\(\boldsymbol q\in sS\) be the nearest point, and define
\(
\boldsymbol q^+:=a\boldsymbol q+b\boldsymbol y.
\) Since \(\boldsymbol q/s,\boldsymbol y\in C\), we have
\[
\frac{\boldsymbol q^+}{t}
=
\frac{as}{t}\frac{\boldsymbol q}{s}
+
\frac bt\boldsymbol y,
\] where \(as/t\) and \(b/t\) are nonnegative and sum to one because \(as+b=t\). Thus, \(\boldsymbol q^+/t\) is a convex combination of two points in \(C\). By convexity,
\(
\boldsymbol q^+/t\in C\subset S,
\) and therefore \(\boldsymbol q^+\in tS\).
Using \(\boldsymbol q^+\) as a comparison point in \(tS\), we obtain
\[
\begin{aligned}
\dist(\boldsymbol x^+,tS)
&\le
\|\boldsymbol x^+-\boldsymbol q^+\|
=
\|a\boldsymbol x+b\boldsymbol y
-a\boldsymbol q-b\boldsymbol y\|
\\
&=
a\|\boldsymbol x-\boldsymbol q\|
=
a\,\dist(\boldsymbol x,sS).
\end{aligned}
\]

On the other hand, if \(s=0\),  then \(a=1-t\), \(b=t\), and
\(
\boldsymbol x^+=(1-t)\boldsymbol x+t\boldsymbol y.
\) Since \(\boldsymbol y\in S\), the point \(t\boldsymbol y\) belongs to \(tS\). Hence,
\[
\begin{aligned}
\dist(\boldsymbol x^+,tS)
\le
\|\boldsymbol x^+-t\boldsymbol y\|
=
(1-t)\|\boldsymbol x\|
=
a\,\dist(\boldsymbol x,0S).
\end{aligned}
\]
This proves part~(i).

For part~(ii), first suppose that \(s>0\). Since
\(\boldsymbol x\in sS\), we may take
\(\boldsymbol q=\boldsymbol x\) in part~(i). The one-step estimate gives
\(
\dist(\boldsymbol x^+,tS)
\le
a\,\dist(\boldsymbol x,sS)
=0.
\)
Since \(tS\) is closed, it follows that \(\boldsymbol x^+\in tS\).
If \(s=0\), then \(\boldsymbol x=\boldsymbol0\), so
\(
\boldsymbol x^+=t\boldsymbol y\in tS.
\)
This proves part~(ii).
\end{proof}

\section{Proofs for Section \ref{sec:uncond}}\label{app:uncond-proofs}

The unconditional proofs below are organized by stage. Each stage has a
continuous ODE statement and an Euler-discrete counterpart. In both cases,
the normalized projection point and the posterior mean are placed in a
convex subset of the static target. Lemma~\ref{lem:continuous-scaled-convex-contraction}
turns this geometry into continuous attraction and conditional absorption,
while parts~(i)--(ii) of
Lemma~\ref{lem:euler-scaled-convex-contraction} give its exact
grid-level counterparts.

\subsection{Common posterior-mean facts}

The next fact is the standard barycenter property of convex hulls; see \citet{bogachev2007measure} for the compact-support characterization and \citet{rockafellar1970convex} for the separation and finite-dimensional convex-hull results used in the argument. Since this observation is repeatedly used for posterior means below, for completeness, we offer the proof here.

\begin{lemma}[Posterior means stay in the closed convex hull]\label{lem:mean-in-closed-conv}
Let $\mu \in \Prob(\R^d)$ be a probability measure with finite first moment, and suppose $\supp(\mu) \subset \mathcal{D} \subset \R^d$. Then
$\int_{\R^d} \by\,\mu(\dd \by)\in\overline{\Conv(\mathcal{D})}.$
If \(\mathcal D\) is compact, then \(\operatorname{Conv}(\mathcal D)\) is compact and therefore closed. In this case,
$\int_{\R^d} \by\,\mu(\dd \by)\in\Conv(\mathcal{D}).$
\end{lemma}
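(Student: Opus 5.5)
We will prove the barycenter statement by a separation argument. Write $\bm := \int \by\,\mu(\dd\by)$, which is well defined because $\mu$ has finite first moment, and set $K := \overline{\Conv(\mathcal D)}$. This is a nonempty closed convex set. We argue by contradiction and suppose $\bm\notin K$.

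Then the strict separation theorem for a point and a closed convex set in $\R^d$ \citep{rockafellar1970convex} gives a vector $\ba\neq\bzero$ and a scalar $\beta$ with
\[
\langle \ba,\bm\rangle > \beta \ge \sup_{\bz\in K}\langle \ba,\bz\rangle .
\]
Since $\supp(\mu)\subset\mathcal D\subset K$ and $\mu(\R^d\setminus\supp(\mu))=0$, we have $\langle\ba,\by\rangle\le\beta$ for $\mu$-a.e.\ $\by$. The map $\by\mapsto\langle\ba,\by\rangle$ is integrable by the first-moment assumption, so we may integrate this inequality against the probability measure $\mu$ and use linearity of the integral. This yields $\langle\ba,\bm\rangle\le\beta$, a contradiction. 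Hence $\bm\in\overline{\Conv(\mathcal D)}$.

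The only point needing care is that $\mu$ charges nothing outside its support. This holds in $\R^d$ because the support is the complement of the largest open null set, which exists by second countability. We use it here without further comment.

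For the compact case we use Carathéodory's theorem. Every point of $\Conv(\mathcal D)$ is a convex combination of at most $d+1$ points of $\mathcal D$. So $\Conv(\mathcal D)$ is the image of the compact set $\Delta_{d}\times\mathcal D^{d+1}$ under the continuous map $(\lambda,\by_0,\dots,\by_d)\mapsto\sum_i\lambda_i\by_i$, where $\Delta_d$ is the standard simplex. A continuous image of a compact set is compact, hence closed, so $\overline{\Conv(\mathcal D)}=\Conv(\mathcal D)$ and the second claim follows.

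Each step is routine. The main thing to check is that the separation step correctly uses the closedness of $K$, which is what makes strict separation available.
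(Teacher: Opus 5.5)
Your proof is correct and follows essentially the same route as the paper: you strictly separate the mean from $\overline{\Conv(\mathcal D)}$, integrate the $\mu$-a.e.\ inequality to reach a contradiction, and then use closedness of $\Conv(\mathcal D)$ in the compact case. The one difference is that you justify that closedness via Carath\'eodory's theorem (the hull is a continuous image of a compact set), whereas the paper simply asserts that the convex hull of a compact set in $\R^d$ is compact.
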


\begin{proof}
Write $\bar{\by}_\mu := \int_{\R^d} \by\,\mu(\dd \by).$
Suppose for contradiction that $\bar{\by}_\mu \notin \overline{\Conv(\mathcal{D})}$. Since $\overline{\Conv(\mathcal{D})}$ is closed and convex, the finite-dimensional separating hyperplane theorem yields some $\ba \in \R^d$ and $\alpha \in \R$ such that
\begin{equation*}
\langle \ba,\bar{\by}_\mu\rangle > \alpha \ge \langle \ba,\bz\rangle
\qquad
\text{for all } \bz \in \overline{\Conv(\mathcal{D})}.
\end{equation*}
Because $\supp(\mu) \subset \mathcal{D} \subset \overline{\Conv(\mathcal{D})}$, we have $\langle \ba,\by\rangle \le \alpha$ for $\mu$-almost every $\by$. Integrating gives
$\langle \ba,\bar{\by}_\mu\rangle=\int_{\R^d} \langle \ba,\by\rangle\,\mu(\dd \by)\le\alpha,$
which contradicts the strict separation. Hence $\bar{\by}_\mu \in \overline{\Conv(\mathcal{D})}$. If $\mathcal{D}$ is compact. In $\R^d$, the convex hull of a compact set is compact, hence closed. Therefore
$\overline{\Conv(\mathcal{D})} = \Conv(\mathcal{D}),$ 
and the second claim follows.
\end{proof}

\subsection{Continuous early-stage mean attraction}

\begin{lemma}[A priori norm bound]\label{lem:early-norm-bound}
Let $\bx_t$ solve the unconditional flow ODE with initial condition $\bx_0$, and let
$M_{\mathcal D}:=\max_{\by\in\mathcal D}\|\by\|$.
Then, for every $t\in[0,1)$ for which the solution is defined,
\begin{equation*}
\|\bx_t\|
\le
(1-t)\|\bx_0\|+tM_{\mathcal D}.
\end{equation*}
\end{lemma}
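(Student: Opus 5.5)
The plan is to rewrite the ODE as a linear ODE with a bounded forcing term and integrate it exactly. Since $\vu(t,\bx)=(\hat{\by}_t(\bx)-\bx)/(1-t)$, the trajectory satisfies
\[
\frac{\dd}{\dd t}\Bigl(\frac{\bx_t}{1-t}\Bigr)
=\frac{\dot\bx_t}{1-t}+\frac{\bx_t}{(1-t)^2}
=\frac{\hat{\by}_t(\bx_t)}{(1-t)^2},
\]
which holds classically on any $[0,T]$, $T<1$, by Theorem~\ref{thm:wellposed-flow}.

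Integrating from $0$ to $t$ gives the representation
\[
\bx_t=(1-t)\bx_0+(1-t)\int_0^t\frac{\hat{\by}_s(\bx_s)}{(1-s)^2}\,\dd s .
\]
By Lemma~\ref{lem:mean-in-closed-conv}, $\hat{\by}_s(\bx_s)\in\Conv(\mathcal D)$. Every point of $\Conv(\mathcal D)$ has norm at most $M_{\mathcal D}$, since the norm is convex and is bounded by $M_{\mathcal D}$ on $\mathcal D$. Applying the triangle inequality and using $\int_0^t(1-s)^{-2}\dd s=t/(1-t)$, we get
\[
\|\bx_t\|\le(1-t)\|\bx_0\|+(1-t)M_{\mathcal D}\frac{t}{1-t}=(1-t)\|\bx_0\|+tM_{\mathcal D}.
\]

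The only point needing care is justifying the integration, i.e.\ the continuity of $s\mapsto\hat{\by}_s(\bx_s)$ on $[0,t]$. This follows from Theorem~\ref{thm:wellposed-flow}(i) together with the continuity of $\bx_\cdot$, so no real obstacle arises. Alternatively, one can note that $\bx_t=(1-t)\bx_0+t\,\bar{\boldsymbol m}_t$, where $\bar{\boldsymbol m}_t$ is a weighted average of posterior means, and read off the bound directly.
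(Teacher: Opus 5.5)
Your proof is correct and matches the paper's argument: the integrating-factor identity $\frac{\dd}{\dd t}\bigl(\bx_t/(1-t)\bigr)=\hat{\by}_t(\bx_t)/(1-t)^2$, integration from $0$ to $t$, and $\int_0^t(1-s)^{-2}\,\dd s=t/(1-t)$. The only cosmetic difference is that you bound $\|\hat{\by}_s(\bx_s)\|\le M_{\mathcal D}$ through membership in $\Conv(\mathcal D)$, whereas the paper bounds it directly by $\int_{\mathcal D}\|\by\|\,\eta_s^{\bx_s}(\dd\by)\le M_{\mathcal D}$.
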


\begin{proof}
Since $\hat{\by}_t(\bx_t)$ is the mean of a probability measure supported on $\mathcal D$,
\begin{equation*}
\|\hat{\by}_t(\bx_t)\|
\le
\int_{\mathcal D}\|\by\|\,\eta_t^{\bx_t}(\dd\by)
\le
M_{\mathcal D}.
\end{equation*}
The unconditional flow ODE \eqref{eq:uncond flow} gives
$
\frac{\dd}{\dd t}\left(\frac{\bx_t}{1-t}\right)
=
\frac{\hat{\by}_t(\bx_t)}{(1-t)^2}. $
Integrating from $0$ to $t$ and multiplying by $1-t$ yields
\begin{equation*}
\bx_t
=
(1-t)\bx_0
+
(1-t)\int_0^t
\frac{\hat{\by}_s(\bx_s)}{(1-s)^2}\,\dd s.
\end{equation*}
Therefore,
\begin{align*}
\|\bx_t\|
\le
(1-t)\|\bx_0\|
+
(1-t)M_{\mathcal D}\int_0^t\frac{\dd s}{(1-s)^2} =
(1-t)\|\bx_0\|+tM_{\mathcal D}.
\end{align*}
\end{proof}

\subsubsection{Proof of Theorem~\ref{thm:early-mean}}

\begin{proof}\label{proof:thm:early-mean}
The expression defining $r_{\mathrm{mean}}$ satisfies
\[
M_{\mathcal D}
\left[
\exp\left(
\tfrac{
tD_{\emptyset}
\bigl(
(1-t)\|\bx_0\|+2tM_{\mathcal D}
\bigr)
}{
(1-t)^2
}
\right)-1
\right]
=
\bigO(t)
\qquad
\text{as }t\downarrow0,
\]
while $(1-t)\|\bx_0\|\to\|\bx_0\|>0$. Hence, the admissible choices
asserted in the main-text setup exist. Fix $t_{\mathrm{mean}}$ and
$r_{\mathrm{mean}}$ as there. By the choice of $t_{\mathrm{mean}}$,
\[
(1-t_{\mathrm{mean}})\|\bx_0\|
>
t_{\mathrm{mean}}
\bigl(
D_\emptyset+r_{\mathrm{mean}}
\bigr).
\]

We first prove part~(i) of the theorem, which is the non-entry property.
As in the proof of Lemma~\ref{lem:early-norm-bound}, the flow ODE gives
\(
\bx_t
=
(1-t)\bx_0
+
(1-t)\int_0^t
\frac{\hat{\by}_s(\bx_s)}{(1-s)^2}\,\dd s.
\)
Since
\(
(1-t)\int_0^t\tfrac{\dd s}{(1-s)^2}=t,
\)
subtracting \(t\bar{\boldsymbol y}\)  gives
\begin{equation}
    \label{eq:yhat_ybar}
\bx_t-t\bar{\by}
=
(1-t)\bx_0
+
(1-t)\int_0^t
\tfrac{
\hat{\by}_s(\bx_s)-\bar{\by}
}{
(1-s)^2
}\,\dd s.
\end{equation}
By Lemma~\ref{lem:mean-in-closed-conv},
$\hat{\by}_s(\bx_s),\bar{\by}\in\Conv(\mathcal D)$,
hence,
$
\|\hat{\by}_s(\bx_s)-\bar{\by}\|
\le
\diam(\Conv(\mathcal D))
=
D_\emptyset.
$
Applying the reverse triangle inequality to \eqref{eq:yhat_ybar}, 
we obtain
\[\begin{aligned}
\|\boldsymbol x_t-t\bar{\boldsymbol y}\|
&\ge
(1-t)\|\boldsymbol x_0\|
-
(1-t)\int_0^t
\tfrac{
\left\|
\hat{\boldsymbol y}_s(\boldsymbol x_s)
-\bar{\boldsymbol y}
\right\|
}{
(1-s)^2
}\,\mathrm ds\\
&\ge
(1-t)\|\boldsymbol x_0\|
-
(1-t)D_\emptyset
\int_0^t\tfrac{\mathrm ds}{(1-s)^2}\\
&=
(1-t)\|\boldsymbol x_0\|-tD_\emptyset.
\end{aligned}
\]
Using
$tB_{r_{\mathrm{mean}}}(\bar{\by})
=B_{tr_{\mathrm{mean}}}(t\bar{\by})$,
we obtain, for every $t\in[0,t_{\mathrm{mean}}]$,
\begin{align*}
\dist\!\left(
\bx_t,
tB_{r_{\mathrm{mean}}}(\bar{\by})
\right)
&=
\max\left\{
\|\boldsymbol x_t-t\bar{\boldsymbol y}\|
-tr_{\mathrm{mean}},
0
\right\}\ge
(1-t)\|\bx_0\|-t(D_\emptyset+r_{\mathrm{mean}})\\
&\ge
(1-t_{\mathrm{mean}})\|\bx_0\|
-
t_{\mathrm{mean}}(D_\emptyset+r_{\mathrm{mean}})
>0,
\end{align*}
where the last inequality follows from the choice of
$t_{\mathrm{mean}}$ above.
This proves part~(i).

We next show that the posterior mean stays inside the fixed mean ball
$B_{r_{\mathrm{mean}}}(\bar{\by})$.
Fix $t\in(0,t_{\mathrm{mean}}]$ and let
\begin{equation*}
\begin{aligned}
w_t(\by)
:=
\exp\!\left(-\tfrac{\|\bx_t-t\by\|^2}{2(1-t)^2}\right).
\end{aligned}
\end{equation*}
For any $\by_1,\by_2 \in \mathcal{D}$,
\[
\begin{aligned}
\left|
\log w_t(\boldsymbol y_1)
-
\log w_t(\boldsymbol y_2)
\right|
&=
\frac{
\left|
\|\boldsymbol x_t-t\boldsymbol y_1\|^2
-
\|\boldsymbol x_t-t\boldsymbol y_2\|^2
\right|
}{
2(1-t)^2
}\\ &=
\frac{
\left|
2t\langle
\boldsymbol x_t,
\boldsymbol y_2-\boldsymbol y_1
\rangle
+
t^2
\bigl(
\|\boldsymbol y_1\|^2-\|\boldsymbol y_2\|^2
\bigr)
\right|
}{
2(1-t)^2
}.
\end{aligned}
\]
Since
\(
\|\boldsymbol y_1-\boldsymbol y_2\|
\le D_\emptyset
\) 
and
\(
\begin{aligned}
\left|
\|\boldsymbol y_1\|^2-\|\boldsymbol y_2\|^2
\right|
=
\left|
\left\langle
\boldsymbol y_1-\boldsymbol y_2,
\boldsymbol y_1+\boldsymbol y_2
\right\rangle
\right|\le
2D_\emptyset M_{\mathcal D},
\end{aligned}
\)
we obtain
\begin{equation}
\label{eq:log_wt}
\left|
\log w_t(\boldsymbol y_1)
-
\log w_t(\boldsymbol y_2)
\right|
\le
\frac{
tD_\emptyset\|\boldsymbol x_t\|
+
t^2D_\emptyset M_{\mathcal D}
}{
(1-t)^2
}
\le
\frac{
tD_\emptyset
\bigl(
(1-t)\|\boldsymbol x_0\|
+
2tM_{\mathcal D}
\bigr)
}{
(1-t)^2
},
\end{equation}
where the last inequality follows from Lemma~\ref{lem:early-norm-bound}.
Exponentiating \eqref{eq:log_wt}, we obtain
\[
\begin{aligned}
&
\exp\left(
-\tfrac{
tD_\emptyset
\left(
(1-t)\|\boldsymbol x_0\|
+
2tM_{\mathcal D}
\right)
}{
(1-t)^2
}
\right)\le
\tfrac{
w_t(\boldsymbol y_1)
}{
w_t(\boldsymbol y_2)
}\le
\exp\left(
\tfrac{
tD_\emptyset
\left(
(1-t)\|\boldsymbol x_0\|
+
2tM_{\mathcal D}
\right)
}{
(1-t)^2
}
\right).
\end{aligned}
\]Fixing \(\boldsymbol y_1=\boldsymbol y\), multiplying through by
\(w_t(\boldsymbol y_2)\), and integrating with respect to
\(p_\emptyset(\mathrm d\boldsymbol y_2)\), we obtain
\[
\begin{aligned}
&
\exp\left(
-\tfrac{
tD_\emptyset
\left(
(1-t)\|\boldsymbol x_0\|
+
2tM_{\mathcal D}
\right)
}{
(1-t)^2
}
\right)\le
\tfrac{
w_t(\boldsymbol y)
}{
\int_{\mathcal D}
w_t(\boldsymbol z)\,
p_\emptyset(\mathrm d\boldsymbol z)
}\le
\exp\left(
\tfrac{
tD_\emptyset
\left(
(1-t)\|\boldsymbol x_0\|
+
2tM_{\mathcal D}
\right)
}{
(1-t)^2
}
\right).
\end{aligned}
\]Because the normalized weight is positive, taking logarithms gives
\begin{equation}
\label{eq:log_wty}
\begin{aligned}
\left|
\log\left(
\tfrac{
w_t(\boldsymbol y)
}{
\int_{\mathcal D}
w_t(\boldsymbol z)\,
p_\emptyset(\mathrm d\boldsymbol z)
}
\right)
\right|\le
\tfrac{
tD_\emptyset
\bigl(
(1-t)\|\boldsymbol x_0\|
+
2tM_{\mathcal D}
\bigr)
}{
(1-t)^2
}.
\end{aligned}
\end{equation}
Since 
\(
|u|\le L
\) implies \(
|e^u-1|\le e^L-1,
\) applying it to the logarithm in \eqref{eq:log_wty}, we obtain
\[
\begin{aligned}
&
\left|
\tfrac{
w_t(\boldsymbol y)
}{
\int_{\mathcal D}
w_t(\boldsymbol z)\,
p_\emptyset(\mathrm d\boldsymbol z)
}
-1
\right|\le
\exp\left(
\tfrac{
tD_\emptyset
\left(
(1-t)\|\boldsymbol x_0\|
+
2tM_{\mathcal D}
\right)
}{
(1-t)^2
}
\right)-1.
\end{aligned}
\]
Since \(\hat{\boldsymbol y}_t(\boldsymbol x_t)
=
\tfrac{
\int_{\mathcal D}
\boldsymbol y\,w_t(\boldsymbol y)\,
p_\emptyset(\mathrm d\boldsymbol y)
}{
\int_{\mathcal D}
w_t(\boldsymbol y)\,
p_\emptyset(\mathrm d\boldsymbol y)
},\)
we obtain
\begin{equation}
\label{eq:haty-bary}
    \begin{aligned}
\left\|
\widehat{\boldsymbol y}_t(\boldsymbol x_t)
-\bar{\boldsymbol y}
\right\|&=\left\|
\int_{\mathcal D}
\boldsymbol y
\left[
\tfrac{
w_t(\boldsymbol y)
}{
\int_{\mathcal D}
w_t(\boldsymbol z)\,
p_\emptyset(\mathrm d\boldsymbol z)
}
-1
\right]
p_\emptyset(\mathrm d\boldsymbol y)\right\|\\ &\le
M_{\mathcal D}
\left[
\exp\left(
\tfrac{
tD_\emptyset
\left(
(1-t)\|\boldsymbol x_0\|
+
2tM_{\mathcal D}
\right)
}{
(1-t)^2
}
\right)-1
\right].
\end{aligned}
\end{equation}
where $\exp\left(
\tfrac{
tD_\emptyset
\left(
(1-t)\|\boldsymbol x_0\|
+
2tM_{\mathcal D}
\right)
}{
(1-t)^2
}
\right)$ is nondecreasing in  $t\in[0,1)$.
It follows from the definition of $r_{\mathrm{mean}}$ that
\[
\|\hat{\by}_t(\bx_t)-\bar{\by}\|
\le
r_{\mathrm{mean}},
\qquad
t\in(0,t_{\mathrm{mean}}],
\]
and the same conclusion is immediate at $t=0$.
Thus
\begin{equation}
\label{eq:thaty}
    \hat{\by}_t(\bx_t)
\in
 B_{r_{\mathrm{mean}}}(\bar{\by}),
\qquad
t\in[0,t_{\mathrm{mean}}].
\end{equation}

\noindent
\textbf{Attraction.}
The set \(B_{r_{\mathrm{mean}}}(\bar{\by})\) is nonempty, closed, and convex, so projection onto its positive scaling is well-defined. Since \eqref{eq:thaty} holds, 
applying part~(i) of
Lemma~\ref{lem:continuous-scaled-convex-contraction} on
\(I=(0,t_{\mathrm{mean}}]\) with
\(
S=C_t=B_{r_{\mathrm{mean}}}(\bar{\by}),
\
\boldsymbol m_t=\hat{\by}_t(\bx_t)
\) gives, for
\(0<s\le t\le t_{\mathrm{mean}}\),
\begin{equation}\label{eq:early-positive-time-attraction}
\dist\bigl(\bx_t,tB_{r_{\mathrm{mean}}}(\bar{\by})\bigr)
\le
\tfrac{1-t}{1-s}
\dist\bigl(\bx_s,sB_{r_{\mathrm{mean}}}(\bar{\by})\bigr).
\end{equation}
Lastly, to include $s=0$, observe that
$sB_{r_{\mathrm{mean}}}(\bar{\by})
=B_{sr_{\mathrm{mean}}}(s\bar{\by})$ and $\bx_s\to\bx_0$ as
$s\downarrow0$. Hence
\[
\dist\bigl(\bx_s,sB_{r_{\mathrm{mean}}}(\bar{\by})\bigr)
=
\max\{\|\bx_s-s\bar{\by}\|-s r_{\mathrm{mean}},0\}
\rightarrow
\|\bx_0\|.
\]
Letting $s\downarrow0$ completes part~(ii) of the theorem and gives
\[
\dist\bigl(\bx_t,tB_{r_{\mathrm{mean}}}(\bar{\by})\bigr)
\le
(1-t)\|\bx_0\|,
\qquad
t\in[0,t_{\mathrm{mean}}],
\]
which is part~(iii) of the theorem.

Finally, part~(i) makes the distance positive at every time in
\([0,t_{\mathrm{mean}}]\). Therefore, whenever
\(0\le s<t\le t_{\mathrm{mean}}\), the estimate in part~(ii) and
\((1-t)/(1-s)<1\) show that the distance is strictly decreasing.

\end{proof}

\subsection{Discrete  early-stage mean attraction}\label{proof:cor:early-mean-euler}


\begin{lemma}[Euler affine representation and bounds]\label{lem:early-euler-apriori}
Consider the unconditional Euler iterates on the uniform grid
$t_i=i\Delta t$, $i=0,\ldots,K$, with stepsize $\Delta t=1/K$.
Then, for every $i=0,\ldots,K$, there exists
$\ba_i\in\Conv(\mathcal D)$ such that
\[
\bz_i=(1-t_i)\bx_0+t_i\ba_i.
\]
Consequently,
\[
\|\bz_i\|
\le
(1-t_i)\|\bx_0\|+t_iM_{\mathcal D} \quad
\text{ and } \quad
\|\bz_i-\bx_0\|
\le
t_i\bigl(\|\bx_0\|+M_{\mathcal D}\bigr).
\]
\end{lemma}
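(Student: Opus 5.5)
The representation follows by induction on $i$, using the explicit form of the Euler step. For $i=0$ we have $t_0=0$, so $\bz_0=\bx_0$ holds with any $\ba_0\in\Conv(\mathcal D)$, for instance $\ba_0=\bar{\by}$. For the inductive step, write the Euler update with $\vv_\star=\vu$ and $\Delta t=t_{i+1}-t_i$ as
\[
\bz_{i+1}=\bz_i+\Delta t\,\frac{\hat{\by}_{t_i}(\bz_i)-\bz_i}{1-t_i}
=\tfrac{1-t_{i+1}}{1-t_i}\,\bz_i+\tfrac{t_{i+1}-t_i}{1-t_i}\,\hat{\by}_{t_i}(\bz_i).
\]
This is the affine combination $\bx^+=a\bx+b\by$ of Lemma~\ref{lem:euler-scaled-convex-contraction}. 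The step is valid for $i\le K-1$, since then $t_i<1$. Substituting the hypothesis $\bz_i=(1-t_i)\bx_0+t_i\ba_i$ gives
\[
\bz_{i+1}=(1-t_{i+1})\bx_0+\tfrac{(1-t_{i+1})t_i}{1-t_i}\,\ba_i+\tfrac{t_{i+1}-t_i}{1-t_i}\,\hat{\by}_{t_i}(\bz_i).
\]

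The two coefficients on $\ba_i$ and $\hat{\by}_{t_i}(\bz_i)$ are nonnegative, and they sum to
\[
\tfrac{(1-t_{i+1})t_i+t_{i+1}-t_i}{1-t_i}=\tfrac{t_{i+1}(1-t_i)}{1-t_i}=t_{i+1}.
\]
When $t_{i+1}>0$, set
\[
\ba_{i+1}:=\tfrac{1}{t_{i+1}}\Bigl(\tfrac{(1-t_{i+1})t_i}{1-t_i}\ba_i+\tfrac{t_{i+1}-t_i}{1-t_i}\hat{\by}_{t_i}(\bz_i)\Bigr).
\]
This is a convex combination of $\ba_i\in\Conv(\mathcal D)$ and $\hat{\by}_{t_i}(\bz_i)$. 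The latter lies in $\Conv(\mathcal D)$ by Lemma~\ref{lem:mean-in-closed-conv}, since the posterior $\eta_{t_i}^{\bz_i}$ is supported in the compact set $\mathcal D$. Convexity of $\Conv(\mathcal D)$ therefore gives $\ba_{i+1}\in\Conv(\mathcal D)$, which closes the induction.

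The step to $i=K$ needs a brief check. Here $t_{K-1}<1$ and $t_K=1$, so the coefficient $1-t_{i+1}$ vanishes, but the formula remains valid. The only denominator is $1-t_i$ with $i\le K-1$, so no division by zero occurs. In this step $\ba_K$ is a convex combination whose weight on $\ba_{K-1}$ is zero, so $\ba_K=\hat{\by}_{t_{K-1}}(\bz_{K-1})$.

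The bounds then follow from the triangle inequality. Every point of $\Conv(\mathcal D)$ has norm at most $M_{\mathcal D}$, because the norm is convex. Hence $\|\bz_i\|\le(1-t_i)\|\bx_0\|+t_iM_{\mathcal D}$. For the second bound, write $\bz_i-\bx_0=t_i(\ba_i-\bx_0)$, which gives $\|\bz_i-\bx_0\|\le t_i(\|\bx_0\|+M_{\mathcal D})$.

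No step is a serious obstacle. The only points needing care are the coefficient bookkeeping, namely confirming that the weights sum to $t_{i+1}$, and the endpoint $t_K=1$.
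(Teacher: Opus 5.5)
Your proof is correct and follows the paper's argument essentially verbatim. The paper also rewrites the Euler step as $a_i\bz_i+b_i\hat{\by}_{t_i}(\bz_i)$ with $a_it_i+b_i=t_{i+1}$, builds $\ba_{i+1}$ as the same convex combination of $\ba_i$ and the posterior mean (placed in $\Conv(\mathcal D)$ by Lemma~\ref{lem:mean-in-closed-conv}), and obtains both norm bounds by the triangle inequality; your explicit check of the final step to $t_K=1$ is a small addition that the paper leaves implicit.
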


\begin{proof}
Since \(\mathcal D\) is nonempty and compact,
by Lemma~\ref{lem:mean-in-closed-conv}, we have 
\(
\hat{\by}_{t_i}(\bz_i)\in\Conv(\mathcal D),
 i=0,\ldots,K-1.
\)
Using $\vv_\star=\vu$, the Euler update can be written as
\[
\begin{aligned}
\boldsymbol{z}_{i+1}
&=
\boldsymbol{z}_i
+
\Delta t\,
\tfrac{
\widehat{\boldsymbol{y}}_{t_i}(\boldsymbol{z}_i)-\boldsymbol{z}_i
}{
1-t_i
}=
a_i\boldsymbol{z}_i
+
b_i\widehat{\boldsymbol{y}}_{t_i}(\boldsymbol{z}_i),
\end{aligned}
\]
where
\(
a_i:=\tfrac{1-t_{i+1}}{1-t_i},
b_i:=\tfrac{t_{i+1}-t_i}{1-t_i}.
\)
For $i=0,\ldots,K-1$, we have
\[
a_i,b_i\ge0,
\qquad
a_i+b_i=1,
\qquad
a_it_i+b_i=t_{i+1}.
\]

Here, we prove the affine representation by induction. Since $\mathcal D$ is
nonempty, choose any $\ba_0\in\Conv(\mathcal D)$, then
\(
\bz_0=(1-t_0)\bx_0+t_0\ba_0,
\) since  $t_0=0$. 
Now suppose that
\[
\bz_i=(1-t_i)\bx_0+t_i\ba_i, \text{ for some  } \ba_i\in\Conv(\mathcal D). 
\]
 Since $t_{i+1}>0$, define
\(
\ba_{i+1}
:=
\tfrac{a_it_i}{t_{i+1}}\ba_i
+
\tfrac{b_i}{t_{i+1}}
\hat{\by}_{t_i}(\bz_i). 
\)
The two coefficients are nonnegative and satisfy
\(
\tfrac{a_it_i}{t_{i+1}}
+
\tfrac{b_i}{t_{i+1}}
=1.
\)
Hence $\ba_{i+1}\in\Conv(\mathcal D)$. Moreover, we have 
\(
\bz_{i+1}
=
(1-t_{i+1})\bx_0+t_{i+1}\ba_{i+1}.
\)
This proves the affine representation.

Finally, since $\|\ba_i\|\le M_{\mathcal D}$, we obtain 
\[
\begin{aligned}
\|\boldsymbol{z}_i\|
&=
\left\|
(1-t_i)\boldsymbol{x}_0+t_i\boldsymbol{a}_i
\right\|
\le
(1-t_i)\|\boldsymbol{x}_0\|
+t_i\|\boldsymbol{a}_i\|
\le
(1-t_i)\|\boldsymbol{x}_0\|
+t_iM_{\mathcal D}.
\end{aligned}
\]
Also,
\(
\bz_i-\bx_0=t_i(\ba_i-\bx_0),
\)
and therefore
\[
\|\bz_i-\bx_0\|
\le
t_i\bigl(\|\bx_0\|+M_{\mathcal D}\bigr).
\]

\end{proof}

\begin{lemma}[Early Euler posterior localization]\label{lem:early-euler-posterior-localization}
In the setting of Theorem~\ref{cor:early-mean-euler}, consider the
unconditional Euler iterates.
Then, for every grid point satisfying $t_i\le t_{\mathrm{mean}}$,
\begin{equation}\label{eq:early-euler-posterior-localization}
\|\hat{\by}_{t_i}(\bz_i)-\bar{\by}\|
\le
r_{\mathrm{mean}}.
\end{equation}
\end{lemma}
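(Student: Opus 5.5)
The plan is to repeat the continuous argument from the proof of Theorem~\ref{thm:early-mean}, with Lemma~\ref{lem:early-norm-bound} replaced by its discrete analogue, Lemma~\ref{lem:early-euler-apriori}.

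Fix a grid index \(i\) with \(t_i\le t_{\mathrm{mean}}\). If \(t_i=0\), the posterior mean \(\hat{\by}_0(\bz_0)\) has weights \(\exp(-\|\bz_0\|^2/2)\) that do not depend on \(\by\). Hence \(\eta_0^{\bz_0}=p_\emptyset\), so \(\hat{\by}_0(\bz_0)=\bar{\by}\) and the claim is trivial.

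For \(t_i>0\), set \(w(\by):=\exp\bigl(-\|\bz_i-t_i\by\|^2/(2(1-t_i)^2)\bigr)\). Take \(\by_1,\by_2\in\mathcal D\). Expanding the squares exactly as in \eqref{eq:log_wt}, and using \(\|\by_1-\by_2\|\le D_\emptyset\) and \(\bigl|\|\by_1\|^2-\|\by_2\|^2\bigr|\le 2D_\emptyset M_{\mathcal D}\), gives
\[
|\log w(\by_1)-\log w(\by_2)|\le \tfrac{t_iD_\emptyset\|\bz_i\|+t_i^2D_\emptyset M_{\mathcal D}}{(1-t_i)^2}.
\]
Lemma~\ref{lem:early-euler-apriori} supplies \(\|\bz_i\|\le(1-t_i)\|\bx_0\|+t_iM_{\mathcal D}\), the same bound the continuous proof used. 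The right-hand side is therefore at most
\[
L(t_i):=\tfrac{t_iD_\emptyset((1-t_i)\|\bx_0\|+2t_iM_{\mathcal D})}{(1-t_i)^2}.
\]
Next, fix \(\by_1=\by\), multiply by \(w(\by_2)\) and integrate in \(\by_2\). This bounds the normalized weight \(w(\by)/\int w\,\dd p_\emptyset\) between \(e^{-L(t_i)}\) and \(e^{L(t_i)}\). Since \(|u|\le L\) implies \(|e^u-1|\le e^L-1\), the normalized weight differs from \(1\) by at most \(e^{L(t_i)}-1\). Writing \(\hat{\by}_{t_i}(\bz_i)-\bar{\by}=\int\by\,[\text{normalized weight}-1]\,p_\emptyset(\dd\by)\) then gives
\[
\|\hat{\by}_{t_i}(\bz_i)-\bar{\by}\|\le M_{\mathcal D}\bigl(e^{L(t_i)}-1\bigr)=r(t_i).
\]

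It remains to compare \(r(t_i)\) with \(r(t_{\mathrm{mean}})=r_{\mathrm{mean}}\). We need \(L\) to be nondecreasing on \([0,1)\). Write
\[
L(t)=D_\emptyset\Bigl(\tfrac{t}{1-t}\|\bx_0\|+\tfrac{2t^2}{(1-t)^2}M_{\mathcal D}\Bigr).
\]
Both summands are nondecreasing in \(t\), so \(L\) is as well. Hence \(r(t_i)\le r(t_{\mathrm{mean}})=r_{\mathrm{mean}}\), which proves \eqref{eq:early-euler-posterior-localization}.

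The only real point is that the a priori bound of Lemma~\ref{lem:early-euler-apriori} has the same form as in the continuous case; that lemma is already proved, and the rest is direct transcription. The hypothesis \(\Delta t\le t_{\mathrm{mean}}/2\) is not needed for this lemma; it presumably matters only for the non-entry part of Theorem~\ref{cor:early-mean-euler}.
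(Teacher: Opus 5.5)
Your proposal is correct and follows the paper's proof essentially step for step: reuse the weight-ratio computation behind \eqref{eq:haty-bary} at $(t_i,\bz_i)$, insert the discrete a priori bound of Lemma~\ref{lem:early-euler-apriori}, and conclude by monotonicity of the exponent in $t_i$, with the $t_i=0$ case trivial since $\hat{\by}_0(\bx_0)=\bar{\by}$. One small correction to your side remark: the condition $\Delta t\le t_{\mathrm{mean}}/2$ is not used for the non-entry part either; the paper invokes it only to ensure that the early grid segment contains the nonzero grid points $t_1$ and $t_2$, so that the statement is not vacuous.
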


\begin{proof}
Fix an index $i$ such that $t_i\le t_{\mathrm{mean}}$.
The calculation of \eqref{eq:haty-bary} in the proof of
Theorem~\ref{thm:early-mean}, applied at $t=t_i$ and $\bx=\bz_i$,
gives
\[
\left\|
\hat{\by}_{t_i}(\bz_i)-\bar{\by}
\right\|
\le
M_{\mathcal D}
\left[
\exp\left(
\tfrac{
t_iD_\emptyset\|\bz_i\|
+
t_i^2D_\emptyset M_{\mathcal D}
}{
(1-t_i)^2
}
\right)-1
\right].
\]
This estimate also covers $t_i=0$, for which
$\hat{\by}_0(\bx_0)=\bar{\by}$.
By Lemma~\ref{lem:early-euler-apriori},
\(
\|\bz_i\|
\le
(1-t_i)\|\bx_0\|+t_iM_{\mathcal D}.
\)
Therefore,
\[
\left\|
\hat{\by}_{t_i}(\bz_i)-\bar{\by}
\right\|
\le
M_{\mathcal D}
\left[
\exp\left(
\tfrac{
t_iD_\emptyset
\bigl(
(1-t_i)\|\bx_0\|
+
2t_iM_{\mathcal D}
\bigr)
}{
(1-t_i)^2
}
\right)-1
\right].
\]
The exponent equals
\[
D_\emptyset
\left[
\tfrac{t_i}{1-t_i}\|\bx_0\|
+
2M_{\mathcal D}
\left(
\tfrac{t_i}{1-t_i}
\right)^2
\right],
\]
which is nondecreasing in $t_i\in[0,1)$. Since
$t_i\le t_{\mathrm{mean}}$, the definition of
$r_{\mathrm{mean}}$ yields
\[
\left\|
\hat{\by}_{t_i}(\bz_i)-\bar{\by}
\right\|
\le
r_{\mathrm{mean}}.
\]
\end{proof}

\subsubsection{Proof of Theorem~\ref{cor:early-mean-euler}}

\begin{proof}
By Lemma~\ref{lem:early-euler-apriori}, for every grid point there
exists $\ba_i\in\Conv(\mathcal D)$ such that
\[
\bz_i=(1-t_i)\bx_0+t_i\ba_i.
\]
Since $\bar{\by}\in\Conv(\mathcal D)$,
\(
\|\ba_i-\bar{\by}\|
\le
D_\emptyset.
\)
Therefore, for every $i$ satisfying $t_i\le t_{\mathrm{mean}}$,
\begin{align*}
\dist\!\left(
\bz_i,
t_iB_{r_{\mathrm{mean}}}(\bar{\by})
\right)
&=
\max\left\{
\|\boldsymbol z_i-t_i\bar{\boldsymbol y}\|
-t_ir_{\mathrm{mean}},
0
\right\}
\\
&\ge
\|((1-t_i)\bx_0+t_i\ba_i)-t_i\bar{\by}\|-t_ir_{\mathrm{mean}}\\
&\ge
(1-t_i)\|\bx_0\|
-
t_i\bigl(D_\emptyset+r_{\mathrm{mean}}\bigr)\\
&\ge
(1-t_{\mathrm{mean}})\|\bx_0\|
-
t_{\mathrm{mean}}
\bigl(D_\emptyset+r_{\mathrm{mean}}\bigr)
>0.
\end{align*}
The last inequality follows from the early-stage admissibility condition
in Section~\ref{sec:uncond flow early stage}.
This proves part~(i) of Theorem~\ref{cor:early-mean-euler}.

Lemma~\ref{lem:early-euler-posterior-localization} gives
\(
\hat{\by}_{t_i}(\bz_i)
\in
B_{r_{\mathrm{mean}}}(\bar{\by})
\)
whenever $t_i\le t_{\mathrm{mean}}$.
For every $i$ such that $t_{i+1}\le t_{\mathrm{mean}}$, the Euler
update can be written as
\[
\bz_{i+1}
=
\tfrac{1-t_{i+1}}{1-t_i}\bz_i
+
\tfrac{t_{i+1}-t_i}{1-t_i}
\hat{\by}_{t_i}(\bz_i).
\]
 The set \(B_{r_{\mathrm{mean}}}(\bar{\by})\) is nonempty, closed and convex. If \(i>0\), the projection of \(\bz_i\) onto \(t_iB_{r_{\mathrm{mean}}}(\bar{\by})\) is well-defined.
Part~(i) of Lemma~\ref{lem:euler-scaled-convex-contraction}, applied with
\(
s=t_i,
\
t=t_{i+1},\)
\(\bx=\bz_i\),
\(\by=\hat{\by}_{t_i}(\bz_i)\)
\(
S=C=B_{r_{\mathrm{mean}}}(\bar{\by}),
\)
therefore gives
\begin{align}\label{eq:disc-early-one-step}
\dist\!\left(
\bz_{i+1},
t_{i+1}B_{r_{\mathrm{mean}}}(\bar{\by})
\right)
\le
\tfrac{1-t_{i+1}}{1-t_i}
\dist\!\left(
\bz_i,
t_iB_{r_{\mathrm{mean}}}(\bar{\by})
\right).
\end{align}
If \(i = 0\), then \(\hat{\by}_i(\bz_i) = \bar{\by} \in B_{r_{\mathrm{mean}}}(\bar{\by})\). By part~(i) of Lemma~\ref{lem:euler-scaled-convex-contraction}, \eqref{eq:disc-early-one-step} also holds.
Iterating from $j$ to $i$ gives
\[
\dist\!\left(
\bz_i,
t_iB_{r_{\mathrm{mean}}}(\bar{\by})
\right)
\le
\tfrac{1-t_i}{1-t_j}
\dist\!\left(
\bz_j,
t_jB_{r_{\mathrm{mean}}}(\bar{\by})
\right)
\]
for every $0\le j\le i$ with $t_i\le t_{\mathrm{mean}}$.

Part~(i) shows that
\(
\dist\!\left(
\bz_j,
t_jB_{r_{\mathrm{mean}}}(\bar{\by})
\right)>0.
\)
Hence the displayed estimate is strict whenever $j<i$, because
\(
\tfrac{1-t_i}{1-t_j}<1.
\)
This proves part~(ii) of the theorem.

Finally, taking $j=0$ and using
\(
0B_{r_{\mathrm{mean}}}(\bar{\by})=\{\bzero\}
\)
gives
\(
\dist\!\left(
\bz_i,
t_iB_{r_{\mathrm{mean}}}(\bar{\by})
\right)
\le
(1-t_i)\|\bx_0\|,
\)
which proves part~(iii) of the theorem.
The condition $\Delta t\le t_{\mathrm{mean}}/2$ ensures that the
early grid segment contains at least the two nonzero grid points
$t_1$ and $t_2$.
\end{proof}

\subsection{Convex-hull attraction and absorption}

\subsubsection{Proof of Theorem~\ref{thm:convex-hull}}\label{proof:thm:convex-hull}

\begin{proof}
\textbf{Attraction.}
Since \(\mathcal D\) is compact, \(\Conv(\mathcal D)\)
is compact and convex. Lemma~\ref{lem:mean-in-closed-conv} gives
\[
\hat{\by}_t(\bx_t)\in \Conv(\mathcal D),
\qquad t\in[0,1).
\]
The set \(\Conv(\mathcal D)\) is nonempty, closed, and convex, so projection onto
\(t\Conv(\mathcal D)\) is well-defined for every \(t>0\). Applying Lemma~\ref{lem:continuous-scaled-convex-contraction} on
\(I=(0,1)\) with
\(
S = C_t = \Conv(\mathcal D),
\
\boldsymbol m_t=\hat{\by}_t(\bx_t)
\) gives
\begin{equation}\label{eq:positive-time-convex-hull}
\dist(\bx_t,t\Conv(\mathcal D))
\le
\tfrac{1-t}{1-s}\dist(\bx_s,s\Conv(\mathcal D)),
\qquad
0<s\le t<1.
\end{equation}

We next extend the estimate to \(s=0\). 
Recall that
\(
M_{\mathcal D}:=\max_{\by\in\mathcal D}\|\by\|.
\)
If
\(
\by=\sum_{j=1}^m\lambda_j\by_j\in\Conv(\mathcal D),
\)
where \(\lambda_j\ge0\), \(\sum_{j=1}^m\lambda_j=1\), and
\(\by_j\in\mathcal D\), then
\(
\|\by\|
\le
\sum_{j=1}^m\lambda_j\|\by_j\|
\le
M_{\mathcal D}.
\)
Since \(\mathcal D\subset\Conv(\mathcal D)\), it follows that
\(
\sup_{\by\in\Conv(\mathcal D)}\|\by\|
=
M_{\mathcal D}.
\)
For every \(s>0\) and every \(\by\in\Conv(\mathcal D)\), the triangle
inequality gives
\[
\begin{aligned}
\|\bx_s-s\by\|
\ge
\|\bx_0\|-\|\bx_s-\bx_0\|-s\|\by\|
\ge
\|\bx_0\|-\|\bx_s-\bx_0\|-sM_{\mathcal D}.
\end{aligned}
\]
Taking the infimum over \(\by\in\Conv(\mathcal D)\) yields
\(
\dist\!\left(
\bx_s,
s\Conv(\mathcal D)
\right)
\ge
\|\bx_0\|-\|\bx_s-\bx_0\|-sM_{\mathcal D}.
\)

On the other hand, for every \(\by\in\Conv(\mathcal D)\),
\[
\begin{aligned}
\|\bx_s-s\by\|
\le
\|\bx_0\|+\|\bx_s-\bx_0\|+s\|\by\|
\le
\|\bx_0\|+\|\bx_s-\bx_0\|+sM_{\mathcal D}.
\end{aligned}
\]
Taking the infimum again gives
\(
\dist\!\left(
\bx_s,
s\Conv(\mathcal D)
\right)
\le
\|\bx_0\|+\|\bx_s-\bx_0\|+sM_{\mathcal D}.
\)
Combining the two bounds, we obtain
\begin{equation}\label{eq:uncond-zero-time-distance-bound}
\left|
\dist\!\left(
\bx_s,
s\Conv(\mathcal D)
\right)
-\|\bx_0\|
\right|
\le
\|\bx_s-\bx_0\|+sM_{\mathcal D}.
\end{equation}
Since \(\bx_s\to\bx_0\) as \(s\downarrow0\), we have
\(
\dist(\bx_s,s\Conv(\mathcal D))
\rightarrow
\|\bx_0\|
=
\dist(\bx_0,0\Conv(\mathcal D)).
\)
Letting \(s\downarrow0\) in
\eqref{eq:positive-time-convex-hull} proves 
\[
\dist\!\left(
\bx_t,
t\Conv(\mathcal D)
\right)
\le
\tfrac{1-t}{1-s}
\dist\!\left(
\bx_s,
s\Conv(\mathcal D)
\right),
\qquad
0\le s\le t<1.
\]
Since \((1-t)/(1-s)\le1\), the distance is
non-increasing, and taking \(s=0\) gives
\begin{align}\label{eq:conv-0}
\dist(\bx_t,t\Conv(\mathcal D))
\le
(1-t)\|\bx_0\|.
\end{align}
This proves parts~(i)--(ii) of the theorem.

\noindent
\textbf{Absorption.}
Suppose first that
\(t_{\mathrm{conv}}\in(0,1)\) and
\(\bx_{t_{\mathrm{conv}}}\in
t_{\mathrm{conv}}\Conv(\mathcal D)\).
Part~(ii) of Lemma~\ref{lem:continuous-scaled-convex-contraction}, applied on \(I=(0,1)\) with
\(\boldsymbol m_t=\hat{\by}_t(\bx_t)\) and
\(S = C_t=\Conv(\mathcal D)\), gives
\(
\bx_t\in t\Conv(\mathcal D),
\) for \(
t\in[t_{\mathrm{conv}},1).
\)
If \(t_{\mathrm{conv}}=0\), then we get 
\(\bx_0=\boldsymbol0\). Therefore, \eqref{eq:conv-0} has zero
right-hand side and yields the same conclusion.
This proves part~(iii) of the theorem.
\end{proof}

\subsubsection{Proof of Theorem \ref{cor:convex-hull-euler}}\label{proof:cor:convex-hull-euler}

\begin{proof}
Since \(\mathcal D\) is compact, \(\Conv(\mathcal D)\) is compact and
convex. By Lemma~\ref{lem:mean-in-closed-conv},
\(
\hat{\by}_{t_i}(\bz_i)\in\Conv(\mathcal D)
\)
for \(i=0,\ldots,N-1\). Moreover, the Euler update can be written as
\[
\bz_{i+1}
=
\tfrac{1-t_{i+1}}{1-t_i}\bz_i
+
\tfrac{t_{i+1}-t_i}{1-t_i}
\hat{\by}_{t_i}(\bz_i).
\]

We first consider \(i=0\). Since \(t_0=0\) and
\(\hat{\by}_0(\bz_0)\in\Conv(\mathcal D)\), the zero-time case of
part~(i) of Lemma~\ref{lem:euler-scaled-convex-contraction} applies
with \(s=t_0\), \(t=t_1\), \(\bx=\bz_0\),
\(\by=\hat{\by}_0(\bz_0)\), and \(S=\Conv(\mathcal D)\), and gives
\eqref{eq:convex-euler-one-step} below for \(i=0\).
Now fix \(i\in\{1,\ldots,N-1\}\), and let
\(\bq_i:=\Proj_{t_i\Conv(\mathcal D)}(\bz_i)\). This projection
is well-defined. 

\noindent
\textbf{Attraction.}
Here, we have  \(t_i>0\), \({\bq_i}/{t_i}\in\Conv(\mathcal D)\) and \(\hat{\by}_{t_i}(\bz_i)\in\Conv(\mathcal D).\)
Therefore, part~(i) of
Lemma~\ref{lem:euler-scaled-convex-contraction} applies with
\(s=t_i\), \(t=t_{i+1}\), \(\bx=\bz_i\),
\(\by=\hat{\by}_{t_i}(\bz_i)\), and
\(S=C=\Conv(\mathcal D)\), and gives
\eqref{eq:convex-euler-one-step} below. Thus, for every
\(i=0,\ldots,N-1\),
\begin{equation}\label{eq:convex-euler-one-step}
\dist\bigl(\bz_{i+1},t_{i+1}\Conv(\mathcal D)\bigr)
\le
\tfrac{1-t_{i+1}}{1-t_i}
\dist\bigl(\bz_i,t_i\Conv(\mathcal D)\bigr).
\end{equation}
Fix \(0\le j<i\le N\). Iterating
\eqref{eq:convex-euler-one-step} from \(j\) to \(i-1\) and
telescoping gives
\[
\dist\bigl(\bz_i,t_i\Conv(\mathcal D)\bigr)
\le
\left(
\prod_{\ell=j}^{i-1}
\tfrac{1-t_{\ell+1}}{1-t_\ell}
\right)
\dist\bigl(\bz_j,t_j\Conv(\mathcal D)\bigr)
=
\tfrac{1-t_i}{1-t_j}
\dist\bigl(\bz_j,t_j\Conv(\mathcal D)\bigr).
\]
For \(j=i\) with \(t_j<1\), the same estimate holds trivially.
Hence, the claimed pairwise estimate holds for every
\(0\le j\le i\le N\) with \(t_j<1\). Since
\(0\le(1-t_i)/(1-t_j)\le1\), the distance sequence is
nonincreasing. This proves part~(i) of the theorem.

Taking \(j=0\) and using \(t_0=0\), \(\bz_0=\bx_0\), and
\(0\Conv(\mathcal D)=\{\boldsymbol0\}\) gives
\[
\dist\bigl(\bz_i,t_i\Conv(\mathcal D)\bigr)
\le
(1-t_i)\|\bx_0\|,
\qquad
i=0,\ldots,N.
\]
This proves part~(ii) of the theorem.

\noindent
\textbf{Absorption.}
Suppose that
\(\bz_k\in t_k\Conv(\mathcal D)\)
for some \(k\in\{0,\ldots,N\}\).
We prove by induction that
\[
\bz_i\in t_i\Conv(\mathcal D),
\qquad
i=k,\ldots,N.
\]

The assertion at \(i=k\) is given.
Now fix \(i\in\{k,\ldots,N-1\}\) and suppose that
\(\bz_i\in t_i\Conv(\mathcal D)\).
If \(t_i>0\), then
\(\bz_i/t_i\in\Conv(\mathcal D)\) and
\(\hat{\by}_{t_i}(\bz_i)\in\Conv(\mathcal D)\).
Thus part~(ii) of
Lemma~\ref{lem:euler-scaled-convex-contraction}
applies with
\(s=t_i\), \(t=t_{i+1}\),
\(\bx=\bz_i\),
\(\by=\hat{\by}_{t_i}(\bz_i)\), and
\(S = C=\Conv(\mathcal D)\), and gives
\(
\bz_{i+1}\in t_{i+1}\Conv(\mathcal D).
\)
If \(t_i=0\), then
\(\bz_i\in0\Conv(\mathcal D)=\{\boldsymbol0\}\), while
\(\hat{\by}_0(\bz_i)\in\Conv(\mathcal D)\), so part~(ii) of
Lemma~\ref{lem:euler-scaled-convex-contraction} again gives the same conclusion.
Therefore the absorption follows by induction. This proves part~(iii) of the theorem.
\end{proof}

\subsection{Final-stage local-cluster attraction and absorption}

\subsubsection{Local geometric and posterior-localization lemmas}
For the final-stage results, we need to utilize several auxiliary lemmas.
Lemma~\ref{lem:prox-regular-projection-estimate} gives the basic
projection estimate, while
Lemmas~\ref{lem:polynomial-local-mass-radius} and
\ref{lem:inflated-neighborhood-projection-regularity} provide the local
mass and inflated-neighborhood geometry.
Lemma~\ref{lem:final-local-posterior} combines these ingredients with
cluster isolation to localize the posterior mean. Finally, Lemma~\ref{lem:continuous-scaled-convex-contraction}
then gives attraction and absorption.

A standard consequence of prox-regularity is the proximal-normal
estimate; see
\citet{poliquin2000local} and
\citet{rockafellar1998variational}. For \(t>0\), the scaled set
\(t\Omega\) is \(t\rho_\Omega\)-prox-regular. Hence, whenever
\(0<\dist(\bx,t\Omega)<t\rho_\Omega\), the projection
\(\Proj_{t\Omega}(\bx)\) is well-defined and, for every
\(\bz\in t\Omega\),
\begin{equation}\label{eq:prox-regular-basic-estimate}
\left\langle
\bx-\Proj_{t\Omega}(\bx),
\bz-\Proj_{t\Omega}(\bx)
\right\rangle
\le
\tfrac{\dist(\bx,t\Omega)}{2t\rho_\Omega}
\left\|\bz-\Proj_{t\Omega}(\bx)\right\|^2.
\end{equation}
This estimate is used in
Lemma~\ref{lem:prox-regular-projection-estimate}.

\begin{lemma}[Projection estimate for prox-regular clusters]\label{lem:prox-regular-projection-estimate}
Let \(\Omega\subset\R^d\) be nonempty, closed, and  prox-regular with radius
\(\rho_\Omega>0\). Fix
\(R\in(0,\rho_\Omega)\) and \(t>0\). If \(\bx\in\R^d\) satisfies
\(\dist(\bx,t\Omega)<tR\), then the projection
\(\Proj_{t\Omega}(\bx)\) is well-defined. Moreover, for every
\(\bz\in t\Omega\),
\begin{equation}\label{eq:prox-regular-projection-estimate}
\left\|\Proj_{t\Omega}(\bx)-\bz\right\|^2
\le
\frac{1}{1-R/\rho_\Omega}
\left(
\|\bx-\bz\|^2
-
\left\|\bx-\Proj_{t\Omega}(\bx)\right\|^2
\right).
\end{equation}
\end{lemma}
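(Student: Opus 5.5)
Write \(\bp:=\Proj_{t\Omega}(\bx)\) and \(\delta:=\dist(\bx,t\Omega)=\|\bx-\bp\|\). The plan is to get \eqref{eq:prox-regular-projection-estimate} by expanding a square and then absorbing the cross term with the proximal-normal inequality \eqref{eq:prox-regular-basic-estimate}.

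\textbf{Well-definedness.} Since \(\Omega\) is \(\rho_\Omega\)-prox-regular, the scaled set \(t\Omega\) is \(t\rho_\Omega\)-prox-regular. This holds because \(\Proj_{t\Omega}(\bx)=t\Proj_\Omega(\bx/t)\) and \(\dist(\bx,t\Omega)=t\,\dist(\bx/t,\Omega)\). The hypothesis \(\dist(\bx,t\Omega)<tR<t\rho_\Omega\) then puts \(\bx\) in the single-valued region whenever the distance is positive. If \(\delta=0\), then \(\bx\in t\Omega\) (closedness) and \(\bp=\bx\) is trivially the unique nearest point. So the projection is well defined in all cases.

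\textbf{The estimate.} Fix \(\bz\in t\Omega\) and use the exact identity
\[
\|\bx-\bz\|^2-\|\bx-\bp\|^2=\|\bz-\bp\|^2-2\langle \bx-\bp,\bz-\bp\rangle .
\]
When \(0<\delta<t\rho_\Omega\), inequality \eqref{eq:prox-regular-basic-estimate} bounds the inner product by \(\frac{\delta}{2t\rho_\Omega}\|\bz-\bp\|^2\). Since \(\delta<tR\), this is at most \(\frac{R}{2\rho_\Omega}\|\bz-\bp\|^2\). Substituting gives
\[
\|\bx-\bz\|^2-\|\bx-\bp\|^2\ge\Bigl(1-\tfrac{R}{\rho_\Omega}\Bigr)\|\bz-\bp\|^2 .
\]
Because \(R<\rho_\Omega\), the factor \(1-R/\rho_\Omega\) is positive, and dividing by it yields \eqref{eq:prox-regular-projection-estimate}. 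When \(\delta=0\) we have \(\bp=\bx\), and the claim reduces to \(\|\bx-\bz\|^2\le\|\bx-\bz\|^2/(1-R/\rho_\Omega)\), which is immediate.

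\textbf{Main obstacle.} The only delicate point is justifying \eqref{eq:prox-regular-basic-estimate} for the scaled set with constant \(t\rho_\Omega\). I would take it as the standard proximal-normal inequality, citing \citet{poliquin2000local} and \citet{rockafellar1998variational}. I would transfer it to \(t\Omega\) by the scaling identities above: apply the unscaled inequality at \(\bx/t\) with \(\bz/t\in\Omega\), then multiply through by \(t^2\).
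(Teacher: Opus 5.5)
Your proposal is correct and follows the same route as the paper: treat the case \(\dist(\bx,t\Omega)=0\) separately, and otherwise expand \(\|\bx-\bz\|^2-\|\bx-\bp\|^2\) and absorb the cross term using the proximal-normal inequality \eqref{eq:prox-regular-basic-estimate} with constant \(t\rho_\Omega\). Your explicit scaling argument transferring that inequality from \(\Omega\) to \(t\Omega\) fills in a step the paper only asserts.
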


\begin{proof}
Suppose first that \(\dist(\bx,t\Omega)=0\). Since \(t\Omega\) is
closed, \(\bx\in t\Omega\), so the projection is well-defined and
\(\Proj_{t\Omega}(\bx)=\bx\). Thus, for every \(\bz\in t\Omega\),
\[
\left\|\Proj_{t\Omega}(\bx)-\bz\right\|^2
=
\|\bx-\bz\|^2
\le
\frac{1}{1-R/\rho_\Omega}
\left(
\|\bx-\bz\|^2
-
\left\|\bx-\Proj_{t\Omega}(\bx)\right\|^2
\right),
\]
where the inequality follows from \(0<R<\rho_\Omega\).

Now suppose that \(0<\dist(\bx,t\Omega)<tR\). The scaled set
\(t\Omega\) is \(t\rho_\Omega\)-prox-regular, and
\(\dist(\bx,t\Omega)<tR<t\rho_\Omega\); hence
\(\Proj_{t\Omega}(\bx)\) is well-defined. Write
\(\bq:=\Proj_{t\Omega}(\bx)\). By
\eqref{eq:prox-regular-basic-estimate}, for every \(\bz\in t\Omega\),
\[
\left\langle
\bx-\bq,
\bz-\bq
\right\rangle
\le
\tfrac{\dist(\bx,t\Omega)}{2t\rho_\Omega}
\|\bz-\bq\|^2.
\]
Consequently,
\[
\begin{aligned}
\|\bx-\bz\|^2-\|\bx-\bq\|^2
&=
\|\bq-\bz\|^2
-
2\left\langle
\bx-\bq,
\bz-\bq
\right\rangle\\
&\ge
\left(
1-\tfrac{\dist(\bx,t\Omega)}{t\rho_\Omega}
\right)
\|\bq-\bz\|^2
\ge
\left(
1-\tfrac{R}{\rho_\Omega}
\right)
\|\bq-\bz\|^2.
\end{aligned}
\]
Since \(1-R/\rho_\Omega>0\), dividing by this quantity and substituting
\(\bq=\Proj_{t\Omega}(\bx)\) proves
\eqref{eq:prox-regular-projection-estimate}.
\end{proof}

\begin{lemma}[Polynomial local-mass radius]\label{lem:polynomial-local-mass-radius}
Under Assumption~\ref{assump:local-cluster}, there exists
$t_{\mathrm{loc}}\in(0,1)$ satisfying
\eqref{eq:final-local-time-choice}. For any such $t_{\mathrm{loc}}$,
define $r_\Omega$, $\varepsilon_\Omega$, and $c_{\Omega,\mathrm{loc}}$
by \eqref{eq:final-local-inflation-radius}--\eqref{eq:final-local-quantities}.
Suppose
\(t\in[t_{\mathrm{loc}},1)\) and
\(\dist(\bx,t\Omega)<tR\). Let
\(
\boldsymbol q
:=
\operatorname{Proj}_{t\Omega}(\boldsymbol x)
\) and define
\begin{equation}
    \label{eq:poly-local-mass-set}
\Omega_t^0(\boldsymbol x)
:=
\left\{
\boldsymbol y\in\Omega:
\|\boldsymbol x-t\boldsymbol y\|^2
-
\|\boldsymbol x-\boldsymbol q\|^2
\le
\tfrac{\varepsilon_\Omega^2}{2}
\right\},
\end{equation}
then we have \(p_\emptyset\!\left(\Omega_t^0(\boldsymbol x)\right)
\ge c_{\Omega,\mathrm{loc}}.\)
Moreover, the inflation radius satisfies
\begin{equation}\label{eq:uncond-final-infla-radius}
r_\Omega
=
\bigO\!\left(
(1-t_{\mathrm{loc}})
\sqrt{
\log\tfrac{1}{1-t_{\mathrm{loc}}}
}
\right),
\qquad
t_{\mathrm{loc}}\uparrow1,
\end{equation}
where the implicit constant depends only on
\(R,\rho_\Omega,D_\Omega,c_\Omega,\alpha_\Omega,
M_{\mathcal D},p_\emptyset(\Omega)\), and \(\gamma_\Omega\).
\end{lemma}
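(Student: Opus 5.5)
Three things need proof: existence of $t_{\mathrm{loc}}$, the mass lower bound for $\Omega_t^0(\bx)$, and the order of $r_\Omega$. Existence is immediate: $u:=1-t_{\mathrm{loc}}\downarrow0$ gives $u^2\log(1/u)\to0$, while the right-hand side of \eqref{eq:final-local-time-choice} is a fixed positive constant. The main work is the mass bound. The plan is to exhibit a small Euclidean ball around the scaled projection point $\by^\ast:=\bq/t=\by_t^\Omega(\bx)\in\Omega$ that lies inside $\Omega_t^0(\bx)$, and then invoke Assumption~\ref{assump:local-cluster}(ii) at $\by^\ast$.

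\textbf{Inclusion of a ball.} Fix $\by\in\Omega$ with $\|\by-\by^\ast\|\le s$. Expanding the square gives
\[
\|\bx-t\by\|^2-\|\bx-\bq\|^2=2t\langle \bq-\bx,\by-\by^\ast\rangle+t^2\|\by-\by^\ast\|^2 .
\]
Since $\|\bx-\bq\|<tR$ and $t\le1$, the right-hand side is at most $2Rs+s^2\le(2R+1)s$ whenever $s\le1$. Setting
\[
s:=\varepsilon_\Omega^2/(4R+2)
\]
makes this at most $\varepsilon_\Omega^2/2$, so $\Omega\cap B_s(\by^\ast)\subset\Omega_t^0(\bx)$. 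Notice that the inclusion uses only the crude bound $\|\bx-\bq\|<tR$ and not prox-regularity, apart from the fact that $\bq$ is well defined.

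\textbf{Applying the mass assumption.} We must check $s\le\min\{1,s_\Omega\}$. We have
\[
\varepsilon_\Omega^2=4(2\alpha_\Omega+1)u^2\log(1/u)\quad\text{with } u=1-t_{\mathrm{loc}} .
\]
The first term of the min in \eqref{eq:final-local-time-choice} gives $\varepsilon_\Omega^2\le1$, hence $s\le1$. The second term gives $\varepsilon_\Omega^2\le4(2R+1)s_\Omega$, hence $s\le 2s_\Omega$. That is off by a factor of $2$. To close the gap, I expect either a slightly tighter inclusion or a rechecking of the constants, for instance using $s=\varepsilon_\Omega^2/(8R+4)$, which matches the $(8R+4)^{-\alpha_\Omega}$ appearing in $c_{\Omega,\mathrm{loc}}$. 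With that choice $s\le s_\Omega/2$, and the inclusion still holds.

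Assumption~\ref{assump:local-cluster}(ii) then gives
\[
p_\emptyset(\Omega_t^0(\bx))\ge c_\Omega s^{\alpha_\Omega}=c_\Omega(8R+4)^{-\alpha_\Omega}\varepsilon_\Omega^{2\alpha_\Omega}=c_{\Omega,\mathrm{loc}} .
\]
The bound is uniform in $t\ge t_{\mathrm{loc}}$ because $\varepsilon_\Omega$ depends only on $t_{\mathrm{loc}}$.

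\textbf{Order of $r_\Omega$.} Bound the three terms of \eqref{eq:final-local-inflation-radius} separately, with $u=1-t_{\mathrm{loc}}$.
\begin{itemize}
\item The first term is $\varepsilon_\Omega/(t_{\mathrm{loc}}\sqrt{1-R/\rho_\Omega})=\bigO(u\sqrt{\log(1/u)})$, since $t_{\mathrm{loc}}\ge1/2$ eventually.
\item For the second term, $\exp(-\varepsilon_\Omega^2/(4u^2))=u^{2\alpha_\Omega+1}$, while $c_{\Omega,\mathrm{loc}}^{-1}\propto(u^2\log(1/u))^{-\alpha_\Omega}$. The product is therefore $\propto u(\log(1/u))^{-\alpha_\Omega}=\bigO(u\sqrt{\log(1/u)})$.
\item The third term is $\bigO(\exp(-\gamma_\Omega/(8u^2)))=\littleo(u)$, using $\gamma_\Omega>0$ and $t_{\mathrm{loc}}^2\ge1/4$.
\end{itemize}
The only delicate point in the whole argument is the constant bookkeeping in the radius choice $s$.
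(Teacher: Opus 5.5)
Your proposal is correct and follows the paper's proof essentially step for step. The paper uses the same ball $\Omega\cap B_{\varepsilon_\Omega^2/(8R+4)}(\boldsymbol q/t)$, the same squared-distance expansion, the same application of Assumption~\ref{assump:local-cluster}(ii) at $\boldsymbol q/t$, and the same three-term order analysis of $r_\Omega$. One small correction: with $s=\varepsilon_\Omega^2/(8R+4)$, condition \eqref{eq:final-local-time-choice} gives exactly $s\le s_\Omega$, not $s\le s_\Omega/2$, and $s\le s_\Omega$ is all the argument needs, so you can drop the tentative first choice $s=\varepsilon_\Omega^2/(4R+2)$ and use this radius from the start.
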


\begin{proof}
Admissible final-stage times exist because
$(1-t)^2\log(1/(1-t))\to0$ as $t\uparrow1$, while both entries in the
minimum on the right-hand side of \eqref{eq:final-local-time-choice} are
strictly positive. 
The exact restriction \eqref{eq:final-local-time-choice} implies
$0<\varepsilon_\Omega\le1$ and
$\varepsilon_\Omega^2/(8R+4)\le s_\Omega$.
Moreover,
\(
\boldsymbol q/t=\by_t^\Omega(\bx)\in\Omega.
\)
Consider any
\(
\boldsymbol y
\in
\Omega\cap
B_{\varepsilon_\Omega^2/(8R+4)}
\!\left(\frac{\boldsymbol q}{t}\right).
\) Since \(t<1\), we have 
\(
\|t\boldsymbol y-\boldsymbol q\|
=
t\left\|
\boldsymbol y-\tfrac{\boldsymbol q}{t}
\right\|
\le
t\tfrac{\varepsilon_\Omega^2}{8R+4}
\le
\tfrac{\varepsilon_\Omega^2}{8R+4}.
\)
Also, because \(\boldsymbol q\) is the projection of \(\boldsymbol x\) onto \(t\Omega\), we obtain 
\(
\|\boldsymbol x-\boldsymbol q\|
=
\operatorname{dist}(\boldsymbol x,t\Omega)
<tR
\le R.
\) Expanding the difference of squared distances and applying the Cauchy–Schwarz inequality gives
\[
\begin{aligned}
\|\boldsymbol x-t\boldsymbol y\|^2
-
\|\boldsymbol x-\boldsymbol q\|^2
&=
2\left\langle
\boldsymbol x-\boldsymbol q,
\boldsymbol q-t\boldsymbol y
\right\rangle
+
\|\boldsymbol q-t\boldsymbol y\|^2\le
\tfrac{2R\varepsilon_\Omega^2}{8R+4}
+
\left(
\tfrac{\varepsilon_\Omega^2}{8R+4}
\right)^2.
\end{aligned}
\]Since \(R>0\), then 
\(
\frac{2R}{8R+4}\le\frac14.
\) Moreover, since \(0<\varepsilon_\Omega\le1\) and \(8R+4\ge4\),
\(
\left(
\tfrac{\varepsilon_\Omega^2}{8R+4}
\right)^2
=
\varepsilon_\Omega^2
\tfrac{\varepsilon_\Omega^2}{(8R+4)^2}
\le
\tfrac{\varepsilon_\Omega^2}{16}.
\) It follows that
\[
\begin{aligned}
\|\boldsymbol x-t\boldsymbol y\|^2
-
\|\boldsymbol x-\boldsymbol q\|^2
&\le
\left(\tfrac14+\tfrac1{16}\right)\varepsilon_\Omega^2=
\tfrac5{16}\varepsilon_\Omega^2<
\tfrac{\varepsilon_\Omega^2}{2}.
\end{aligned}
\]
Therefore,
\(
\Omega\cap
B_{\varepsilon_\Omega^2/(8R+4)}
\!\left(\tfrac{\boldsymbol q}{t}\right)
\subseteq
\Omega_t^0(\boldsymbol x).
\)
Since
\(
\frac{\varepsilon_\Omega^2}{8R+4}\le s_\Omega,
\) the uniform mass condition in
Assumption~\ref{assump:local-cluster}(ii), applied at
\(\boldsymbol q/t\in\Omega\), implies
\[
\begin{aligned}
p_\emptyset\!\left(\Omega_t^0(\boldsymbol x)\right)
&\ge
p_\emptyset\!\left(
\Omega\cap
B_{\varepsilon_\Omega^2/(8R+4)}
\!\left(\tfrac{\boldsymbol q}{t}\right)
\right)\ge
c_\Omega
\left(
\tfrac{\varepsilon_\Omega^2}{8R+4}
\right)^{\alpha_\Omega}=
c_{\Omega,\mathrm{loc}}.
\end{aligned}
\]

It remains to verify \eqref{eq:uncond-final-infla-radius}. 
By the definition of \(\varepsilon_\Omega\), the first term satisfies
\[
\tfrac{
\varepsilon_\Omega
}{
t_{\mathrm{loc}}\sqrt{1-R/\rho_\Omega}
}
=
\bigO\!\left(
(1-t_{\mathrm{loc}})
\sqrt{
\log\tfrac{1}{1-t_{\mathrm{loc}}}
}
\right),
\]
because \(1/t_{\mathrm{loc}}\) remains uniformly bounded as \(t_{\mathrm{loc}}\uparrow1\).
For the second term, the definition of \(\varepsilon_\Omega\)
gives the exact identity
\[
\exp\!\left(
-\tfrac{\varepsilon_\Omega^2}{4(1-t_{\mathrm{loc}})^2}
\right)
=
(1-t_{\mathrm{loc}})^{2\alpha_\Omega+1}.
\]
Moreover,
\[
c_{\Omega,\mathrm{loc}}
=
c_\Omega
\left(
\tfrac{2\alpha_\Omega+1}{2R+1}
\right)^{\alpha_\Omega}
(1-t_{\mathrm{loc}})^{2\alpha_\Omega}
\left(
\log\tfrac{1}{1-t_{\mathrm{loc}}}
\right)^{\alpha_\Omega}.
\]
Therefore,
\[
\begin{aligned}
&
\tfrac{D_\Omega}{c_{\Omega,\mathrm{loc}}}
\exp\!\left(
-\tfrac{\varepsilon_\Omega^2}
{4(1-t_{\mathrm{loc}})^2}
\right)
=
\tfrac{D_\Omega}{c_\Omega}
\left(
\tfrac{2R+1}{2\alpha_\Omega+1}
\right)^{\alpha_\Omega}
\tfrac{
1-t_{\mathrm{loc}}
}{
\left(
\log\tfrac{1}{1-t_{\mathrm{loc}}}
\right)^{\alpha_\Omega}
}.
\end{aligned}
\]
Since
\(
\tfrac{
1-t_{\mathrm{loc}}
}{
\left(
\log\tfrac{1}{1-t_{\mathrm{loc}}}
\right)^{\alpha_\Omega}
}
=
\bigO\!\left(
(1-t_{\mathrm{loc}})
\sqrt{
\log\frac{1}{1-t_{\mathrm{loc}}}
}
\right),
\)
the second term has the required order.
Finally, Assumption~\ref{assump:local-cluster}(iii) gives
\(\gamma_\Omega>0\), and hence
\(
\exp\!\left(
-\tfrac{
t_{\mathrm{loc}}^2\gamma_\Omega
}{
2(1-t_{\mathrm{loc}})^2
}
\right)
=
\littleo\!\left(
(1-t_{\mathrm{loc}})^m
\right)
\)
for every $m>0$. 
Taking \(m=1\), the third contribution is
\(\littleo(1-t_{\mathrm{loc}})\).
Applying these three estimates to the exact expression for \(r_\Omega\)
proves \eqref{eq:uncond-final-infla-radius}.
\end{proof}

\begin{lemma}[Inflated-neighborhood projection]\label{lem:inflated-neighborhood-projection-regularity}
Let \(\Omega\) be \(\rho_\Omega\)-prox-regular, let
\(R\in(0,\rho_\Omega)\), and let \(r\in[0,R)\). Suppose that
\(t\in(0,1)\) and \(\bx\in\R^d\) satisfy
\(
\dist(\bx,t\Omega)<tR.
\)
Define
\(
\bq:=\Proj_{t\Omega}(\bx).
\)
Then \(\bq\) is well-defined, and
\(\Proj_{tB_r(\Omega)}(\bx)\) is also well-defined, with
\begin{equation}\label{eq:inflated-neighborhood-proj-formula}
\Proj_{tB_r(\Omega)}(\bx)
=
\begin{cases}
\bx,
& \|\bx-\bq\|\le rt,
\\[1.2ex]
\displaystyle
\bq+rt\,\tfrac{\bx-\bq}{\|\bx-\bq\|},
& \|\bx-\bq\|>rt.
\end{cases}
\end{equation}
In both cases,
\begin{equation}\label{eq:inflated-neighborhood-normalized-proj}
\Proj_{tB_r(\Omega)}(\bx)/t
\in
B_r\left({\bq}/{t}\right).
\end{equation}
Moreover, the same point is the projection of \(\bx\) onto the closed
 ball \(B_{rt}(\bq)\).
\end{lemma}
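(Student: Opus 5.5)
The plan is to reduce everything to the identity $\dist(\bz,tB_r(\Omega))=\max\{\dist(\bz,t\Omega)-rt,0\}$ together with uniqueness of the nearest point of $t\Omega$. Well-definedness of $\bq$ comes directly from Lemma~\ref{lem:prox-regular-projection-estimate}, since $\dist(\bx,t\Omega)<tR$ with $R<\rho_\Omega$. I will then write $tB_r(\Omega)=B_{rt}(t\Omega)=\bigcup_{\bz\in t\Omega}B_{rt}(\bz)$. This is a closed set, because $\Omega$ is compact (it is a closed subset of the compact $\mathcal D$), so nearest points onto it exist.

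\emph{Case $\|\bx-\bq\|\le rt$.} Here $\bx\in tB_r(\Omega)$, so its projection is $\bx$ itself and is unique. Moreover $\bx\in B_{rt}(\bq)$, so $\bx$ is also the projection onto the ball $B_{rt}(\bq)$. Dividing by $t$ gives $\bx/t\in B_r(\bq/t)$, which is \eqref{eq:inflated-neighborhood-normalized-proj}.

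\emph{Case $d:=\|\bx-\bq\|>rt$.} Set $\bp:=\bq+rt(\bx-\bq)/d$. This point lies in $B_{rt}(\bq)\subset tB_r(\Omega)$, and it is the standard projection of $\bx$ onto the closed ball $B_{rt}(\bq)$, with $\|\bx-\bp\|=d-rt$.

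For any $\bw\in tB_r(\Omega)$, choose $\bz\in t\Omega$ with $\|\bw-\bz\|\le rt$. Then
\[
\|\bx-\bw\|\ge\|\bx-\bz\|-rt\ge d-rt,
\]
so $\bp$ is a nearest point. For uniqueness, suppose equality holds for some $\bw$. Then $\|\bx-\bz\|=d$, and uniqueness of the projection onto $t\Omega$ forces $\bz=\bq$. Equality in the triangle inequality then forces $\bw$ to lie on the segment from $\bq$ to $\bx$ at distance exactly $rt$ from $\bq$, so $\bw=\bp$. This proves \eqref{eq:inflated-neighborhood-proj-formula}, and dividing $\bp$ by $t$ gives $\bp/t\in B_r(\bq/t)$.

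The only delicate step is the uniqueness argument in the second case. It relies on two facts. First, the chosen witness $\bz$ must attain $\dist(\bx,t\Omega)$, which is exactly where the prox-regular uniqueness of $\bq$ enters. Second, equality in $\|\bx-\bw\|\ge\|\bx-\bz\|-\|\bz-\bw\|$ forces collinearity. I will state the collinearity explicitly: equality means $\bx-\bz$ and $\bz-\bw$ point in the same direction, so $\bw=\bq+rt(\bx-\bq)/d$. The hypothesis $r<R$ is not strictly needed beyond $r\ge0$, except to keep the statement within the working neighborhood.
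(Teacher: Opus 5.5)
Your proposal is correct and follows the paper's proof essentially step for step: you use the same lower bound $\|\bx-\boldsymbol{w}\|\ge\|\bx-\bz\|-rt\ge\|\bx-\bq\|-rt$ over $tB_r(\Omega)$, the same explicit radial minimizer, and the same uniqueness argument via the equality cases together with uniqueness of $\Proj_{t\Omega}(\bx)$. Two small corrections: equality in the triangle inequality makes $\boldsymbol{w}-\bz$ (equivalently $\bx-\boldsymbol{w}$) a nonnegative multiple of $\bx-\bz$, not $\bz-\boldsymbol{w}$ as you wrote (that would put $\boldsymbol{w}$ on the far side of $\bq$), and compactness of $\Omega$ is not needed and not among the lemma's hypotheses, because closedness of $t\Omega$ (built into prox-regularity) already supplies the witness $\bz$ and you exhibit the minimizer explicitly.
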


\begin{proof}
Since \(\dist(\bx,t\Omega)<tR<t\rho_\Omega\), prox-regularity makes
\(\bq=\Proj_{t\Omega}(\bx)\) well-defined.

Suppose first that \(\|\bx-\bq\|\le rt\). Since \(\bq\in t\Omega\),
this implies \(\bx\in tB_r(\Omega)\). Hence the projection of \(\bx\)
onto \(tB_r(\Omega)\) is uniquely \(\bx\). Moreover,
\(
\left\|\frac{\bx}{t}-\frac{\bq}{t}\right\|\le r,
\)
which gives \eqref{eq:inflated-neighborhood-normalized-proj}. Since
\(\bx\in B_{rt}(\bq)\), it is also its own projection onto that closed
 ball.

It remains to consider \(\|\bx-\bq\|>rt\).
For any \(\boldsymbol{\zeta}\in tB_r(\Omega)\), choose
\(\boldsymbol{\omega}\in t\Omega\) with
\(\|\boldsymbol{\zeta}-\boldsymbol{\omega}\|\le rt\), then
\[
\|\bx-\boldsymbol{\zeta}\|
\ge
\|\bx-\boldsymbol{\omega}\|
-
\|\boldsymbol{\zeta}-\boldsymbol{\omega}\|
\ge
\|\bx-\bq\|-rt.
\]
Define
\(
\boldsymbol p
:=
\bq+rt\,\frac{\bx-\bq}{\|\bx-\bq\|}.
\)
Since \(\|\boldsymbol p-\bq\|=rt\) and \(\bq\in t\Omega\), we have
\(\boldsymbol p\in tB_r(\Omega)\). Moreover,
\[
\|\bx-\boldsymbol p\|= \left(
1-\tfrac{rt}{\|\boldsymbol{x}-\boldsymbol{q}\|}
\right)\|\bx-\bq\|=\|\bx-\bq\|-rt.
\]
Thus \(\boldsymbol p\) attains the preceding lower bound and is a
projection of \(\bx\) onto \(tB_r(\Omega)\). If
\(\boldsymbol{\zeta}\) also attains this lower bound, then
\[
\begin{aligned}
\|\bx-\bq\|-rt
&=
\|\bx-\boldsymbol{\zeta}\| \ge
\|\bx-\boldsymbol{\omega}\|
-
\|\boldsymbol{\zeta}-\boldsymbol{\omega}\| \ge
\|\bx-\bq\|-rt.
\end{aligned}
\]
Hence,  both inequalities are equalities. Moreover,
\[
0
=
\underbrace{
\|\bx-\boldsymbol{\omega}\|-\|\bx-\bq\|
}_{\ge 0}
+
\underbrace{
rt-\|\boldsymbol{\zeta}-\boldsymbol{\omega}\|
}_{\ge 0}.
\]
Therefore,  both nonnegative terms vanish, so
\[
\|\bx-\boldsymbol{\omega}\|=\|\bx-\bq\|,
\qquad
\|\boldsymbol{\zeta}-\boldsymbol{\omega}\|=rt.
\]
The first equality shows that \(\boldsymbol{\omega}\) is also a
projection of \(\bx\) onto \(t\Omega\). Since this projection is
unique, \(\boldsymbol{\omega}=\bq\). Thus
\[
\|\boldsymbol{\zeta}-\bq\|=rt.
\]
Equality in the first inequality above, together with
\(\boldsymbol{\omega}=\bq\), gives
\[
\|\bx-\bq\|
=
\|\bx-\boldsymbol{\zeta}\|
+
\|\boldsymbol{\zeta}-\bq\|.
\]
Equality in the Euclidean triangle inequality implies that
\(\boldsymbol{\zeta}-\bq\) is a nonnegative multiple of
\(\bx-\bq\). Since \(\|\boldsymbol{\zeta}-\bq\|=rt\), it follows that
\[
\boldsymbol{\zeta}
=
\bq+rt\,\tfrac{\bx-\bq}{\|\bx-\bq\|}
=
\boldsymbol p.
\]
Thus \eqref{eq:inflated-neighborhood-proj-formula} is the unique
projection onto \(tB_r(\Omega)\). Dividing by \(t\) proves
\eqref{eq:inflated-neighborhood-normalized-proj}. The same radial
formula is the projection onto
\(B_{rt}(\bq)\).
\end{proof}

The goal of Lemma~\ref{lem:final-local-posterior} below is to control
\(
\left\|
t\by_t^\Omega(\bx)-t\hat{\by}_t(\bx)
\right\|,
\)
where
\(t\by_t^\Omega(\bx)=\Proj_{t\Omega}(\bx)\)
is the projection point in the target cluster. To organize the proof,
we introduce the posterior mean restricted to \(\Omega\) as an auxiliary quantity:
\begin{equation}\label{eq:local-posterior-mean}
\hat{\by}_t^\Omega(\bx)
:=
\frac{
\int_{\Omega} \by
\exp\!\left(
-\frac{\|\bx-t\by\|^2}{2(1-t)^2}
\right)
p_\emptyset(\dd \by)
}{
\int_{\Omega}
\exp\!\left(
-\frac{\|\bx-t\by\|^2}{2(1-t)^2}
\right)
p_\emptyset(\dd \by)
}.
\end{equation}
The triangle inequality gives the decomposition
\[
\left\|
t\by_t^\Omega(\bx)-t\hat{\by}_t(\bx)
\right\|
\le
\left\|
t\hat{\by}_t^\Omega(\bx)-t\by_t^\Omega(\bx)
\right\|
+
\left\|
t\hat{\by}_t^\Omega(\bx)-t\hat{\by}_t(\bx)
\right\|.
\]
The first term is the localization error within \(\Omega\),
it produces the first two
contributions to \(r_\Omega\):
\(
\frac{
\varepsilon_\Omega
}{
t_{\mathrm{loc}}\sqrt{1-R/\rho_\Omega}
}
\) and 
\(
\frac{D_\Omega}{c_{\Omega,\mathrm{loc}}}
\exp\!\left(
-\frac{\varepsilon_\Omega^2}
{4(1-t_{\mathrm{loc}})^2}
\right).
\)
The second term measures the difference between the posterior mean
restricted to \(\Omega\) and the full posterior mean on \(\mathcal{D}\), it 
produces the third contribution:
\(
2M_{\mathcal D}
\frac{1-p_\emptyset(\Omega)}
{p_\emptyset(\Omega)}
\exp\!\left(
-\frac{
t_{\mathrm{loc}}^2\gamma_\Omega
}{
2(1-t_{\mathrm{loc}})^2
}
\right).
\)
These three uniform bounds for \(t\ge t_{\mathrm{loc}}\) are exactly
the three terms in the definition of \(r_\Omega\).

\begin{lemma}[Posterior-mean localization]\label{lem:final-local-posterior}
Under Assumption~\ref{assump:local-cluster}, let
$t_{\mathrm{loc}}$, $\varepsilon_\Omega$,
$c_{\Omega,\mathrm{loc}}$, and $r_\Omega$ satisfy
\eqref{eq:final-local-time-choice}--\eqref{eq:final-local-quantities}.
Suppose that
$t\in [t_{\mathrm{loc}},1)$ and $\bx\in\R^d$ satisfy
$\dist(\bx,t\Omega)<tR,$
then
\begin{equation}\label{eq:cluster-ball-bound}
\|t\by_t^\Omega(\bx) - t\hat{\by}_t(\bx)\| \le r_\Omega t,
\end{equation}
and consequently 
$t\hat{\by}_t(\bx) \in tB_{r_\Omega}(\by_t^\Omega(\bx)).$
\end{lemma}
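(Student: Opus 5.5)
We follow the decomposition displayed just before the statement. Write $\bq:=\Proj_{t\Omega}(\bx)=t\by_t^\Omega(\bx)$, which is well defined by Lemma~\ref{lem:prox-regular-projection-estimate} since $\dist(\bx,t\Omega)<tR$ and $R<\rho_\Omega$. The target bound is split as
\[
\|\bq-t\hat{\by}_t(\bx)\|\le \|t\hat{\by}_t^\Omega(\bx)-\bq\|+t\|\hat{\by}_t^\Omega(\bx)-\hat{\by}_t(\bx)\|.
\]
We bound the first term by $t$ times the first two summands of $r_\Omega$, and the second by $t$ times the third summand.

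\textbf{Localization within $\Omega$.} Let $\sigma:=1-t$ and $E(\by):=\|\bx-t\by\|^2-\|\bx-\bq\|^2\ge0$ for $\by\in\Omega$. By Jensen, $\|t\hat{\by}^\Omega_t-\bq\|\le\int_\Omega\|t\by-\bq\|\,\eta^{\Omega}(\dd\by)$, where $\eta^\Omega$ is the posterior restricted to $\Omega$.

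\emph{Good set.} On $\{E\le\varepsilon_\Omega^2\}$, Lemma~\ref{lem:prox-regular-projection-estimate} with $\bz=t\by$ gives
\[
\|t\by-\bq\|\le\varepsilon_\Omega/\sqrt{1-R/\rho_\Omega}.
\]

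\emph{Bad set.} Here $\|t\by-\bq\|\le tD_\Omega$. The posterior mass of $\{E>\varepsilon_\Omega^2\}$ is bounded by dividing numerator and denominator by $e^{-\|\bx-\bq\|^2/2\sigma^2}$. The numerator is at most $e^{-\varepsilon_\Omega^2/2\sigma^2}$. The denominator is at least $p_\emptyset(\Omega_t^0(\bx))e^{-\varepsilon_\Omega^2/4\sigma^2}\ge c_{\Omega,\mathrm{loc}}e^{-\varepsilon_\Omega^2/4\sigma^2}$ by Lemma~\ref{lem:polynomial-local-mass-radius}. So the bad mass is at most $c_{\Omega,\mathrm{loc}}^{-1}e^{-\varepsilon_\Omega^2/(4\sigma^2)}$.

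\emph{Uniformity in $t$.} The quantities $\varepsilon_\Omega$ and $c_{\Omega,\mathrm{loc}}$ are frozen at $t_{\mathrm{loc}}$, but for $t\ge t_{\mathrm{loc}}$ we have $\sigma\le1-t_{\mathrm{loc}}$, so $e^{-\varepsilon_\Omega^2/4\sigma^2}\le e^{-\varepsilon_\Omega^2/4(1-t_{\mathrm{loc}})^2}$. The mass lemma already holds for all such $t$. The good-set bound is $t\cdot\varepsilon_\Omega/(t\sqrt{1-R/\rho_\Omega})\le t\cdot\varepsilon_\Omega/(t_{\mathrm{loc}}\sqrt{1-R/\rho_\Omega})$. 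Together these give the first two terms of $r_\Omega$, times $t$.

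\textbf{Cluster isolation.} Write $Z_\Omega=\int_\Omega w$ and $Z_c=\int_{\mathcal D\setminus\Omega}w$, with $w=e^{-\|\bx-t\by\|^2/2\sigma^2}$. Then
\[
\hat{\by}_t-\hat{\by}^\Omega_t=\tfrac{Z_c}{Z_\Omega+Z_c}\,(\hat{\by}^{c}-\hat{\by}^\Omega),
\]
whose norm is at most $2M_{\mathcal D}Z_c/Z_\Omega$. The two partition functions are compared as follows.
\begin{itemize}
\item For $\by\in\Omega$: $\|\bx/t-\by\|\le\dist(\bx/t,\Omega)+D_\Omega<R+D_\Omega$, so $\|\bx-t\by\|^2\le t^2(R+D_\Omega)^2$ and $Z_\Omega\ge p_\emptyset(\Omega)e^{-t^2(R+D_\Omega)^2/2\sigma^2}$.
\item For $\by\in\mathcal D\setminus\Omega$: since $\bx/t\in B_R(\Omega)$, $\|\bx/t-\by\|\ge\delta_\Omega$, so $Z_c\le(1-p_\emptyset(\Omega))e^{-t^2\delta_\Omega^2/2\sigma^2}$.
\end{itemize}
The ratio is therefore at most $\frac{1-p_\emptyset(\Omega)}{p_\emptyset(\Omega)}e^{-t^2\gamma_\Omega/2\sigma^2}$. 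This exponent is monotone in $t$: as $t$ increases, $t$ grows and $\sigma$ shrinks, so $t^2/\sigma^2\ge t_{\mathrm{loc}}^2/(1-t_{\mathrm{loc}})^2$. Multiplying by $t$ gives the third term of $r_\Omega$, times $t$.

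Summing the two pieces yields \eqref{eq:cluster-ball-bound}, and dividing by $t$ gives the ball membership. The main obstacle is the bad-set step of the localization: it must be checked that $\Omega_t^0(\bx)$ sits at energy level $\varepsilon_\Omega^2/2$ while the bad set is cut at $\varepsilon_\Omega^2$, so that the gap produces exactly the factor $e^{-\varepsilon_\Omega^2/(4\sigma^2)}$ appearing in $r_\Omega$.
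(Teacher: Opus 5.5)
Your proposal is correct and follows the paper's proof essentially step for step. It uses the same split through the restricted posterior mean $\hat{\by}_t^\Omega(\bx)$, the same near/far cut at $\varepsilon_\Omega^2$ against the mass set $\Omega_t^0(\bx)$ at level $\varepsilon_\Omega^2/2$, the same partition-function comparison for the isolation term, and the same monotonicity-in-$t$ bookkeeping. Normalizing by $e^{-\|\bx-\bq\|^2/2\sigma^2}$ in the far-set bound, and writing the isolation gap as a mixture of two restricted means, are only cosmetic repackagings of the paper's pointwise-comparison and double-integral estimates.
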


\begin{proof}
Since $\dist(\bx,t\Omega)<tR$, the projection
$t\by_t^\Omega(\bx)=\Proj_{t\Omega}(\bx)$ is well defined.
With the restricted posterior mean defined in
\eqref{eq:local-posterior-mean}, we first compare it with the
projection point.

For $\by\in \Omega$, define the following auxiliary quantity
\[\Delta_t(\by) := \|\bx-t\by\|^2 - \|\bx-t\by_t^\Omega(\bx)\|^2.\] 
Next define auxiliary sets
\[\Omega_t^{\mathrm{near}}:=\{\by \in \Omega : \Delta_t(\by)\le \varepsilon_\Omega^2\}, \quad \Omega_t^{\mathrm{far}}:=\Omega \setminus \Omega_t^{\mathrm{near}}.\]
Writing
$w_t(\by)
:=
\exp\!\left(-\tfrac{\|\bx-t\by\|^2}{2(1-t)^2}\right),$
Assumption~\ref{assump:local-cluster}(ii) implies
\(p_\emptyset(\Omega)>0\). Since \(w_t(\by)>0\), we have
\(
\int_\Omega w_t(\by)p_\emptyset(\dd\by)>0.
\)
The definition of \(\hat{\by}_t^\Omega(\bx)\) gives
\[
t\hat{\by}_t^\Omega(\bx)-t\by_t^\Omega(\bx)
=
\tfrac{
\int_\Omega
\bigl(t\by-t\by_t^\Omega(\bx)\bigr)
w_t(\by)p_\emptyset(\dd\by)
}{
\int_\Omega w_t(\by)p_\emptyset(\dd\by)
}.
\]
Taking norms, and splitting
\(\Omega=\Omega_t^{\mathrm{near}}\cup\Omega_t^{\mathrm{far}}\), we obtain
\begin{align*}
\|t\hat{\by}_t^\Omega(\bx)-t\by_t^\Omega(\bx)\|
\le
\underbrace{\tfrac{
\int_{\Omega_t^{\mathrm{near}}}
\|t\by_t^\Omega(\bx)-t\by\|\,w_t(\by)
p_\emptyset(\dd \by)
}{
\int_{\Omega}w_t(\by)p_\emptyset(\dd \by)
}}_{I_{\mathrm{near}}}
+ \underbrace{
\tfrac{
\int_{\Omega_t^{\mathrm{far}}}
\|t\by_t^\Omega(\bx)-t\by\|\,w_t(\by)
p_\emptyset(\dd \by)
}{
\int_{\Omega} w_t(\by) p_\emptyset(\dd \by)
}}_{I_{\mathrm{far}}}.
\end{align*}
We now bound $I_{\mathrm{near}}$ and  $I_{\mathrm{far}}$ separately.

On $\Omega_t^{\mathrm{near}}$,
Lemma~\ref{lem:prox-regular-projection-estimate} gives
\(
\|t\by_t^\Omega(\bx)-t\by\|^2
\le
\frac{1}{1-R/\rho_\Omega}\Delta_t(\by)
\le
\frac{1}{1-R/\rho_\Omega}\varepsilon_\Omega^2,
\)
so we have
\begin{align}\label{eq:I_near}
    I_{\mathrm{near}} \le  \tfrac{1}{\sqrt{1-R/\rho_\Omega}}\varepsilon_\Omega \cdot 
\tfrac{
\int_{\Omega_t^{\mathrm{near}}}
\,w_t(\by)
p_\emptyset(\dd \by)
}{
\int_{\Omega}w_t(\by)p_\emptyset(\dd \by) 
} \le \tfrac{1}{\sqrt{1-R/\rho_\Omega}}\varepsilon_\Omega .
\end{align}

On $\Omega_t^{\mathrm{far}}$, both $t\by_t^\Omega(\bx)$ and $t\by$
lie in $t\Omega$, so their distance is at most \(tD_\Omega\).
Moreover, for $\by \in \Omega_t^{\mathrm{far}}$ and $\by_0 \in \Omega_t^0(\bx)$, where 
\(\Omega_t^0(\bx)\) is defined in \eqref{eq:poly-local-mass-set}, we have 
\[\|\bx-t\by\|^2 - \|\bx-t\by_0\|^2= \Delta_t(\by) - \Delta_t(\by_0) \ge\tfrac{\varepsilon_\Omega^2}{2},
\]
so we have a pointwise comparison
\begin{equation}\label{eq:near-far-pointwise-compare}
    w_t(\by) = w_t(\by_0)\exp\!\left(-\tfrac{\|\bx-t\by\|^2 - \|\bx-t\by_0\|^2}{2(1-t)^2}\right)
\le\exp\!\left(-\tfrac{\varepsilon_\Omega^2}{4(1-t)^2}\right)w_t(\by_0).
\end{equation}
Now we have the following identity
\begin{align}\label{eq:uncond-final-double-int}
p_\emptyset\bigl(\Omega_t^0(\bx)\bigr)
\int_{\Omega_t^{\mathrm{far}}}
w_t(\by)\,p_\emptyset(\dd\by)
=
\int_{\Omega_t^{\mathrm{far}}}
\int_{\Omega_t^0(\bx)}
w_t(\by)\,
p_\emptyset(\dd\by_0)\,
p_\emptyset(\dd\by).
\end{align}
Integrating \eqref{eq:near-far-pointwise-compare} over
\(\boldsymbol{y}\in\Omega_t^{\mathrm{far}}\) and
\(\boldsymbol{y}_0\in\Omega_t^0(\boldsymbol{x})\) gives
\begin{align*}
p_\emptyset\bigl(\Omega_t^0(\bx)\bigr)
\int_{\Omega_t^{\mathrm{far}}}
w_t(\by)\,p_\emptyset(\dd\by) & =  \int_{\Omega_t^{\mathrm{far}}}
\int_{\Omega_t^0(\bx)}
w_t(\by)\,
p_\emptyset(\dd\by_0)\,
p_\emptyset(\dd\by) \\
&\le
\exp\left(
-\tfrac{\varepsilon_\Omega^2}{4(1-t)^2}
\right)
\int_{\Omega_t^{\mathrm{far}}}
\int_{\Omega_t^0(\bx)}
w_t(\by_0)\,
p_\emptyset(\dd\by_0)\,
p_\emptyset(\dd\by)\\
&= \exp\left(
-\tfrac{\varepsilon_\Omega^2}{4(1-t)^2}
\right)p_\emptyset\bigl(\Omega_t^{\mathrm{far}}\bigr)
\int_{\Omega_t^0(\bx)}
w_t(\by_0)\,p_\emptyset(\dd\by_0)\\
&\le  \exp\left(
-\tfrac{\varepsilon_\Omega^2}{4(1-t)^2}
\right)
\int_{\Omega}
w_t(\by_0)\,p_\emptyset(\dd\by_0).
\end{align*}
where the last inequality is from  \(p_\emptyset(\Omega_t^{\mathrm{far}})\le 1\), and
\(\Omega_t^0(\bx)\subset\Omega\). 
By Lemma~\ref{lem:polynomial-local-mass-radius}, 
\(
p_\emptyset\bigl(\Omega_t^0(\bx)\bigr)
\ge
c_{\Omega,\mathrm{loc}}>0,
\) so we have
\[
\tfrac{
\int_{\Omega_t^{\mathrm{far}}}w_t(\by)p_\emptyset(\dd\by)
}{
\int_{\Omega}w_t(\by)p_\emptyset(\dd\by)
}
\le
\tfrac{1}{c_{\Omega,\mathrm{loc}}}
\exp\!\left(-\tfrac{\varepsilon_\Omega^2}{4(1-t)^2}\right).
\]
Since $\|t\by_t^\Omega(\bx)-t\by\| \le t\diam(\Omega) = tD_\Omega$ on $\Omega_t^{\mathrm{far}}$,  we conclude that
\begin{align}\label{eq:I_far}
    I_\mathrm{far} \le tD_\Omega  
\tfrac{
\int_{\Omega_t^{\mathrm{far}}}w_t(\by)p_\emptyset(\dd\by)
}{
\int_{\Omega}w_t(\by)p_\emptyset(\dd\by)
}
\le 
\tfrac{tD_\Omega}{c_{\Omega,\mathrm{loc}}}
\exp\!\left(-\tfrac{\varepsilon_\Omega^2}{4(1-t)^2}\right).
\end{align}
Combining the estimates \eqref{eq:I_near} and \eqref{eq:I_far} of $I_{\mathrm{near}}$ and  $I_{\mathrm{far}}$ gives
\begin{equation}\label{eq:local-posterior-bound}
\|t\hat{\by}_t^\Omega(\bx)-t\by_t^\Omega(\bx)\|
\le
\tfrac{1}{\sqrt{1-R/\rho_\Omega}}\varepsilon_\Omega
+
\tfrac{t D_\Omega}{c_{\Omega,\mathrm{loc}}}
\exp\!\left(-\tfrac{\varepsilon_\Omega^2}{4(1-t)^2}\right).
\end{equation}
We next compare the full posterior mean with the local posterior mean.
Write
\[
Z_{\Omega,t}
:=
\int_{\Omega} w_t(\by)p_\emptyset(\dd \by),
\qquad
Z_{\mathrm{out},t}
:=
\int_{\mathcal{D}\setminus\Omega}
w_t(\by)p_\emptyset(\dd \by).
\]
Splitting the numerator and denominator of the full posterior over
\(\Omega\) and \(\mathcal D\setminus\Omega\) gives
\begin{align*}
\hat{\by}_t(\bx)
=
\tfrac{
\int_\Omega \by w_t(\by)p_\emptyset(\dd\by)
+
\int_{\mathcal D\setminus\Omega}
\by w_t(\by)p_\emptyset(\dd\by)
}{
Z_{\Omega,t}+Z_{\mathrm{out},t}
}=
\tfrac{
Z_{\Omega,t}\hat{\by}_t^\Omega(\bx)
+
\int_{\mathcal D\setminus\Omega}
\by w_t(\by)p_\emptyset(\dd\by)
}{
Z_{\Omega,t}+Z_{\mathrm{out},t}
}.
\end{align*}
Consequently,
\begin{align}\label{eq:subtract-uncond-posterior}
\hat{\by}_t^\Omega(\bx)-\hat{\by}_t(\bx)
=
\tfrac{
Z_{\mathrm{out},t}\hat{\by}_t^\Omega(\bx)
-
\int_{\mathcal D\setminus\Omega}
\by w_t(\by)p_\emptyset(\dd\by)
}{
Z_{\Omega,t}+Z_{\mathrm{out},t}
}=
\tfrac{
\int_{\mathcal D\setminus\Omega}
\bigl(\hat{\by}_t^\Omega(\bx)-\by\bigr)
w_t(\by)p_\emptyset(\dd\by)
}{
Z_{\Omega,t}+Z_{\mathrm{out},t}
},
\end{align}
where the second equality uses
\(
Z_{\mathrm{out},t}\hat{\by}_t^\Omega(\bx)
=
\int_{\mathcal D\setminus\Omega}
\hat{\by}_t^\Omega(\bx)\,w_t(\by)p_\emptyset(\dd\by).
\)
Moreover,
\[
\|\hat{\by}_t^\Omega(\bx)\|
\le
\tfrac{
\int_\Omega \|\by\|w_t(\by)p_\emptyset(\dd\by)
}{
Z_{\Omega,t}
}
\le
M_{\mathcal D}.
\]
Thus, for \(\by\in\mathcal D\setminus\Omega\),
\(\|\hat{\by}_t^\Omega(\bx)-\by\|\le2M_{\mathcal D}\), and hence
\begin{align*}
\|t\hat{\by}_t^\Omega(\bx)-t\hat{\by}_t(\bx)\|
&\le
\tfrac{
t\int_{\mathcal D\setminus\Omega}
\|\hat{\by}_t^\Omega(\bx)-\by\|w_t(\by)p_\emptyset(\dd\by)
}{
Z_{\Omega,t}+Z_{\mathrm{out},t}
}\le
2tM_{\mathcal D}
\tfrac{Z_{\mathrm{out},t}}{Z_{\Omega,t}+Z_{\mathrm{out},t}}
\le
2tM_{\mathcal D}
\tfrac{Z_{\mathrm{out},t}}{Z_{\Omega,t}}.
\end{align*}
The assumed distance condition gives
$\|\bx-t\by_t^\Omega(\bx)\| \le tR.$
Since $t\by_t^\Omega(\bx) \in t\Omega$ and the diameter of $\Omega$
is $D_\Omega$, we obtain
$\|\bx-t\by\| \le t(R+D_\Omega),$
$\by \in \Omega,$
which implies
\begin{equation}\label{eq:Z-inside}
Z_{\Omega,t}
\ge
p_\emptyset(\Omega)
\exp\!\left(-\tfrac{t^2(R+D_\Omega)^2}{2(1-t)^2}\right).
\end{equation}
On the other hand, since $\bx\in B_{tR}(t\Omega)$ and
$\delta_\Omega=\dist(B_R(\Omega),\mathcal{D}\setminus\Omega)$,
scaling gives $\|\bx-t\by\| \ge t\delta_\Omega,$
$\by \in \mathcal{D}\setminus\Omega,$
so
\begin{equation}\label{eq:Z-outside}
Z_{\mathrm{out},t}
\le
(1-p_\emptyset(\Omega))
\exp\!\left(-\tfrac{t^2\delta_\Omega^2}{2(1-t)^2}\right).
\end{equation}
Hence
\begin{equation}\label{eq:global-local-gap}
\|t\hat{\by}_t^\Omega(\bx)-t\hat{\by}_t(\bx)\|
\le
2tM_{\mathcal{D}} \tfrac{1-p_\emptyset(\Omega)}{p_\emptyset(\Omega)}
\exp\!\left(
-\tfrac{t^2\gamma_\Omega}{2(1-t)^2}
\right).
\end{equation}
Combining \eqref{eq:local-posterior-bound} and \eqref{eq:global-local-gap} yields the desired bound
\begin{align*}
\|t\by_t^\Omega(\bx) - t\hat{\by}_t(\bx)\|\le
\tfrac{\varepsilon_\Omega}
{\sqrt{1-R/\rho_\Omega}}
+
\tfrac{t D_\Omega}{c_{\Omega,\mathrm{loc}}}
\exp\!\left(-\tfrac{\varepsilon_\Omega^2}{4(1-t)^2}\right)
+
2tM_{\mathcal{D}} \tfrac{1-p_\emptyset(\Omega)}{p_\emptyset(\Omega)}
\exp\!\left(
-\tfrac{t^2\gamma_\Omega}{2(1-t)^2}
\right).
\end{align*}
Assumption~\ref{assump:local-cluster}(iii) gives $\gamma_\Omega>0$.
Since \(t\ge t_{\mathrm{loc}}\),
\(
\frac{\varepsilon_\Omega}{\sqrt{1-R/\rho_\Omega}}
\le
t\,
\frac{\varepsilon_\Omega}
{t_{\mathrm{loc}}\sqrt{1-R/\rho_\Omega}}.
\)
Moreover, the two exponential factors are nonincreasing in \(t\), so
\begin{align*}
\exp\!\left(
-\tfrac{\varepsilon_\Omega^2}{4(1-t)^2}
\right)
\le
\exp\!\left(
-\tfrac{\varepsilon_\Omega^2}{4(1-t_{\mathrm{loc}})^2}
\right), \quad
\exp\!\left(
-\tfrac{t^2\gamma_\Omega}{2(1-t)^2}
\right)
\le
\exp\!\left(
-\tfrac{t_{\mathrm{loc}}^2\gamma_\Omega}
{2(1-t_{\mathrm{loc}})^2}
\right).
\end{align*}
Applying these three inequalities term by term to the preceding bound
gives
\[
\|t\by_t^\Omega(\bx)-t\hat{\by}_t(\bx)\|
\le
 r_\Omega t,
\]
which is \eqref{eq:cluster-ball-bound}.
Thus $t\hat{\by}_t(\bx) \in tB_{r_\Omega}(\by_t^\Omega(\bx)).$
\end{proof}

\subsubsection{Proof of Theorem~\ref{thm:final-local}}\label{proof:thm:final-local}

\begin{proof}
Condition~\eqref{eq:final-local-capture-condition} and
\(t_{\mathrm{loc}}<1\) give
\[
r_\Omega
\le
t_{\mathrm{loc}}R
-
\dist\bigl(
\bx_{t_{\mathrm{loc}}},
t_{\mathrm{loc}}B_{r_\Omega}(\Omega)
\bigr)
<R.
\]
Moreover, \(r_\Omega>0\) and \(t_{\mathrm{loc}}<1\), so
\[
\dist(\bx_{t_{\mathrm{loc}}},t_{\mathrm{loc}}\Omega)
\le
\dist\bigl(\bx_{t_{\mathrm{loc}}},
t_{\mathrm{loc}}B_{r_\Omega}(\Omega)\bigr)
+t_{\mathrm{loc}}r_\Omega
<
\dist\bigl(\bx_{t_{\mathrm{loc}}},
t_{\mathrm{loc}}B_{r_\Omega}(\Omega)\bigr)
+r_\Omega
\le
t_{\mathrm{loc}}R.
\]
Define the first-exit time
\[
\tau^*
:=
\inf
\left\{
t\in(t_{\mathrm{loc}},1):
\dist(\bx_t,t\Omega)\ge tR
\right\},
\]
with the convention \(\tau^*=1\) if the set is empty. Continuity and
the initial strict inequality give \(\tau^*>t_{\mathrm{loc}}\) and
\(
\dist(\bx_t,t\Omega)<tR,
\
t\in[t_{\mathrm{loc}},\tau^*).
\)
For  every \(t\in[t_{\mathrm{loc}},\tau^*)\), 
 we get that \(\Proj_{tB_{r_\Omega}(\Omega)}(\bx_t)\) is well-defined and that
\(
\Proj_{tB_{r_\Omega}(\Omega)}(\bx_t)/t\in B_{r_\Omega}\bigl(\by_t^\Omega(\bx_t)\bigr)
\) from Lemma~\ref{lem:inflated-neighborhood-projection-regularity}. In addition, 
Lemma~\ref{lem:final-local-posterior} gives
\(\hat{\by}_t(\bx_t)\in B_{r_\Omega}\bigl(\by_t^\Omega(\bx_t)\bigr).\) 

Now, the set \(B_{r_\Omega}\bigl(\by_t^\Omega(\bx_t)\bigr)\) is  nonempty, closed and
convex, and
\(
B_{r_\Omega}\bigl(\by_t^\Omega(\bx_t)\bigr)\subset B_{r_\Omega}(\Omega)
\)
because \(\by_t^\Omega(\bx_t)\in\Omega\). Hence, applying part~(i) of
Lemma~\ref{lem:continuous-scaled-convex-contraction}  on
\(I=[t_{\mathrm{loc}},\tau^*)\), with
\(S=B_{r_\Omega}(\Omega),C_t = B_{r_\Omega}\bigl(\by_t^\Omega(\bx_t)\bigr)\) and \(\boldsymbol m_t=\hat{\by}_t(\bx_t)\), gives, for every
\(t<\tau^*\),
\begin{equation}\label{eq:final-local-global-bound}
\dist\bigl(\bx_t,tB_{r_\Omega}(\Omega)\bigr)
\le
\tfrac{1-t}{1-t_{\mathrm{loc}}}
\dist\bigl(\bx_{t_{\mathrm{loc}}},
t_{\mathrm{loc}}B_{r_\Omega}(\Omega)\bigr).
\end{equation}
Combining \eqref{eq:final-local-global-bound} with the capture
condition gives, for \(t<\tau^*\),
\[
\begin{aligned}
\dist\bigl(\bx_t,tB_{r_\Omega}(\Omega)\bigr)+r_\Omega
\le
\tfrac{1-t}{1-t_{\mathrm{loc}}}
\bigl(t_{\mathrm{loc}}R-r_\Omega\bigr)+r_\Omega
=
tR-
\tfrac{t-t_{\mathrm{loc}}}{1-t_{\mathrm{loc}}}
(R-r_\Omega) \le tR.
\end{aligned}
\]
Suppose that \(\tau^*<1\). Letting \(t\uparrow\tau^*\) in the
preceding estimate and using \(\tau^*>t_{\mathrm{loc}}\) and
\(r_\Omega<R\) gives
\[
\dist\bigl(\bx_{\tau^*},\tau^*B_{r_\Omega}(\Omega)\bigr)+r_\Omega
<
\tau^*R.
\]
Consequently,
\[
\dist(\bx_{\tau^*},\tau^*\Omega)
\le
\dist\bigl(\bx_{\tau^*},\tau^*B_{r_\Omega}(\Omega)\bigr)
+\tau^*r_\Omega
\le
\dist\bigl(\bx_{\tau^*},\tau^*B_{r_\Omega}(\Omega)\bigr)+r_\Omega
<
\tau^*R.
\]
On the other hand, the first-exit definition and continuity imply
\(
\dist(\bx_{\tau^*},\tau^*\Omega)\ge\tau^*R,
\)
which is a contradiction. Hence \(\tau^*=1\).

\noindent
\textbf{Attraction.}
For any \(t_{\mathrm{loc}}\le s\le t<1\), the same argument as in \eqref{eq:final-local-global-bound} with
 part~(i) of
Lemma~\ref{lem:continuous-scaled-convex-contraction} gives
\[
\dist\bigl(\bx_t,tB_{r_\Omega}(\Omega)\bigr)
\le
\tfrac{1-t}{1-s}
\dist\bigl(\bx_s,sB_{r_\Omega}(\Omega)\bigr).
\]
Since \((1-t)/(1-s)\le1\), the distance is non-increasing on
\([t_{\mathrm{loc}},1)\). This proves part~(i) of the theorem.

Taking \(s=t_{\mathrm{loc}}\) gives
\eqref{eq:final-local-global-bound} for every
\(t\in[t_{\mathrm{loc}},1)\). Moreover, the factor multiplying
\(1-t\) on the right-hand side is independent of \(t\), so
\eqref{eq:final-local-global-bound} is \(\bigO(1-t)\). This proves
part~(ii) of the theorem.

\noindent
\textbf{Absorption.}
Suppose that 
\(
\bx_{t_\Omega}\in
t_\Omega B_{r_\Omega}(\Omega)
\)
for some \(t_\Omega\in[t_{\mathrm{loc}},1)\), then part~(ii) of
Lemma~\ref{lem:continuous-scaled-convex-contraction}
applied on \([t_{\mathrm{loc}},1)\)  with 
\(
S=B_{r_\Omega}(\Omega),
C_t=
B_{r_\Omega}\!\left(
\by_t^\Omega(\bx_t)
\right)
\) and \(\boldsymbol m_t=\hat{\by}_t(\bx_t)\) gives 
\[
\bx_t\in tB_{r_\Omega}(\Omega)
\] for every \(t\in[t_\Omega,1)\).
This proves part~(iii) of the theorem.
\end{proof}

\subsubsection{Proof of Theorem~\ref{cor:final-local-euler}}\label{proof:cor:final-local-euler}

\begin{proof}
The capture condition and \(t_{\mathrm{loc}}<1\) give
\[
r_\Omega
\le
t_{\mathrm{loc}}R
-
\dist\bigl(
\bz_0,t_{\mathrm{loc}}B_{r_\Omega}(\Omega)
\bigr)
<R.
\]
Moreover, \(r_\Omega>0\), so
\[
\begin{aligned}
\dist(\bz_0,t_{\mathrm{loc}}\Omega)
\le
\dist\bigl(
\bz_0,t_{\mathrm{loc}}B_{r_\Omega}(\Omega)
\bigr)
+t_{\mathrm{loc}}r_\Omega
\le 
t_{\mathrm{loc}}R - (1-t_{\mathrm{loc}})r_\Omega
< t_{\mathrm{loc}}R.
\end{aligned}
\]
For every \(i=0,\ldots,N-1\), the Euler update has the affine form
\[
\bz_{i+1}
=
\tfrac{1-t_{i+1}}{1-t_i}\bz_i
+
\tfrac{t_{i+1}-t_i}{1-t_i}
\hat{\by}_{t_i}(\bz_i).
\]
We prove by induction that, for every \(i=0,\ldots,N\),
\begin{equation}\label{eq:final-euler-neighborhood-bound}
\dist\bigl(\bz_i,t_iB_{r_\Omega}(\Omega)\bigr)
+r_\Omega
\le
t_iR.
\end{equation}
The case \(i=0\) is exactly the capture condition.
Suppose that \eqref{eq:final-euler-neighborhood-bound} holds for some
\(i<N\). Since \(t_i<1\) and \(r_\Omega>0\), it implies
\[
\begin{aligned}
\dist(\bz_i,t_i\Omega)
\le
\dist\bigl(\bz_i,t_iB_{r_\Omega}(\Omega)\bigr)
+t_ir_\Omega
\le
t_iR-(1-t_i)r_\Omega
<t_iR.
\end{aligned}
\]
Hence, 
\(
t_i\by_{t_i}^\Omega(\bz_i)
=
\Proj_{t_i\Omega}(\bz_i)
\)
is well defined whenever \eqref{eq:final-euler-neighborhood-bound} holds, and
Lemma~\ref{lem:final-local-posterior} gives
\begin{equation}\label{eq:final-euler-posterior-localization}
\hat{\by}_{t_i}(\bz_i)
\in
B_{r_\Omega}\bigl(\by_{t_i}^\Omega(\bz_i)\bigr).
\end{equation}
Since \(\by_{t_i}^\Omega(\bz_i)\in\Omega\),
\(
B_{r_\Omega}\bigl(\by_{t_i}^\Omega(\bz_i)\bigr)
\subset
B_{r_\Omega}(\Omega)\)
is nonempty, closed, and convex.
Let
\(
\bq_i:=\Proj_{t_iB_{r_\Omega}(\Omega)}(\bz_i), 
\)
then 
Lemma~\ref{lem:inflated-neighborhood-projection-regularity} and
\eqref{eq:final-euler-posterior-localization} give
\(\left\{
\bq_i/t_i,
\hat{\by}_{t_i}(\bz_i)
\right\}
\subset
B_{r_\Omega}\bigl(\by_{t_i}^\Omega(\bz_i)\bigr)
\subset B_{r_\Omega}(\Omega).\)
If
\(
\dist(\bz_i,t_iB_{r_\Omega}(\Omega))>0,
\)
then  part~(i) of
Lemma~\ref{lem:euler-scaled-convex-contraction}, applied with
\(
s=t_i,\ t=t_{i+1}, \ 
\bx = \bz_i, \by = \hat{\by}_{t_i}(\bz_i),
S=B_{r_\Omega}(\Omega),\
C=B_{r_\Omega}\bigl(\by_{t_i}^\Omega(\bz_i)\bigr),
\)
gives 
\begin{align}\label{eq:final-euler-one-step-attraction}
\dist\bigl(\bz_{i+1},t_{i+1}B_{r_\Omega}(\Omega)\bigr)
\le
\tfrac{1-t_{i+1}}{1-t_i}
\dist\bigl(\bz_i,t_iB_{r_\Omega}(\Omega)\bigr).
\end{align}
If
\(
\dist(\bz_i,t_iB_{r_\Omega}(\Omega))=0,
\)
then \(\bq_i=\bz_i\), so
\(
\left\{
\tfrac{\bz_i}{t_i},
\hat{\by}_{t_i}(\bz_i)
\right\}
\subset
B_{r_\Omega}\bigl(\by_{t_i}^\Omega(\bz_i)\bigr)
\subset B_{r_\Omega}(\Omega).
\)
Part~(ii) of Lemma~\ref{lem:euler-scaled-convex-contraction}, applied
with
\(
s=t_i,\ t=t_{i+1}, \ 
S=B_{r_\Omega}(\Omega),
\
C=B_{r_\Omega}\bigl(\by_{t_i}^\Omega(\bz_i)\bigr),
\)
gives \(\bz_{i+1}\in t_{i+1}B_{r_\Omega}(\Omega)\). Hence the next
distance is zero, so \eqref{eq:final-euler-one-step-attraction} below holds trivially.
Thus, in both cases, \eqref{eq:final-euler-one-step-attraction} holds. 
Combining this estimate with  the induction assumption gives
\begin{align*}
\dist\bigl(
\bz_{i+1},
t_{i+1}B_{r_\Omega}(\Omega)
\bigr)+r_\Omega
&\le
\tfrac{1-t_{i+1}}{1-t_i}
\bigl(t_iR-r_\Omega\bigr)+r_\Omega
=
t_{i+1}R
-
\tfrac{t_{i+1}-t_i}{1-t_i}
(R-r_\Omega)
<t_{i+1}R.
\end{align*}
The strict inequality follows from \(t_{i+1}>t_i\) and
\(r_\Omega<R\), and is valid when \(t_{i+1}=1\).

By induction, \eqref{eq:final-euler-neighborhood-bound} holds for all
\(i=0,\ldots,N\). The  calculation above also
shows that \(\dist(\bz_i,t_i\Omega)<t_iR\) for every update index
\(i=0,\ldots,N-1\). 

\noindent
\textbf{Attraction.}
The estimate \eqref{eq:final-euler-one-step-attraction} established above holds for every \(i<N\).
For every \(0\le j\le i\le N\) with \(t_j<1\), iterating it from
index \(j\) to index \(i-1\) gives
\[
\dist\bigl(\bz_i,t_iB_{r_\Omega}(\Omega)\bigr)
\le
\tfrac{1-t_i}{1-t_j}
\dist\bigl(\bz_j,t_jB_{r_\Omega}(\Omega)\bigr),
\]
where the case \(i=j\) is immediate. Since
\((1-t_i)/(1-t_j)\le1\), the sequence
\(
i\mapsto
\dist\bigl(\bz_i,t_iB_{r_\Omega}(\Omega)\bigr)
\)
is non-increasing. This proves part~(i) of the theorem.

Taking \(j=0\) gives
\[
\dist\bigl(\bz_i,t_iB_{r_\Omega}(\Omega)\bigr)
\le
\tfrac{1-t_i}{1-t_{\mathrm{loc}}}
\dist\bigl(
\bz_0,
t_{\mathrm{loc}}B_{r_\Omega}(\Omega)
\bigr),
\qquad
i=0,\ldots,N.
\]
The factor multiplying \(1-t_i\) on the right-hand side is independent
of \(i\), so the bound is \(\bigO(1-t_i)\). This proves part~(ii) of
the theorem.

\noindent
\textbf{Absorption.}
Suppose that
\(\bz_k\in t_kB_{r_\Omega}(\Omega)\)
for some \(k\in\{0,\ldots,N\}\).
We prove by induction that
\(
\bz_i\in t_iB_{r_\Omega}(\Omega),
i=k,\ldots,N.
\)
The assertion at \(i=k\) is the entrance assumption; if
\(k=N\), there is nothing further to prove. Suppose that it holds at
some \(i\in\{k,\ldots,N-1\}\). The already-established bound
\eqref{eq:final-euler-neighborhood-bound}, together with \(t_i<1\) and
\(r_\Omega>0\), gives
\(\dist(\bz_i,t_i\Omega)<t_iR\), so the projection and posterior-localization
lemmas apply at this update index. Since
\(\bz_i\in t_iB_{r_\Omega}(\Omega)\), its projection onto that set is
\(\bz_i\). Lemma~\ref{lem:inflated-neighborhood-projection-regularity}
therefore gives
\[
\bz_i/t_i\in
B_{r_\Omega}(\by_{t_i}^\Omega(\bz_i)).
\]
Here, \eqref{eq:final-euler-posterior-localization} gives that
\(\hat{\by}_{t_i}(\bz_i)\) is in the same convex ball. This ball is
contained in \(B_{r_\Omega}(\Omega)\), so part~(ii) of
Lemma~\ref{lem:euler-scaled-convex-contraction} gives
\(\bz_{i+1}\in t_{i+1}B_{r_\Omega}(\Omega)\), completing the induction.
This proves part~(iii) of the theorem.
\end{proof}

\section{Proofs for Section~\ref{sec:cfg}}\label{app:cfg-proofs}

The CFG proofs use the same scaled-target mechanism as the unconditional
arguments. The stage-specific work is to verify pointwise that the
extrapolated posterior mean and the normalized projection point lie in a
nonempty closed convex subset of the static target. Parts~(i)--(ii) of
Lemma~\ref{lem:continuous-scaled-convex-contraction} then give continuous
attraction and absorption, while parts~(i)--(ii) of
Lemma~\ref{lem:euler-scaled-convex-contraction} give the exact Euler
counterparts.

\begin{lemma}[A priori CFG norm bound]\label{lem:cfg-norm-bound}
Let $\bx_t$ solve the CFG ODE with initial condition $\bx_0$.
Then, for every $t\in[0,1)$ for which the solution is defined,
\begin{equation*}
\|\bx_t\|
\le
(1-t)\|\bx_0\|
+
t\bigl((w-1)M_{\mathcal D}+wM_{\mathcal D_c}\bigr).
\end{equation*}
\end{lemma}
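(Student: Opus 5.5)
The plan is to mirror the proof of Lemma~\ref{lem:early-norm-bound}. First I rewrite the CFG ODE (\ref{eq:cfg flow}) in posterior-mean form,
\[
\dot\bx_t=\frac{\hat{\by}_{t,\mathrm{cfg}}(\bx_t)-\bx_t}{1-t},
\qquad
\hat{\by}_{t,\mathrm{cfg}}(\bx)=w\hat{\by}_{t,c}(\bx)+(1-w)\hat{\by}_t(\bx).
\]
This identity follows directly from (\ref{eq:uncond ideal field}) and (\ref{eq:cond ideal field}), because the coefficients $w$ and $1-w$ sum to one.

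Next I bound the extrapolated mean uniformly in $t$ and $\bx$. Since $\eta_{t,c}^{\bx}$ is a probability measure supported in $\mathcal D_c$, we have $\|\hat{\by}_{t,c}(\bx)\|\le M_{\mathcal D_c}$. Likewise $\|\hat{\by}_t(\bx)\|\le M_{\mathcal D}$. Because $w>1$, the triangle inequality gives
\[
\|\hat{\by}_{t,\mathrm{cfg}}(\bx)\|\le wM_{\mathcal D_c}+(w-1)M_{\mathcal D}.
\]

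Then I use the integrating factor exactly as in Lemma~\ref{lem:early-norm-bound}. From the ODE,
\[
\frac{\dd}{\dd t}\Bigl(\frac{\bx_t}{1-t}\Bigr)=\frac{\hat{\by}_{t,\mathrm{cfg}}(\bx_t)}{(1-t)^2}.
\]
Integrating from $0$ to $t$ and multiplying by $1-t$ gives
\[
\bx_t=(1-t)\bx_0+(1-t)\int_0^t\frac{\hat{\by}_{s,\mathrm{cfg}}(\bx_s)}{(1-s)^2}\,\dd s.
\]
I then take norms, insert the uniform bound on $\hat{\by}_{s,\mathrm{cfg}}$, and use $(1-t)\int_0^t(1-s)^{-2}\,\dd s=t$. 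This yields the claimed estimate.

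There is no real obstacle; the argument is a routine adaptation. The one point needing care is the sign of $1-w<0$. Because of it, the unconditional contribution enters with weight $|1-w|=w-1$ rather than cancelling. The solution is $C^1$ on every $[0,T]$ by Theorem~\ref{thm:wellposed-flow}(iii), which justifies the differentiation and integration steps.
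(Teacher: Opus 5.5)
Your proposal is correct and is essentially the paper's own argument. The paper likewise writes the CFG ODE as $\dot\bx_t=(\hat{\by}_{t,\mathrm{cfg}}(\bx_t)-\bx_t)/(1-t)$, bounds $\|\hat{\by}_{t,\mathrm{cfg}}\|\le (w-1)M_{\mathcal D}+wM_{\mathcal D_c}$ via the triangle inequality, and then reruns the integrating-factor computation of Lemma~\ref{lem:early-norm-bound} with this constant in place of $M_{\mathcal D}$.
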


\begin{proof}
Let $\hat{\by}_{t,\mathrm{cfg}}(\bx_t):=\hat{\by}_t(\bx_t) + w\bigl(\hat{\by}_{t,c}(\bx_t)-\hat{\by}_t(\bx_t)\bigr).$ 
Since $\|\hat{\by}_t(\bx_t)\| \le M_{\mathcal{D}}$ and $\|\hat{\by}_{t,c}(\bx_t)\| \le M_{\mathcal{D}_c}$, we have
\begin{equation*}
\|\hat{\by}_{t,\mathrm{cfg}}(\bx_t)\|
\le
(w-1)\|\hat{\by}_t(\bx_t)\| + w\|\hat{\by}_{t,c}(\bx_t)\|
\le
(w-1)M_{\mathcal D}+wM_{\mathcal D_c}.
\end{equation*}
The CFG ODE has the form
$\dot{\bx}_t=\tfrac{\hat{\by}_{t,\mathrm{cfg}}(\bx_t)-\bx_t}{1-t},$
so the calculation in
Lemma~\ref{lem:early-norm-bound}, with
$\hat{\by}_t(\bx_t)$ and $M_{\mathcal D}$ replaced by
$\hat{\by}_{t,\mathrm{cfg}}(\bx_t)$ and
\((w-1)M_{\mathcal D}+wM_{\mathcal D_c}\), respectively, gives the
claimed estimate.
\end{proof}

\subsection{Proof of Theorem~\ref{thm:cfg-early}}\label{proof:thm:cfg-early}

\begin{proof}
The expression defining \(\tilde r_{\mathrm{mean}}\) satisfies
\[
\begin{aligned}
&
(w-1)M_{\mathcal D}
\left[
\exp\left(
\tfrac{
tD_\emptyset
}{
(1-t)^2
}
\left[
\begin{aligned}
(1-t)\|\bx_0\|+
tw(M_{\mathcal D}+M_{\mathcal D_c})
\end{aligned}
\right]
\right)-1
\right]
\\
&+
wM_{\mathcal D_c}
\left[
\exp\left(
\tfrac{
tD_c
}{
(1-t)^2
}
\left[
\begin{aligned}
(1-t)\|\bx_0\|
+
t\bigl(
(w-1)M_{\mathcal D}
+
(w+1)M_{\mathcal D_c}
\bigr)
\end{aligned}
\right]
\right)-1
\right]
=
\bigO(t)
\quad
\text{as }t\downarrow0,
\end{aligned}
\]
while
\((1-t)\|\bx_0\|\to\|\bx_0\|>0\). Hence the admissible choices
asserted in the main-text setup exist. Fix
\(\tilde t_{\mathrm{mean}}\) and \(\tilde r_{\mathrm{mean}}\) as there.
By the choice of \(\tilde t_{\mathrm{mean}}\),
\[
(1-\tilde t_{\mathrm{mean}})\|\bx_0\|
>
\tilde t_{\mathrm{mean}}
\left[
(w-1)D_\emptyset+wD_c+\tilde r_{\mathrm{mean}}
\right].
\]

We first prove the non-entry property, which is part~(i) of the theorem. The definitions of \(\hat{\by}_{s,\mathrm{cfg}}(\bx_s), \bar{\by}_{\mathrm{cfg}}\) together with
Lemma~\ref{lem:mean-in-closed-conv} give
\[
\begin{aligned}
\left\|
\hat{\by}_{s,\mathrm{cfg}}(\bx_s)
-
\bar{\by}_{\mathrm{cfg}}
\right\|
&\le
(w-1)
\left\|
\hat{\by}_{s}(\bx_s)-\bar{\by}
\right\|
+
w
\left\|
\hat{\by}_{s,c}(\bx_s)-\bar{\by}_c
\right\|
\le
(w-1)D_\emptyset+wD_c.
\end{aligned}
\]
The calculation in the proof of
Theorem~\ref{thm:early-mean}, with  the substitutions
\(
\hat{\by}_s(\bx_s)
\) replaced by  \(
\hat{\by}_{s,\mathrm{cfg}}(\bx_s),\
\bar{\by}
\) replaced by  \(
\bar{\by}_{\mathrm{cfg}},
\)
and \(
D_\emptyset
\) replaced by  \(
(w-1)D_\emptyset+wD_c,
\)
therefore gives, for every \(t\in[0,\tilde t_{\mathrm{mean}}]\),
\[
\begin{aligned}
\dist\!\left(
\bx_t,
tB_{\tilde r_{\mathrm{mean}}}
(\bar{\by}_{\mathrm{cfg}})
\right)
&\ge
(1-t)\|\bx_0\|
-
t\left[
(w-1)D_\emptyset+wD_c+\tilde r_{\mathrm{mean}}
\right]
\\
&\ge
(1-\tilde t_{\mathrm{mean}})\|\bx_0\|
-
\tilde t_{\mathrm{mean}}
\left[
(w-1)D_\emptyset+wD_c+\tilde r_{\mathrm{mean}}
\right]
>0,
\end{aligned}
\]
where the last inequality follows from the choice of
\(\tilde t_{\mathrm{mean}}\) above. This proves part~(i).

\noindent
\textbf{Attraction.}
For part~(ii) of the theorem, we start with
\begin{equation}\label{eq:cfg-early-decompose}
\left\|
\hat{\by}_{t,\mathrm{cfg}}(\bx_t)
-
\bar{\by}_{\mathrm{cfg}}
\right\|
\le
(w-1)
\left\|
\hat{\by}_t(\bx_t)-\bar{\by}
\right\|
+
w
\left\|
\hat{\by}_{t,c}(\bx_t)-\bar{\by}_c
\right\|.
\end{equation}
We now localize \(\hat{\by}_{t,c}(\bx_t)\) and \(\hat{\by}_t(\bx_t)\). By
Lemma~\ref{lem:cfg-norm-bound},
\[
\|\bx_t\|
\le
(1-t)\|\bx_0\|
+
t\bigl(
(w-1)M_{\mathcal D}
+
wM_{\mathcal D_c}
\bigr).
\]
For the unconditional posterior, the 
calculation in the proof of Theorem~\ref{thm:early-mean} gives
\[
\left\|
\hat{\by}_t(\bx_t)-\bar{\by}
\right\|
\le
M_{\mathcal D}
\left[
\exp\left(
\tfrac{
tD_\emptyset\|\bx_t\|
+
t^2D_\emptyset M_{\mathcal D}
}{
(1-t)^2
}
\right)-1
\right].
\]
Plugging in the norm bound of \(\bx_t\) above gives
\[
\begin{aligned}
tD_\emptyset\|\bx_t\|
+t^2D_\emptyset M_{\mathcal D}
&\le
tD_\emptyset
\left[
(1-t)\|\bx_0\|
+
t\bigl(
(w-1)M_{\mathcal D}
+
wM_{\mathcal D_c}
\bigr)
+
tM_{\mathcal D}
\right]
\\
&=
tD_\emptyset
\left[
(1-t)\|\bx_0\|
+
tw(M_{\mathcal D}+M_{\mathcal D_c})
\right].
\end{aligned}
\]
Consequently,
\begin{equation}
\label{eq:cfg-early-unconditional-posterior-bound}
\left\|
\hat{\by}_t(\bx_t)-\bar{\by}
\right\|
\le
M_{\mathcal D}
\left[
\exp\left(
\tfrac{
tD_\emptyset
\left[
(1-t)\|\bx_0\|
+
tw(M_{\mathcal D}+M_{\mathcal D_c})
\right]
}{
(1-t)^2
}
\right)-1
\right]=M_{\mathcal D} (\exp (g_{\emptyset}(t))-1).
\end{equation}
Applying the same calculation to
\((p_c,\mathcal D_c)\) gives
\[
\left\|
\hat{\by}_{t,c}(\bx_t)-\bar{\by}_c
\right\|
\le
M_{\mathcal D_c}
\left[
\exp\left(
\tfrac{
tD_c\|\bx_t\|
+
t^2D_cM_{\mathcal D_c}
}{
(1-t)^2
}
\right)-1
\right].
\]
Again, Plugging in the norm bound of \(\bx_t\) gives
\[
\begin{aligned}
tD_c\|\bx_t\|
+t^2D_cM_{\mathcal D_c}
&\le
tD_c
\left[
(1-t)\|\bx_0\|
+
t\bigl(
(w-1)M_{\mathcal D}
+
wM_{\mathcal D_c}
\bigr)
+
tM_{\mathcal D_c}
\right]
\\
&=
tD_c
\left[
(1-t)\|\bx_0\|
+
t\bigl(
(w-1)M_{\mathcal D}
+
(w+1)M_{\mathcal D_c}
\bigr)
\right].
\end{aligned}
\]
Consequently,
\begin{equation}
\label{eq:cfg-early-conditional-posterior-bound}
\left\|
\hat{\by}_{t,c}(\bx_t)-\bar{\by}_c
\right\|
\le
M_{\mathcal D_c} (\exp (g_{c}(t))-1)
\end{equation}
The two exponents in
\eqref{eq:cfg-early-unconditional-posterior-bound} and
\eqref{eq:cfg-early-conditional-posterior-bound} 
are nondecreasing for  \(t \in [0,1)\).
 Substituting \eqref{eq:cfg-early-unconditional-posterior-bound} and \eqref{eq:cfg-early-conditional-posterior-bound} into \eqref{eq:cfg-early-decompose} therefore
gives, for every \(t\in(0,\tilde t_{\mathrm{mean}}]\),
\(
\left\|
\hat{\by}_{t,\mathrm{cfg}}(\bx_t)
-
\bar{\by}_{\mathrm{cfg}}
\right\|
\le
\tilde r_{\mathrm{mean}}
\)
by the definition of \(\tilde r_{\mathrm{mean}}\). At the boundary \(t=0\), the
posterior weights are constant in the data variable, so
\(
\hat{\by}_0(\bx_0)=\bar{\by},
\
\hat{\by}_{0,c}(\bx_0)=\bar{\by}_c,
\
\hat{\by}_{0,\mathrm{cfg}}(\bx_0)
=
\bar{\by}_{\mathrm{cfg}}.
\)
Thus
\begin{equation}
\label{eq:cfg-early-posterior-inclusion}
\hat{\by}_{t,\mathrm{cfg}}(\bx_t)
\in
B_{\tilde r_{\mathrm{mean}}}
(\bar{\by}_{\mathrm{cfg}}),
\qquad
t\in[0,\tilde t_{\mathrm{mean}}].
\end{equation}
Here, \(B_{\tilde r_{\mathrm{mean}}}(\bar{\by}_{\mathrm{cfg}})\) is nonempty, closed and convex, so projection onto its
positive scaling is well-defined.
Applying part~(i) of
Lemma~\ref{lem:continuous-scaled-convex-contraction} on
\(I=(0,\tilde t_{\mathrm{mean}}]\) with
\(
S=C_t
=
B_{\tilde r_{\mathrm{mean}}}(\bar{\by}_{\mathrm{cfg}}),
\
\boldsymbol m_t=\hat{\by}_{t,\mathrm{cfg}}(\bx_t)
\)
gives, for
\(0<s\le t\le\tilde t_{\mathrm{mean}}\),
\begin{align}\label{eq:cfg-early-estimate}
\dist\!\left(
\bx_t,
tB_{\tilde r_{\mathrm{mean}}}
(\bar{\by}_{\mathrm{cfg}})
\right)
\le
\tfrac{1-t}{1-s}
\dist\!\left(
\bx_s,
sB_{\tilde r_{\mathrm{mean}}}
(\bar{\by}_{\mathrm{cfg}})
\right).
\end{align}
Moreover,
\(
sB_{\tilde r_{\mathrm{mean}}}
(\bar{\by}_{\mathrm{cfg}})
=
B_{s\tilde r_{\mathrm{mean}}}
(s\bar{\by}_{\mathrm{cfg}})
\rightarrow
\{\bzero\},
\)
and \(\bx_s\to\bx_0\) as \(s\downarrow0\). Hence
\[
\dist\!\left(
\bx_s,
sB_{\tilde r_{\mathrm{mean}}}
(\bar{\by}_{\mathrm{cfg}})
\right)
\rightarrow
\|\bx_0\|
\qquad
\text{as }s\downarrow0.
\]
Letting \(s\downarrow0\) proves \eqref{eq:cfg-early-estimate} for every
\(0\le s\le t\le\tilde t_{\mathrm{mean}}\). 
By part~(i), the
distance at every earlier time \(s\) is positive. Therefore, when
\(s<t\), the strict inequality \((1-t)/(1-s)<1\) implies that the
distance decreases strictly.
This proves part~(ii) of the theorem. Taking \(s=0\) gives
\[
\dist\!\left(
\bx_t,
tB_{\tilde r_{\mathrm{mean}}}
(\bar{\by}_{\mathrm{cfg}})
\right)
\le
(1-t)\|\bx_0\|,
\qquad
t\in[0,\tilde t_{\mathrm{mean}}],
\]
which proves part~(iii) of the theorem.
\end{proof}

\subsection{Proof of Theorem~\ref{thm:cfg-convex}}\label{proof:thm:cfg-convex}

\begin{proof}
By Lemma~\ref{lem:mean-in-closed-conv},
\(
\hat{\by}_{t,c}(\bx_t)
\in
\Conv(\mathcal D_c)
\subset
\Conv(\mathcal D),
\
\hat{\by}_t(\bx_t)
\in
\Conv(\mathcal D).
\)
Since
\(
\diam(\Conv(\mathcal D))
=
\diam(\mathcal D)
=
D_\emptyset,
\)
we have
\(
\left\|
\hat{\by}_{t,c}(\bx_t)-\hat{\by}_t(\bx_t)
\right\|
\le
D_\emptyset.
\)
Therefore,
\[
0\le\epsilon_{\mathrm{infl}}(t_{\mathrm{infl}})\le(w-1)D_\emptyset<\infty.
\]
Recall that the CFG extrapolation is
\(
\hat{\by}_{t,\mathrm{cfg}}(\bx_t)
=
\hat{\by}_{t,c}(\bx_t)
+
(w-1)
\left(
\hat{\by}_{t,c}(\bx_t)-\hat{\by}_t(\bx_t)
\right).
\)
Since
\(\hat{\by}_{t,c}(\bx_t)\in\Conv(\mathcal D_c)\), for every
\(t\in[t_{\mathrm{infl}},1)\), we have 
\[
\dist\!\left(
\hat{\by}_{t,\mathrm{cfg}}(\bx_t),
\Conv(\mathcal D_c)
\right)
\le
\left\|
\hat{\by}_{t,\mathrm{cfg}}(\bx_t)
-
\hat{\by}_{t,c}(\bx_t)
\right\|
=
(w-1)
\left\|
\hat{\by}_{t,c}(\bx_t)-\hat{\by}_t(\bx_t)
\right\|
\le
\epsilon_{\mathrm{infl}}(t_{\mathrm{infl}}).
\]
Hence,
\begin{equation}
\label{eq:cfg-large-convex-posterior-inclusion}
\hat{\by}_{t,\mathrm{cfg}}(\bx_t)
\in
B_{\epsilon_{\mathrm{infl}}(t_{\mathrm{infl}})}(\Conv(\mathcal D_c)),
\qquad
t\in[t_{\mathrm{infl}},1).
\end{equation}

\noindent 
\textbf{Attraction.}
Since \(B_{\epsilon_{\mathrm{infl}}(t_{\mathrm{infl}})}(\Conv(\mathcal D_c))\) is nonempty, closed and convex,
projection onto every positive scaling of \(B_{\epsilon_{\mathrm{infl}}(t_{\mathrm{infl}})}(\Conv(\mathcal D_c))\) is
well-defined.
Applying part~(i) of
Lemma~\ref{lem:continuous-scaled-convex-contraction} on
\(I=[t_{\mathrm{infl}},1)\) if \(t_{\mathrm{infl}}>0\), and on
\(I=(0,1)\) if \(t_{\mathrm{infl}}=0\), with
\(
S=C_t=B_{\epsilon_{\mathrm{infl}}(t_{\mathrm{infl}})}(\Conv(\mathcal D_c))
\) and \(
\boldsymbol m_t=\hat{\by}_{t,\mathrm{cfg}}(\bx_t),
\)
gives, for every
\(t_{\mathrm{infl}}\le s\le t<1\) with \(s>0\),
\begin{align}\label{eq:cfg-large-hull}
\dist\!\left(
\bx_t,
tB_{\epsilon_{\mathrm{infl}}(t_{\mathrm{infl}})}(\Conv(\mathcal D_c))
\right)
\le
\tfrac{1-t}{1-s}
\dist\!\left(
\bx_s,
sB_{\epsilon_{\mathrm{infl}}(t_{\mathrm{infl}})}(\Conv(\mathcal D_c))
\right).
\end{align}
It remains only to include \(s=0\) when \(t_{\mathrm{infl}}=0\).
The same triangle-inequality argument used to obtain
\eqref{eq:uncond-zero-time-distance-bound}, with
\(\Conv(\mathcal D)\) and \(M_{\mathcal D}\) replaced by
\(B_{\epsilon_{\mathrm{infl}}(t_{\mathrm{infl}})}(\Conv(\mathcal D_c))\) and \(M_{\mathrm{infl}}\) respectively, gives,
for every \(s>0\),
\[
\left|
\dist(\bx_s,sB_{\epsilon_{\mathrm{infl}}(t_{\mathrm{infl}})}(\Conv(\mathcal D_c)))-\|\bx_0\|
\right|
\le
\|\bx_s-\bx_0\|+s\max_{\by\in B_{\epsilon_{\mathrm{infl}}(t_{\mathrm{infl}})}(\Conv(\mathcal D_c))}\|\by\|.
\]
Since \(\bx_s\to\bx_0\) as \(s\downarrow0\),
\[
\dist(\bx_s,sB_{\epsilon_{\mathrm{infl}}(t_{\mathrm{infl}})}(\Conv(\mathcal D_c)))
\to\|\bx_0\|
=\dist(\bx_0,0B_{\epsilon_{\mathrm{infl}}(t_{\mathrm{infl}})}(\Conv(\mathcal D_c))).
\]
Letting \(s\downarrow0\) in \eqref{eq:cfg-large-hull} proves the same
estimate for \(s=0\). Thus \eqref{eq:cfg-large-hull} holds for every
\(t_{\mathrm{infl}}\le s\le t<1\), including when
\(t_{\mathrm{infl}}=0\), and the distance is non-increasing on the
whole interval \([t_{\mathrm{infl}},1)\).
Taking
\(s=t_{\mathrm{infl}}\) in \eqref{eq:cfg-large-hull} gives
\[
\dist\!\left(
\bx_t,
tB_{\epsilon_{\mathrm{infl}}(t_{\mathrm{infl}})}(\Conv(\mathcal D_c))
\right)
\le
\tfrac{1-t}{1-t_{\mathrm{infl}}}
\dist\!\left(
\bx_{t_{\mathrm{infl}}},
t_{\mathrm{infl}}
B_{\epsilon_{\mathrm{infl}}(t_{\mathrm{infl}})}(\Conv(\mathcal D_c))
\right),
\qquad
t\in[t_{\mathrm{infl}},1).
\]

\noindent 
\textbf{Absorption.}
Suppose first that
\(\tilde t_{\mathrm{conv}}\in(0,1)\) and
\(
\bx_{\tilde t_{\mathrm{conv}}}
\in
\tilde t_{\mathrm{conv}}
B_{\epsilon_{\mathrm{infl}}(t_{\mathrm{infl}})}
(\Conv(\mathcal D_c)).
\)
Applying part~(ii) of
Lemma~\ref{lem:continuous-scaled-convex-contraction}
on
\(I=[\tilde t_{\mathrm{conv}},1)\), with \(S = C_t=B_{\epsilon_{\mathrm{infl}}(t_{\mathrm{infl}})}
(\Conv(\mathcal D_c))\) and 
\(\boldsymbol m_t=\hat{\by}_{t, \mathrm{cfg}}(\bx_t)\),
 gives, for every \(t\in[\tilde t_{\mathrm{conv}},1),\)
\[
\bx_t
\in
tB_{\epsilon_{\mathrm{infl}}(t_{\mathrm{infl}})}
(\Conv(\mathcal D_c)).
\]

If \(\tilde t_{\mathrm{conv}}=0\), then necessarily
\(t_{\mathrm{infl}}=0\), and of course
\(\bx_0=\boldsymbol0\). Taking \(s=0\) in
\eqref{eq:cfg-large-hull} therefore gives the same conclusion.
This proves part~(iii) of the theorem.
\end{proof}

\subsection{Proof of Corollary~\ref{cor:cfg-convex-small}}\label{proof:cor:cfg-convex-small}

\begin{proof}
Theorem~\ref{thm:cfg-convex}(iii) gives
\(
\bx_t
\in
tB_{\epsilon_{\mathrm{infl}}(t_{\mathrm{infl}})}(\Conv(\mathcal D_c)),
\
t\in[\tilde{t}_{\mathrm{conv}},1).
\)
Fix \(t\in[\tilde t_{\mathrm{conv}},1)\),
suppose that \(t>0\). Dividing the preceding set inclusion by \(t\) gives
\(
\bx_t /{t}
\in
B_{\epsilon_{\mathrm{infl}}(t_{\mathrm{infl}})}
(\Conv(\mathcal D_c)).
\)
For every \(\by\in\mathcal D_c\), the triangle inequality and
\(\diam(\Conv(\mathcal D_c))=\diam(\mathcal D_c)=D_c\) therefore give
\(
\left\|\bx_t /{t} -\by\right\|
\le
\epsilon_{\mathrm{infl}}(t_{\mathrm{infl}})
+\diam(\Conv(\mathcal D_c))
=
\epsilon_{\mathrm{infl}}(t_{\mathrm{infl}})+D_c.
\)
Multiplying by \(t\) yields
\begin{equation}
\label{eq:xt_ty}
    \|\bx_t-t\by\|
\le
t\bigl(\epsilon_{\mathrm{infl}}(t_{\mathrm{infl}})+D_c\bigr),
\qquad \by\in\mathcal D_c.
\end{equation}
On the other hand, the definition
\(
\rho_{\mathrm{infl}}
\) 
implies that, for every \(\by\in\mathcal D\setminus\mathcal D_c\),
\(
\left\|\bx_t /{t}-\by\right\|
\ge \rho_{\mathrm{infl}}.
\)
Multiplying by \(t\) gives
\begin{equation}
\label{eq:xt_ty2}
    \|\bx_t-t\by\|
\ge
t\rho_{\mathrm{infl}},
\qquad
\by\in\mathcal D\setminus\mathcal D_c.
\end{equation}
If \(t=0\), then
\(\bx_t\in tB_{\epsilon_{\mathrm{infl}}(t_{\mathrm{infl}})}
(\Conv(\mathcal D_c))=\{\bzero\}\), so \eqref{eq:xt_ty} and \eqref{eq:xt_ty2} are immediate.


Now again we write
\(
w_t(\by)
:=
\exp\left(
-\frac{
\|\bx_t-t\by\|^2
}{
2(1-t)^2
}
\right),
\)
and define
\[
Z_{c,t}
:=
\int_{\mathcal D_c}
w_t(\by)p_\emptyset(\dd\by),
\qquad
Z_{\mathrm{out},t}
:=
\int_{\mathcal D\setminus\mathcal D_c}
w_t(\by)p_\emptyset(\dd\by).
\]
Together with \eqref{eq:xt_ty} and \eqref{eq:xt_ty2}, we have 
\begin{align}
Z_{c,t}
\ge
p_\emptyset(\mathcal D_c)
\exp\left(
-\tfrac{
t^2(\epsilon_{\mathrm{infl}}(t_{\mathrm{infl}})+D_c)^2
}{
2(1-t)^2
}
\right), \quad 
Z_{\mathrm{out},t}
\le
\bigl(1-p_\emptyset(\mathcal D_c)\bigr)
\exp\left(
-\tfrac{
t^2\rho_{\mathrm{infl}}^2
}{
2(1-t)^2
}
\right).
\end{align}
Dividing these estimates yields
\begin{align}\label{eq:cfg-convex-ratio-bound}
\tfrac{Z_{\mathrm{out},t}}{Z_{c,t}}
\le
\tfrac{1-p_\emptyset(\mathcal D_c)}
{p_\emptyset(\mathcal D_c)}
\exp\left(
-\tfrac{
t^2\left[
\rho_{\mathrm{infl}}^2-(\epsilon_{\mathrm{infl}}(t_{\mathrm{infl}})+D_c)^2
\right]
}{
2(1-t)^2
}
\right)
=
\tfrac{1-p_\emptyset(\mathcal D_c)}
{p_\emptyset(\mathcal D_c)}
\exp\left(
-\tfrac{
t^2\Delta_{\mathrm{infl}}
}{
2(1-t)^2
}
\right).
\end{align}
Now note that the conditional and unconditional posterior means can be written as
\begin{equation*}
    \begin{split}
        \hat{\by}_{t,c}(\bx_t)
&=
\tfrac{
1
}{
Z_{c,t}
}
\int_{\mathcal D_c}
\by\,w_t(\by)p_\emptyset(\dd\by),\\
\hat{\by}_t(\bx_t)
& =
\tfrac{
1}{
Z_{c,t}+Z_{\mathrm{out},t}
}\left(
\int_{\mathcal D_c}
\by\,w_t(\by)p_\emptyset(\dd\by)
+
\int_{\mathcal D\setminus\mathcal D_c}
\by\,w_t(\by)p_\emptyset(\dd\by)
\right).
    \end{split}
\end{equation*}
Using
\(
\int_{\mathcal D_c}
\by\,w_t(\by)p_\emptyset(\dd\by)
=
Z_{c,t}\hat{\by}_{t,c}(\bx_t)
\) and \(
Z_{\mathrm{out},t}
=
\int_{\mathcal D\setminus\mathcal D_c}
w_t(\by)p_\emptyset(\dd\by),
\)
we obtain
\begin{equation}\label{eq:posterior-mean-difference-algebra}
\begin{aligned}
\hat{\by}_{t,c}(\bx_t)-\hat{\by}_t(\bx_t)
&=
\hat{\by}_{t,c}(\bx_t)
-
\tfrac{1}{
Z_{c,t}+Z_{\mathrm{out},t}
}\left(
Z_{c,t}\hat{\by}_{t,c}(\bx_t)
+
\int_{\mathcal D\setminus\mathcal D_c}
\by\,w_t(\by)p_\emptyset(\dd\by)
\right)
\\
&=
\tfrac{1}{
Z_{c,t}+Z_{\mathrm{out},t}
}\left(
Z_{\mathrm{out},t}\hat{\by}_{t,c}(\bx_t)
-
\int_{\mathcal D\setminus\mathcal D_c}
\by\,w_t(\by)p_\emptyset(\dd\by)
\right)\\ 
&=
\tfrac{1}{
Z_{c,t}+Z_{\mathrm{out},t}
}
\displaystyle
\int_{\mathcal D\setminus\mathcal D_c}
\bigl(
\hat{\by}_{t,c}(\bx_t)-\by
\bigr)
w_t(\by)p_\emptyset(\dd\by).
\end{aligned}
\end{equation}
Because
\(\hat{\by}_{t,c}(\bx_t)\in\Conv(\mathcal D_c)\subset\Conv(\mathcal D)\)
and \(\by\in\mathcal D\),
\(
\left\|
\hat{\by}_{t,c}(\bx_t)-\by
\right\|
\le
\diam(\Conv(\mathcal D))
=
D_\emptyset.
\)
Consequently,
\begin{align}\label{eq:cfg-convex-hat-bound}
\left\|\hat{\by}_{t,c}(\bx_t)-\hat{\by}_t(\bx_t)\right\|
\le
D_\emptyset
\tfrac{Z_{\mathrm{out},t}}
{Z_{c,t}+Z_{\mathrm{out},t}}
\le
D_\emptyset
\tfrac{Z_{\mathrm{out},t}}{Z_{c,t}}.
\end{align}
Combining \eqref{eq:cfg-convex-ratio-bound} and \eqref{eq:cfg-convex-hat-bound} gives
\[
\left\|
\hat{\by}_{t,c}(\bx_t)-\hat{\by}_t(\bx_t)
\right\|
\le
D_\emptyset
\tfrac{1-p_\emptyset(\mathcal D_c)}
{p_\emptyset(\mathcal D_c)}
\exp\left(
-\tfrac{t^2\Delta_{\mathrm{infl}}}{2(1-t)^2}
\right).
\]
Since \(s\mapsto s^2/(1-s)^2\) is increasing on \([0,1)\), for each
\(t\in[\tilde t_{\mathrm{conv}},1)\), the definition
\eqref{eq:cfg-tail-inflation-function} and the preceding estimate
give
\[
\begin{aligned}
\epsilon_{\mathrm{infl}}(t)
=
(w-1)
\sup_{s\in[t,1)}
\left\|
\hat{\by}_{s,c}(\bx_s)-\hat{\by}_s(\bx_s)
\right\|
\le
(w-1)D_\emptyset
\tfrac{1-p_\emptyset(\mathcal D_c)}
{p_\emptyset(\mathcal D_c)}
\exp\left(
-\tfrac{t^2\Delta_{\mathrm{infl}}}{2(1-t)^2}
\right).
\end{aligned}
\]
This proves
\eqref{eq:cfg-convex-exponential-inflation}. In particular,
\(
\epsilon_{\mathrm{infl}}(t)
\rightarrow0
\
\text{as }t\uparrow1.
\)
Since \(\epsilon_{\mathrm{infl}}(t_{\mathrm{infl}})>0\), choose
\(t_{\mathrm{ref}}\in[\tilde{t}_{\mathrm{conv}},1)\) sufficiently close to
\(1\) so that, with
\(
\epsilon_{\mathrm{ref}}
:=
\epsilon_{\mathrm{infl}}(t_{\mathrm{ref}}),
\)
\(
0\le\epsilon_{\mathrm{ref}}<\epsilon_{\mathrm{infl}}(t_{\mathrm{infl}}).
\)

For every \(t\in[t_{\mathrm{ref}},1)\), the definition of $\epsilon_{\mathrm{ref}}$
gives
\[
(w-1)
\left\|
\hat{\by}_{t,c}(\bx_t)-\hat{\by}_t(\bx_t)
\right\|
\le
\epsilon_{\mathrm{ref}}.
\]
The CFG extrapolation identity now gives, for every \(t\in[t_{\mathrm{ref}},1)\),
\[
\begin{aligned}
&
\dist\!\left(
\hat{\by}_{t,\mathrm{cfg}}(\bx_t),
\Conv(\mathcal D_c)
\right)
\le
\left\|
\hat{\by}_{t,\mathrm{cfg}}(\bx_t)
-
\hat{\by}_{t,c}(\bx_t)
\right\|
=
(w-1)
\left\|
\hat{\by}_{t,c}(\bx_t)-\hat{\by}_t(\bx_t)
\right\|
\le
\epsilon_{\mathrm{ref}}.
\end{aligned}
\]
Therefore
\begin{equation}
\label{eq:cfg-small-convex-posterior-inclusion}
\hat{\by}_{t,\mathrm{cfg}}(\bx_t)
\in
B_{\epsilon_{\mathrm{ref}}}(\Conv(\mathcal D_c)),
\qquad
t\in[t_{\mathrm{ref}},1).
\end{equation}

\noindent
\textbf{Attraction.}
The set \(B_{\epsilon_{\mathrm{ref}}}(\Conv(\mathcal D_c))\) is nonempty, closed, and convex, so projection onto its positive
scaling is well-defined. Applying part~(i) of Lemma~\ref{lem:continuous-scaled-convex-contraction} on 
\(I=[t_{\mathrm{ref}},1)\) with
\(
S=C_t=B_{\epsilon_{\mathrm{ref}}}(\Conv(\mathcal D_c))
\) and
\(\boldsymbol m_t=\hat{\by}_{t,\mathrm{cfg}}(\bx_t),
\) gives,
for every
\(t_{\mathrm{ref}}\le s\le t<1\),
\begin{align}\label{eq:cfg-small-hull}
\dist\!\left(
\bx_t,
tB_{\epsilon_{\mathrm{ref}}}(\Conv(\mathcal D_c))
\right)
\le
\tfrac{1-t}{1-s}
\dist\!\left(
\bx_s,
sB_{\epsilon_{\mathrm{ref}}}(\Conv(\mathcal D_c))
\right).
\end{align}
Part~(i) of Lemma~\ref{lem:continuous-scaled-convex-contraction} also
shows that the refined distance is non-increasing.
Taking \(s=t_{\mathrm{ref}}\) in \eqref{eq:cfg-small-hull} gives
\[
\dist\!\left(
\bx_t,
tB_{\epsilon_{\mathrm{ref}}}(\Conv(\mathcal D_c))
\right)
\le
\tfrac{1-t}{1-t_{\mathrm{ref}}}
\dist\!\left(
\bx_{t_{\mathrm{ref}}},
t_{\mathrm{ref}}
B_{\epsilon_{\mathrm{ref}}}(\Conv(\mathcal D_c))
\right),
\qquad
t\in[t_{\mathrm{ref}},1).
\]
This proves parts~(i)--(ii) of Corollary~\ref{cor:cfg-convex-small}.

\noindent
\textbf{Absorption.}
Suppose that 
\(
\bx_{\tilde t'_{\mathrm{conv}}}
\in
\tilde t'_{\mathrm{conv}}
B_{\epsilon_{\mathrm{ref}}}(\Conv(\mathcal D_c)).
\)
Since
\(
\epsilon_{\mathrm{ref}}
=\epsilon_{\mathrm{infl}}(t_{\mathrm{ref}})
<\epsilon_{\mathrm{infl}}(t_{\mathrm{infl}})
\)
and \(\epsilon_{\mathrm{infl}}\) is non-increasing,
\(t_{\mathrm{ref}}>t_{\mathrm{infl}}\ge 0\).
Thus \(t_{\mathrm{ref}}>0\).
Applying part~(ii) of
Lemma~\ref{lem:continuous-scaled-convex-contraction}
on \(I=[t_{\mathrm{ref}},1)\), with
\(S=C_t=B_{\epsilon_{\mathrm{ref}}}(\Conv(\mathcal D_c))\) and
\(\boldsymbol m_t=\hat{\by}_{t,\mathrm{cfg}}(\bx_t)\) gives
\[
\bx_t
\in
tB_{\epsilon_{\mathrm{ref}}}(\Conv(\mathcal D_c)),
\qquad
t\in[\tilde t'_{\mathrm{conv}},1).
\]
This proves part~(iii).
\end{proof}

\subsection{Proof of Theorem~\ref{thm:prediction-gap}}\label{proof:thm:prediction-gap}

\begin{proof}
Fix \(t\in[t_{\mathrm{gap}},1)\) and
\(\bx\in B_{tR}(t\Omega_c)\), and write
\[
w_t(\by)
:=
\exp\left(-\tfrac{\|\bx-t\by\|^2}{2(1-t)^2}\right),
\quad
Z_{c,t}
:=
\int_{\mathcal D_c}w_t(\by)p_\emptyset(\dd\by),
\quad
Z_{\mathrm{out},t}
:=
\int_{\mathcal D\setminus\mathcal D_c}
w_t(\by)p_\emptyset(\dd\by).
\]
Applying the calculation in
\eqref{eq:posterior-mean-difference-algebra}, with \(\bx_t\) replaced by
\(\bx\), gives
\begin{equation}\label{eq:prediction-gap-posterior-identity}
\hat{\by}_{t,c}(\bx)-\hat{\by}_t(\bx)
=
\tfrac{1}{Z_{c,t}+Z_{\mathrm{out},t}}
\int_{\mathcal D\setminus\mathcal D_c}
\bigl(\hat{\by}_{t,c}(\bx)-\by\bigr)
w_t(\by)p_\emptyset(\dd\by)
.
\end{equation}
Likewise, the estimate \eqref{eq:cfg-convex-hat-bound}, with
\(\bx_t\) replaced by \(\bx\), gives
\begin{equation}\label{eq:prediction-gap-posterior-bound}
\left\|\hat{\by}_{t,c}(\bx)-\hat{\by}_t(\bx)\right\|
\le
D_\emptyset
\tfrac{Z_{\mathrm{out},t}}{Z_{c,t}}
\le
2M_{\mathcal D}
\tfrac{Z_{\mathrm{out},t}}{Z_{c,t}}.
\end{equation}
Since \(\bx/t\in B_R(\Omega_c)\), choose \(\bz\in\Omega_c\) such that
\(\|\bx/t-\bz\|\le R\). For every \(\by\in\Omega_c\),
\[
\left\|\bx/{t}-\by\right\|
\le
\left\|\bx/{t}-\bz\right\|+\|\bz-\by\|
\le R+D_{\Omega_c}.
\]
Consequently,
\[
Z_{c,t}
\ge
p_\emptyset(\Omega_c)
\exp\left(-\tfrac{t^2(R+D_{\Omega_c})^2}{2(1-t)^2}\right).
\]
On the other hand,
\(\mathcal D\setminus\mathcal D_c
\subset\mathcal D\setminus\Omega_c\). Thus, for every
\(\by\in\mathcal D\setminus\mathcal D_c\),
\[
\left\|\bx/{t}-\by\right\|
\ge
\dist\bigl(B_R(\Omega_c),\mathcal D\setminus\Omega_c\bigr),
\]
and hence
\[
Z_{\mathrm{out},t}
\le
\bigl(1-p_\emptyset(\Omega_c)\bigr)
\exp\left(
-\tfrac{
t^2\dist(B_R(\Omega_c),\mathcal D\setminus\Omega_c)^2
}{2(1-t)^2}
\right).
\]
Dividing the preceding two estimates of $Z_{\mathrm{out},t}$ and $Z_{c,t}$ gives
\begin{equation}\label{eq:prediction-gap-posterior-mass-ratio}
\tfrac{Z_{\mathrm{out},t}}{Z_{c,t}}
\le
\tfrac{1-p_\emptyset(\Omega_c)}{p_\emptyset(\Omega_c)}
\exp\left(
-\tfrac{t^2}{2(1-t)^2}
\left[
\dist\bigl(B_R(\Omega_c),\mathcal D\setminus\Omega_c\bigr)^2
-(R+D_{\Omega_c})^2
\right]
\right).
\end{equation}
Combining \eqref{eq:prediction-gap-posterior-bound} and
\eqref{eq:prediction-gap-posterior-mass-ratio}, and using the
definition of \(\gamma_{\Omega_c}\), yields
\[
\left\|\hat{\by}_{t,c}(\bx)-\hat{\by}_t(\bx)\right\|
\le
2M_{\mathcal D}
\tfrac{1-p_\emptyset(\Omega_c)}{p_\emptyset(\Omega_c)}
\exp\left(
-\tfrac{t^2\gamma_{\Omega_c}}{2(1-t)^2}
\right).
\]
Since \(t\ge t_{\mathrm{gap}}\) and \(\gamma_{\Omega_c}>0\),
\(
\tfrac{t^2\gamma_{\Omega_c}}{2}
\ge
\tfrac{t_{\mathrm{gap}}^2\gamma_{\Omega_c}}{2}
>0,
\)
which proves the stated posterior-mean-gap estimate.
Finally, the posterior-mean representations of the two fields give
\[
\|\vc(t,\bx)-\vu(t,\bx)\|
=
\tfrac{1}{1-t}
\left\|\hat{\by}_{t,c}(\bx)-\hat{\by}_t(\bx)\right\|
\le
\tfrac{2M_{\mathcal D}}{1-t}
\tfrac{1-p_\emptyset(\Omega_c)}{p_\emptyset(\Omega_c)}
\exp\left(
-\tfrac{t_{\mathrm{gap}}^2\gamma_{\Omega_c}}{2(1-t)^2}
\right).
\]
 Both $\left\|\hat{\by}_{t,c}(\bx)-\hat{\by}_t(\bx)\right\|$ and  $\|\vc(t,\bx)-\vu(t,\bx)\|$ obviously converge to zero as \(t\uparrow1\).
\end{proof}

\begin{assumption}[CFG local-cluster regularity]\label{assump:cfg-local-cluster}
Let $\Omega_c\subset\mathcal D_c$ be a nonempty closed
local cluster satisfying $p_\emptyset(\Omega_c)>0$, and let $R>0$. Write
\(
D_{\Omega_c}:=\diam(\Omega_c),
\
\delta_{\Omega_c}:=\dist(B_R(\Omega_c),\mathcal D\setminus\Omega_c),
\
\gamma_{\Omega_c}:=\delta_{\Omega_c}^2-(R+D_{\Omega_c})^2.
\)
Assume the following conditions.
\begin{enumerate}
\item[(i)] \emph{Local prox-regular geometry:} the target cluster $\Omega_c$ is $\rho_{\Omega_c}$-prox-regular for some $\rho_{\Omega_c}>R$.
\item[(ii)] \emph{Uniform local polynomial mass:} there exist constants $\tilde c_{\Omega_c}>0$, $\tilde\alpha_{\Omega_c}>0$, and $\tilde s_{\Omega_c}>0$ such that
\[
p_c\bigl(\Omega_c\cap B_s(\by)\bigr)
\ge \tilde c_{\Omega_c} s^{\tilde\alpha_{\Omega_c}},
\qquad
\by\in\Omega_c,\quad 0<s\le\tilde s_{\Omega_c}.
\]
\item[(iii)] \emph{Cluster isolation:} $\gamma_{\Omega_c}=\delta_{\Omega_c}^2-(R+D_{\Omega_c})^2>0$.
\end{enumerate}
\end{assumption}
Under Assumption~\ref{assump:cfg-local-cluster}(i), whenever
$\dist(\bx,t\Omega_c)<tR$, the projection is well-defined and we write
\(
\by_t^{\Omega_c}(\bx)
:=
\tfrac{1}{t}\Proj_{t\Omega_c}(\bx)
=
\Proj_{\Omega_c}(\bx/t).
\)

\subsection{Final-stage attraction and absorption in CFG}

\begin{lemma}[Conditional local-mass radius]
\label{lem:cfg-polynomial-local-mass-radius}
Under Assumption~\ref{assump:cfg-local-cluster}, let $w>1$, and let
$\tilde t_{\mathrm{loc}}$, $\tilde r_{\Omega_c}$,
$\tilde\varepsilon_{\Omega_c}$, and $\tilde c_{\Omega_c,\mathrm{loc}}$
satisfy
\eqref{eq:cfg-final-time-choice}--\eqref{eq:cfg-final-quantities}.
Suppose
$t\in[\tilde t_{\mathrm{loc}},1)$ and $\dist(\bx,t\Omega_c)<tR$. Define
\[
\widetilde{\Omega}_{c,t}^0(\bx)
:=
\left\{
\by\in\Omega_c:
\|\bx-t\by\|^2
-
\|\bx-t\by_t^{\Omega_c}(\bx)\|^2
\le
\tfrac{\tilde\varepsilon_{\Omega_c}^2}{2}
\right\},
\]
then
\begin{equation}\label{eq:cfg-polynomial-local-mass-set}
p_c\bigl(\widetilde{\Omega}_{c,t}^0(\bx)\bigr)
\ge
\tilde c_{\Omega_c,\mathrm{loc}}.
\end{equation}
Moreover, with
\(\Omega_c,\mathcal D,R,\rho_{\Omega_c},\tilde c_{\Omega_c},
\tilde\alpha_{\Omega_c},\tilde s_{\Omega_c},p_\emptyset\), and \(w\) fixed,
\begin{equation}\label{eq:cfg-final-inflation-radius-order}
\tilde r_{\Omega_c}
=
\bigO\!\left(
(1-\tilde t_{\mathrm{loc}})
\sqrt{\log\tfrac{1}{1-\tilde t_{\mathrm{loc}}}}
\right)
\qquad
\text{as }\tilde t_{\mathrm{loc}}\uparrow1.
\end{equation}
\end{lemma}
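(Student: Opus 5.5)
The plan is to transcribe the proof of Lemma~\ref{lem:polynomial-local-mass-radius} into the conditional setting, replacing $(p_\emptyset,\Omega,\alpha_\Omega,c_\Omega,s_\Omega,\varepsilon_\Omega,c_{\Omega,\mathrm{loc}})$ by $(p_c,\Omega_c,\tilde\alpha_{\Omega_c},\tilde c_{\Omega_c},\tilde s_{\Omega_c},\tilde\varepsilon_{\Omega_c},\tilde c_{\Omega_c,\mathrm{loc}})$. The only place where the CFG structure enters is the third term of $\tilde r_{\Omega_c}$, which carries the extra factor $w$; this factor has no effect on the order.

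\textbf{Step 1: the time restriction.} First, \eqref{eq:cfg-final-time-choice} implies $(1-\tilde t_{\mathrm{loc}})^2\log\frac{1}{1-\tilde t_{\mathrm{loc}}}\le \frac{1}{4(2\tilde\alpha_{\Omega_c}+1)}$. Hence $\tilde\varepsilon_{\Omega_c}^2=4(1-\tilde t_{\mathrm{loc}})^2(2\tilde\alpha_{\Omega_c}+1)\log\frac{1}{1-\tilde t_{\mathrm{loc}}}\le 1$, so $0<\tilde\varepsilon_{\Omega_c}\le1$. The second entry of the minimum gives $\tilde\varepsilon_{\Omega_c}^2\le 4(2R+1)\tilde s_{\Omega_c}$, that is, $\tilde\varepsilon_{\Omega_c}^2/(8R+4)\le\tilde s_{\Omega_c}$.

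\textbf{Step 2: the mass bound \eqref{eq:cfg-polynomial-local-mass-set}.} Set $\bq:=t\by_t^{\Omega_c}(\bx)=\Proj_{t\Omega_c}(\bx)$. This projection is well defined by Assumption~\ref{assump:cfg-local-cluster}(i), since $\dist(\bx,t\Omega_c)<tR<t\rho_{\Omega_c}$. Take any $\by\in\Omega_c\cap B_{\tilde\varepsilon_{\Omega_c}^2/(8R+4)}(\bq/t)$. Then $\|t\by-\bq\|\le \tilde\varepsilon_{\Omega_c}^2/(8R+4)$ and $\|\bx-\bq\|<R$. Expanding and applying Cauchy--Schwarz gives
\[
\|\bx-t\by\|^2-\|\bx-\bq\|^2\le \tfrac{2R}{8R+4}\tilde\varepsilon_{\Omega_c}^2+\tfrac{\tilde\varepsilon_{\Omega_c}^4}{(8R+4)^2}\le \tfrac{5}{16}\tilde\varepsilon_{\Omega_c}^2<\tfrac{\tilde\varepsilon_{\Omega_c}^2}{2}.
\]
Therefore this ball intersection is contained in $\widetilde\Omega^0_{c,t}(\bx)$. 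Since $\bq/t\in\Omega_c$ and the radius is at most $\tilde s_{\Omega_c}$, Assumption~\ref{assump:cfg-local-cluster}(ii) (stated under $p_c$) gives $p_c(\widetilde\Omega^0_{c,t}(\bx))\ge \tilde c_{\Omega_c}(\tilde\varepsilon_{\Omega_c}^2/(8R+4))^{\tilde\alpha_{\Omega_c}}=\tilde c_{\Omega_c,\mathrm{loc}}$.

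\textbf{Step 3: the order \eqref{eq:cfg-final-inflation-radius-order}.} I bound the three terms of \eqref{eq:cfg-final-inflation-radius} separately.
\begin{enumerate}
\item[(a)] The first term is $\tilde\varepsilon_{\Omega_c}/(\tilde t_{\mathrm{loc}}\sqrt{1-R/\rho_{\Omega_c}})$. It is of order $(1-\tilde t_{\mathrm{loc}})\sqrt{\log\frac{1}{1-\tilde t_{\mathrm{loc}}}}$ because $1/\tilde t_{\mathrm{loc}}$ stays bounded.
\item[(b)] For the second term, the definition of $\tilde\varepsilon_{\Omega_c}$ gives the identity $\exp(-\tilde\varepsilon_{\Omega_c}^2/(4(1-\tilde t_{\mathrm{loc}})^2))=(1-\tilde t_{\mathrm{loc}})^{2\tilde\alpha_{\Omega_c}+1}$. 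Also $\tilde c_{\Omega_c,\mathrm{loc}}$ is a constant times $(1-\tilde t_{\mathrm{loc}})^{2\tilde\alpha_{\Omega_c}}(\log\frac{1}{1-\tilde t_{\mathrm{loc}}})^{\tilde\alpha_{\Omega_c}}$. Hence the second term is a constant times $(1-\tilde t_{\mathrm{loc}})/(\log\frac{1}{1-\tilde t_{\mathrm{loc}}})^{\tilde\alpha_{\Omega_c}}$, which is of the claimed order.
\item[(c)] For the third term, $\gamma_{\Omega_c}>0$ by Assumption~\ref{assump:cfg-local-cluster}(iii). So $\exp(-\tilde t_{\mathrm{loc}}^2\gamma_{\Omega_c}/(2(1-\tilde t_{\mathrm{loc}})^2))=\littleo(1-\tilde t_{\mathrm{loc}})$, and the fixed prefactor $2wM_{\mathcal D}(1-p_\emptyset(\Omega_c))/p_\emptyset(\Omega_c)$ (finite because $p_\emptyset(\Omega_c)>0$) does not change this.
\end{enumerate}

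I expect no real obstacle, since every step is a direct copy of the unconditional argument. The only points needing care are (1) that the mass condition is now stated under $p_c$, which matches the conclusion being for $p_c$, and (2) that the third term keeps the $p_\emptyset$-based constants together with the extra factor $w$.
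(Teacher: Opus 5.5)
Your proposal is correct and follows the paper's proof essentially line by line. The paper also derives $0<\tilde\varepsilon_{\Omega_c}\le 1$ and $\tilde\varepsilon_{\Omega_c}^2/(8R+4)\le\tilde s_{\Omega_c}$ from \eqref{eq:cfg-final-time-choice}, transplants the ball-inclusion and mass argument of Lemma~\ref{lem:polynomial-local-mass-radius} with $p_c$ in place of $p_\emptyset$, and bounds the three terms of $\tilde r_{\Omega_c}$ as you do, except that in step (c) it explicitly restricts to $\tilde t_{\mathrm{loc}}\ge 1/2$ so that $\tilde t_{\mathrm{loc}}^2\gamma_{\Omega_c}/2\ge\gamma_{\Omega_c}/8>0$ before concluding the exponential factor is $\littleo(1-\tilde t_{\mathrm{loc}})$, a step you leave implicit.
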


\begin{proof}
Admissible CFG final-stage times exist because
$(1-t)^2\log(1/(1-t))\to0$ as $t\uparrow1$, while both entries in the
minimum on the right-hand side of \eqref{eq:cfg-final-time-choice} are
strictly positive.
The pointwise distance condition and prox-regularity give
$\by_t^{\Omega_c}(\bx)\in\Omega_c$. The exact restriction
\eqref{eq:cfg-final-time-choice} implies
$0<\tilde\varepsilon_{\Omega_c}\le1$ and
$\tilde\varepsilon_{\Omega_c}^2/(8R+4)\le\tilde s_{\Omega_c}$. Apply the
calculation in the first part of
Lemma~\ref{lem:polynomial-local-mass-radius} with the substitutions
\[
\bigl(
p_\emptyset,c_\Omega,\alpha_\Omega,s_\Omega,
\varepsilon_\Omega,c_{\Omega,\mathrm{loc}},\Omega_t^0
\bigr)
\longmapsto
\bigl(
p_c,\tilde c_{\Omega_c},\tilde\alpha_{\Omega_c},\tilde s_{\Omega_c},
\tilde\varepsilon_{\Omega_c},\tilde c_{\Omega_c,\mathrm{loc}},
\widetilde{\Omega}_{c,t}^0
\bigr).
\]
Its mass input is exactly the uniform condition in
Assumption~\ref{assump:cfg-local-cluster}(ii), applied at
$\by_t^{\Omega_c}(\bx)\in\Omega_c$ with
$s=\tilde\varepsilon_{\Omega_c}^2/(8R+4)$. It gives
\eqref{eq:cfg-polynomial-local-mass-set}.

The calculation for the first two quantities in the definition of
\(r_\Omega\) in the second part of
Lemma~\ref{lem:polynomial-local-mass-radius}, with the same
substitutions, gives
\[
\tfrac{\tilde\varepsilon_{\Omega_c}}
{\tilde t_{\mathrm{loc}}\sqrt{1-R/\rho_{\Omega_c}}}
+
\tfrac{D_{\Omega_c}}{\tilde c_{\Omega_c,\mathrm{loc}}}
\exp\left(
-\tfrac{\tilde\varepsilon_{\Omega_c}^2}
{4(1-\tilde t_{\mathrm{loc}})^2}
\right)
=
\bigO\!\left(
(1-\tilde t_{\mathrm{loc}})
\sqrt{\log\tfrac{1}{1-\tilde t_{\mathrm{loc}}}}
\right).
\]
For \(\tilde t_{\mathrm{loc}}\ge1/2\),
Assumption~\ref{assump:cfg-local-cluster}(iii) gives
\(
\frac{\tilde t_{\mathrm{loc}}^2\gamma_{\Omega_c}}{2}
\ge
\frac{\gamma_{\Omega_c}}{8}
>0.
\)
Consequently, for every fixed \(m>0\),
\(
\exp\left(
-\frac{\tilde t_{\mathrm{loc}}^2\gamma_{\Omega_c}}
{2(1-\tilde t_{\mathrm{loc}})^2}
\right)
=
\littleo\bigl((1-\tilde t_{\mathrm{loc}})^m\bigr)
\
\text{as }\tilde t_{\mathrm{loc}}\uparrow1.
\)
Since
\(2wM_{\mathcal D}(1-p_\emptyset(\Omega_c))/p_\emptyset(\Omega_c)\)
is fixed, the third term in the definition of \(\tilde r_{\Omega_c}\) is
\(\littleo(1-\tilde t_{\mathrm{loc}})\), and hence is
\(\bigO((1-\tilde t_{\mathrm{loc}})
\sqrt{\log(1/(1-\tilde t_{\mathrm{loc}}))})\).
Combining these estimates with the definition of \(\tilde r_{\Omega_c}\)
proves \eqref{eq:cfg-final-inflation-radius-order}.
\end{proof}

\begin{lemma}[Conditional localization]
\label{lem:cfg-final-conditional-posterior}
Under Assumption~\ref{assump:cfg-local-cluster}, let $w>1$, and let
$\tilde t_{\mathrm{loc}}$, $\tilde r_{\Omega_c}$,
$\tilde\varepsilon_{\Omega_c}$, and $\tilde c_{\Omega_c,\mathrm{loc}}$
satisfy
\eqref{eq:cfg-final-time-choice}--\eqref{eq:cfg-final-quantities}.
Suppose that
\[
t\in[\tilde t_{\mathrm{loc}},1),\quad
\dist(\bx,t\Omega_c)<tR,
\]
then we have
\begin{align}
\left\|
t\by_t^{\Omega_c}(\bx)-t\hat{\by}_{t,c}(\bx)
\right\|
&\le
\tfrac{1}{\sqrt{1-R/\rho_{\Omega_c}}}\tilde\varepsilon_{\Omega_c}
+
\tfrac{tD_{\Omega_c}}{\tilde c_{\Omega_c,\mathrm{loc}}}
\exp\left(
-\tfrac{\tilde\varepsilon_{\Omega_c}^2}{4(1-t)^2}
\right)\notag\\
&+
2tM_{\mathcal D}
\tfrac{1-p_\emptyset(\Omega_c)}{p_\emptyset(\Omega_c)}
\exp\left(
-\tfrac{\tilde t_{\mathrm{loc}}^2\gamma_{\Omega_c}}{2(1-t)^2}
\right).
\label{eq:cfg-conditional-localization}
\end{align}
\end{lemma}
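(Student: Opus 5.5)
We mirror the proof of Lemma~\ref{lem:final-local-posterior}, using the restricted conditional posterior mean on $\Omega_c$,
\[
\hat{\by}_{t,c}^{\Omega_c}(\bx)
:=
\frac{\int_{\Omega_c}\by\,w_t(\by)\,p_c(\dd\by)}{\int_{\Omega_c}w_t(\by)\,p_c(\dd\by)},
\qquad
w_t(\by):=\exp\!\left(-\tfrac{\|\bx-t\by\|^2}{2(1-t)^2}\right).
\]
Because $p_c$ is the restriction of $p_\emptyset$ to $\mathcal D_c$ renormalized by a constant, this equals the $p_\emptyset$-restricted posterior mean on $\Omega_c$. The triangle inequality splits the target quantity into
\[
\|t\hat{\by}_{t,c}^{\Omega_c}(\bx)-t\by_t^{\Omega_c}(\bx)\|
+\|t\hat{\by}_{t,c}^{\Omega_c}(\bx)-t\hat{\by}_{t,c}(\bx)\|.
\]
We bound the two terms separately.

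For the first term, we run the near/far argument with respect to $p_c$. Set $\Delta_t(\by):=\|\bx-t\by\|^2-\|\bx-t\by_t^{\Omega_c}(\bx)\|^2$, and let the near set be $\{\by\in\Omega_c:\Delta_t(\by)\le\tilde\varepsilon_{\Omega_c}^2\}$. On this set, Lemma~\ref{lem:prox-regular-projection-estimate} (with $\rho_{\Omega_c}>R$ from Assumption~\ref{assump:cfg-local-cluster}(i)) gives a contribution of at most $\tilde\varepsilon_{\Omega_c}/\sqrt{1-R/\rho_{\Omega_c}}$.

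On the far set, $\|t\by-t\by_t^{\Omega_c}(\bx)\|\le tD_{\Omega_c}$. We compare pointwise against $\widetilde\Omega^0_{c,t}(\bx)$: for $\by$ far and $\by_0\in\widetilde\Omega^0_{c,t}(\bx)$, we have $\Delta_t(\by)-\Delta_t(\by_0)\ge\tilde\varepsilon_{\Omega_c}^2/2$. We then use the double-integral identity as in \eqref{eq:uncond-final-double-int}, now with $p_c$, together with the mass bound $p_c(\widetilde\Omega^0_{c,t}(\bx))\ge\tilde c_{\Omega_c,\mathrm{loc}}$ from Lemma~\ref{lem:cfg-polynomial-local-mass-radius}. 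This yields the second term $\frac{tD_{\Omega_c}}{\tilde c_{\Omega_c,\mathrm{loc}}}\exp(-\tilde\varepsilon_{\Omega_c}^2/(4(1-t)^2))$. This step is verbatim once the measure is switched to $p_c$.

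For the second term, we use the decomposition \eqref{eq:subtract-uncond-posterior} with $\mathcal D$ replaced by $\mathcal D_c$. The outside region is $\mathcal D_c\setminus\Omega_c$, and we may write all integrals against $p_\emptyset$, since the normalizing constant $p_\emptyset(\mathcal D_c)$ cancels in the ratio. We bound $\|\hat{\by}^{\Omega_c}_{t,c}-\by\|\le 2M_{\mathcal D_c}\le 2M_{\mathcal D}$. The inside mass satisfies $Z_{\Omega_c,t}\ge p_\emptyset(\Omega_c)\exp(-t^2(R+D_{\Omega_c})^2/(2(1-t)^2))$, using $\|\bx-t\by_t^{\Omega_c}\|<tR$. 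The outside mass satisfies $Z_{\mathrm{out},t}\le(p_\emptyset(\mathcal D_c)-p_\emptyset(\Omega_c))\exp(-t^2\delta_{\Omega_c}^2/(2(1-t)^2))$, because $\mathcal D_c\setminus\Omega_c\subset\mathcal D\setminus\Omega_c$ and $\bx/t\in B_R(\Omega_c)$. Since $p_\emptyset(\mathcal D_c)\le1$, the ratio is at most $\frac{1-p_\emptyset(\Omega_c)}{p_\emptyset(\Omega_c)}\exp(-t^2\gamma_{\Omega_c}/(2(1-t)^2))$.

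The only point needing care is converting $t^2$ in the exponent into $\tilde t_{\mathrm{loc}}^2$ while keeping $(1-t)^2$ in the denominator, as the statement requires. This follows from $t\ge\tilde t_{\mathrm{loc}}$ and $\gamma_{\Omega_c}>0$. Summing the three contributions gives \eqref{eq:cfg-conditional-localization}.

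The main obstacle is bookkeeping rather than new ideas. We must check that the outside-mass weight $p_\emptyset(\mathcal D_c\setminus\Omega_c)/p_\emptyset(\Omega_c)$ is dominated by $(1-p_\emptyset(\Omega_c))/p_\emptyset(\Omega_c)$. We must also check that the $p_c$- versus $p_\emptyset$-normalizations are consistent between the first term, which uses the $p_c$ mass bound, and the second term, which uses $p_\emptyset$ masses.
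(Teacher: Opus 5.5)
Your proposal is correct and follows essentially the same route as the paper: the restricted conditional posterior mean on $\Omega_c$, the near/far argument using the $p_c$-mass bound from Lemma~\ref{lem:cfg-polynomial-local-mass-radius}, and the inside/outside mass ratio followed by replacing $t^2$ with $\tilde t_{\mathrm{loc}}^2$ in the exponent. The only cosmetic difference is that you write the masses against $p_\emptyset$. The paper writes them against $p_c$ and then uses $\frac{1-p_c(\Omega_c)}{p_c(\Omega_c)}=\frac{p_\emptyset(\mathcal D_c)-p_\emptyset(\Omega_c)}{p_\emptyset(\Omega_c)}\le\frac{1-p_\emptyset(\Omega_c)}{p_\emptyset(\Omega_c)}$, which gives the same bound.
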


\begin{proof}
We first define the conditional posterior mean restricted to the local cluster \(\Omega_c\):
\(
\hat{\by}_{t,c}^{\Omega_c}(\bx)
:=
\tfrac{
\int_{\Omega_c}
\by
\exp\left(
-\frac{\|\bx-t\by\|^2}{2(1-t)^2}
\right)
p_c(\dd\by)
}{
\int_{\Omega_c}
\exp\left(
-\frac{\|\bx-t\by\|^2}{2(1-t)^2}
\right)
p_c(\dd\by)
}.
\)
Lemma~\ref{lem:cfg-polynomial-local-mass-radius} gives
\(p_c(\widetilde{\Omega}_{c,t}^0(\bx))
\ge\tilde c_{\Omega_c,\mathrm{loc}}\). Applying the calculation leading to
\eqref{eq:local-posterior-bound} in Lemma~\ref{lem:final-local-posterior} with the substitutions
\(
\bigl(
p_\emptyset,\varepsilon_\Omega,c_{\Omega,\mathrm{loc}},
\Omega_t^0
\bigr)
\longmapsto
\bigl(
p_c,\tilde\varepsilon_{\Omega_c},\tilde c_{\Omega_c,\mathrm{loc}},
\widetilde{\Omega}_{c,t}^0
\bigr)
\) gives
\begin{equation}\label{eq:cfg-conditional-within-cluster}
\left\|
t\by_t^{\Omega_c}(\bx)-t\hat{\by}_{t,c}^{\Omega_c}(\bx)
\right\|
\le
\tfrac{1}{\sqrt{1-R/\rho_{\Omega_c}}}\tilde\varepsilon_{\Omega_c}
+
\tfrac{tD_{\Omega_c}}{\tilde c_{\Omega_c,\mathrm{loc}}}
\exp\left(
-\tfrac{\tilde\varepsilon_{\Omega_c}^2}{4(1-t)^2}
\right).
\end{equation}

We next bound
\(\|t\hat{\by}_{t,c}^{\Omega_c}(\bx)-t\hat{\by}_{t,c}(\bx)\|\).
Write
\[
w_t(\by)
:=
\exp\left(
-\tfrac{\|\bx-t\by\|^2}{2(1-t)^2}
\right),
\quad
\widetilde{Z}_{\Omega_c,t}
:=
\int_{\Omega_c} w_t(\by)p_c(\dd\by),
\quad
\widetilde{Z}_{\mathrm{out},t}
:=
\int_{\mathcal D_c\setminus\Omega_c}w_t(\by)p_c(\dd\by).
\]
Direct subtraction and the same calculation as in \eqref{eq:subtract-uncond-posterior} give
\[
\hat{\by}_{t,c}^{\Omega_c}(\bx)-\hat{\by}_{t,c}(\bx)
=
\tfrac{
\int_{\mathcal D_c\setminus\Omega_c}
\bigl(\hat{\by}_{t,c}^{\Omega_c}(\bx)-\by\bigr)
w_t(\by)p_c(\dd\by)
}{\widetilde{Z}_{\Omega_c,t}+\widetilde{Z}_{\mathrm{out},t}}.
\]
By Lemma~\ref{lem:mean-in-closed-conv},
\(\hat{\by}_{t,c}^{\Omega_c}(\bx)\in\Conv(\Omega_c)\). Hence
\(\|\hat{\by}_{t,c}^{\Omega_c}(\bx)-\by\|
\le2M_{\mathcal D}\) for every
\(\by\in\mathcal D_c\setminus\Omega_c\), and therefore
\begin{equation}\label{eq:cfg-conditional-posterior-ratio-step}
\left\|
t\hat{\by}_{t,c}^{\Omega_c}(\bx)-t\hat{\by}_{t,c}(\bx)
\right\|
\le
2tM_{\mathcal D}
\tfrac{\widetilde{Z}_{\mathrm{out},t}}
{\widetilde{Z}_{\Omega_c,t}+\widetilde{Z}_{\mathrm{out},t}}
\le
2tM_{\mathcal D}
\tfrac{\widetilde{Z}_{\mathrm{out},t}}{\widetilde{Z}_{\Omega_c,t}}.
\end{equation}
Since \(\dist(\bx,t\Omega_c)<tR\) and \(\diam(\Omega_c) = D_{\Omega_c}\), the same calculation leading to \eqref{eq:Z-inside} gives
\[
\widetilde{Z}_{\Omega_c,t}
\ge
p_c(\Omega_c)
\exp\left(-\tfrac{t^2(R+D_{\Omega_c})^2}{2(1-t)^2}\right)
\]
and, because
\(\mathcal D_c\setminus\Omega_c\subset\mathcal D\setminus\Omega_c\), the same calculation leading to \eqref{eq:Z-outside} gives
\[
\widetilde{Z}_{\mathrm{out},t}
\le
\bigl(1-p_c(\Omega_c)\bigr)
\exp\left(
-\tfrac{
t^2\dist(B_R(\Omega_c),\mathcal D\setminus\Omega_c)^2
}{2(1-t)^2}
\right).
\]
Moreover,
\(
\tfrac{1-p_c(\Omega_c)}{p_c(\Omega_c)}
=
\tfrac{p_\emptyset(\mathcal D_c)-p_\emptyset(\Omega_c)}
{p_\emptyset(\Omega_c)}
\le
\tfrac{1-p_\emptyset(\Omega_c)}{p_\emptyset(\Omega_c)},
\)
\(
\tfrac{t^2}{2}
\left[
\dist(B_R(\Omega_c),\mathcal D\setminus\Omega_c)^2
-(R+D_{\Omega_c})^2
\right]
\ge
\tfrac{\tilde t_{\mathrm{loc}}^2\gamma_{\Omega_c}}{2}
\) for \(t\ge\tilde t_{\mathrm{loc}}\).
Dividing the two displayed estimates for
\(\widetilde Z_{\mathrm{out},t}\) and
\(\widetilde Z_{\Omega_c,t}\), and then applying the preceding two
inequalities, gives
\begin{equation}\label{eq:cfg-conditional-posterior-mass-ratio}
\tfrac{\widetilde Z_{\mathrm{out},t}}
{\widetilde Z_{\Omega_c,t}}
\le
\tfrac{1-p_\emptyset(\Omega_c)}{p_\emptyset(\Omega_c)}
\exp\left(
-\tfrac{\tilde t_{\mathrm{loc}}^2\gamma_{\Omega_c}}{2(1-t)^2}
\right).
\end{equation}
Combining \eqref{eq:cfg-conditional-posterior-ratio-step} and
\eqref{eq:cfg-conditional-posterior-mass-ratio} gives
\begin{equation}\label{eq:cfg-conditional-posterior-difference}
\left\|
t\hat{\by}_{t,c}^{\Omega_c}(\bx)-t\hat{\by}_{t,c}(\bx)
\right\|
\le
2tM_{\mathcal D}
\tfrac{1-p_\emptyset(\Omega_c)}{p_\emptyset(\Omega_c)}
\exp\left(
-\tfrac{\tilde t_{\mathrm{loc}}^2\gamma_{\Omega_c}}{2(1-t)^2}
\right).
\end{equation}
Combining
\eqref{eq:cfg-conditional-within-cluster} and
\eqref{eq:cfg-conditional-posterior-difference} proves
\eqref{eq:cfg-conditional-localization}.
\end{proof}

\begin{lemma}[CFG localization]\label{lem:cfg-final-local-posterior}
Under Assumption~\ref{assump:cfg-local-cluster}, let $w>1$, and let
$\tilde t_{\mathrm{loc}}$, $\tilde r_{\Omega_c}$,
$\tilde\varepsilon_{\Omega_c}$, and $\tilde c_{\Omega_c,\mathrm{loc}}$
satisfy
\eqref{eq:cfg-final-time-choice}--\eqref{eq:cfg-final-quantities}.
Suppose that
$t\in[\tilde{t}_{\mathrm{loc}},1)$ and
$\dist(\bx,t\Omega_c)<tR.$
Define the extrapolated posterior mean by
\(
\hat{\by}_{t,\mathrm{cfg}}(\bx)
:=
\hat{\by}_{t,c}(\bx)
+(w-1)\bigl(\hat{\by}_{t,c}(\bx)-\hat{\by}_{t}(\bx)\bigr),
\)
then
\begin{equation}\label{eq:cfg-cluster-ball-bound}
\left\|t\by_t^{\Omega_c}(\bx)-t\hat{\by}_{t,\mathrm{cfg}}(\bx)\right\|
\le
t\tilde r_{\Omega_c},
\end{equation}
and consequently $\hat{\by}_{t,\mathrm{cfg}}(\bx) \in B_{\tilde r_{\Omega_c}}(\by_t^{\Omega_c}(\bx)).$
\end{lemma}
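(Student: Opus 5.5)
We will deduce the bound by splitting the CFG posterior mean into its conditional part and the extrapolation correction. Since
\(
\hat{\by}_{t,\mathrm{cfg}}(\bx)=\hat{\by}_{t,c}(\bx)+(w-1)\bigl(\hat{\by}_{t,c}(\bx)-\hat{\by}_t(\bx)\bigr),
\)
the triangle inequality gives
\[
\left\|t\by_t^{\Omega_c}(\bx)-t\hat{\by}_{t,\mathrm{cfg}}(\bx)\right\|
\le
\left\|t\by_t^{\Omega_c}(\bx)-t\hat{\by}_{t,c}(\bx)\right\|
+(w-1)t\left\|\hat{\by}_{t,c}(\bx)-\hat{\by}_t(\bx)\right\|.
\]
The first term is controlled directly by Lemma~\ref{lem:cfg-final-conditional-posterior}. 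That lemma contributes the first two summands of \(\tilde r_{\Omega_c}\), together with a third summand equal to \(2tM_{\mathcal D}\tfrac{1-p_\emptyset(\Omega_c)}{p_\emptyset(\Omega_c)}\exp(-\tilde t_{\mathrm{loc}}^2\gamma_{\Omega_c}/(2(1-t)^2))\).

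For the second term, we first check that \(\bx\in B_{tR}(t\Omega_c)\). This follows from \(\dist(\bx,t\Omega_c)<tR\). We then invoke Theorem~\ref{thm:prediction-gap} with \(t_{\mathrm{gap}}=\tilde t_{\mathrm{loc}}\), using \(\gamma_{\Omega_c}>0\) from Assumption~\ref{assump:cfg-local-cluster}(iii). Note also that \(\Omega_c\subset\mathcal D_c\) is closed and \(p_\emptyset(\Omega_c)>0\). This gives
\[
(w-1)t\left\|\hat{\by}_{t,c}(\bx)-\hat{\by}_t(\bx)\right\|
\le 2(w-1)tM_{\mathcal D}\tfrac{1-p_\emptyset(\Omega_c)}{p_\emptyset(\Omega_c)}\exp\!\left(-\tfrac{\tilde t_{\mathrm{loc}}^2\gamma_{\Omega_c}}{2(1-t)^2}\right).
\]
Adding this to the third summand from the conditional lemma produces the coefficient \(2wtM_{\mathcal D}\). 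This is exactly the \(w\)-weighted third term in \eqref{eq:cfg-final-inflation-radius}, except that it is still evaluated at time \(t\).

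It remains to pass from time \(t\) to the uniform bound at \(\tilde t_{\mathrm{loc}}\), arguing as at the end of Lemma~\ref{lem:final-local-posterior}. Since \(t\ge\tilde t_{\mathrm{loc}}\), we have
\(\tilde\varepsilon_{\Omega_c}/\sqrt{1-R/\rho_{\Omega_c}}\le t\,\tilde\varepsilon_{\Omega_c}/(\tilde t_{\mathrm{loc}}\sqrt{1-R/\rho_{\Omega_c}})\).
Both exponentials are nonincreasing in \(t\), because \(1/(1-t)^2\) increases, so we may replace \((1-t)\) by \((1-\tilde t_{\mathrm{loc}})\) in each. Collecting the factor \(t\) from all three terms then yields \(t\tilde r_{\Omega_c}\). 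Dividing by \(t>0\) gives the membership \(\hat{\by}_{t,\mathrm{cfg}}(\bx)\in B_{\tilde r_{\Omega_c}}(\by_t^{\Omega_c}(\bx))\).

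The main obstacle is bookkeeping rather than any new idea. The two exponential terms must share the same exponent \(\tilde t_{\mathrm{loc}}^2\gamma_{\Omega_c}\) so that they combine into the single \(2wM_{\mathcal D}\) term. The choice \(t_{\mathrm{gap}}=\tilde t_{\mathrm{loc}}\) in Theorem~\ref{thm:prediction-gap}, together with the analogous choice already made inside Lemma~\ref{lem:cfg-final-conditional-posterior}, ensures this.
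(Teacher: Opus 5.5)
Your proposal is correct and follows the paper's proof essentially step by step: the same triangle-inequality split, Lemma~\ref{lem:cfg-final-conditional-posterior} for the conditional term, and Theorem~\ref{thm:prediction-gap} with \(t_{\mathrm{gap}}=\tilde t_{\mathrm{loc}}\) for the extrapolation term. The exponential terms then merge into the \(2wM_{\mathcal D}\) coefficient, and the monotonicity comparisons convert the time-\(t\) bound into \(t\tilde r_{\Omega_c}\).
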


\begin{proof}
The triangle inequality gives
\[
\left\|
t\by_t^{\Omega_c}(\bx)-t\hat{\by}_{t,\mathrm{cfg}}(\bx)
\right\|
\le
\left\|
t\by_t^{\Omega_c}(\bx)-t\hat{\by}_{t,c}(\bx)
\right\|
+
t(w-1)
\left\|
\hat{\by}_{t,c}(\bx)-\hat{\by}_t(\bx)
\right\|.
\]
Lemma~\ref{lem:cfg-final-conditional-posterior} bounds the first term
on the right-hand side by
\[
\begin{aligned}
\left\|
t\by_t^{\Omega_c}(\bx)-t\hat{\by}_{t,c}(\bx)
\right\|
\le
\tfrac{1}{\sqrt{1-R/\rho_{\Omega_c}}}\tilde\varepsilon_{\Omega_c}
+
\tfrac{tD_{\Omega_c}}{\tilde c_{\Omega_c,\mathrm{loc}}}
\exp\left(
-\tfrac{\tilde\varepsilon_{\Omega_c}^2}{4(1-t)^2}
\right)
+
2tM_{\mathcal D}
\tfrac{1-p_\emptyset(\Omega_c)}{p_\emptyset(\Omega_c)}
\exp\left(
-\tfrac{\tilde t_{\mathrm{loc}}^2\gamma_{\Omega_c}}{2(1-t)^2}
\right).
\end{aligned}
\]
For the second term, Theorem~\ref{thm:prediction-gap}, with
\(t_{\mathrm{gap}}=\tilde t_{\mathrm{loc}}\), gives
\[
\begin{aligned}
t(w-1)
\left\|
\hat{\by}_{t,c}(\bx)-\hat{\by}_t(\bx)
\right\|
\le
2t(w-1)M_{\mathcal D}
\tfrac{1-p_\emptyset(\Omega_c)}{p_\emptyset(\Omega_c)}
\exp\left(
-\tfrac{\tilde t_{\mathrm{loc}}^2\gamma_{\Omega_c}}{2(1-t)^2}
\right).
\end{aligned}
\]
Adding the preceding two estimates yields
\begin{align}
\left\|
t\by_t^{\Omega_c}(\bx)-t\hat{\by}_{t,\mathrm{cfg}}(\bx)
\right\|\notag&
\le
\tfrac{1}{\sqrt{1-R/\rho_{\Omega_c}}}\tilde\varepsilon_{\Omega_c}
+
\tfrac{tD_{\Omega_c}}{\tilde c_{\Omega_c,\mathrm{loc}}}
\exp\left(
-\tfrac{\tilde\varepsilon_{\Omega_c}^2}{4(1-t)^2}
\right)\\
&+
2twM_{\mathcal D}
\tfrac{1-p_\emptyset(\Omega_c)}{p_\emptyset(\Omega_c)}
\exp\left(
-\tfrac{\tilde t_{\mathrm{loc}}^2\gamma_{\Omega_c}}{2(1-t)^2}
\right).
\label{eq:cfg-localization-combination}
\end{align}
Since \(t\ge\tilde t_{\mathrm{loc}}\), we have 
\(\tfrac{1}{\sqrt{1-R/\rho_{\Omega_c}}}\tilde\varepsilon_{\Omega_c}
\le
t\tfrac{1}{\sqrt{1-R/\rho_{\Omega_c}}}
\frac{\tilde\varepsilon_{\Omega_c}}{\tilde t_{\mathrm{loc}}}\) and 
\begin{align*}
\exp\left(
-\tfrac{\tilde\varepsilon_{\Omega_c}^2}{4(1-t)^2}
\right)
\le
\exp\left(
-\tfrac{\tilde\varepsilon_{\Omega_c}^2}
{4(1-\tilde t_{\mathrm{loc}})^2}
\right), \quad 
\exp\left(
-\tfrac{\tilde t_{\mathrm{loc}}^2\gamma_{\Omega_c}}{2(1-t)^2}
\right)
\le
\exp\left(
-\tfrac{\tilde t_{\mathrm{loc}}^2\gamma_{\Omega_c}}
{2(1-\tilde t_{\mathrm{loc}})^2}
\right).
\end{align*}
Together with the definition of \(\tilde r_{\Omega_c}\), these comparisons
turn \eqref{eq:cfg-localization-combination} into
\(\|t\by_t^{\Omega_c}(\bx)-t\hat{\by}_{t,\mathrm{cfg}}(\bx)\|
\le t\tilde r_{\Omega_c}\).
This proves \eqref{eq:cfg-cluster-ball-bound}, and equivalently
\(
t\hat{\by}_{t,\mathrm{cfg}}(\bx)
\in
B_{t\tilde r_{\Omega_c}}\bigl(t\by_t^{\Omega_c}(\bx)\bigr).
\)
\end{proof}

\subsubsection{Proof of Theorem~\ref{thm:cfg-final}}\label{proof:thm:cfg-final}

\begin{proof}
The inequality~\eqref{eq:cfg-final-capture-condition} and
\(\tilde t_{\mathrm{loc}}<1\) give
\(
\tilde r_{\Omega_c}
\le
\tilde t_{\mathrm{loc}}R
-
\dist\bigl(
\bx_{\tilde t_{\mathrm{loc}}},
\tilde t_{\mathrm{loc}}B_{\tilde r_{\Omega_c}}(\Omega_c)
\bigr)
<R.
\)
Moreover, Since \(\tilde\varepsilon_{\Omega_c}>0\), the definition of
\(\tilde r_{\Omega_c}\) gives \(\tilde r_{\Omega_c}>0\). Together with 
\(\tilde t_{\mathrm{loc}}<1\), we have 
\[
\begin{aligned}
\dist\bigl(
\bx_{\tilde t_{\mathrm{loc}}},
\tilde t_{\mathrm{loc}}\Omega_c
\bigr)
&\le
\dist\bigl(
\bx_{\tilde t_{\mathrm{loc}}},
\tilde t_{\mathrm{loc}}B_{\tilde r_{\Omega_c}}(\Omega_c)
\bigr)
+\tilde t_{\mathrm{loc}}\tilde r_{\Omega_c}
\\
&<
\dist\bigl(
\bx_{\tilde t_{\mathrm{loc}}},
\tilde t_{\mathrm{loc}}B_{\tilde r_{\Omega_c}}(\Omega_c)
\bigr)
+\tilde r_{\Omega_c}
\le
\tilde t_{\mathrm{loc}}R.
\end{aligned}
\]
Define the first-exit time
\[
\tilde\tau^*
:=
\inf
\left\{
t\in(\tilde t_{\mathrm{loc}},1):
\dist(\bx_t,t\Omega_c)\ge tR
\right\},
\]
with the convention \(\tilde\tau^*=1\) if the set is empty. Continuity
and the preceding strict inequality give
\(\tilde\tau^*>\tilde t_{\mathrm{loc}}\) and
\(
\dist(\bx_t,t\Omega_c)<tR,
\
t\in[\tilde t_{\mathrm{loc}},\tilde\tau^*).
\)
For every \(t\in[\tilde t_{\mathrm{loc}},\tilde\tau^*)\),
Lemma~\ref{lem:inflated-neighborhood-projection-regularity} gives that
\(\Proj_{tB_{\tilde r_{\Omega_c}}(\Omega_c)}(\bx_t)\) is
well-defined and
\(
\Proj_{tB_{\tilde r_{\Omega_c}}(\Omega_c)}(\bx_t)/t
\in
B_{\tilde r_{\Omega_c}}\bigl(\by_t^{\Omega_c}(\bx_t)\bigr).
\)
In addition, Lemma~\ref{lem:cfg-final-local-posterior} gives
\(
\hat{\by}_{t,\mathrm{cfg}}(\bx_t)
\in
B_{\tilde r_{\Omega_c}}\bigl(\by_t^{\Omega_c}(\bx_t)\bigr).
\)

The set
\(B_{\tilde r_{\Omega_c}}(\by_t^{\Omega_c}(\bx_t))\) is nonempty,
closed, and convex, and
\(
B_{\tilde r_{\Omega_c}}\bigl(\by_t^{\Omega_c}(\bx_t)\bigr)
\subset
B_{\tilde r_{\Omega_c}}(\Omega_c)
\)
because \(\by_t^{\Omega_c}(\bx_t)\in\Omega_c\). Thus part~(i) of
Lemma~\ref{lem:continuous-scaled-convex-contraction} applies on
\(I=[\tilde t_{\mathrm{loc}},\tilde\tau^*)\), with
\(
S=B_{\tilde r_{\Omega_c}}(\Omega_c),
\
C_t=B_{\tilde r_{\Omega_c}}\bigl(\by_t^{\Omega_c}(\bx_t)\bigr),
\
\boldsymbol m_t=\hat{\by}_{t,\mathrm{cfg}}(\bx_t),
\)
and gives, for every \(t<\tilde\tau^*\),
\begin{equation}\label{eq:cfg-final-global-bound}
\dist\bigl(\bx_t,tB_{\tilde r_{\Omega_c}}(\Omega_c)\bigr)
\le
\tfrac{1-t}{1-\tilde t_{\mathrm{loc}}}
\dist\bigl(
\bx_{\tilde t_{\mathrm{loc}}},
\tilde t_{\mathrm{loc}}B_{\tilde r_{\Omega_c}}(\Omega_c)
\bigr).
\end{equation}
Combining \eqref{eq:cfg-final-global-bound} with
\eqref{eq:cfg-final-capture-condition} gives, for every
\(t<\tilde\tau^*\),
\[
\begin{aligned}
\dist\bigl(\bx_t,tB_{\tilde r_{\Omega_c}}(\Omega_c)\bigr)
+\tilde r_{\Omega_c}
&\le
\tfrac{1-t}{1-\tilde t_{\mathrm{loc}}}
\bigl(\tilde t_{\mathrm{loc}}R-\tilde r_{\Omega_c}\bigr)
+\tilde r_{\Omega_c}
=
tR-
\tfrac{t-\tilde t_{\mathrm{loc}}}{1-\tilde t_{\mathrm{loc}}}
(R-\tilde r_{\Omega_c})
\le tR.
\end{aligned}
\]
Suppose that \(\tilde\tau^*<1\). Letting
\(t\uparrow\tilde\tau^*\) in the preceding estimate and using
\(\tilde\tau^*>\tilde t_{\mathrm{loc}}\) and
\(\tilde r_{\Omega_c}<R\) gives
\[
\dist\bigl(
\bx_{\tilde\tau^*},
\tilde\tau^*B_{\tilde r_{\Omega_c}}(\Omega_c)
\bigr)
+\tilde r_{\Omega_c}
<
\tilde\tau^*R.
\]
Consequently,
\[
\dist(\bx_{\tilde\tau^*},\tilde\tau^*\Omega_c)
\le
\dist\bigl(\bx_{\tilde\tau^*},
\tilde\tau^*B_{\tilde r_{\Omega_c}}(\Omega_c)\bigr)
+\tilde\tau^*\tilde r_{\Omega_c}
\le
\dist\bigl(\bx_{\tilde\tau^*},
\tilde\tau^*B_{\tilde r_{\Omega_c}}(\Omega_c)\bigr)
+\tilde r_{\Omega_c}
<
\tilde\tau^*R.
\]
On the other hand, the first-exit definition and continuity imply
\(
\dist(\bx_{\tilde\tau^*},\tilde\tau^*\Omega_c)
\ge
\tilde\tau^*R,
\)
which is a contradiction. Hence \(\tilde\tau^*=1\).

\noindent 
\textbf{Attraction.}
For any \(\tilde t_{\mathrm{loc}}\le s\le t<1\), the same argument as
in \eqref{eq:cfg-final-global-bound}, with part~(i) of
Lemma~\ref{lem:continuous-scaled-convex-contraction}, gives
\[
\dist\bigl(
\bx_t,tB_{\tilde r_{\Omega_c}}(\Omega_c)
\bigr)
\le
\tfrac{1-t}{1-s}
\dist\bigl(
\bx_s,sB_{\tilde r_{\Omega_c}}(\Omega_c)
\bigr).
\]
Since \((1-t)/(1-s)\le1\), the distance is non-increasing on
\([\tilde t_{\mathrm{loc}},1)\). This proves part~(i) of the theorem.

Taking \(s=\tilde t_{\mathrm{loc}}\) gives
\eqref{eq:cfg-final-global-bound} for every
\(t\in[\tilde t_{\mathrm{loc}},1)\). Moreover,
the factor multiplying \(1-t\) on the right-hand side is independent
of \(t\), so
\eqref{eq:cfg-final-global-bound} is \(\bigO(1-t)\). This proves
part~(ii) of the theorem.

\noindent
\textbf{Absorption.}
Suppose that
\(
\bx_{\tilde t_{\Omega_c}}
\in
\tilde t_{\Omega_c}
B_{\tilde r_{\Omega_c}}(\Omega_c)
\)
for some \(\tilde t_{\Omega_c}\in[\tilde t_{\mathrm{loc}},1)\), then
part~(ii) of
Lemma~\ref{lem:continuous-scaled-convex-contraction}
applied on \([\tilde t_{\mathrm{loc}},1)\) with
\(
S=B_{\tilde r_{\Omega_c}}(\Omega_c),
C_t=
B_{\tilde r_{\Omega_c}}\!\left(
\by_t^{\Omega_c}(\bx_t)
\right)
\)
and
\(\boldsymbol m_t=\hat{\by}_{t,\mathrm{cfg}}(\bx_t)\) gives
\[
\bx_t\in tB_{\tilde r_{\Omega_c}}(\Omega_c)
\]
for every \(t\in[\tilde t_{\Omega_c},1)\).
This proves part~(iii) of the theorem.
\end{proof}

\subsection{CFG Euler stagewise counterparts}\label{thm:cfg-euler-stagewise}

\begin{theorem}[Discrete version of Theorem~\ref{thm:cfg-early}]
\label{thm:cfg-early-euler}
Under the setting preceding Theorem~\ref{thm:cfg-early}, consider the CFG
Euler scheme on the complete grid $t_i=i\Delta t$, $i=0,\ldots,K$,
$\Delta t=1/K$, initialized at \(\bz_0=\bx_0\), with update
\(\bz_{i+1}=\bz_i+\Delta t\,\vcfg(t_i,\bz_i), i= 0, \dots, K-1\). If
$\Delta t\le\tilde t_{\mathrm{mean}}/2$, then the following hold.
\begin{enumerate}
\item[(i)] \emph{(Non-entry)} For every \(i\) with
$t_i\le\tilde t_{\mathrm{mean}}$,
\(
\bz_i
\notin
t_iB_{\tilde r_{\mathrm{mean}}}(\bar{\by}_{\mathrm{cfg}}).
\)

\item[(ii)] \emph{(Attraction)} For every $0\le j\le i$ with
$t_i\le\tilde t_{\mathrm{mean}}$,
\begin{equation}\label{eq:cfg-early-euler-ratio}
\dist\!\left(
\bz_i,
t_iB_{\tilde r_{\mathrm{mean}}}(\bar{\by}_{\mathrm{cfg}})
\right)
\le
\tfrac{1-t_i}{1-t_j}
\dist\!\left(
\bz_j,
t_jB_{\tilde r_{\mathrm{mean}}}(\bar{\by}_{\mathrm{cfg}})
\right).
\end{equation}
Hence the sequence
\(
i\mapsto
\dist\!\left(
\bz_i,
t_iB_{\tilde r_{\mathrm{mean}}}(\bar{\by}_{\mathrm{cfg}})
\right)
\)
is strictly decreasing with $t_i\le\tilde t_{\mathrm{mean}}$.

\item[(iii)] Taking $j=0$ in part~(ii) gives
\(\dist\!\left(
\bz_i,
t_iB_{\tilde r_{\mathrm{mean}}}(\bar{\by}_{\mathrm{cfg}})
\right)
\le
(1-t_i)\|\bx_0\|,
\ t_i\le\tilde t_{\mathrm{mean}}.\)
\end{enumerate}
\end{theorem}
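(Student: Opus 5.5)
The plan follows the proof of Theorem~\ref{cor:early-mean-euler} step by step, with the CFG posterior mean $\hat{\by}_{t,\mathrm{cfg}}$ in place of $\hat{\by}_t$. The first step is to rewrite the CFG Euler update in affine form. Since $\vcfg(t,\bx)=(\hat{\by}_{t,\mathrm{cfg}}(\bx)-\bx)/(1-t)$, we get
\[
\bz_{i+1}=a_i\bz_i+b_i\hat{\by}_{t_i,\mathrm{cfg}}(\bz_i),\qquad a_i=\tfrac{1-t_{i+1}}{1-t_i},\quad b_i=\tfrac{t_{i+1}-t_i}{1-t_i}.
\]
The next step is the CFG analogue of Lemma~\ref{lem:early-euler-apriori}. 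Let $K_{\mathrm{cfg}}$ denote the set of all points $(1-w)\by+w\by'$ with $\by\in\Conv(\mathcal D)$ and $\by'\in\Conv(\mathcal D_c)$. This set is convex and compact, and $\hat{\by}_{t,\mathrm{cfg}}(\bx)\in K_{\mathrm{cfg}}$ by Lemma~\ref{lem:mean-in-closed-conv}. The same induction as before then yields $\bz_i=(1-t_i)\bx_0+t_i\ba_i$ with $\ba_i\in K_{\mathrm{cfg}}$. Every element of $K_{\mathrm{cfg}}$ has norm at most $(w-1)M_{\mathcal D}+wM_{\mathcal D_c}$, so this gives the discrete version of Lemma~\ref{lem:cfg-norm-bound}:
\[
\|\bz_i\|\le(1-t_i)\|\bx_0\|+t_i\bigl((w-1)M_{\mathcal D}+wM_{\mathcal D_c}\bigr).
\]
Also, $\bar{\by}_{\mathrm{cfg}}\in K_{\mathrm{cfg}}$, and any two points of $K_{\mathrm{cfg}}$ are at distance at most $(w-1)D_\emptyset+wD_c$. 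Hence $\|\ba_i-\bar{\by}_{\mathrm{cfg}}\|\le(w-1)D_\emptyset+wD_c$.

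Non-entry, part~(i), follows directly. By the reverse triangle inequality,
\[
\|\bz_i-t_i\bar{\by}_{\mathrm{cfg}}\|-t_i\tilde r_{\mathrm{mean}}\ge(1-t_i)\|\bx_0\|-t_i\bigl[(w-1)D_\emptyset+wD_c+\tilde r_{\mathrm{mean}}\bigr].
\]
For $t_i\le\tilde t_{\mathrm{mean}}$ this is bounded below by the same expression at $\tilde t_{\mathrm{mean}}$, which is positive by the choice of $\tilde t_{\mathrm{mean}}$.

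The localization step is where most of the work lies. We must show $\hat{\by}_{t_i,\mathrm{cfg}}(\bz_i)\in B_{\tilde r_{\mathrm{mean}}}(\bar{\by}_{\mathrm{cfg}})$ whenever $t_i\le\tilde t_{\mathrm{mean}}$. To do this, reuse the pointwise bounds from the proof of Theorem~\ref{thm:cfg-early}. These state that $\|\hat{\by}_t(\bx)-\bar{\by}\|$ is at most $M_{\mathcal D}\bigl[\exp\bigl((tD_\emptyset\|\bx\|+t^2D_\emptyset M_{\mathcal D})/(1-t)^2\bigr)-1\bigr]$, together with the analogous bound under $p_c$. Both hold for an arbitrary point $\bx$, since the trajectory only entered through its norm. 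Substituting the discrete norm bound above turns the exponents into exactly $g_\emptyset(t_i)$ and $g_c(t_i)$. Combining this with the decomposition \eqref{eq:cfg-early-decompose} bounds $\|\hat{\by}_{t_i,\mathrm{cfg}}(\bz_i)-\bar{\by}_{\mathrm{cfg}}\|$ by $\tilde r(t_i)$. Since $g_\emptyset$ and $g_c$ are nondecreasing on $[0,1)$, this is at most $\tilde r(\tilde t_{\mathrm{mean}})=\tilde r_{\mathrm{mean}}$. The case $t_0=0$ is trivial, because there $\hat{\by}_{0,\mathrm{cfg}}=\bar{\by}_{\mathrm{cfg}}$. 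The only real check is that the constants match exactly; this is the main obstacle, but it is bookkeeping.

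For attraction, apply Lemma~\ref{lem:euler-scaled-convex-contraction}(i) with $S=C=B_{\tilde r_{\mathrm{mean}}}(\bar{\by}_{\mathrm{cfg}})$, $s=t_i$, $t=t_{i+1}$ and $\by=\hat{\by}_{t_i,\mathrm{cfg}}(\bz_i)$. For $i>0$, the projection onto $t_iS$ is well defined because $S$ is convex; for $i=0$, use the zero-time case. This gives the one-step ratio $(1-t_{i+1})/(1-t_i)$ for every $i$ with $t_{i+1}\le\tilde t_{\mathrm{mean}}$. Telescoping these one-step bounds gives \eqref{eq:cfg-early-euler-ratio}. Strict decrease follows from part~(i), which keeps every distance positive, together with $(1-t_i)/(1-t_j)<1$ for $j<i$. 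Finally, part~(iii) is the case $j=0$, using $0\cdot B_{\tilde r_{\mathrm{mean}}}(\bar{\by}_{\mathrm{cfg}})=\{\bzero\}$. The hypothesis $\Delta t\le\tilde t_{\mathrm{mean}}/2$ only ensures that the early grid segment is nontrivial.
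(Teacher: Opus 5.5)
Your proposal is correct and follows the paper's proof essentially step for step. The shared steps are: the affine Euler form; the convex set $(1-w)\Conv(\mathcal D)+w\Conv(\mathcal D_c)$, used for the representation $\bz_i=(1-t_i)\bx_0+t_i\ba_i$ and for the norm and non-entry bounds; the pointwise posterior estimates, whose exponents become exactly $g_\emptyset(t_i)$ and $g_c(t_i)$ (the constant bookkeeping you flag does check out); and Lemma~\ref{lem:euler-scaled-convex-contraction}(i) with $S=C=B_{\tilde r_{\mathrm{mean}}}(\bar{\by}_{\mathrm{cfg}})$, followed by telescoping.
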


\begin{proof}
The CFG Euler update has the exact
affine form
\begin{equation}\label{eq:cfg-euler-affine-step}
\bz_{i+1}
=
\tfrac{1-t_{i+1}}{1-t_i}\bz_i
+
\tfrac{t_{i+1}-t_i}{1-t_i}
\hat{\by}_{t_i,\mathrm{cfg}}(\bz_i),
\end{equation}
where
\begin{equation}\label{eq:cfg-euler-posterior-extrapolation}
\hat{\by}_{t_i,\mathrm{cfg}}(\bz)
=
(1-w)\hat{\by}_{t_i}(\bz)
+
w\hat{\by}_{t_i,c}(\bz).
\end{equation}

For every active index \(i<K\),
\eqref{eq:cfg-euler-posterior-extrapolation} and
Lemma~\ref{lem:mean-in-closed-conv} give
\begin{equation}\label{eq:cfg-early-euler-signed-posterior}
\hat{\by}_{t_i,\mathrm{cfg}}(\bz_i)
\in
(1-w)\Conv(\mathcal D)+w\Conv(\mathcal D_c),
\qquad
i=0,\ldots,K-1.
\end{equation}
Set
\(
A_{\mathrm{cfg}}
:=
(1-w)\Conv(\mathcal D)+w\Conv(\mathcal D_c).
\)
This set is compact and convex. We claim that, for every \(i\), there
exists \(\ba_i\in A_{\mathrm{cfg}}\) such that
\begin{equation}\label{eq:cfg-early-euler-affine}
\bz_i=(1-t_i)\bx_0+t_i\ba_i.
\end{equation}
Indeed, at \(i=0\), we have \(t_0=0\) and \(\bz_0=\bx_0\), so
\eqref{eq:cfg-early-euler-affine} holds after choosing any
\(\ba_0\in A_{\mathrm{cfg}}\). Suppose it holds at an active index
\(i<K\). Substituting it into \eqref{eq:cfg-euler-affine-step} gives
\begin{equation}\label{eq:cfg-early-euler-affine-induction}
\begin{aligned}
\bz_{i+1}
&=
(1-t_{i+1})\bx_0
+
\tfrac{t_i(1-t_{i+1})}{1-t_i}\ba_i
+
\tfrac{t_{i+1}-t_i}{1-t_i}
\hat{\by}_{t_i,\mathrm{cfg}}(\bz_i).
\end{aligned}
\end{equation}
The two coefficients in the last two terms satisfy
\(
\tfrac{t_i(1-t_{i+1})}{1-t_i}
+
\tfrac{t_{i+1}-t_i}{1-t_i}
=
t_{i+1}.
\)
Since \(\ba_i\in A_{\mathrm{cfg}}\),
\(\hat{\by}_{t_i,\mathrm{cfg}}(\bz_i)\in A_{\mathrm{cfg}}\) by
\eqref{eq:cfg-early-euler-signed-posterior}, and
\(A_{\mathrm{cfg}}\) is convex, the last two terms in
\eqref{eq:cfg-early-euler-affine-induction} belong to
\(t_{i+1}A_{\mathrm{cfg}}\). Hence there exists
\(\ba_{i+1}\in A_{\mathrm{cfg}}\) such that
\(\bz_{i+1}=(1-t_{i+1})\bx_0+t_{i+1}\ba_{i+1}\), which completes the
induction.
Thus, for each $i$, there exist
$\by_i\in\Conv(\mathcal D)$ and
$\by_{i,c}\in\Conv(\mathcal D_c)$ such that
\(\ba_i=(1-w)\by_i+w\by_{i,c}\), and hence
\(\bz_i=(1-t_i)\bx_0+
t_i[(1-w)\by_i+w\by_{i,c}]\).
Hence
\begin{equation}\label{eq:cfg-early-euler-norm-bound}
\|\bz_i\|
\le
(1-t_i)\|\bx_0\|
+
t_i\bigl((w-1)M_{\mathcal D}+wM_{\mathcal D_c}\bigr).
\end{equation}
Moreover, because
$\bar{\by}\in\Conv(\mathcal D)$ and
$\bar{\by}_c\in\Conv(\mathcal D_c)$, we have 
\(
\|
(1-w)\by_i+w\by_{i,c}-\bar{\by}_{\mathrm{cfg}}
\|
\le (w-1)D_\emptyset+wD_c.
\)
Consequently, whenever $t_i\le\tilde t_{\mathrm{mean}}$,
\begin{equation}\label{eq:cfg-early-euler-nonentry}
\begin{aligned}
\dist\!\left(
\bz_i,
t_iB_{\tilde r_{\mathrm{mean}}}(\bar{\by}_{\mathrm{cfg}})
\right)
&\ge
(1-t_i)\|\bx_0\|
-
t_i\left[
(w-1)D_\emptyset+wD_c+\tilde r_{\mathrm{mean}}
\right]
\\
&\ge
(1-\tilde t_{\mathrm{mean}})\|\bx_0\|
-
\tilde t_{\mathrm{mean}}
\left[
(w-1)D_\emptyset+wD_c+\tilde r_{\mathrm{mean}}
\right]
>0,
\end{aligned}
\end{equation}
where the last inequality follows from the setting in Theorem~\ref{thm:cfg-early}. This proves the non-entry property.
It remains to localize \(\hat{\by}_{t_i,\mathrm{cfg}}(\bz_i)
\) at the grid point \(\bz_i\).
The pointwise estimates
\eqref{eq:cfg-early-unconditional-posterior-bound} and
\eqref{eq:cfg-early-conditional-posterior-bound} applied at
\((t,\bx)=(t_i,\bz_i)\), with
\eqref{eq:cfg-early-euler-norm-bound}, give, for
\(0<t_i\le\tilde t_{\mathrm{mean}}\),
\[
\begin{aligned}
\left\|
\hat{\by}_{t_i}(\bz_i)-\bar{\by}
\right\|
&\le
M_{\mathcal D}
\left[
\exp\left(
\tfrac{
t_iD_\emptyset
\left[
(1-t_i)\|\bx_0\|
+
t_iw(M_{\mathcal D}+M_{\mathcal D_c})
\right]
}{
(1-t_i)^2
}
\right)-1
\right],
\\
\left\|
\hat{\by}_{t_i,c}(\bz_i)-\bar{\by}_c
\right\|
&\le
M_{\mathcal D_c}
\left[
\exp\left(
\tfrac{
t_iD_c
\left[
(1-t_i)\|\bx_0\|
+
t_i\bigl(
(w-1)M_{\mathcal D}
+
(w+1)M_{\mathcal D_c}
\bigr)
\right]
}{
(1-t_i)^2
}
\right)-1
\right].
\end{aligned}
\]
The above two right-hand sides are nondecreasing in \(t_i\).
Moreover, at \(t_i=0\),
\(
\hat{\by}_{0}(\bx_0)=\bar{\by},
\
\hat{\by}_{0,c}(\bx_0)=\bar{\by}_c,
\
\hat{\by}_{0,\mathrm{cfg}}(\bx_0)=\bar{\by}_{\mathrm{cfg}}.
\)
Thus, for every \(t_i\le\tilde t_{\mathrm{mean}}\), the definition of \(\tilde r_{\mathrm{mean}}\) at
\(t=\tilde t_{\mathrm{mean}}\) in the setup preceding
Theorem~\ref{thm:cfg-early}, and the triangle inequality give
\begin{equation}\label{eq:cfg-early-euler-posterior}
\begin{aligned}
\left\|
\hat{\by}_{t_i,\mathrm{cfg}}(\bz_i)
-
\bar{\by}_{\mathrm{cfg}}
\right\|
&=
\left\|
(1-w)
\bigl(\hat{\by}_{t_i}(\bz_i)-\bar{\by}\bigr)
+
w
\bigl(\hat{\by}_{t_i,c}(\bz_i)-\bar{\by}_c\bigr)
\right\|
\\
&\le
(w-1)
\left\|
\hat{\by}_{t_i}(\bz_i)-\bar{\by}
\right\|
+
w
\left\|
\hat{\by}_{t_i,c}(\bz_i)-\bar{\by}_c
\right\|
\le
\tilde r_{\mathrm{mean}}.
\end{aligned}
\end{equation}
Equivalently,
\(
\hat{\by}_{t_i,\mathrm{cfg}}(\bz_i)
\in
B_{\tilde r_{\mathrm{mean}}}(\bar{\by}_{\mathrm{cfg}}).
\)

\noindent 
\textbf{Attraction.}
The set \(
B_{\tilde r_{\mathrm{mean}}}(\bar{\by}_{\mathrm{cfg}})
\)
is nonempty, closed, and convex. Fix an index satisfying
\(t_{i+1}\le\tilde t_{\mathrm{mean}}\). If \(t_i=0\),
\(
\hat{\by}_{0,\mathrm{cfg}}(\bz_0)
=
\bar{\by}_{\mathrm{cfg}}
\in
B_{\tilde r_{\mathrm{mean}}}(\bar{\by}_{\mathrm{cfg}}).
\)
If \(t_i>0\), let
\(
\bq_i:=\Proj_{t_iB_{\tilde r_{\mathrm{mean}}}(\bar{\by}_{\mathrm{cfg}})}(\bz_i),
\)
which is well defined. Moreover,
\(\bq_i/t_i\in B_{\tilde r_{\mathrm{mean}}}(\bar{\by}_{\mathrm{cfg}})\), while
we also have
\(
\hat{\by}_{t_i,\mathrm{cfg}}(\bz_i)\in B_{\tilde r_{\mathrm{mean}}}(\bar{\by}_{\mathrm{cfg}}).
\)
Thus part~(i) of
Lemma~\ref{lem:euler-scaled-convex-contraction}, with
\(s=t_i\), \(t=t_{i+1}\),
\(\bx=\bz_i\),
\(\by=\hat{\by}_{t_i,\mathrm{cfg}}(\bz_i)\) and
\(S=C=B_{\tilde r_{\mathrm{mean}}}(\bar{\by}_{\mathrm{cfg}})\)
, gives 
\begin{equation}\label{eq:cfg-early-euler-one-step}
\dist(\bz_{i+1},t_{i+1}B_{\tilde r_{\mathrm{mean}}}(\bar{\by}_{\mathrm{cfg}}))
\le
\tfrac{1-t_{i+1}}{1-t_i}
\dist(\bz_i,t_iB_{\tilde r_{\mathrm{mean}}}(\bar{\by}_{\mathrm{cfg}})).
\end{equation}
Iterating \eqref{eq:cfg-early-euler-one-step} from \(j\) to \(i-1\)
and telescoping the factors proves
\eqref{eq:cfg-early-euler-ratio}. If \(j<i\), then
\eqref{eq:cfg-early-euler-nonentry} and
\((1-t_i)/(1-t_j)<1\) give
\[
\dist(\bz_i,t_iB_{\tilde r_{\mathrm{mean}}}(\bar{\by}_{\mathrm{cfg}}))
\le
\tfrac{1-t_i}{1-t_j}
\dist(\bz_j,t_jB_{\tilde r_{\mathrm{mean}}}(\bar{\by}_{\mathrm{cfg}}))
<
\dist(\bz_j,t_jB_{\tilde r_{\mathrm{mean}}}(\bar{\by}_{\mathrm{cfg}})).
\]
Thus the distance sequence is strictly decreasing between distinct
early grid points.
Taking \(j=0\) in \eqref{eq:cfg-early-euler-ratio} and using
\(0B_{\tilde r_{\mathrm{mean}}}(\bar{\by}_{\mathrm{cfg}})
=\{\bzero\}\) and \(\bz_0=\bx_0\) gives
\[
\dist(\bz_i,t_iB_{\tilde r_{\mathrm{mean}}}(\bar{\by}_{\mathrm{cfg}}))
\le
(1-t_i)\|\bx_0\|,
\]
which is part~(iii) of the theorem.
The inequality
\(\Delta t\le\tilde t_{\mathrm{mean}}/2\) ensures that this early
segment contains the two nonzero grid points \(t_1\) and \(t_2\).

\end{proof}

\begin{theorem}[Discrete version of Theorem~\ref{thm:cfg-convex}]
\label{thm:cfg-convex-euler}
Under the setting of Theorem~\ref{thm:cfg-convex}, consider the shifted CFG
Euler grid \(t_i=t_{\mathrm{infl}}+i\Delta t\), \(i=0,\ldots,N\), where
\(N\ge1\), \(\Delta t>0\), and \(t_N\le1\), initialized at
\(\bz_0=\bx_{t_{\mathrm{infl}}}\), with update
\(\bz_{i+1}=\bz_i+\Delta t\,\vcfg(t_i,\bz_i),i=0,\dots, N-1\). For every
\(j\in\{0,\ldots,N\}\) satisfying \(t_j<1\), define the grid-tail
inflation radius
\begin{equation}\label{eq:cfg-grid-tail-inflation-function}
\epsilon_{\mathrm{infl}}^{\mathrm{grid}}(j)
:=
(w-1)
\max_{\substack{j\le m\le N\\ t_m<1}}
\left\|
\hat{\by}_{t_m,c}(\bz_m)-\hat{\by}_{t_m}(\bz_m)
\right\|.
\end{equation}
Then \(\epsilon_{\mathrm{infl}}^{\mathrm{grid}}(j)\) is non-increasing
in \(j\), and the following hold.
\begin{enumerate}
\item[(i)] \emph{(Attraction)} For every \(0\le j\le i\le N\) with
\(t_j<1\),
\begin{equation}\label{eq:cfg-convex-euler-ratio}
\dist\bigl(
\bz_i,
t_iB_{\epsilon_{\mathrm{infl}}^{\mathrm{grid}}(j)}(\Conv(\mathcal D_c))
\bigr)
\le
\tfrac{1-t_i}{1-t_j}
\dist\bigl(
\bz_j,
t_jB_{\epsilon_{\mathrm{infl}}^{\mathrm{grid}}(j)}(\Conv(\mathcal D_c))
\bigr).
\end{equation}
For each fixed \(j\), the sequence
\(
i\mapsto
\dist\bigl(
\bz_i,
t_iB_{\epsilon_{\mathrm{infl}}^{\mathrm{grid}}(j)}
(\Conv(\mathcal D_c))
\bigr)
\)
is non-increasing on \(i=j,\ldots,N\).

\item[(ii)] Taking \(j=0\) in part~(i) gives, for every \(i=0,\ldots,N\),
\begin{equation}\label{eq:cfg-convex-euler-initial-bound}
\begin{aligned}
\dist\bigl(
\bz_i,
t_iB_{\epsilon_{\mathrm{infl}}^{\mathrm{grid}}(0)}(\Conv(\mathcal D_c))
\bigr)
\le
\tfrac{1-t_i}{1-t_{\mathrm{infl}}}
\dist\bigl(
\bz_0,
t_{\mathrm{infl}}
B_{\epsilon_{\mathrm{infl}}^{\mathrm{grid}}(0)}(\Conv(\mathcal D_c))
\bigr).
\end{aligned}
\end{equation}

\item[(iii)] \emph{(Absorption)} Fix \(j\in\{0,\ldots,N\}\) with
\(t_j<1\). If there exists \(k\in\{j,\ldots,N\}\) such that
\(
\bz_k
\in
t_kB_{\epsilon_{\mathrm{infl}}^{\mathrm{grid}}(j)}
(\Conv(\mathcal D_c)),
\)
then
\(
\bz_i
\in
t_iB_{\epsilon_{\mathrm{infl}}^{\mathrm{grid}}(j)}
(\Conv(\mathcal D_c))
\)
for every \(i=k,\ldots,N\).
\end{enumerate}
\end{theorem}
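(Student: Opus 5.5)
The plan follows the continuous proof of Theorem~\ref{thm:cfg-convex}, with Lemma~\ref{lem:continuous-scaled-convex-contraction} replaced by its grid counterpart, Lemma~\ref{lem:euler-scaled-convex-contraction}.

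\textbf{Monotonicity and the affine step.} Monotonicity of \(j\mapsto\epsilon_{\mathrm{infl}}^{\mathrm{grid}}(j)\) is immediate: increasing \(j\) shrinks the index set over which the maximum in \eqref{eq:cfg-grid-tail-inflation-function} is taken. Next, as in \eqref{eq:cfg-euler-affine-step}, each CFG Euler update is the exact affine step
\[
\bz_{i+1}=\tfrac{1-t_{i+1}}{1-t_i}\bz_i+\tfrac{t_{i+1}-t_i}{1-t_i}\,\hat{\by}_{t_i,\mathrm{cfg}}(\bz_i),
\]
which is valid for every update index \(i<N\), since then \(t_i<1\).

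\textbf{Localizing the extrapolated mean.} Fix \(j\) with \(t_j<1\) and set \(\epsilon:=\epsilon_{\mathrm{infl}}^{\mathrm{grid}}(j)\) and \(S_j:=B_{\epsilon}(\Conv(\mathcal D_c))\). This set is nonempty, compact and convex. For every update index \(m\) with \(j\le m<N\), Lemma~\ref{lem:mean-in-closed-conv} gives \(\hat{\by}_{t_m,c}(\bz_m)\in\Conv(\mathcal D_c)\). The extrapolation identity then gives
\[
\dist\bigl(\hat{\by}_{t_m,\mathrm{cfg}}(\bz_m),\Conv(\mathcal D_c)\bigr)\le(w-1)\bigl\|\hat{\by}_{t_m,c}(\bz_m)-\hat{\by}_{t_m}(\bz_m)\bigr\|\le\epsilon,
\]
so \(\hat{\by}_{t_m,\mathrm{cfg}}(\bz_m)\in S_j\). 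This is the grid analogue of \eqref{eq:cfg-large-convex-posterior-inclusion}.

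\textbf{One-step contraction.} Apply Lemma~\ref{lem:euler-scaled-convex-contraction}(i) with \(S=C=S_j\), \(s=t_m\), \(t=t_{m+1}\), \(\bx=\bz_m\) and \(\by=\hat{\by}_{t_m,\mathrm{cfg}}(\bz_m)\). When \(t_m>0\), the projection of \(\bz_m\) onto \(t_mS_j\) exists because \(S_j\) is convex, and the normalized projection lies in \(C=S_j\). When \(t_m=0\), which can only occur if \(t_{\mathrm{infl}}=0\) and \(m=0\), the zero-time branch of the lemma applies, since \(\by\in S_j\). In both cases we get
\[
\dist(\bz_{m+1},t_{m+1}S_j)\le\tfrac{1-t_{m+1}}{1-t_m}\dist(\bz_m,t_mS_j).
\]

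\textbf{Conclusions.} Telescoping the one-step bound from \(j\) to \(i-1\) gives \eqref{eq:cfg-convex-euler-ratio}; the case \(i=j\) is trivial. Since the ratio \((1-t_i)/(1-t_j)\le1\), the sequence is non-increasing. Taking \(j=0\), with \(t_0=t_{\mathrm{infl}}\), gives part~(ii). For part~(iii), argue by induction from \(k\). If \(\bz_m\in t_mS_j\) with \(m<N\) and \(t_m>0\), take \(C=S_j\) in Lemma~\ref{lem:euler-scaled-convex-contraction}(ii): here \(\bz_m/t_m\in S_j\) and \(\hat{\by}_{t_m,\mathrm{cfg}}(\bz_m)\in S_j\), so \(\bz_{m+1}\in t_{m+1}S_j\). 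If \(t_m=0\), then \(\bz_m=\bzero\) and the zero-time branch of the same lemma gives the same conclusion.

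\textbf{Main obstacle.} The main subtlety is bookkeeping near \(t=1\). If \(t_N=1\), the maximum defining \(\epsilon_{\mathrm{infl}}^{\mathrm{grid}}\) excludes \(m=N\). This is harmless, because only update indices \(m<N\) need the localization, and all of these have \(t_m<1\). The inflation radius is fixed once \(j\) is fixed, so the convex set \(S_j\) does not change along the tail. This is exactly what lets the static lemma apply at every step.
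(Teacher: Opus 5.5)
Your proposal is correct and follows essentially the same route as the paper: monotonicity from the nested index sets, localization of $\hat{\by}_{t_m,\mathrm{cfg}}(\bz_m)$ in the fixed convex set $B_{\epsilon_{\mathrm{infl}}^{\mathrm{grid}}(j)}(\Conv(\mathcal D_c))$ via Lemma~\ref{lem:mean-in-closed-conv} and the extrapolation identity, and then parts (i)--(ii) of Lemma~\ref{lem:euler-scaled-convex-contraction} with $S=C$ equal to that set, using the zero-time branch when $t_m=0$. Your observation that only update indices $m<N$ need the localization, and all of them have $t_m<1$, correctly explains why excluding $t_m=1$ from the maximum is harmless.
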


\begin{proof}
We use the exact CFG Euler identities
\eqref{eq:cfg-euler-affine-step} and
\eqref{eq:cfg-euler-posterior-extrapolation} established above.
The index sets in the maximum
\eqref{eq:cfg-grid-tail-inflation-function} are nested as \(j\)
increases. Hence
\(
\epsilon_{\mathrm{infl}}^{\mathrm{grid}}(j_2)
\le
\epsilon_{\mathrm{infl}}^{\mathrm{grid}}(j_1)
\)
whenever \(0\le j_1\le j_2\le N\) and \(t_{j_2}<1\).
Fix \(j\in\{0,\ldots,N\}\) with \(t_j<1\). 
The set \(B_{\epsilon_{\mathrm{infl}}^{\mathrm{grid}}(j)}
(\Conv(\mathcal D_c))\) is nonempty, closed, and convex.
For every active index \(m\in\{j,\ldots,N-1\}\),
Lemma~\ref{lem:mean-in-closed-conv} gives
\(\hat{\by}_{t_m,c}(\bz_m)\in\Conv(\mathcal D_c)\). Using
\eqref{eq:cfg-euler-posterior-extrapolation} and
\eqref{eq:cfg-grid-tail-inflation-function} gives
\begin{equation}\label{eq:cfg-convex-euler-posterior}
\begin{aligned}
\dist\!\left(
\hat{\by}_{t_m,\mathrm{cfg}}(\bz_m),
\Conv(\mathcal D_c)
\right)
&\le
\left\|
\hat{\by}_{t_m,\mathrm{cfg}}(\bz_m)
-
\hat{\by}_{t_m,c}(\bz_m)
\right\|
\\
&=
(w-1)
\left\|
\hat{\by}_{t_m,c}(\bz_m)
-
\hat{\by}_{t_m}(\bz_m)
\right\|
\le
\epsilon_{\mathrm{infl}}^{\mathrm{grid}}(j).
\end{aligned}
\end{equation}
Consequently,
\(
\hat{\by}_{t_m,\mathrm{cfg}}(\bz_m)\in B_{\epsilon_{\mathrm{infl}}^{\mathrm{grid}}(j)}(\Conv(\mathcal D_c)).
\)

\noindent
\textbf{Attraction.}
For every active index \(m\in\{j,\ldots,N-1\}\),
\eqref{eq:cfg-euler-affine-step} reads
\(
\bz_{m+1}
=
\tfrac{1-t_{m+1}}{1-t_m}\bz_m
+
\tfrac{t_{m+1}-t_m}{1-t_m}
\hat{\by}_{t_m,\mathrm{cfg}}(\bz_m).
\)
If \(t_j=0\), then \(j=0\). Since
\(\hat{\by}_{0,\mathrm{cfg}}(\bz_0)\in
B_{\epsilon_{\mathrm{infl}}^{\mathrm{grid}}(0)}
(\Conv(\mathcal D_c))\), the zero-time case of part~(i) of
Lemma~\ref{lem:euler-scaled-convex-contraction} applies with
\(s=t_0\), \(t=t_1\), \(\bx=\bz_0\),
\(\by=\hat{\by}_{0,\mathrm{cfg}}(\bz_0)\), and
\(S=B_{\epsilon_{\mathrm{infl}}^{\mathrm{grid}}(0)}
(\Conv(\mathcal D_c))\), and gives
\eqref{eq:cfg-convex-euler-one-step} below for \(m=0\).

Now fix \(m\in\{j,\ldots,N-1\}\) with \(t_m>0\), and let
\(
\bq_m:=
\Proj_{t_mB_{\epsilon_{\mathrm{infl}}^{\mathrm{grid}}(j)}
(\Conv(\mathcal D_c))}(\bz_m)
\).
We have 
\({\bq_m}/{t_m}
\in
B_{\epsilon_{\mathrm{infl}}^{\mathrm{grid}}(j)}
(\Conv(\mathcal D_c))\) and
\(
\hat{\by}_{t_m,\mathrm{cfg}}(\bz_m)
\in
B_{\epsilon_{\mathrm{infl}}^{\mathrm{grid}}(j)}
(\Conv(\mathcal D_c)).
\)
Therefore, part~(i) of
Lemma~\ref{lem:euler-scaled-convex-contraction} applies with
\(s=t_m\), \(t=t_{m+1}\), \(\bx=\bz_m\),
\(\by=\hat{\by}_{t_m,\mathrm{cfg}}(\bz_m)\),  and
\(
S=C=B_{\epsilon_{\mathrm{infl}}^{\mathrm{grid}}(j)}
(\Conv(\mathcal D_c))
\), and gives \eqref{eq:cfg-convex-euler-one-step} below for every
such \(m\).
Thus, for every \(m\in\{j,\ldots,N-1\}\),
\begin{equation}\label{eq:cfg-convex-euler-one-step}
\dist\bigl(
\bz_{m+1},
t_{m+1}B_{\epsilon_{\mathrm{infl}}^{\mathrm{grid}}(j)}
(\Conv(\mathcal D_c))
\bigr)
\le
\tfrac{1-t_{m+1}}{1-t_m}
\dist\bigl(
\bz_m,
t_mB_{\epsilon_{\mathrm{infl}}^{\mathrm{grid}}(j)}
(\Conv(\mathcal D_c))
\bigr).
\end{equation}
Fix \(i\in\{j+1,\ldots,N\}\). Iterating
\eqref{eq:cfg-convex-euler-one-step} from \(m=j\) to \(m=i-1\)
and telescoping gives
\[
\begin{aligned}
\dist\bigl(
\bz_i,
t_iB_{\epsilon_{\mathrm{infl}}^{\mathrm{grid}}(j)}
(\Conv(\mathcal D_c))
\bigr)
&\le
\left(
\prod_{m=j}^{i-1}
\tfrac{1-t_{m+1}}{1-t_m}
\right)
\dist\bigl(
\bz_j,
t_jB_{\epsilon_{\mathrm{infl}}^{\mathrm{grid}}(j)}
(\Conv(\mathcal D_c))
\bigr)
\\
&\quad=
\tfrac{1-t_i}{1-t_j}
\dist\bigl(
\bz_j,
t_jB_{\epsilon_{\mathrm{infl}}^{\mathrm{grid}}(j)}
(\Conv(\mathcal D_c))
\bigr).
\end{aligned}
\]
This is \eqref{eq:cfg-convex-euler-ratio}; the case \(i=j\) is
immediate. Since \(0\le(1-t_i)/(1-t_j)\le1\), the corresponding
distance sequence is non-increasing. Taking \(j=0\) in
\eqref{eq:cfg-convex-euler-ratio} gives
\eqref{eq:cfg-convex-euler-initial-bound}.

\noindent
\textbf{Absorption.}
Fix \(j\in\{0,\ldots,N\}\) with \(t_j<1\), and suppose that
\(
\bz_k
\in
t_kB_{\epsilon_{\mathrm{infl}}^{\mathrm{grid}}(j)}
(\Conv(\mathcal D_c))
\)
for some \(k\in\{j,\ldots,N\}\).
We prove by induction that
\[
\bz_i
\in
t_iB_{\epsilon_{\mathrm{infl}}^{\mathrm{grid}}(j)}
(\Conv(\mathcal D_c)),
\qquad
i=k,\ldots,N.
\]

The assertion at \(i=k\) is given.
Now fix \(i\in\{k,\ldots,N-1\}\) and suppose that it holds at \(i\).
If \(t_i>0\), then
\(
{\bz_i}/{t_i},
\
\hat{\by}_{t_i,\mathrm{cfg}}(\bz_i)
\in
B_{\epsilon_{\mathrm{infl}}^{\mathrm{grid}}(j)}
(\Conv(\mathcal D_c)),
\)
where the second inclusion follows from
\eqref{eq:cfg-convex-euler-posterior}.
Therefore, part~(ii) of
Lemma~\ref{lem:euler-scaled-convex-contraction},
applied with
\(s=t_i\), \(t=t_{i+1}\),
\(\bx=\bz_i\),
\(\by=\hat{\by}_{t_i,\mathrm{cfg}}(\bz_i)\), and
\(
S = C=
B_{\epsilon_{\mathrm{infl}}^{\mathrm{grid}}(j)}
(\Conv(\mathcal D_c)),
\)
gives
\[
\bz_{i+1}
\in
t_{i+1}
B_{\epsilon_{\mathrm{infl}}^{\mathrm{grid}}(j)}
(\Conv(\mathcal D_c)).
\]

If \(t_i=0\), then the induction hypothesis gives
\(\bz_i=\boldsymbol0\), while
\eqref{eq:cfg-convex-euler-posterior} gives
\[
\hat{\by}_{0,\mathrm{cfg}}(\bz_i)
\in
B_{\epsilon_{\mathrm{infl}}^{\mathrm{grid}}(j)}
(\Conv(\mathcal D_c)).
\]
Thus part~(ii) of
Lemma~\ref{lem:euler-scaled-convex-contraction} gives the same conclusion.
The result follows by induction, which proves part~(iii).

\end{proof}

\begin{corollary}[Discrete version of Corollary~\ref{cor:cfg-convex-small}]
\label{cor:cfg-convex-euler-small}
In the setting of Theorem~\ref{thm:cfg-convex-euler}, suppose
\(\epsilon_{\mathrm{infl}}^{\mathrm{grid}}(0)>0\). Define
\[
\begin{aligned}
\rho_{\mathrm{infl}}^{\mathrm{grid}}
&:=
\dist\bigl(
B_{\epsilon_{\mathrm{infl}}^{\mathrm{grid}}(0)}(\Conv(\mathcal D_c)),
\mathcal D\setminus\mathcal D_c
\bigr),
\\
\Delta_{\mathrm{infl}}^{\mathrm{grid}}
&:=
\bigl(\rho_{\mathrm{infl}}^{\mathrm{grid}}\bigr)^2
-
\bigl(\epsilon_{\mathrm{infl}}^{\mathrm{grid}}(0)+D_c\bigr)^2,
\end{aligned}
\]
and assume
\(
\Delta_{\mathrm{infl}}^{\mathrm{grid}}>0.
\)
If
\(
\bz_k\in
t_kB_{\epsilon_{\mathrm{infl}}^{\mathrm{grid}}(0)}
(\Conv(\mathcal D_c))
\)
for some \(k\in\{0,\ldots,N\}\) with \(0<t_k<1\), then for every
\(j\in\{k,\ldots,N\}\) with \(t_j<1\),
\begin{equation}\label{eq:cfg-convex-euler-exponential-inflation}
\epsilon_{\mathrm{infl}}^{\mathrm{grid}}(j)
\le
(w-1)D_\emptyset
\tfrac{1-p_\emptyset(\mathcal D_c)}
{p_\emptyset(\mathcal D_c)}
\exp\left(
-\tfrac{
t_j^2\Delta_{\mathrm{infl}}^{\mathrm{grid}}
}{
2(1-t_j)^2
}
\right).
\end{equation}
Suppose there exists \(\ell\in\{k,\ldots,N\}\) with \(t_\ell<1\)
satisfying
\[
(w-1)D_\emptyset
\tfrac{1-p_\emptyset(\mathcal D_c)}
{p_\emptyset(\mathcal D_c)}
\exp\left(
-\tfrac{
t_\ell^2\Delta_{\mathrm{infl}}^{\mathrm{grid}}
}{
2(1-t_\ell)^2
}
\right)
<
\epsilon_{\mathrm{infl}}^{\mathrm{grid}}(0),
\]
and fix one such \(\ell\). Define
\(
\epsilon_{\mathrm{ref}}^{\mathrm{grid}}
:=
\epsilon_{\mathrm{infl}}^{\mathrm{grid}}(\ell),
\)
then
\(
0\le
\epsilon_{\mathrm{ref}}^{\mathrm{grid}}
<
\epsilon_{\mathrm{infl}}^{\mathrm{grid}}(0).
\)
Moreover, the following hold.
\begin{enumerate}
\item[(i)] \emph{(Refined attraction)} For every
\(\ell\le j\le i\le N\) with \(t_j<1\),
\begin{equation}\label{eq:cfg-convex-euler-refined-ratio}
\dist\bigl(
\bz_i,
t_iB_{\epsilon_{\mathrm{ref}}^{\mathrm{grid}}}(\Conv(\mathcal D_c))
\bigr)
\le
\tfrac{1-t_i}{1-t_j}
\dist\bigl(
\bz_j,
t_jB_{\epsilon_{\mathrm{ref}}^{\mathrm{grid}}}(\Conv(\mathcal D_c))
\bigr).
\end{equation}
Hence the sequence
\(
i\mapsto
\dist\bigl(
\bz_i,
t_iB_{\epsilon_{\mathrm{ref}}^{\mathrm{grid}}}
(\Conv(\mathcal D_c))
\bigr)
\)
is non-increasing on \(i=\ell,\ldots,N\).

\item[(ii)] Taking \(j=\ell\) in part~(i) gives, for every
\(i=\ell,\ldots,N\),
\begin{equation}\label{eq:cfg-convex-euler-refined-initial-bound}
\begin{aligned}
\dist\bigl(
\bz_i,
t_iB_{\epsilon_{\mathrm{ref}}^{\mathrm{grid}}}(\Conv(\mathcal D_c))
\bigr)
\le
\tfrac{1-t_i}{1-t_\ell}
\dist\bigl(
\bz_\ell,
t_\ell B_{\epsilon_{\mathrm{ref}}^{\mathrm{grid}}}
(\Conv(\mathcal D_c))
\bigr).
\end{aligned}
\end{equation}

\item[(iii)] \emph{(Refined absorption)} If there exists
\(k'\in\{\ell,\ldots,N\}\) such that
\(
\bz_{k'}
\in
t_{k'}B_{\epsilon_{\mathrm{ref}}^{\mathrm{grid}}}
(\Conv(\mathcal D_c)),
\)
then
\(
\bz_i
\in
t_iB_{\epsilon_{\mathrm{ref}}^{\mathrm{grid}}}
(\Conv(\mathcal D_c))
\)
for every \(i=k',\ldots,N\).
\end{enumerate}
\end{corollary}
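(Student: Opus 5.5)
The plan is to mirror the proof of Corollary~\ref{cor:cfg-convex-small}, with the continuous tail replaced by the grid. Set \(\epsilon_0:=\epsilon_{\mathrm{infl}}^{\mathrm{grid}}(0)\).

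\textbf{Step 1: absorption.} Since \(\bz_k\in t_kB_{\epsilon_0}(\Conv(\mathcal D_c))\), Theorem~\ref{thm:cfg-convex-euler}(iii) with \(j=0\) gives
\[
\bz_m\in t_mB_{\epsilon_0}(\Conv(\mathcal D_c))\quad\text{for all } m\ge k .
\]
Every such \(m\) has \(t_m\ge t_k>0\). Dividing by \(t_m\) and using \(\diam\Conv(\mathcal D_c)=D_c\), I get the two distance bounds
\[
\|\bz_m-t_m\by\|\le t_m(\epsilon_0+D_c)\ \ (\by\in\mathcal D_c),
\qquad
\|\bz_m-t_m\by\|\ge t_m\rho_{\mathrm{infl}}^{\mathrm{grid}}\ \ (\by\in\mathcal D\setminus\mathcal D_c).
\]
These are the grid analogues of \eqref{eq:xt_ty} and \eqref{eq:xt_ty2}.

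\textbf{Step 2: mass ratio.} The bounds of Step~1 give lower and upper estimates for \(Z_{c,t_m}\) and \(Z_{\mathrm{out},t_m}\), exactly as in \eqref{eq:cfg-convex-ratio-bound}. I then apply the identity \eqref{eq:posterior-mean-difference-algebra} together with the bound \eqref{eq:cfg-convex-hat-bound} at \((t_m,\bz_m)\). This yields
\[
\bigl\|\hat{\by}_{t_m,c}(\bz_m)-\hat{\by}_{t_m}(\bz_m)\bigr\|
\le D_\emptyset\,\tfrac{1-p_\emptyset(\mathcal D_c)}{p_\emptyset(\mathcal D_c)}
\exp\!\Bigl(-\tfrac{t_m^2\Delta_{\mathrm{infl}}^{\mathrm{grid}}}{2(1-t_m)^2}\Bigr).
\]

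\textbf{Step 3: maximize over the tail.} The map \(s\mapsto s^2/(1-s)^2\) is increasing, so every \(m\ge j\) satisfies the bound of Step~2 with \(t_m\) replaced by \(t_j\). Taking the maximum over \(m\ge j\) with \(t_m<1\) and multiplying by \(w-1\) gives \eqref{eq:cfg-convex-euler-exponential-inflation}. The maximum runs only over indices with \(t_m<1\), so the terminal point, where posteriors are undefined, is excluded; this has to be checked.

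\textbf{Step 4: the refined radius.} For the chosen \(\ell\), \eqref{eq:cfg-convex-euler-exponential-inflation} combined with the hypothesis on \(\ell\) gives \(\epsilon_{\mathrm{ref}}^{\mathrm{grid}}<\epsilon_0\). Nonnegativity is immediate from the definition.

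\textbf{Step 5: parts (i)--(iii).} These are Theorem~\ref{thm:cfg-convex-euler} rerun from index \(\ell\) with the fixed set \(S=C=B_{\epsilon_{\mathrm{ref}}^{\mathrm{grid}}}(\Conv(\mathcal D_c))\), which is nonempty, closed and convex. For each \(m\ge\ell\), the bound \(\epsilon_{\mathrm{infl}}^{\mathrm{grid}}(j)\le\epsilon_{\mathrm{ref}}^{\mathrm{grid}}\) for \(j\ge\ell\) (by monotonicity) places \(\hat{\by}_{t_m,\mathrm{cfg}}(\bz_m)\) in \(S\), exactly as in \eqref{eq:cfg-convex-euler-posterior}.

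Lemma~\ref{lem:euler-scaled-convex-contraction}(i), applied with \(\bq_m/t_m\in S\), gives the one-step contraction. Here \(t_m\ge t_k>0\), so the zero-time case never arises. Telescoping the one-step contractions gives (i), setting \(j=\ell\) gives (ii), and induction with Lemma~\ref{lem:euler-scaled-convex-contraction}(ii) gives (iii).

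I do not expect a real obstacle; the argument is bookkeeping. The only delicate point is Step~1: absorption must apply to every tail index, and \(t_k>0\) must hold so that division by \(t_m\) is legitimate.
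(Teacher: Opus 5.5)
Your proposal is correct and matches the paper's proof essentially step for step. The shared route is: absorption from Theorem~\ref{thm:cfg-convex-euler}(iii) with $j=0$; the inside/outside distance bounds for $m\ge k$ (where $t_m\ge t_k>0$); the mass-ratio estimate yielding the exponential posterior-gap bound; maximizing over the tail via monotonicity of $s\mapsto s/(1-s)$; and rerunning the one-step Euler contraction and absorption with the fixed convex set $B_{\epsilon_{\mathrm{ref}}^{\mathrm{grid}}}(\Conv(\mathcal D_c))$.
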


\begin{proof}
Theorem~\ref{thm:cfg-convex-euler}(iii) and
the assumption give
\begin{equation}\label{eq:cfg-convex-euler-large-capture}
\tfrac{\bz_m}{t_m}
\in
B_{\epsilon_{\mathrm{infl}}^{\mathrm{grid}}(0)}
(\Conv(\mathcal D_c)),
\qquad
m=k,\ldots,N.
\end{equation}
This is well defined because \(t_m\ge t_k>0\).

Fix \(j\in\{k,\ldots,N\}\) with \(t_j<1\), and consider any
\(m\in\{j,\ldots,N\}\) with \(t_m<1\). The capture inclusion
\eqref{eq:cfg-convex-euler-large-capture} and
\(\diam(\Conv(\mathcal D_c))=D_c\) give the inside estimate below,
while the definition of \(\rho_{\mathrm{infl}}^{\mathrm{grid}}\)
gives the outside estimate:
\begin{equation}\label{eq:cfg-convex-euler-inside-outside}
\begin{aligned}
\|\bz_m-t_m\by\|
&\le
t_m\left(
\epsilon_{\mathrm{infl}}^{\mathrm{grid}}(0)+D_c
\right),
&& \by\in\mathcal D_c,
\\
\|\bz_m-t_m\by\|
&\ge
t_m\rho_{\mathrm{infl}}^{\mathrm{grid}},
&& \by\in\mathcal D\setminus\mathcal D_c.
\end{aligned}
\end{equation}
Apply the calculation in 
\eqref{eq:cfg-convex-ratio-bound},
\eqref{eq:posterior-mean-difference-algebra}, and
\eqref{eq:cfg-convex-hat-bound} with the substitutions
\(t \mapsto t_m,\
\bx_t\mapsto\bz_m, \) \(
\epsilon_{\mathrm{infl}}(t_{\mathrm{infl}})
\mapsto\epsilon_{\mathrm{infl}}^{\mathrm{grid}}(0), \
\rho_{\mathrm{infl}}
\mapsto\rho_{\mathrm{infl}}^{\mathrm{grid}}, \
\Delta_{\mathrm{infl}}
\mapsto\Delta_{\mathrm{infl}}^{\mathrm{grid}}.\)
Using \eqref{eq:cfg-convex-euler-inside-outside}, we obtain
\begin{equation}\label{eq:cfg-convex-euler-refined-gap}
\left\|
\hat{\by}_{t_m,c}(\bz_m)-\hat{\by}_{t_m}(\bz_m)
\right\|
\le
D_\emptyset
\tfrac{1-p_\emptyset(\mathcal D_c)}
{p_\emptyset(\mathcal D_c)}
\exp\left(
-\tfrac{
t_m^2\Delta_{\mathrm{infl}}^{\mathrm{grid}}
}{
2(1-t_m)^2
}
\right).
\end{equation}
Since \(t/(1-t)\) is increasing on \([0,1)\) and \(m\ge j\),
\[
\left\|
\hat{\by}_{t_m,c}(\bz_m)-\hat{\by}_{t_m}(\bz_m)
\right\|
\le
D_\emptyset
\tfrac{1-p_\emptyset(\mathcal D_c)}
{p_\emptyset(\mathcal D_c)}
\exp\left(
-\tfrac{
t_j^2\Delta_{\mathrm{infl}}^{\mathrm{grid}}
}{
2(1-t_j)^2
}
\right).
\]
Taking the maximum over all such \(m\) and multiplying by \(w-1\)
proves \eqref{eq:cfg-convex-euler-exponential-inflation}.
At \(j=\ell\), that estimate and the choice of \(\ell\) give
\[
0\le
\epsilon_{\mathrm{ref}}^{\mathrm{grid}}
=
\epsilon_{\mathrm{infl}}^{\mathrm{grid}}(\ell)
<
\epsilon_{\mathrm{infl}}^{\mathrm{grid}}(0).
\]

For every active index \(m\in\{\ell,\ldots,N-1\}\), the definition of
\(\epsilon_{\mathrm{ref}}^{\mathrm{grid}}\) gives
\(
(w-1)
\|\hat{\by}_{t_m,c}(\bz_m)-\hat{\by}_{t_m}(\bz_m)\|
\le\epsilon_{\mathrm{ref}}^{\mathrm{grid}}.
\)
Moreover,
\(\hat{\by}_{t_m,c}(\bz_m)\in\Conv(\mathcal D_c)\). Hence
\eqref{eq:cfg-euler-posterior-extrapolation} gives
\begin{equation}\label{eq:cfg-convex-euler-refined-posterior}
\begin{aligned}
\dist\!\left(
\hat{\by}_{t_m,\mathrm{cfg}}(\bz_m),
\Conv(\mathcal D_c)
\right)
&\le
\left\|
\hat{\by}_{t_m,\mathrm{cfg}}(\bz_m)
-
\hat{\by}_{t_m,c}(\bz_m)
\right\|
\\
&=
(w-1)
\left\|
\hat{\by}_{t_m,c}(\bz_m)
-
\hat{\by}_{t_m}(\bz_m)
\right\|
\le
\epsilon_{\mathrm{ref}}^{\mathrm{grid}}.
\end{aligned}
\end{equation}
Consequently,
\(
\hat{\by}_{t_m,\mathrm{cfg}}(\bz_m)
\in
B_{\epsilon_{\mathrm{ref}}^{\mathrm{grid}}}(\Conv(\mathcal D_c)).
\)

\noindent 
\textbf{Attraction.}
The set
\(
B_{\epsilon_{\mathrm{ref}}^{\mathrm{grid}}}(\Conv(\mathcal D_c))
\)
is nonempty, closed, and convex. For every active index
\(m\in\{\ell,\ldots,N-1\}\), \eqref{eq:cfg-euler-affine-step} reads
\(
\bz_{m+1}
=
\tfrac{1-t_{m+1}}{1-t_m}\bz_m
+
\tfrac{t_{m+1}-t_m}{1-t_m}
\hat{\by}_{t_m,\mathrm{cfg}}(\bz_m).
\)
Since \(t_m\ge t_\ell\ge t_k>0\), let
\(
\bq_m:=
\Proj_{t_mB_{\epsilon_{\mathrm{ref}}^{\mathrm{grid}}}
(\Conv(\mathcal D_c))}(\bz_m)
\).
Then \({\bq_m}/{t_m}
\in
B_{\epsilon_{\mathrm{ref}}^{\mathrm{grid}}}
(\Conv(\mathcal D_c))\) and 
\[
\begin{aligned}
\|\bz_m-\bq_m\|
=
\dist\bigl(
\bz_m,
t_mB_{\epsilon_{\mathrm{ref}}^{\mathrm{grid}}}
(\Conv(\mathcal D_c))
\bigr),
\
\hat{\by}_{t_m,\mathrm{cfg}}(\bz_m)
\in
B_{\epsilon_{\mathrm{ref}}^{\mathrm{grid}}}
(\Conv(\mathcal D_c)).
\end{aligned}
\]
Therefore, part~(i) of
Lemma~\ref{lem:euler-scaled-convex-contraction} applies with
\(s=t_m\), \(t=t_{m+1}\), \(\bx=\bz_m\),
\(\by=\hat{\by}_{t_m,\mathrm{cfg}}(\bz_m)\), and
\(
S=C=B_{\epsilon_{\mathrm{ref}}^{\mathrm{grid}}}
(\Conv(\mathcal D_c))
\), and gives \eqref{eq:cfg-convex-euler-refined-one-step} below.
Thus, for every \(m\in\{\ell,\ldots,N-1\}\),
\begin{equation}\label{eq:cfg-convex-euler-refined-one-step}
\dist\!\left(
\bz_{m+1},
t_{m+1}B_{\epsilon_{\mathrm{ref}}^{\mathrm{grid}}}
(\Conv(\mathcal D_c))
\right)
\le
\tfrac{1-t_{m+1}}{1-t_m}
\dist\!\left(
\bz_m,
t_mB_{\epsilon_{\mathrm{ref}}^{\mathrm{grid}}}
(\Conv(\mathcal D_c))
\right).
\end{equation}
Fix \(\ell\le j<i\le N\). Iterating
\eqref{eq:cfg-convex-euler-refined-one-step} from \(m=j\) to
\(m=i-1\) and telescoping gives
\[
\begin{aligned}
\dist\bigl(
\bz_i,
t_iB_{\epsilon_{\mathrm{ref}}^{\mathrm{grid}}}
(\Conv(\mathcal D_c))
\bigr)
&\le
\left(
\prod_{m=j}^{i-1}
\tfrac{1-t_{m+1}}{1-t_m}
\right)
\dist\bigl(
\bz_j,
t_jB_{\epsilon_{\mathrm{ref}}^{\mathrm{grid}}}
(\Conv(\mathcal D_c))
\bigr)
\\
&=
\tfrac{1-t_i}{1-t_j}
\dist\bigl(
\bz_j,
t_jB_{\epsilon_{\mathrm{ref}}^{\mathrm{grid}}}
(\Conv(\mathcal D_c))
\bigr).
\end{aligned}
\]
This is \eqref{eq:cfg-convex-euler-refined-ratio}; the case \(i=j\)
is immediate. Since \(0\le(1-t_i)/(1-t_j)\le1\), the refined
distance sequence is non-increasing. Taking \(j=\ell\) in
\eqref{eq:cfg-convex-euler-refined-ratio} gives
\eqref{eq:cfg-convex-euler-refined-initial-bound}.

\noindent
\textbf{Absorption.}
Suppose that
\(
\bz_{k'}
\in
t_{k'}B_{\epsilon_{\mathrm{ref}}^{\mathrm{grid}}}
(\Conv(\mathcal D_c))
\)
for some \(k'\in\{\ell,\ldots,N\}\).
We prove by induction that
\[
\bz_i
\in
t_iB_{\epsilon_{\mathrm{ref}}^{\mathrm{grid}}}
(\Conv(\mathcal D_c)),
\qquad
i=k',\ldots,N.
\]

The assertion at \(i=k'\) is given.
Now fix \(i\in\{k',\ldots,N-1\}\) and suppose that it holds at \(i\).
Since \(t_i\ge t_\ell\ge t_k>0\),
\(
{\bz_i}/{t_i}
\in
B_{\epsilon_{\mathrm{ref}}^{\mathrm{grid}}}
(\Conv(\mathcal D_c)),
\)
while \eqref{eq:cfg-convex-euler-refined-posterior} gives
\(
\hat{\by}_{t_i,\mathrm{cfg}}(\bz_i)
\in
B_{\epsilon_{\mathrm{ref}}^{\mathrm{grid}}}
(\Conv(\mathcal D_c)).
\)
Therefore, part~(ii) of
Lemma~\ref{lem:euler-scaled-convex-contraction},
applied with
\(s=t_i\), \(t=t_{i+1}\),
\(\bx=\bz_i\),
\(\by=\hat{\by}_{t_i,\mathrm{cfg}}(\bz_i)\), and
\(
S = C =
B_{\epsilon_{\mathrm{ref}}^{\mathrm{grid}}}
(\Conv(\mathcal D_c)),
\)
gives
\[
\bz_{i+1}
\in
t_{i+1}
B_{\epsilon_{\mathrm{ref}}^{\mathrm{grid}}}
(\Conv(\mathcal D_c)).
\]
The result follows by induction, which proves part~(iii).

\end{proof}

\begin{theorem}[Discrete version of Theorem~\ref{thm:cfg-final}]
\label{thm:cfg-final-euler}
Under Assumption~\ref{assump:cfg-local-cluster}, fix $w>1$, and let
$\tilde t_{\mathrm{loc}}$ and $\tilde r_{\Omega_c}$ be as in the setup
preceding Theorem~\ref{thm:cfg-final}. Consider the shifted grid
$t_i=\tilde{t}_{\mathrm{loc}}+i\Delta t$, $i=0,\ldots,N$ where $N\geq 1,  \Delta t>0$, and
$t_N\le1$, and the shifted CFG Euler update
$\bz_{i+1}=\bz_i+\Delta t\,\vcfg(t_i,\bz_i), i= 0 , \dots, N-1$ initialized at
$\bz_0=\bx_{\tilde{t}_{\mathrm{loc}}}$. Assume
\begin{equation}\label{eq:cfg-final-euler-capture-condition}
\dist\bigl(
\bz_0,
\tilde t_{\mathrm{loc}}B_{\tilde r_{\Omega_c}}(\Omega_c)
\bigr)
+
\tilde r_{\Omega_c}
\le
\tilde t_{\mathrm{loc}}R,
\end{equation}
then the following hold.
\begin{enumerate}
\item[(i)] \emph{(Attraction)} For every
$0\le j\le i\le N$ with $t_j<1$,
\begin{equation}\label{eq:cfg-final-euler-ratio}
\dist(\bz_i,t_iB_{\tilde r_{\Omega_c}}(\Omega_c))
\le
\tfrac{1-t_i}{1-t_j}
\dist(\bz_j,t_jB_{\tilde r_{\Omega_c}}(\Omega_c)).
\end{equation}
Hence the sequence
$i\mapsto\dist(\bz_i,t_iB_{\tilde r_{\Omega_c}}(\Omega_c))$ is
non-increasing on \(i=0,\ldots,N\).

\item[(ii)] Taking $j=0$ in part~(i) gives
\[
\dist(\bz_i,t_iB_{\tilde r_{\Omega_c}}(\Omega_c))
\le
\tfrac{1-t_i}{1-\tilde{t}_{\mathrm{loc}}}
\dist(\bz_0,\tilde{t}_{\mathrm{loc}}B_{\tilde r_{\Omega_c}}(\Omega_c))
=\bigO(1-t_i),
\qquad i=0,\ldots,N.
\]

\item[(iii)] \emph{(Absorption)} If there exists
$k\in\{0,\ldots,N\}$ such that
$\bz_k\in t_kB_{\tilde r_{\Omega_c}}(\Omega_c)$, then
$\bz_i\in t_iB_{\tilde r_{\Omega_c}}(\Omega_c)$ for every
$i=k,\ldots,N$.
\end{enumerate}
\end{theorem}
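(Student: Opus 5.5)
The plan is to mirror the proof of Theorem~\ref{cor:final-local-euler} line by line. The unconditional posterior mean $\hat{\by}_{t_i}(\bz_i)$ is replaced by the extrapolated mean $\hat{\by}_{t_i,\mathrm{cfg}}(\bz_i)$, and Lemma~\ref{lem:final-local-posterior} is replaced by Lemma~\ref{lem:cfg-final-local-posterior}. The starting point is the exact affine form \eqref{eq:cfg-euler-affine-step}:
\[
\bz_{i+1}=\tfrac{1-t_{i+1}}{1-t_i}\bz_i+\tfrac{t_{i+1}-t_i}{1-t_i}\hat{\by}_{t_i,\mathrm{cfg}}(\bz_i).
\]
This is a convex combination of the form required in Lemma~\ref{lem:euler-scaled-convex-contraction}, with $s=t_i$, $t=t_{i+1}$, and $s>0$, since $t_i\ge\tilde t_{\mathrm{loc}}>0$.

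First, I extract $\tilde r_{\Omega_c}<R$ from \eqref{eq:cfg-final-euler-capture-condition}. I also note $\tilde r_{\Omega_c}>0$, because $\tilde\varepsilon_{\Omega_c}>0$. Next I prove by induction the invariant
\[
\dist(\bz_i,t_iB_{\tilde r_{\Omega_c}}(\Omega_c))+\tilde r_{\Omega_c}\le t_iR,\qquad i=0,\ldots,N.
\]
The base case is the capture condition. For the induction step with $t_i<1$, the invariant gives
\[
\dist(\bz_i,t_i\Omega_c)\le t_iR-(1-t_i)\tilde r_{\Omega_c}<t_iR.
\]
At that point Lemma~\ref{lem:inflated-neighborhood-projection-regularity} applies with $r=\tilde r_{\Omega_c}<R<\rho_{\Omega_c}$. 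It shows that the projection $\bq_i$ onto $t_iB_{\tilde r_{\Omega_c}}(\Omega_c)$ is well defined and that $\bq_i/t_i\in B_{\tilde r_{\Omega_c}}(\by_{t_i}^{\Omega_c}(\bz_i))$. Lemma~\ref{lem:cfg-final-local-posterior} gives $\hat{\by}_{t_i,\mathrm{cfg}}(\bz_i)\in B_{\tilde r_{\Omega_c}}(\by_{t_i}^{\Omega_c}(\bz_i))$.

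The ball $C=B_{\tilde r_{\Omega_c}}(\by_{t_i}^{\Omega_c}(\bz_i))$ is convex and contained in $S=B_{\tilde r_{\Omega_c}}(\Omega_c)$. If the current distance is positive, part~(i) of Lemma~\ref{lem:euler-scaled-convex-contraction} gives the one-step contraction by the factor $(1-t_{i+1})/(1-t_i)$. If the distance is zero, then $\bq_i=\bz_i$, and part~(ii) gives $\bz_{i+1}\in t_{i+1}S$. Either way the one-step estimate holds. Plugging in the invariant gives
\[
\dist(\bz_{i+1},t_{i+1}S)+\tilde r_{\Omega_c}\le t_{i+1}R-\tfrac{t_{i+1}-t_i}{1-t_i}(R-\tilde r_{\Omega_c})<t_{i+1}R,
\]
which closes the induction. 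This also covers the case $t_{i+1}=1$.

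With the invariant and the one-step contraction in hand for every update index, part~(i) follows by telescoping the factors $\prod_{m=j}^{i-1}(1-t_{m+1})/(1-t_m)=(1-t_i)/(1-t_j)$. Monotonicity follows because this ratio is at most one. Part~(ii) is the case $j=0$; the prefactor does not depend on $i$, which gives the $\bigO(1-t_i)$ rate. Part~(iii) is a second induction from $k$. At each $i\ge k$ the invariant again yields $\dist(\bz_i,t_i\Omega_c)<t_iR$. Since $\bz_i\in t_iS$, its projection is itself, so $\bz_i/t_i$ and $\hat{\by}_{t_i,\mathrm{cfg}}(\bz_i)$ lie in the same ball $C$, and part~(ii) of Lemma~\ref{lem:euler-scaled-convex-contraction} gives $\bz_{i+1}\in t_{i+1}S$.

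The only delicate step is keeping the iterates inside the working region $\dist(\bz_i,t_i\Omega_c)<t_iR$, so that the projection and localization lemmas apply at every grid point. In continuous time this was done with a first-exit argument. In discrete time it is replaced by the quantitative invariant above, and the key point is that the slack $\tfrac{t_{i+1}-t_i}{1-t_i}(R-\tilde r_{\Omega_c})$ remains nonnegative, which needs $\tilde r_{\Omega_c}<R$ from the capture condition. Everything else is a direct substitution into arguments already given.
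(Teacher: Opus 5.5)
Your proposal is correct and follows the paper's proof essentially step for step. It uses the same quantitative invariant $\dist(\bz_i,t_iB_{\tilde r_{\Omega_c}}(\Omega_c))+\tilde r_{\Omega_c}\le t_iR$ in place of the continuous first-exit argument, the same zero/positive-distance case split via parts~(i)--(ii) of Lemma~\ref{lem:euler-scaled-convex-contraction} with the local ball $B_{\tilde r_{\Omega_c}}(\by_{t_i}^{\Omega_c}(\bz_i))$ supplied by Lemmas~\ref{lem:inflated-neighborhood-projection-regularity} and~\ref{lem:cfg-final-local-posterior}, and the same telescoping and second induction for attraction and absorption.
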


\begin{proof}
By \eqref{eq:cfg-final-inflation-radius} and
\(\tilde\varepsilon_{\Omega_c}>0\), we have
\(\tilde r_{\Omega_c}>0\); moreover,
\eqref{eq:cfg-final-euler-capture-condition} and
\(t_0=\tilde t_{\mathrm{loc}}<1\) give
\begin{equation}\label{eq:cfg-final-euler-initial-validity}
0<\tilde r_{\Omega_c}
\le
t_0R-\dist(\bz_0,t_0B_{\tilde r_{\Omega_c}}(\Omega_c))
<R.
\end{equation}
We prove by induction that
\begin{equation}\label{eq:cfg-final-euler-step-invariant}
\dist\bigl(
\bz_i,
t_iB_{\tilde r_{\Omega_c}}(\Omega_c)
\bigr)
+
\tilde r_{\Omega_c}
\le
t_iR,
\qquad
i=0,\ldots,N.
\end{equation}
The case \(i=0\) is exactly
\eqref{eq:cfg-final-euler-capture-condition}.
Suppose that \eqref{eq:cfg-final-euler-step-invariant} holds at an
index \(i<N\). Since \(t_i<1\) and \(\tilde r_{\Omega_c}>0\),
\begin{align}\label{eq:cfg-final-euler-local-validity-step}
\dist(\bz_i,t_i\Omega_c)
&\le
\dist\bigl(
\bz_i,
t_iB_{\tilde r_{\Omega_c}}(\Omega_c)
\bigr)
+
t_i\tilde r_{\Omega_c}\notag\\
&=
\dist\bigl(
\bz_i,
t_iB_{\tilde r_{\Omega_c}}(\Omega_c)
\bigr)
+
\tilde r_{\Omega_c}-(1-t_i)\tilde r_{\Omega_c}\notag\\
&\le
t_iR-(1-t_i)\tilde r_{\Omega_c}
<
t_iR.
\end{align}
By \eqref{eq:cfg-final-euler-local-validity-step}, the prox-regularity of \(\Omega_c\) allows us to define
\(
t_i\by_i^{\Omega_c}:=\Proj_{t_i\Omega_c}(\bz_i).
\)
Lemma~\ref{lem:cfg-final-local-posterior} now gives
\begin{equation}\label{eq:cfg-final-euler-posterior}
\hat{\by}_{t_i,\mathrm{cfg}}(\bz_i)
\in
B_{\tilde r_{\Omega_c}}(\by_i^{\Omega_c}).
\end{equation}
Since \(\by_i^{\Omega_c}\in\Omega_c\),
\(
B_{\tilde r_{\Omega_c}}(\by_i^{\Omega_c})
\subset
B_{\tilde r_{\Omega_c}}(\Omega_c)
\)
is nonempty, closed and convex.
Let
\(
\bq_i:=\Proj_{t_iB_{\tilde r_{\Omega_c}}(\Omega_c)}(\bz_i).
\)
Lemma~\ref{lem:inflated-neighborhood-projection-regularity} and
\eqref{eq:cfg-final-euler-posterior} give
\(\left\{
\tfrac{\bq_i}{t_i},
\hat{\by}_{t_i,\mathrm{cfg}}(\bz_i)
\right\}
\subset
B_{\tilde r_{\Omega_c}}(\by_i^{\Omega_c})
\subset
B_{\tilde r_{\Omega_c}}(\Omega_c).
\)
If
\(
\dist(\bz_i,t_iB_{\tilde r_{\Omega_c}}(\Omega_c))>0,
\)
then  part~(i) of
Lemma~\ref{lem:euler-scaled-convex-contraction}, applied with
\(
s=t_i,\ t=t_{i+1},\ 
S=B_{\tilde r_{\Omega_c}}(\Omega_c),\
C=B_{\tilde r_{\Omega_c}}(\by_i^{\Omega_c}),
\)
gives \eqref{eq:cfg-final-euler-one-step} stated below.
If
\(
\dist(\bz_i,t_iB_{\tilde r_{\Omega_c}}(\Omega_c))=0,
\)
then \(\bq_i=\bz_i\), so
\(
\left\{
\tfrac{\bz_i}{t_i},
\hat{\by}_{t_i,\mathrm{cfg}}(\bz_i)
\right\}
\subset
B_{\tilde r_{\Omega_c}}(\by_i^{\Omega_c})
\subset
B_{\tilde r_{\Omega_c}}(\Omega_c).
\)
Part~(ii) of Lemma~\ref{lem:euler-scaled-convex-contraction}, applied
with
\(
s=t_i,\ t=t_{i+1},\ 
S=B_{\tilde r_{\Omega_c}}(\Omega_c),
\
C=B_{\tilde r_{\Omega_c}}(\by_i^{\Omega_c}),
\)
gives \(\bz_{i+1}\in
t_{i+1}B_{\tilde r_{\Omega_c}}(\Omega_c)\). Hence the next distance
is zero, so \eqref{eq:cfg-final-euler-one-step} below holds trivially.
Thus, in both cases,
\begin{align}\label{eq:cfg-final-euler-one-step}
\dist(\bz_{i+1},t_{i+1}B_{\tilde r_{\Omega_c}}(\Omega_c))
\le
\tfrac{1-t_{i+1}}{1-t_i}
\dist(\bz_i,t_iB_{\tilde r_{\Omega_c}}(\Omega_c)).
\end{align}
Combining this estimate with the induction assumption gives
\begin{equation}\label{eq:cfg-final-euler-invariant-step}
\begin{aligned}
\dist(\bz_{i+1},t_{i+1}B_{\tilde r_{\Omega_c}}(\Omega_c))+\tilde r_{\Omega_c}
&\le
\tfrac{1-t_{i+1}}{1-t_i}(t_iR-\tilde r_{\Omega_c})+\tilde r_{\Omega_c}
\\
&=
t_{i+1}R
-
\tfrac{t_{i+1}-t_i}{1-t_i}
(R-\tilde r_{\Omega_c})
<
t_{i+1}R.
\end{aligned}
\end{equation}
The strict inequality in
\eqref{eq:cfg-final-euler-invariant-step} follows from
\(t_{i+1}>t_i\) and
\(\tilde r_{\Omega_c}<R\), and remains valid when \(t_{i+1}=1\).
Thus \eqref{eq:cfg-final-euler-step-invariant} holds at \(i+1\).
By induction, \eqref{eq:cfg-final-euler-step-invariant} holds for all
\(i=0,\ldots,N\).

\noindent
\textbf{Attraction.}
The one-step estimate \eqref{eq:cfg-final-euler-one-step} holds for every
\(i<N\). For every \(0\le j\le i\le N\) with \(t_j<1\), iterating it
from index \(j\) to index \(i-1\) and telescoping the factors gives
\eqref{eq:cfg-final-euler-ratio}; the case \(i=j\) is immediate. Since
every one-step factor is at most one, the distance sequence is
non-increasing. Taking \(j=0\) gives the displayed
\(\bigO(1-t_i)\) bound in part~(ii).

\noindent
\textbf{Absorption.}
Suppose that
\(\bz_k\in t_kB_{\tilde r_{\Omega_c}}(\Omega_c)\)
for some \(k\in\{0,\ldots,N\}\).
We prove by induction that
\[
\bz_i\in t_iB_{\tilde r_{\Omega_c}}(\Omega_c),
\qquad
i=k,\ldots,N.
\]

The assertion at \(i=k\) is given.
Fix \(i\in\{k,\ldots,N-1\}\) and suppose that
\(\bz_i\in t_iB_{\tilde r_{\Omega_c}}(\Omega_c)\).
Here,
\eqref{eq:cfg-final-euler-local-validity-step} show that
\(\dist(\bz_i,t_i\Omega_c)<t_iR\). Hence
Lemma~\ref{lem:inflated-neighborhood-projection-regularity} applies.
Since \(\bz_i\) belongs to the inflated target, it is its own
projection, and therefore, with
\(t_i\by_i^{\Omega_c}=\Proj_{t_i\Omega_c}(\bz_i)\),
\[
\bz_i/t_i
\in
B_{\tilde r_{\Omega_c}}\!\left(\by_i^{\Omega_c}\right).
\]
Meanwhile, \eqref{eq:cfg-final-euler-posterior} gives
\[
\hat{\by}_{t_i,\mathrm{cfg}}(\bz_i)
\in
B_{\tilde r_{\Omega_c}}\!\left(
\by_i^{\Omega_c}
\right).
\] The set
\(B_{\tilde r_{\Omega_c}}\!\left(
\by_i^{\Omega_c}
\right)\subset B_{\tilde r_{\Omega_c}}(\Omega_c)\) is convex.
Therefore, part~(ii) of
Lemma~\ref{lem:euler-scaled-convex-contraction},
applied with  \(s=t_i\), \(t=t_{i+1}\),
\(\bx=\bz_i\),
\(\by=\hat{\by}_{t_i,\mathrm{cfg}}(\bz_i)\), and 
\(
S = B_{\tilde r_{\Omega_c}}(\Omega_c),
C=
B_{\tilde r_{\Omega_c}}\!\left(
\by_i^{\Omega_c}
\right)
\)
gives
\(
\bz_{i+1}
\in
t_{i+1}B_{\tilde r_{\Omega_c}}(\Omega_c).
\)
The conclusion follows by induction.
This proves part~(iii) of the theorem.

\end{proof}

\section{Proofs for Section~\ref{sec:gen-schedule}}\label{app:gen-schedule-proofs}

\subsection{Proof of Proposition~\ref{prop:general-schedule}}

\begin{proof}
\textbf{(i): ideal-field identity.}
The calculation follows the proof of Theorem~3.1 in
\citet{cai2026improvingclassifierfreeguidanceflow}. We give the full calculation for the unconditional branch.

Fix $t\in(0,1)$ and $\bx\in\R^d$. Conditioned on
$\rY_\emptyset=\by$, the random variable $\rX_{t,\emptyset}^a$ has density
\begin{equation*}
\varphi_t^a(\bx\mid\by)
:=
\tfrac{1}{\bigl(2\pi(1-a(t))^2\bigr)^{d/2}}
\exp\left(
-\tfrac{\|\bx-a(t)\by\|^2}{2(1-a(t))^2}
\right).
\end{equation*}
The marginal density of $\rX_{t,\emptyset}^a$ is
\(m_{\emptyset,a}(t,\bx)
:=
\int_{\mathcal D}
\varphi_t^a(\bx\mid\by)p_\emptyset(\dd\by)>0.\)
For every $A\in\mathcal B(\R^d)$, Bayes' formula gives
\begin{align*}
\mathbb P\!\left(
\rY_\emptyset\in A\mid \rX_{t,\emptyset}^a=\bx
\right)
&=
\tfrac{
\int_A\varphi_t^a(\bx\mid\by)p_\emptyset(\dd\by)
}{m_{\emptyset,a}(t,\bx)}=
\tfrac{
\int_A
\exp\left(-\frac{\|\bx-a(t)\by\|^2}{2(1-a(t))^2}\right)
p_\emptyset(\dd\by)
}{
\int_{\mathcal D}
\exp\left(-\frac{\|\bx-a(t)\by\|^2}{2(1-a(t))^2}\right)
p_\emptyset(\dd\by)
}
=\eta_{a(t)}^{\bx}(A).
\end{align*}
Let $\bu\in\mathcal V$. Writing the standard Gaussian density explicitly and
then making the change of variables
$\bx=a(t)\by+(1-a(t))\bxi$ gives
{\footnotesize \begin{align*}
\mathcal L_{\emptyset,a}(\bu)
&=\int_0^1\int_{\mathcal D}\int_{\R^d}
\left\|
\bu\bigl(t,a(t)\by+(1-a(t))\bxi\bigr)
-\dot a(t)(\by-\bxi)
\right\|^2
\tfrac{1}{(2\pi)^{d/2}}\exp\left(-\tfrac{\|\bxi\|^2}{2}\right)
\,\dd\bxi\,p_\emptyset(\dd\by)\,\dd t\\
&=
\int_0^1\int_{\mathcal D}\int_{\R^d}
\left\|
\bu(t,\bx)
-\dot a(t)
\left(
\by-\tfrac{\bx-a(t)\by}{1-a(t)}
\right)
\right\|^2\cdot
\tfrac{1}{\bigl(2\pi(1-a(t))^2\bigr)^{d/2}}
\exp\left(-\tfrac{\|\bx-a(t)\by\|^2}{2(1-a(t))^2}\right)
\,\dd\bx\,p_\emptyset(\dd\by)\,\dd t\\
&=
\int_0^1\int_{\R^d}\int_{\mathcal D}
\left\|
\bu(t,\bx)
-\tfrac{\dot a(t)}{1-a(t)}(\by-\bx)
\right\|^2
\varphi_t^a(\bx\mid\by)
p_\emptyset(\dd\by)\,\dd\bx\,\dd t.
\end{align*}}
The second equality uses
$\dd\bxi=(1-a(t))^{-d}\dd\bx$, and the last equality uses
$\by-(\bx-a(t)\by)/(1-a(t))=(\by-\bx)/(1-a(t))$. 
For fixed $(t,\bx)$, define
\begin{equation*}
K_{t,\bx}(\bz)
:=
\int_{\mathcal D}
\left\|
\bz-\tfrac{\dot a(t)}{1-a(t)}(\by-\bx)
\right\|^2
\varphi_t^a(\bx\mid\by)p_\emptyset(\dd\by),
\qquad \bz\in\R^d.
\end{equation*}
Its Hessian is
\(\mathsf D_{\bz}^2K_{t,\bx}(\bz)
=2m_{\emptyset,a}(t,\bx)\Id,\)
which is positive definite. Hence $K_{t,\bx}$ is strictly convex. Its unique
minimizer is characterized by
\begin{equation*}
\nabla_{\bz}K_{t,\bx}(\bz)
=2\int_{\mathcal D}
\left(
\bz-\tfrac{\dot a(t)}{1-a(t)}(\by-\bx)
\right)
\varphi_t^a(\bx\mid\by)p_\emptyset(\dd\by)
=\bzero.
\end{equation*}
Solving this equation and using the posterior identity above gives
\begin{align*}
\bz^\ast
&=
\tfrac{\dot a(t)}{1-a(t)}
\left(
\tfrac{\int_{\mathcal D}\by\,\varphi_t^a(\bx\mid\by)
p_\emptyset(\dd\by)}{m_{\emptyset,a}(t,\bx)}
-\bx
\right)=
\tfrac{\dot a(t)}{1-a(t)}
\left(
\int_{\mathcal D}\by\,\eta_{a(t)}^{\bx}(\dd\by)-\bx
\right)
=\dot a(t)\vu(a(t),\bx).
\end{align*}
Define $\vv_\emptyset^a(t,\bx):=\dot a(t)\vu(a(t),\bx)$ for
$t\in(0,1)$. Theorem~\ref{thm:wellposed-flow} and $a\in C^1([0,1])$ show that
$\vv_\emptyset^a\in\mathcal V$ and that the same formula gives its canonical
continuous extension to $t=0$. Moreover,
$\mathcal L_{\emptyset,a}(\vv_\emptyset^a)
\le\mathcal L_{\emptyset,a}(\bzero)<\infty$, since $\dot a$ is bounded,
$\mathcal D$ is bounded, and $\bxi$ has a finite second moment.

Since $\vv_\emptyset^a(t,\bx)$ minimizes $K_{t,\bx}$, for every
$\bu\in\mathcal V$ we have
\begin{align*}
\mathcal L_{\emptyset,a}(\bu)
=
\int_0^1\int_{\R^d}K_{t,\bx}(\bu(t,\bx))\,\dd\bx\,\dd t\ge
\int_0^1\int_{\R^d}
K_{t,\bx}(\vv_\emptyset^a(t,\bx))\,\dd\bx\,\dd t
=\mathcal L_{\emptyset,a}(\vv_\emptyset^a).
\end{align*}
Completing the square around the minimizer gives
\begin{equation*}
K_{t,\bx}(\bz)-K_{t,\bx}(\vv_\emptyset^a(t,\bx))
=m_{\emptyset,a}(t,\bx)
\|\bz-\vv_\emptyset^a(t,\bx)\|^2.
\end{equation*}
Because $m_{\emptyset,a}(t,\bx)>0$, equality of the two loss values forces
$\bu(t,\bx)=\vv_\emptyset^a(t,\bx)$ for almost every $(t,\bx)$, and
continuity gives equality everywhere on $(0,1)\times\R^d$. Therefore
$\vv_\emptyset^a$ is the unique global minimizer of
$\mathcal L_{\emptyset,a}$.

Replacing $p_\emptyset$, $\mathcal D$, and $\eta_{a(t)}^{\bx}$ throughout by
$p_c$, $\mathcal D_c$, and $\eta_{a(t),c}^{\bx}$ gives
\eqref{eq:general-field-identity} for $\star=c$; the same uniqueness and
extension argument applies. Thus, part~(i) holds for both branches.

When both branches use the same schedule, their identities give
\begin{align*}
\vcfg^a(t,\bx)
&=(1-w)\vv_\emptyset^a(t,\bx)+w\vv_c^a(t,\bx)\\
&=\dot a(t)\bigl[(1-w)\vv_\emptyset(a(t),\bx)
+w\vv_c(a(t),\bx)\bigr]
=\dot a(t)\vcfg(a(t),\bx).
\end{align*}

\textbf{(ii): existence--uniqueness and trajectory identity.}
Fix $T\in(0,1)$. Since $a$ is increasing and $a(T)<1$,
$a([0,T])\subset[0,a(T)]$. Part~(i) and
Theorem~\ref{thm:wellposed-flow} give
\begin{align*}
\sup_{\substack{t\in[0,T]\\\bx\in\R^d}}
\|\mathsf{D}_{\bx}\vv_\star^a(t,\bx)\|_{\rm op}
&\le
\|\dot a\|_{L^\infty(0,T)}
\left[
\tfrac{1}{1-a(T)}
+\tfrac{a(T)}{(1-a(T))^3}\diam(\mathcal D)^2
\right]
<\infty,
\end{align*}
for $\star\in\{\emptyset,c\}$; the same bound applies to the conditional
branch because $\mathcal D_c\subset\mathcal D$. Thus both branch fields are
continuous in $(t,\bx)$ and globally Lipschitz in $\bx$, uniformly for
$t\in[0,T]$. The CFG field is their linear combination, so the same result holds for $\vcfg^a$.
For each of these fields, continuity makes
$\sup_{t\in[0,T]}\|\vv^a(t,\bzero)\|$ finite; together with the global
Lipschitz bound, this gives a uniform linear-growth estimate. The
Picard--Lindel\"of theorem therefore gives a unique $C^1$ solution of each
general-schedule ODE on $[0,T]$.

Let $\vv$ be one of $\vu$, $\vc$, and $\vcfg$, let $\vv^a$ denote its
general-schedule counterpart, and let $\bx_s$ be the unique solution of the
corresponding standard ODE. The map $t\mapsto\bx_{a(t)}$ is $C^1$, and the
chain rule and part~(i) give
\begin{equation*}
\frac{\dd}{\dd t}\bx_{a(t)}
=\dot a(t)\vv(a(t),\bx_{a(t)})
=\vv^a(t,\bx_{a(t)}),
\qquad
\bx_{a(0)}=\bx_0.
\end{equation*}
Thus $t\mapsto\bx_{a(t)}$ solves the general-schedule ODE with the same
initial condition. Uniqueness of the general-schedule ODE yields
$\bx_t^a=\bx_{a(t)}$ on $[0,T]$. Since $T<1$ was arbitrary,
\eqref{eq:general-trajectory-identity} holds on $[0,1)$ for unconditional
flow, conditional flow, and CFG.

\textbf{(iii): transfer of the final-stage estimate.}\\ 

\noindent
\textbf{Attraction.}
For $t_0\le s\le t<1$, monotonicity of $a$ gives
$a(t_0)\le a(s)\le a(t)<1$. The two-time estimate established in
the proof of Theorem~\ref{thm:final-local}, applied to the
standard unconditional trajectory at times $a(s)$ and $a(t)$, gives
\begin{equation*}
\dist\bigl(\bx_t^a,a(t)B_{r'_\Omega}(\Omega)\bigr)
\le
\tfrac{1-a(t)}{1-a(s)}
\dist\bigl(\bx_s^a,a(s)B_{r'_\Omega}(\Omega)\bigr),
\end{equation*}
which proves \eqref{eq:general-final-ratio}. The monotonicity and
$\bigO(1-a(t))$ conclusions follow from the same standard-clock theorem.

\noindent
\textbf{Absorption.}
If
$\bx_{t_\Omega^a}^a=\bx_{a(t_\Omega^a)}$ belongs to
$a(t_\Omega^a)B_{r'_\Omega}(\Omega)$, its absorption statement gives
$\bx_t^a\in a(t)B_{r'_\Omega}(\Omega)$ for every $t\ge t_\Omega^a$.

Applying the same substitution to the conditional specialization of
Theorem~\ref{thm:final-local} and to Theorem~\ref{thm:cfg-final}, respectively,
proves the conditional and CFG conclusions.
\end{proof}

\bibliography{sample}

\end{document}